\crefname{assumption}{Assumption}{Assumption}
\theoremstyle{plain}
\definecolor{ourcolor}{HTML}{00A89D}
\definecolor{ourpurple}{HTML}{CB297B}
\definecolor{ourblue}{HTML}{0076BA}
\definecolor{ourmiddle}{HTML}{66509B}
\definecolor{ourlightblue}{HTML}{F5FAFE}
\definecolor{ourlightmiddle}{HTML}{F7F6F9}
\declaretheoremstyle[
    headfont=\bfseries,
    bodyfont=\normalfont,
    spaceabove=10pt, spacebelow=10pt,
    headpunct={},
    postheadspace=1em,
    mdframed={
        backgroundcolor=ourcolor!5,
        linecolor=ourcolor!50!black,
        linewidth=1pt,
        roundcorner=4pt
    }
]{colorboxed}
\declaretheoremstyle[
    headfont=\bfseries,
    bodyfont=\normalfont,
    spaceabove=10pt, spacebelow=10pt,
    headpunct={},
    postheadspace=1em,
    mdframed={
        backgroundcolor=ourblue!5,
        linecolor=ourblue!50!black,
        linewidth=1pt,
        roundcorner=4pt
    }
]{bluecolorboxed}
\declaretheoremstyle[
    headfont=\bfseries,
    bodyfont=\normalfont,
    spaceabove=10pt, spacebelow=10pt,
    headpunct={},
    postheadspace=1em,
    mdframed={
        backgroundcolor=ourpurple!5,
        linecolor=ourpurple!50!black,
        linewidth=1pt,
        roundcorner=4pt
    }
]{purplecolorboxed}
\declaretheoremstyle[
    headfont=\bfseries,
    bodyfont=\normalfont,
    spaceabove=10pt, spacebelow=10pt,
    headpunct={},
    postheadspace=1em,
    mdframed={
        backgroundcolor=black!5,
        linecolor=black!50!black,
        linewidth=1pt,
        roundcorner=4pt
    }
]{greycolorboxed}
\declaretheorem[name=Lemma,style=purplecolorboxed]{lemma}
\declaretheorem[name=Definition,style=bluecolorboxed]{definition}
\declaretheorem[name=Assumption,style=bluecolorboxed]{assumption}
\declaretheorem[name=Remark,style=greycolorboxed]{remark}
\DeclarePairedDelimiterX{\infdivx}[2]{(}{)}{%
  #1\;\delimsize\|\;#2%
}
\def\eqref#1{equation~\ref{#1}}
\def\1{\bm{1}}
\def\vb{{\bm{b}}}
\DeclareMathAlphabet{\mathsfit}{\encodingdefault}{\sfdefault}{m}{sl}
\SetMathAlphabet{\mathsfit}{bold}{\encodingdefault}{\sfdefault}{bx}{n}
\definecolor{darkblue}{rgb}{0,0.08,0.45}
\definecolor{cred}{HTML}{D62728}
\definecolor{cblue}{HTML}{1F77B4}
\definecolor{cgreen}{HTML}{79AB76}
\definecolor{cgrey}{rgb}{0.6,0.6,0.6}
\newcommand{\clP}{P_{\mathcal{D}}}
\newcommand{\opP}{P^{\circ}_{\mathcal{D}}}
\newcommand{\dd}[1]{\mathrm{d}#1}
\renewcommand{\mathbf}{\boldsymbol}
\title{Decoupled Q-Chunking}
\author{Qiyang Li \\ UC Berkeley \\ \texttt{qcli@berkeley.edu} \And Seohong Park \\ UC Berkeley  \\ \texttt{seohong@berkeley.edu} \And Sergey Levine\\
UC Berkeley  \\ \texttt{svlevine@berkeley.edu}}
\begin{document}

\hypersetup{colorlinks=true, allcolors=ourcolor}

\maketitle

\begin{abstract}
\vspace{-3pt}
Temporal-difference (TD) methods learn state and action values efficiently by bootstrapping from their own future value predictions, but such a self-bootstrapping mechanism is prone to \emph{bootstrapping bias}, where the errors in the value targets accumulate across steps and result in biased value estimates. Recent work has proposed to use chunked critics, which estimate the value of short action sequences (``chunks'') rather than individual actions, speeding up value backup. However, extracting policies from chunked critics is challenging: policies must output the entire action chunk open-loop, which can be sub-optimal for environments that require policy reactivity and also challenging to model especially when the chunk length grows. Our key insight is to decouple the chunk length of the critic from that of the policy, allowing the policy to operate over shorter action chunks. We propose a novel algorithm that achieves this by optimizing the policy against a distilled critic for partial action chunks, constructed by optimistically backing up from the original chunked critic to approximate the maximum value achievable when a partial action chunk is extended to a complete one. This design retains the benefits of multi-step value propagation while sidestepping both the open-loop sub-optimality and the difficulty of learning action chunking policies for long action chunks. We evaluate our method on challenging, long-horizon offline goal-conditioned tasks and show that it reliably outperforms prior methods. \\ \textbf{Code:} \href{http://github.com/ColinQiyangLi/dqc}{\texttt{github.com/ColinQiyangLi/dqc}}.
\end{abstract}

\begin{figure}[ht]
    \centering
    \includegraphics[clip, trim=1.0cm 1.9cm 6cm 1.6cm,width=0.51\linewidth]{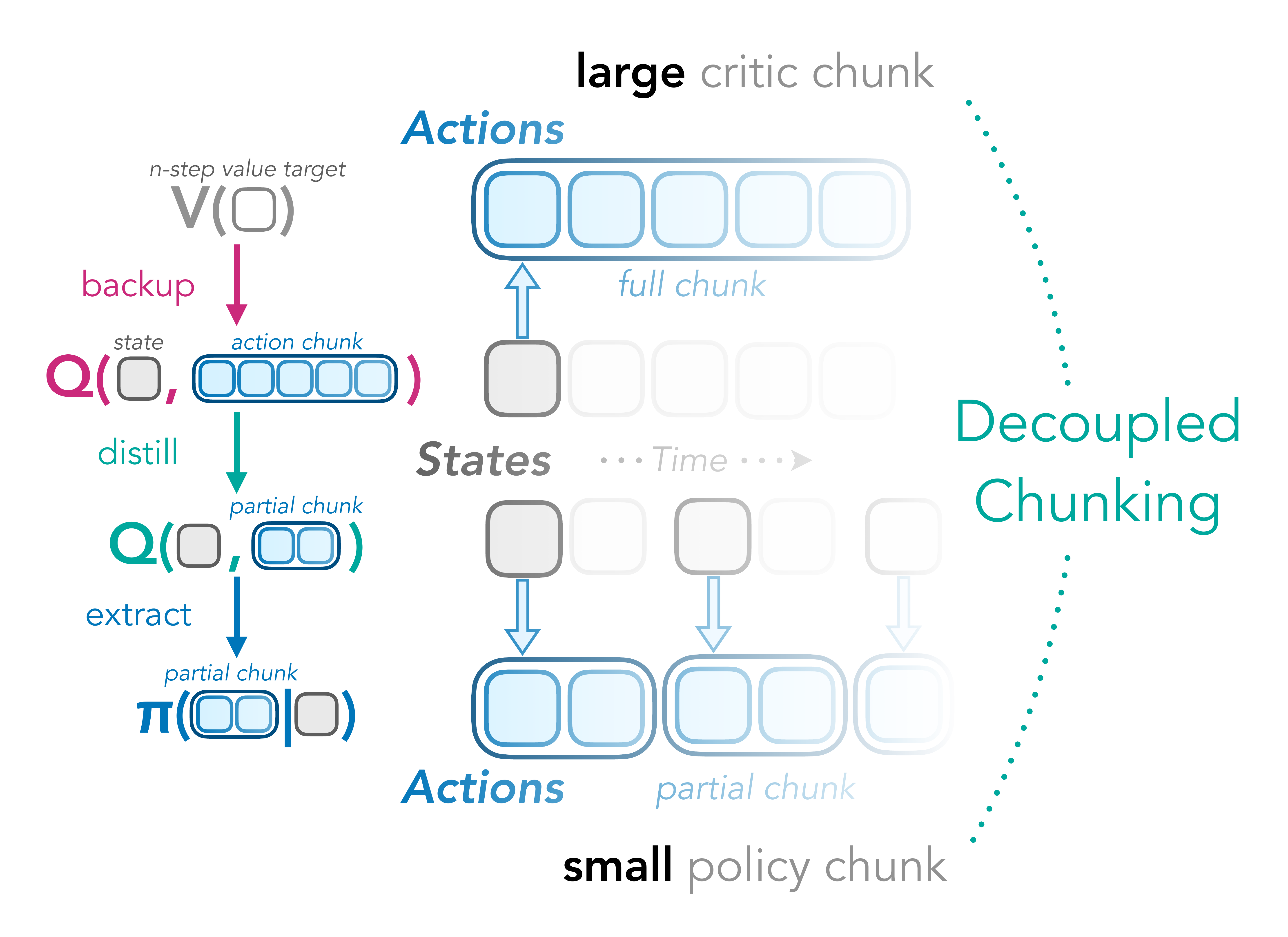}
    \includegraphics[width=0.48\linewidth]{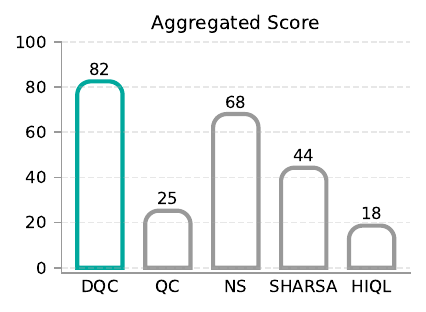}
    \caption{\textbf{Decoupled Q-chunking (DQC).} \emph{Left:} The key idea of our method is to `decouple' the action chunk size of the critic $Q$ from that of the policy $\pi$. A large critic chunk size allows for efficient value learning while a small policy chunk size makes policy learning more tractable and allows for better policy reactivity. \emph{Right:} Our method outperforms all baselines on six hardest environments on OGBench, an offline goal-conditioned RL benchmark with challenging long-horizon tasks.}
    \label{fig:bar}
\end{figure}

\section{Introduction}
\label{sec:intro}

Temporal-difference (TD) methods are powerful reinforcement learning (RL) techniques that can directly learn from off-policy prior data without requiring an explicit dynamics model, making them well-suited for offline RL~\citep{levine2020offline} and sample-efficient online RL~\citep{chen2021randomized}. Despite their successes, a key challenge remains: bootstrapping bias~\citep{jaakkola1993convergence, sutton1998reinforcement, de2018multi, park2025horizon}. This bias stems from the core design of TD updates, where the value at the current state is learned by regressing towards the learner's own predictions at the next time step. As a result, any prediction error is compounded backward across steps, making learning particularly challenging in long-horizon, sparse-reward tasks.

Multi-step return backups~\citep{sutton1998reinforcement} can alleviate bootstrapping bias by shifting the regression target further into the future and effectively reducing the time horizon. However, na\"ively applying them introduces additional biases because computing the target involves summing rewards along off-policy trajectories that may deviate from the actions that the agent would take. Although importance sampling can in principle correct such off-policy biases by reweighting the off-policy trajectories~\citep{munos2016safe}, it often suffers from high variance and thus requires truncation and other heuristics for numerical stability, making it difficult to tune in practice.
Recent works~\citep{seo2024reinforcement, li2024top, tian2025chunking, li2025reinforcement} leverage chunked value functions, which estimate the value of short action sequences (``chunks'') rather than a single action. This formulation allows $n$-step return backup without the pessimistic bias (under some condition we formalize in Section \ref{sec:theory}). However, theoretical guarantees of action chunking Q-learning, especially on arbitrary off-policy data, are still an open problem as existing analysis (\emph{e.g.}, in \citet{li2025reinforcement}) only considers the case where the data is collected by an action chunking policy. Moreover, on the empirical side, directly optimizing a policy over full action chunks is difficult, particularly as the chunk size grows, and it is still unclear how to best extract a policy from a chunked critic.

In this work, we lay the theoretical foundation of action chunking Q-learning where we identify the key \emph{open-loop consistency} condition (\cref{def:olc}) under which Q-learning with action chunking critic is guaranteed to produce a near-optimal action chunking policy. On top of it, we characterize the condition when closed-loop execution (\emph{i.e.}, only executing the first action in the predicted action chunk) of such action chunking policy is expected to be even close to the optimal closed-loop policy. Motivated by our analysis, we develop a simple practical algorithm that builds on top of the idea of closed-loop execution of action chunking policies to address the action chunking policy learning challenge. The key insight is that we can avoid training the policy to predict the full action chunks and instead to only predict shorter, partial action chunks against the chunked critic.
To achieve this, we use a `distilled' chunked critic with a chunk size that matches the policy: it optimistically regresses to the original chunked critic to approximate the maximum value that the partial action chunk can achieve after being extended into a full action chunk.
Conceptually, while the action optimization is still done for the longer, complete action chunks, the policy network is only trained to output the partial action chunk of an optimized complete action chunk. This way, the policy only needs to predict a much shorter action chunk (\emph{e.g.}, in the extreme case, only one action), which often admits a much simpler distribution, while enjoying the value learning benefits from the use of chunked critics. 

Our main contributions are two-fold. On the theoretical side, we provide the \emph{first} formal analysis of Q-learning with action chunking, focusing on characterizing the value learning bias of the bellman backup of action chunking critic. Specifically, we introduce the \emph{open-loop consistency} condition under which we \emph{exactly} characterize the worst-case value estimation bias (\cref{thm:bcvbias,thm:bcvbiastight}) and sub-optimality gap (\cref{thm:suboptact,thm:suboptactight}) at the fixed point of the bellman optimality equations.  Moreover, we characterize \emph{(i)} the conditions under which  action chunking critic backup is preferable over $n$-step return backup with a single-step critic (\cref{thm:compare}), and \emph{(ii)} the conditions under which closed-loop execution of the action chunking policy further mitigates the open-loop bias (\cref{thm:optvar}). On the empirical side, we propose a new technique, {\color{ourcolor} \textbf {Decoupled Q-chunking (DQC)}}, that addresses the policy learning challenge in action chunking Q-learning by decoupling the policy chunk size from the critic chunk size. DQC trains a policy to only predict a partial action chunk, significantly reducing the policy learning challenge, while retaining the value learning benefits of the chunked critic. We instantiate this technique as a practical offline RL algorithm that outperforms the previous state-of-the-art method on the hardest set of environments in OGBench~\citep{ogbench_park2024}, a challenging, long-horizon goal-conditioned RL benchmark. 

\section{Related Work}
\label{sec:related}

\textbf{Theory of action chunking.} Existing analyses for action chunking focus exclusively on the imitation learning setting~\citep{tu2022sample, simchowitz2025pitfalls}. While they laid out the theoretical foundation of action chunking policies for imitation learning, formal guarantees of action chunking RL are still an open problem. In the adjacent field of stochastic optimal control (SOC), action chunking is related to control under \emph{intermittent observations} where the observation inputs to the controller are either unreliable (\emph{e.g.}, with a Poissonian model~\citep{wang2001some, dupuis2002optimal}), or partially missing~\citep{mishra2020stochastic, yan2022stochastic, noba2022stochastic, bayer2024continuous}. While conceptually related, these analyses are in the continuous-time setting in contrast to discrete-time transitions. 
To the best of our knowledge, we are the first to provide a formal analysis of action chunking in Q-learning. In particular, we identify the key open-loop consistency condition under which we quantify the exact worst-case Q-learning sub-optimality.

\textbf{Offline and offline-to-online reinforcement learning} methods assume access to an offline dataset to learn a policy without interactions with the environment (offline)~\citep{kumar2020conservative, kostrikov2021offline, tarasov2024revisiting} or with as little online interaction with the environment as possible (offline-to-online)~\citep{lee2022offline, ball2023efficient, nakamoto2024cal}. Q-learning or TD-based RL algorithms have been a popular choice for these problem settings as they naturally handle off-policy data without the need for on-policy rollouts, and also exhibit great online sample-efficiency~\citep{chen2021randomized, d2022sample}. A large body of literature in these two problem settings has been focusing on tackling the distribution shift challenge by appropriately constraining the policies with respect to the prior offline data, and most of them use the standard $1$-step TD backup for Q-learning, which has been known to suffer from the bootstrapping bias problem in the RL literature~\citep{jaakkola1993convergence, sutton1998reinforcement}. To tackle this, recent work~\citep{jeong2022conservative, park2024model, park2025horizon, li2025reinforcement} has shown that multi-step return backups are effective for improving offline/offline-to-online Q-learning agents. 
These methods either use a standard single-step critic network~\citep{park2025horizon} that suffers from the off-policy bias, or use a `chunked,' multi-step critic network~\citep{li2025reinforcement} that does not have such bias but poses a huge policy learning challenge when the chunk size is too large. Our method brings the best of both worlds---it uses critic chunking to avoid the off-policy bias while simultaneously avoiding the policy learning challenge by extracting a simpler policy that extracts a shorter action chunk from the full-chunk critic.

\textbf{Multi-step return backups} are computed with multi-step off-policy rewards that can lead to systematic value underestimation~\citep{sutton1998reinforcement, peng1994incremental, konidaris2011td_gamma, thomas2015policy}, and there has been a rich literature~\citep{precup2000eligibility, munos2016safe, rowland2020adaptive} dedicated to fix these biases via importance sampling~\citep{kloek1978bayesian} with truncation~\citep{ionides2008truncated}. These approaches often require a careful balance between bias and variance that can be tricky to tune. More recently, \citet{seo2024reinforcement, li2024top, tian2025chunking, li2025reinforcement} group temporally extended sequences of actions as chunks and directly estimate the value of an action chunk rather than a single action. Such a formulation allows the value backup to operate directly in the chunk space, which allows multi-step return backup without the systematic biases from the sub-optimal off-policy data. 
Despite their empirical success, we still lack a good theoretical understanding of the convergence of TD-learning with `chunked' critics, as well as when it should be preferred over the standard $n$-step returns. 
Our work lays out the theoretical foundation for Q-learning with critic chunking, and identifies an important yet subtle, often overlooked bias in the chunked TD-backup. We quantify such bias and provide the condition under which TD backup using critic chunking is guaranteed to perform better than the standard $n$-step return backup with a single-step critic.

See additional discussions of related work on hierarchical reinforcement learning and theoretical analysis under confounding variables in \cref{appendix:add-related}.

\section{Preliminaries}
\label{sec:prelim}

\textbf{Reinforcement learning} can be formalized as a Markov decision process, $\mathcal{M} = (\mathcal{S}, \mathcal{A}, T, r, \rho, \gamma)$, where $\mathcal{S}$ is the state space, $\mathcal{A}$ is the action space, $T:\mathcal{S} \times \mathcal{A} \to \Delta_\mathcal{S}$ is the transition kernel that defines the next state distribution conditioned on the current state and the current action (\emph{e.g.}, $s' \sim T(\cdot \mid s, a)$), $r: \mathcal{S} \times \mathcal{A} \to [0, 1]$ is the reward function, $\rho \in \Delta_S$ is the initial state distribution, and $\gamma \in [0, 1)$ is the discount factor. We also assume we have access to a prior offline dataset $D = \{(s^i_0, a^i_0, r^i_0, s^i_1, a^i_1, r^i_1, \cdots, s^i_H)\}_{i=1}^{|D|}$ where the goal is to learn a policy, $\pi: \mathcal{S} \to \Delta_\mathcal{A}$ that maximizes its return, $\eta(\pi) = \mathbb{E}_{s_{t+1} \sim T(\cdot \mid s_t, a_t), a_t \sim \pi(\cdot \mid s_t), s_0 \sim \rho}\left[\sum_{t=0}^{\infty} \gamma^t r(s_t, a_t)\right]$. We call a policy that attains the maximum return as an \emph{optimal policy}, $\pi^\star$.

\textbf{Temporal difference learning.} Modern value-based reinforcement learning methods often learn a critic network, $Q: \mathcal{S} \times \mathcal{A} \to \mathbb{R}$ parameterized by $\phi$ to approximate the expected return starting from state $s$ and action $a$, and $\phi$ is often trained using the temporal-difference (TD) loss:
\begin{align}
    L(\phi) = \mathbb{E}_{s, a, s' \sim \mathcal{D}}\left[(Q_\phi(s, a) - r(s, a) - \gamma \max_{a'}Q_{\bar \phi}(s', a'))^2\right],
\end{align}
where $\bar \phi$ is often set to be an exponential moving average of $\phi$.

\textbf{Implicit value backup.}
Instead of using $\max_{a'} Q(s', a')$ as the TD target, 
we can use an \emph{implicit maximization} loss function $f_{\mathrm{imp}}$ to learn  $V_\xi(s)$ to approximate it~\citep{kostrikov2021offline}:
\begin{align}
\label{eq:imp-backup}
    L(\xi) = \mathbb{E}_{s, a \sim \mathcal{D}}\left[f^\kappa_{\mathrm{imp}}(\bar Q(s, a) - V_\xi(s))\right].
\end{align}
Two popular choices of $f^\kappa_{\mathrm{imp}}$ are (1) expectile: $f^\kappa_{\mathrm{expectile}}(c) = |\kappa - \mathbb{I}_{c<0}| c^2$, and (2) quantile: $f^\kappa_{\mathrm{quantile}}(c) = |\kappa - \mathbb{I}_{c<0}||c|$, for any real value $\kappa \in [0.5, 1)$. At the optimum of $L(\xi)$, $V_\xi(s)$ approximates the $\kappa$-expectile/quantile of the distribution of the TD target for $Q(s, a)$, induced by the data distribution $\mathcal{D}$. With such a technique, we no longer need to explicitly find the action $a$ that maximizes $Q(s, a)$ and can use $V_\xi(s)$ as the backup target:
\begin{align}
    L(\phi) = \mathbb{E}_{s, a, s' \sim \mathcal{D}}\left[(Q_\phi(s, a) - r(s, a) - \gamma  V_\xi(s'))^2\right].
\end{align}

\textbf{Multi-step return backup.} TD learning can sometimes struggle with long-horizon tasks due to the bootstrapping bias problem, where regressing the value network towards its own potentially inaccurate value estimates amplifies the value estimation errors further.
To tackle this challenge, we can instead sample a trajectory segment, ($s_t, a_t, s_{t+1}, \cdots, a_{t+n-1}, s_{t+n}$), to construct an $n$-step return backup target from states $h$ steps ahead:
\begin{align}
    L_{\mathrm{ns}}(\phi) = \mathbb{E}_{s_t, a_t, \cdots, s_{t+n}}\left[\left(Q_\phi(s_t, a_t) - R_{t:t+n} - \gamma^{n}\bar{Q}(s_{t+n}, a^\star_{t+n})\right)^2 \right],
\end{align}
where $a^\star_{t+n} = {\arg\max}_{a_{t+n}} Q(s_{t+n}, a_{t+n})$, $R_{t:t+n}:=\sum_{t'=t}^{t+n-1}\gamma^{t'-t} r(s_{t'}, a_{t'})$. The $n$-step return backup reduces the effective horizon by a factor of $n$, alleviating the bootstrapping bias problem. However, such a value estimate is always biased towards the off-policy data distribution, and is also commonly referred to as the \emph{uncorrected $n$-step return estimator}~\citep{fedus2020revisiting, kozuno2021revisiting}. While there are ways to correct this value estimator via importance sampling~\citep{precup2000eligibility, munos2016safe, rowland2020adaptive}, they require additional tricks (\emph{e.g.}, importance ratio truncation) for numerical stability and re-introduce biases into the estimator, resulting in a delicate trade-off between variances and biases that needs to be carefully balanced. 

\textbf{Action chunking critic.} Alternatively, one may learn an action chunking critic to estimate the value of a short sequence of actions (an \emph{action chunk}), $a_{t:t+h} := (a_t, a_{t+1}, \cdots, a_{t+h-1})$ instead: $Q(s_t, a_{t:t+h})$~\citep{seo2024reinforcement, li2024top, tian2025chunking, li2025reinforcement}. The TD backup loss for such a critic is naturally multi-step:
\begin{align}
    L_{\mathrm{QC}}(\phi) = \mathbb{E}_{s_{t:t+h+1}, a_{t:t+h}}\left[\left(Q_\phi(s_t, a_{t:t+h}) - R_{t:t+h} - \gamma^{h}\bar{Q}(s_{t+h}, a^\star_{t+h:t+2h})\right)^2\right],
\end{align}
where again $a^\star_{t+h:t+2h} = {\arg\max}_{a_{t+h:t+2h}} Q(s_{t+h}, a_{t+h:t+2h})$. On the one hand, unlike $n$-step return estimate for single-action critic that is pessimistic, the $n$-step return estimate (with $n=h$) for the action chunking critic is \emph{unbiased} as long as the action chunk $a_{t:t+h}$ is \emph{independent} of the intermediate states $s_{t+1:t+h+1}$, while enjoying the reduction in effective horizon~\citep{li2024top, li2025reinforcement}. On the other hand, action chunking critic implicitly imposes a constraint on the policy that the actions are predicted and executed in chunks. As a result, the policy extracted from the action chunking critic needs to predict the entire action chunk all at once, posing a learning challenge, especially for environments with complex transition dynamics.

\section{When and how should we use action chunking for Q-learning?}
\label{sec:theory}
In this section, we build a theoretical foundation for Q-learning with action chunking critic functions. We start by formalizing the setup of our analysis in \cref{sec:theory-assumption}, providing a formal definition of our key \emph{open-loop consistent} condition in \cref{sec:theory-def}, quantifying the value estimation bias incurred from backing up on non-action chunking data (\cref{thm:bcvbias,thm:bcvbiastight}) and the optimality of action chunking policy (\cref{thm:suboptact,thm:suboptactight}) using this condition in \cref{sec:theory-bias}. Leveraging these results, we derive the conditions when we prefer action chunking Q-learning over the standard $n$-step return learning (\cref{thm:compare}) in \cref{sec:theory-compare}. Finally, we characterize the key \emph{optimality variability} conditions under which the closed-loop execution of a learned action chunking policy is close to the optimal closed-loop policy (\cref{thm:optvar,thm:optvartight}) in \cref{sec:theory-closed}. A brief summary of the key results is available in \cref{tab:theory-summary}.

\begin{table}[t]
    \centering
\scalebox{0.75}{
    \begin{tabular}{cccc}
\toprule
 \textbf{Assumptions} & \textbf{Value Estimation Error} & \textbf{AC Optimality}  & \textbf{Closed-loop AC Optimality}  \\
 & $\left|\hat{V}_{\mathrm{ac}} - V_{\mathrm{ac}}\right|$ & $V^\star-V^+_{\mathrm{ac}}$ & $V^\star-V^\bullet$ \\
\midrule
Weak $\varepsilon_h$-OLC & $\Theta(\varepsilon_h H\bar H)$ (\cref{thm:bcvbias,thm:bcvbiastight}) & $\Omega(H)$ (\cref{thm:lottery}) & - \\
Strong $\varepsilon_h$-OLC & $O(\varepsilon_h H\bar H)$ (\cref{thm:bcvbias}) & $\Theta(\varepsilon_h H\bar H)$ (\cref{thm:suboptact,thm:suboptactight}) & $O(\varepsilon_h H^2\bar H)$ (\cref{thm:suboptdqc}) \\
$(\vartheta^L_h, \vartheta^G_h)$-BOV & - & $\Omega(H)$ (\cref{thm:optvartight}) & $\Theta(\vartheta^L_hH + \vartheta^G_h H\bar H)$ (\cref{thm:optvar,thm:optvartight}) \\
\bottomrule
\end{tabular}
}
\vspace{-2mm}
\caption{\footnotesize\textbf{Summary of our main theoretical contributions.} In this work, we introduce open-loop consistency (OLC: \cref{def:olc}) and bounded optimality variability (BOV: \cref{def:opt-var}). Weak OLC provides guarantees on the value estimation error of action chunking critic but not the optimality of the learned action chunking policy. Strong OLC provides guarantees on the optimality of the learned action chunking policy and its closed-loop execution performance. BOV is an alternative condition to provide guarantees on the closed-loop execution performance. 
$\Omega(H)$ means that a constant factor of the maximum value gap can be achieved in the worst case.}
\label{tab:theory-summary}
\vspace{-4mm}
\end{table}

\subsection{Assumptions and notations}

To build the foundation of our analysis, we start by describing the trajectory data distribution that we use for Q-learning and the trajectory distribution induced by an action chunking policy. In particular, we assume that the trajectory data distribution obeys the transition dynamics $T$:

\label{sec:theory-assumption}
\begin{assumption}[Data Obeys the Transition Dynamics]
\label{assumption:data}
$\mathcal{D} \in \Delta_{\mathcal{T}}$ is a trajectory distribution generated by rolling out a behavior policy from a distribution of $s_t \sim \mu$. The behavior policy can be non-Markovian (\emph{i.e.}, $\pi_\beta(a_{t+k} \mid s_{t:t+k+1}, a_{t:t+k})$). Each subsequent state is generated according to the dynamics of the MDP $\mathcal{M}$: $s_{t+k+1} \sim T(\cdot \mid s_{t+k}, a_{t+k}), \forall k \in \{0, 1, \cdots, h-1\}$.
The resulting trajectory is $\{s_t, s_{t+1}, \cdots, s_{t+h}, a_t, a_{t+1}, \cdots, a_{t+h}\} \in \mathcal{T} = \mathcal{S}^{h} \times \mathcal{A}^h$. 
\vspace{1mm}
\end{assumption}

Next, we formally define the open-loop trajectory distribution that we would obtain if we take the same actions in the data and roll them out open-loop for $h$ steps in the MDP.

\begin{definition}[Open-loop Trajectory]
From any data distribution $\mathcal{D}$, we use $\pi^{\circ}_{\mathcal{D}} : \mathcal{S} \to \Delta_{\mathcal{A}^h}$ to denote an action chunking policy that admits the same marginal distribution as $\mathcal{D}$: 
\begin{align}
    \pi^{\circ}_{\mathcal{D}}(a_{t:t+h} \mid s_t) := P_{\mathcal{D}}(a_{t:t+h} \mid s_t).
\end{align} 
Rolling out this action chunking policy by carrying out actions in chunks induces a trajectory distribution $\opP \in \Delta_{\mathcal{S}^{h+1},\mathcal{A}^h}$ that is generally different from $P_\mathcal{D}$:
\begin{equation}
\begin{aligned}
    \opP(s_{t+1:t+h+1}, a_{t:t+h} \mid s_t) := \pi^{\circ}_{\mathcal{D}}(a_{t:t+k} \mid s_t)\prod_{k=0}^{h-1}T(s_{t+k+1} \mid s_{t+k}, a_{t+k}).
\end{aligned}
\end{equation}
\end{definition}
Next, we introduce a set of notations and conventions that we use in our theoretical analysis. We use $a_{t:t+h}$ to denote an action chunk of length $h$: $(a_t, a_{t+1}, \cdots, a_{t+h-1})$ (not including $a_{t+h}$). We use the subscript $[\cdot]_{\mathrm{ac}}$ for all action chunking policies or value functions, $\hat{[\cdot ]}$ to denote the \emph{nominal} (\emph{i.e.}, estimated) value (in contrast to the \emph{actual} without the `$\hat{\phantom{x}}$'), and $[\cdot]^+$ to denote something that is \emph{learned} from the data (usually defaults to $\mathcal{D}$). For example, $\hat{V}^+_{\mathrm{ac}}: \mathcal{S} \to [0, 1/(1-\gamma)]$ is the \emph{nominal} value (\emph{i.e.}, expected discounted return) of an action chunking policy $\pi^+_{\mathrm{ac}}$ learned from $\mathcal{D}$, whereas $V^+_{\mathrm{ac}}$ is the \emph{actual} value of the same action chunking policy (where the value is obtained by rolling the policy out in the MDP with open-loop action chunks). As we will elucidate in the next section, the \emph{nominal} value and the \emph{actual} value of a policy are usually different, and hence making this differentiation critical in our analysis. We also use $[\cdot]^\star$ to denote the optimal policy or value function under the constraint of the policy class (\emph{e.g.}, $\pi^\star_{\mathrm{ac}}$ for the optimal action chunking policy and $\pi^\star$ for the optimal closed-loop 1-step policy). Finally, we use $H=1/(1-\gamma)$, $\bar H=1/(1-\gamma^h)$ to denote the effective horizon for 1-step TD backup and $h$-step TD backup respectively.

\subsection{Weak and Strong Open-loop consistency (OLC)}
\label{sec:theory-def}
From the definition above, we have demonstrated that replaying the actions from the trajectory data distribution $\clP$ in an open-loop manner may result in a different trajectory distribution, $\opP$. 
This discrepancy between $\opP$ and $\clP$ has not been carefully analyzed by prior work but can play a huge role in the optimal policy that action chunking Q-learning converges to. To characterize this discrepancy, we use a notion of consistency as defined below.
\begin{definition}[Open-Loop Consistency]
\label{def:olc}
$\mathcal{D}$ is \textbf{weakly} $\varepsilon_h$-open-loop consistent if for every $s_t \in \mathcal{S}$ with $P_\mathcal{D}(s_t) > 0$ (\emph{i.e.}, $s_t \in \mathrm{supp}(P_{\mathcal{D}}(s_t))$),
\begin{align}
    &D_{\mathrm{TV}}\infdivx*{P_{\mathcal{D}}^{\circ}(s_{t+h'}, a_{t+h'} \mid s_t)}{ P_{\mathcal{D}}(s_{t+h'}, a_{t+h'} \mid s_t)} \leq \varepsilon_h, \forall h' \in \{1, 2, \cdots, h - 1\}, \label{eq:weakoc}\\
    &D_{\mathrm{TV}}\infdivx*{P_{\mathcal{D}}^{\circ}(s_{t+h} \mid s_t)}{ P_{\mathcal{D}}(s_{t+h} \mid s_t)} \leq \varepsilon_h. \label{eq:weakoc2}
\end{align}
$\mathcal{D}$ is \textbf{strongly} $\varepsilon_h$-open-loop consistent if additionally for every $a_{t:t+h} \in \mathrm{supp}(P_\mathcal{D}(a_{t:t+h} \mid  s_t))$, 
\begin{align}
\label{eq:strongoc}
    D_{\mathrm{TV}}\infdivx*{T(s_{t+h'} \mid s_t, a_{t:t+h'})}{ P_{\mathcal{D}}(s_{t+h'} \mid s_t, a_{t:t+h})} \leq \varepsilon_h, \forall h' \in \{1, 2, \cdots, h\},
\end{align}
where we use $T(s_{t+h'} \mid s_t, a_{t:t+h'})$ to denote the distribution of the future state $s_{t+h'}$ after carrying out the action sequence $a_{t:t+h'}$ in the environment open-loop from the current state $s_t$.
\end{definition}

Intuitively, $\mathcal{D}$ is $\varepsilon_h$-open-loop consistent if, when executing the same sequence of actions from it open-loop from $s_t$, the resulting marginal distribution of the state-action $h$ steps into the future (\emph{i.e.}, $s_{t+h}$) deviates from the corresponding distribution in the dataset by at most $\varepsilon_h$ in total variation distance.
The strong version (\cref{eq:strongoc}) requires the total variation distance bound to hold for every action sequence in the support, whereas the weak version (\cref{eq:weakoc,eq:weakoc2}) only requires the bound to hold in expectation. See \cref{appendix:edyn} for examples of weakly open-loop consistent data.

\subsection{Value learning bias of action chunking Q-learning}
\label{sec:theory-bias}

Next, we show that the \emph{weak} open-loop consistency of $\mathcal{D}$ alone is sufficient to show that \emph{behavior} value iteration of an action chunking critic results in a \emph{nominal} value function (\emph{i.e.}, $\hat{V}_{\mathrm{ac}}$) with a bounded bias from the \emph{true} value (\emph{i.e.}, $V_{\mathrm{ac}}$) of the behavior cloning action chunking policy $\tilde \pi_{\mathrm{ac}}$:

\begin{restatable}[AC Value Bias]{theorem}{bcvbias}
\label{thm:bcvbias}
Let $\hat{V}_{\mathrm{ac}}: \mathcal{S} \to [0, 1/(1-\gamma)]$ be a solution of
\begin{align}
    \hat{V}_{\mathrm{ac}}(s_t) = \mathbb{E}_{s_{t+1:t+h+1}, a_{t:t+h} \sim P_{\mathcal{D}}(\cdot \mid s_t)}\left[R_{t:t+h} + \gamma^h \hat{V}_{\mathrm{ac}}(s_{t+h})\right],
\end{align}
with $R_{t:t+h} = \sum_{t'=t}^{t+h} \gamma^{t'-t} r(s_{t'}, a_{t'})$
and $V_{\mathrm{ac}}$ is the true value of $\tilde \pi_{\mathrm{ac}}: s_t \mapsto P_\mathcal{D}(a_{t:t+h} \mid s_t)$. If $\mathcal{D}$ is weakly $\varepsilon_h$-open-loop consistent, then for all  $s_t \in \mathrm{supp}(P_\mathcal{D}(s_t))$,
\begin{align}
\left|V_{\mathrm{ac}}(s_t) - \hat{V}_{\mathrm{ac}}(s_t)\right| \leq \frac{\gamma \varepsilon_h}{(1-\gamma)(1 - (1-\varepsilon_h)\gamma^h)} \leq \varepsilon_h H \bar H.
\end{align}
\end{restatable}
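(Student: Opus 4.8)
The plan is to realize both value functions as fixed points of $h$-step Bellman operators that are identical except for the trajectory law appearing in the expectation, and then reduce the whole problem to a single scalar recursion in the open-loop slack $\varepsilon_h$. First I would record that, because $\tilde\pi_{\mathrm{ac}}$ draws chunks from $\pi^{\circ}_{\mathcal{D}}(\cdot\mid s_t)=P_{\mathcal{D}}(a_{t:t+h}\mid s_t)$ and then lets the state evolve under $T$, its true value satisfies $V_{\mathrm{ac}}(s_t)=\mathbb{E}_{\opP(\cdot\mid s_t)}\!\left[R_{t:t+h}+\gamma^h V_{\mathrm{ac}}(s_{t+h})\right]$, which is exactly the defining equation of $\hat V_{\mathrm{ac}}$ with $\opP$ in place of $P_{\mathcal{D}}$. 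Setting $\Delta:=\hat V_{\mathrm{ac}}-V_{\mathrm{ac}}$ and $\delta:=\sup_{s\in\mathrm{supp}(P_{\mathcal{D}}(s_t))}|\Delta(s)|$ (note $\delta\le H$, since both values lie in $[0,H]$), subtracting the two fixed-point equations splits $\Delta(s_t)$ into a reward term $(A)=\mathbb{E}_{P_{\mathcal{D}}}[R_{t:t+h}]-\mathbb{E}_{\opP}[R_{t:t+h}]$ and a discounted bootstrap term $\gamma^h(B)$ with $(B)=\mathbb{E}_{P_{\mathcal{D}}}[\hat V_{\mathrm{ac}}(s_{t+h})]-\mathbb{E}_{\opP}[V_{\mathrm{ac}}(s_{t+h})]$.

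For $(A)$ I would split $R_{t:t+h}$ into its per-step contributions $\gamma^{h'}r(s_{t+h'},a_{t+h'})$: the $h'=0$ term cancels because the two laws share the $(s_t,a_t)$ marginal, and for $h'\in\{1,\dots,h-1\}$ the weak OLC bound \cref{eq:weakoc} on the $(s_{t+h'},a_{t+h'})$ marginal together with $r\in[0,1]$ gives $|(A)|\le\varepsilon_h\sum_{h'=1}^{h-1}\gamma^{h'}=\varepsilon_h(\gamma-\gamma^h)/(1-\gamma)$. The crux is $(B)$, where I would not simply swap measures but instead invoke the coupling characterization of \cref{eq:weakoc2}: there is a joint law of $(\hat s,s')$ with $\hat s\sim P_{\mathcal{D}}(s_{t+h}\mid s_t)$, $s'\sim\opP(s_{t+h}\mid s_t)$, and disagreement probability at most $\varepsilon_h$. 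On the agreement event the shared state lies in $\mathrm{supp}(P_{\mathcal{D}}(s_t))$, so the integrand is $\Delta(\hat s)$ and is bounded by $\delta$; on the disagreement event it is bounded by $H$. Because $\delta\le H$, this yields $|(B)|\le(1-\varepsilon_h)\delta+\varepsilon_h H$, and it is exactly the surviving factor $(1-\varepsilon_h)$ that later sharpens the denominator from $1-\gamma^h$ to $1-(1-\varepsilon_h)\gamma^h$.

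Combining the two bounds gives $|\Delta(s_t)|\le\varepsilon_h(\gamma-\gamma^h)/(1-\gamma)+\gamma^h\varepsilon_h H+\gamma^h(1-\varepsilon_h)\delta$, and I would use the telescoping identity $\varepsilon_h(\gamma-\gamma^h)/(1-\gamma)+\gamma^h\varepsilon_h/(1-\gamma)=\varepsilon_h\gamma/(1-\gamma)=\varepsilon_h\gamma H$ to collapse the first two terms. Taking the supremum over $s_t\in\mathrm{supp}(P_{\mathcal{D}}(s_t))$ turns this into the scalar inequality $\delta\le\varepsilon_h\gamma H+\gamma^h(1-\varepsilon_h)\delta$, which (since $\gamma^h(1-\varepsilon_h)<1$) rearranges to $\delta\le\varepsilon_h\gamma/[(1-\gamma)(1-(1-\varepsilon_h)\gamma^h)]$. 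The looser bound $\varepsilon_h H\bar H$ then follows at once from $1-(1-\varepsilon_h)\gamma^h\ge 1-\gamma^h$ and $\gamma\le 1$.

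The step I expect to be the main obstacle is $(B)$: both extracting the factor $(1-\varepsilon_h)$ through a total-variation coupling rather than a crude measure swap (the latter would only give the weaker denominator $1-\gamma^h$), and the accompanying support requirement that the agreement state $\hat s\sim P_{\mathcal{D}}(s_{t+h}\mid s_t)$ genuinely lies in $\mathrm{supp}(P_{\mathcal{D}}(s_t))$, so that $|\Delta(\hat s)|\le\delta$ closes the recursion with a single constant. Under \cref{assumption:data} this holds because $s_{t+h}$ is itself a chunk-start state of a shifted data window, but it should be stated explicitly; everything else is bookkeeping with geometric sums.
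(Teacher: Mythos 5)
Your proposal is correct and follows essentially the same route as the paper's proof: the same split into a per-step reward discrepancy (bounded via the weak OLC marginals, with the $h'=0$ term cancelling) and a bootstrap term where the total-variation coupling extracts the $(1-\varepsilon_h)$ factor (the paper packages this as \cref{lemma:ttvbf}), followed by the same scalar recursion justified by the support-closure observation from \cref{assumption:data}. The subtleties you flag — the monotonicity step requiring $\delta\le H$ and the support requirement on $s_{t+h}$ — are exactly the ones the paper handles.
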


Furthermore, we show that this bound is \emph{tight} for any value of  $h > 1, \gamma \in [0, 1)$ and $0\leq \varepsilon_h \leq \frac{1}{2}$:
\begin{restatable}[Worst-case AC Value Bias]{theorem}{bcvbiastight}
\label{thm:bcvbiastight}
For any $h > 1,\gamma \in [0, 1), \varepsilon_h \in [0, 1/2]$, there exists an MDP $\mathcal{M}$ and a weakly $\varepsilon_h$-open-loop consistent $\mathcal{D}$ such that for some $s_t \in \mathrm{supp}(P_\mathcal{D}(s_t))$,
\begin{align}
    V_{\mathrm{ac}}(s_t) - \hat{V}_{\mathrm{ac}}(s_t) = \frac{\gamma \varepsilon_h}{(1-\gamma)(1 - (1-\varepsilon_h)\gamma^h)}.
\end{align}
Similarly, there exists $\mathcal{M}$ and $\varepsilon_h$-open-loop consistent $\mathcal{D}$ such that for some $s_t \in \mathrm{supp}(P_\mathcal{D}(s_t))$,
\begin{align}
    \hat{V}_{\mathrm{ac}}(s_t) - V_{\mathrm{ac}}(s_t) = \frac{\gamma \varepsilon_h}{(1-\gamma)(1-(1-\varepsilon_h)\gamma^h)}.
\end{align}
\end{restatable}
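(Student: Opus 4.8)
The plan is to prove tightness by exhibiting, for each triple $(h,\gamma,\varepsilon_h)$, an explicit MDP together with a data distribution that is weakly $\varepsilon_h$-open-loop consistent and on which the inequality of \cref{thm:bcvbias} holds with equality. The guiding principle is to read the extremal configuration off the two fixed-point recursions: $\hat V_{\mathrm{ac}}$ backs up through $\clP$ whereas $V_{\mathrm{ac}}$ backs up through the open-loop replay $\opP$, so the gap is driven entirely by where these two distributions disagree. To saturate the bound I would concentrate the entire $\varepsilon_h$ total-variation budget at the \emph{earliest} intra-chunk step and route the disagreeing probability mass into a state whose value differs by the maximal amount $H=1/(1-\gamma)$: placing the disagreement at step $1$ rather than later produces the bare factor $\gamma$ in the numerator, and letting the surviving mass return to a recurring hub state produces the geometric factor $1/(1-(1-\varepsilon_h)\gamma^h)$ from the per-chunk survival probability $1-\varepsilon_h$.

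Concretely, for the first (underestimation) direction I would use a hub $s^\diamond$, a zero-reward ``decoy'' chain, and an absorbing sink $g$ with reward $1$ at every step (so $V(g)=H$). From $s^\diamond$ a fixed first action forks to $c_1$ with probability $1-q$ and to $e_1$ with probability $q$, where $q$ solves $2q(1-q)=\varepsilon_h$ (solvable for every $\varepsilon_h\in[0,1/2]$). A reactive behavior policy corrects \emph{both} branches onto a common zero-reward chain returning to $s^\diamond$ after $h$ steps, so $\hat V_{\mathrm{ac}}(s^\diamond)=0$, while the two branch-specific second actions make the chunk marginal a nontrivial mixture. Under open-loop replay the committed second action is independent of the realized fork, and I define the two mismatched transitions so that they yield reward $1$ and jump into $g$, i.e.\ reward $1$ from step $1$ onward. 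The total mismatch probability is then exactly $2q(1-q)=\varepsilon_h$, so $\opP$ places mass $\varepsilon_h$ on $g$ and $1-\varepsilon_h$ back on $s^\diamond$ at every step $h'\ge 1$; a direct computation gives $D_{\mathrm{TV}}=\varepsilon_h$ at each such step (verifying weak OLC with equality) and
\begin{align}
V_{\mathrm{ac}}(s^\diamond)=\varepsilon_h\tfrac{\gamma}{1-\gamma}+(1-\varepsilon_h)\gamma^h\,V_{\mathrm{ac}}(s^\diamond)=\frac{\gamma\varepsilon_h}{(1-\gamma)(1-(1-\varepsilon_h)\gamma^h)},
\end{align}
so $V_{\mathrm{ac}}-\hat V_{\mathrm{ac}}$ equals the claimed value. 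For the second (overestimation) direction I would mirror the gadget, swapping the roles of the rewarding sink and a zero-value trap and loading reward $1$ on the surviving chain (so each surviving chunk collects $\rho=\gamma(1-\gamma^h)/(1-\gamma)$ and failing chunks collect reward $0$), which yields $\hat V_{\mathrm{ac}}-V_{\mathrm{ac}}$ equal to the same expression.

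The main obstacle is forcing the open-loop per-chunk failure probability to equal \emph{exactly} $\varepsilon_h$. This is genuinely delicate: the disagreement must come from the behavior policy being reactive, since a deterministic or purely open-loop behavior makes $\opP=\clP$ and collapses the bound to $0$; yet a naive two-branch reactive construction makes ``commit the majority chunk'' and ``realize the minority branch'' independent events, yielding failure probability $(1-\varepsilon_h)\varepsilon_h<\varepsilon_h$ and hence a strictly loose bound. Letting \emph{both} mismatched commitments leak and calibrating the fork probability through $2q(1-q)=\varepsilon_h$ is what restores exact saturation, while the shared first action keeps the step-$0$ marginals identical under $\clP$ and $\opP$ so that constraint is trivially met. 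The remaining steps---checking that the intermediate $(s_{t+h'},a_{t+h'})$ total variations are all exactly $\varepsilon_h$ rather than larger, and that the leaked mass enters early enough to realize the maximal $\gamma/(1-\gamma)$ per-chunk gap---are routine once the gadget is fixed, as is the boundary-reward bookkeeping in the mirrored construction.
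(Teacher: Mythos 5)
Your construction is correct and is essentially the paper's own argument: the paper likewise calibrates a two-way fork via $\varepsilon_h = 2\delta(1-\delta)$ so that the committed chunk and the realized branch are independent under open-loop replay and mismatch with probability exactly $\varepsilon_h$, then routes the surviving mass back to a recurring hub to generate the $1/(1-(1-\varepsilon_h)\gamma^h)$ factor (the only cosmetic difference is that the paper forks independently at every intermediate step of a $2h$-state cycle and diverts mismatches to an absorbing $Z$ at step $h$, whereas you concentrate the single fork at step $1$ and send mismatches to a reward-$1$ sink, which makes the value recursion a one-line geometric sum). One small correction for the mirrored (overestimation) gadget: the surviving chunk should collect $(1-\gamma^h)/(1-\gamma)$, not $\gamma(1-\gamma^h)/(1-\gamma)$, and the mismatched step must receive reward $0$ while the shared step-$0$ action keeps reward $1$ — otherwise the gap picks up a spurious factor of $(1-\gamma^{h-1})/(1-\gamma^h)$ and the bound is not met with equality; this is exactly the boundary bookkeeping you flagged.
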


The proofs can be found in \hyperref[proof:bcvbias]{\cref{proof:bcvbias}} and \hyperref[proof:bcvbiastight]{\cref{proof:bcvbiastight}}. 
A direct consequence of these results is that the \emph{true} value of the optimal action chunking policy is close to that of the optimal closed-loop policy:
\begin{restatable}[Optimality Gap for AC Policy]{corollary}{vb}
\label{corollary:vb}
Let $\mathcal{D}^\star$ be the data collected by any optimal policy $\pi^\star$. If $\mathcal{D}^\star$ is weakly $\varepsilon_h$-open-loop consistent, then for all $s_t \in \mathrm{supp}(P_{\mathcal{D}^\star}(s_t))$,
\begin{align}
    V^\star(s_t) - V^\star_\mathrm{ac}(s_t) \leq V^\star(s_t) - \tilde V_\mathrm{ac}(s_t) \leq \frac{\gamma \varepsilon_h}{(1 - \gamma)(1-(1-\varepsilon_h)\gamma^h)} \leq \varepsilon_hH \bar H,
\end{align}
where $V^\star$ is the value of the optimal policy $\pi^\star$, $V^\star_{\mathrm{ac}}$ is the \emph{true} value of the optimal action chunking policy, and $\tilde{V}_{\mathrm{ac}}$ is the \emph{true} value of the action chunking policy from cloning the data $\mathcal{D}^\star$:
\begin{align}
    \tilde{\pi}^{\mathcal{D}^\star}_{\mathrm{ac}}(a_{t:t+h}\mid s_t): s_t \mapsto P_{\mathcal{D}^\star}(\cdot \mid s_t).
\end{align}
\end{restatable}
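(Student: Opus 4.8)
The plan is to prove the three-part inequality chain by handling each piece separately, noting that the rightmost inequality $\frac{\gamma\varepsilon_h}{(1-\gamma)(1-(1-\varepsilon_h)\gamma^h)} \le \varepsilon_h H\bar H$ is already supplied as the second bound in \cref{thm:bcvbias}. The leftmost inequality $V^\star(s_t) - V^\star_{\mathrm{ac}}(s_t) \le V^\star(s_t) - \tilde{V}_{\mathrm{ac}}(s_t)$ is equivalent to $\tilde{V}_{\mathrm{ac}}(s_t) \le V^\star_{\mathrm{ac}}(s_t)$, which I would obtain directly from the definition of the optimal action chunking policy: viewing open-loop chunk execution as a macro-action MDP whose states are the chunk boundaries, whose action is the chunk $a_{t:t+h}$, and whose transition is the $h$-step open-loop rollout under $T$, the optimal policy $\pi^\star_{\mathrm{ac}}$ is optimal pointwise in this MDP, so its true value $V^\star_{\mathrm{ac}}$ dominates the true value $\tilde{V}_{\mathrm{ac}}$ of the particular action chunking policy $\tilde{\pi}^{\mathcal{D}^\star}_{\mathrm{ac}}$ obtained by cloning $\mathcal{D}^\star$. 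The real content is therefore the middle inequality $V^\star(s_t) - \tilde{V}_{\mathrm{ac}}(s_t) \le \frac{\gamma\varepsilon_h}{(1-\gamma)(1-(1-\varepsilon_h)\gamma^h)}$.

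To establish the middle inequality, I would instantiate \cref{thm:bcvbias} with $\mathcal{D} = \mathcal{D}^\star$, so that $V_{\mathrm{ac}} = \tilde{V}_{\mathrm{ac}}$ is the true value of the cloned policy while $\hat{V}_{\mathrm{ac}}$ is the behavior value iteration fixed point. The crux is to identify $\hat{V}_{\mathrm{ac}} = V^\star$ on $\mathrm{supp}(P_{\mathcal{D}^\star}(s_t))$. Since $\mathcal{D}^\star$ is generated by rolling out $\pi^\star$, the data distribution $P_{\mathcal{D}^\star}(s_{t+1:t+h+1}, a_{t:t+h} \mid s_t)$ coincides with the $h$-step rollout distribution of $\pi^\star$ under $T$. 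Unrolling the one-step Bellman consistency of $V^\star$ under $\pi^\star$ for $h$ steps shows that $V^\star$ satisfies exactly the same fixed-point equation that defines $\hat{V}_{\mathrm{ac}}$. Because this behavior value iteration operator is a $\gamma^h$-contraction on functions over the support (and $\pi^\star$ keeps trajectories within that support), its fixed point is unique, forcing $\hat{V}_{\mathrm{ac}} = V^\star$. Then \cref{thm:bcvbias} yields $\left|\tilde{V}_{\mathrm{ac}}(s_t) - V^\star(s_t)\right| \le \frac{\gamma\varepsilon_h}{(1-\gamma)(1-(1-\varepsilon_h)\gamma^h)}$, and dropping the absolute value gives the middle inequality; chaining the three pieces completes the argument.

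I expect the main obstacle to be the careful justification of $\hat{V}_{\mathrm{ac}} = V^\star$. Two points need attention: first, matching the reward-accumulation convention so that the $h$-step unrolling of $V^\star$ reproduces the target $R_{t:t+h} + \gamma^h \hat{V}_{\mathrm{ac}}(s_{t+h})$ exactly; second, ensuring the fixed-point uniqueness argument is carried out on the correct function space, namely functions restricted to $\mathrm{supp}(P_{\mathcal{D}^\star}(s_t))$, which is closed under the dynamics induced by $\pi^\star$ so that the $\gamma^h$-contraction is well-defined there. Once that identification is secured, the remaining steps reduce to immediate applications of \cref{thm:bcvbias} and the pointwise optimality of the action chunking policy.
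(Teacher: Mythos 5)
Your proposal is correct and follows essentially the same route as the paper's proof: identify $\hat{V}_{\mathrm{ac}} = V^\star$ by unrolling the one-step Bellman equation of $\pi^\star$ for $h$ steps, apply \cref{thm:bcvbias} with $\mathcal{D}=\mathcal{D}^\star$ to bound $V^\star - \tilde V_{\mathrm{ac}}$, and chain with $\tilde V_{\mathrm{ac}} \le V^\star_{\mathrm{ac}} \le V^\star$. Your explicit attention to fixed-point uniqueness on the (dynamics-closed) support is a point the paper leaves implicit, but it is the same argument.
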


We show that his bound is also \emph{tight}. The proofs can be found in \hyperref[proof:vb]{\cref{proof:vb}} and \hyperref[proof:vbtight]{\cref{proof:vbtight}}.
\begin{restatable}[Worse-case Optimality Gap for Action Chunking Policy]{corollary}{vbtight}
\label{corollary:vbtight}
For any $h > 1, \gamma \in [0,1), \varepsilon_h \in [0, 1/2]$, there exists an MDP $\mathcal{M}$ whose optimal policy $\pi^\star$ induces a data distribution $\mathcal{D}^\star$ that is weakly $\varepsilon_h$-open-loop consistent, such that for some $s_t \in \mathrm{supp}(P_{\mathcal{D}^\star}(s_t))$,
\begin{align}
    V^\star(s_t) - V^\star_{\mathrm{ac}}(s_t)  = \frac{\gamma \varepsilon_h}{(1-\gamma)(1-(1-\varepsilon_h)\gamma^h)}.
\end{align}
\end{restatable}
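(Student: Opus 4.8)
The plan is to exhibit one explicit MDP together with data generated by an optimal policy of it, and to force every inequality used in the proof of \cref{corollary:vb} to hold with equality. Recall that \cref{corollary:vb} chains $V^\star - V^\star_{\mathrm{ac}} \le V^\star - \tilde V_{\mathrm{ac}} \le \frac{\gamma\varepsilon_h}{(1-\gamma)(1-(1-\varepsilon_h)\gamma^h)}$. Since the left-hand side is at most this bound, making it \emph{equal} the bound requires simultaneously (i) that the behavior-cloned chunking policy already realizes the worst-case value bias of \cref{thm:bcvbiastight}, so $V^\star - \tilde V_{\mathrm{ac}}$ equals the bound, and (ii) that this behavior-cloned policy is itself an \emph{optimal} action-chunking policy, so $V^\star_{\mathrm{ac}} = \tilde V_{\mathrm{ac}}$. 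Engineering a single construction that meets both is the core of the argument.

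The construction I would use is a per-chunk reactive ``guessing'' gadget that recurs every $h$ steps. From a hub state $s_0$ a single action gives reward $1$ and transitions to a target state $v_\theta$ with $\theta$ drawn uniformly from $[0,1)$. At $v_\theta$ an action $\alpha$ succeeds (reward $1$, followed by a fixed deterministic chain of unit-reward steps returning to $s_0$ after $h$ steps) iff $\alpha$ lies in an arc of length $\sigma := 1-\varepsilon_h$ around $\theta$, and otherwise transitions to an absorbing state $\bot$ of zero reward. I take $\pi^\star$ to observe $\theta$ and play $\alpha$ \emph{uniformly over the entire winning arc}, so it always succeeds; hence $V^\star(s_0)=1/(1-\gamma)=H$, $\pi^\star$ is optimal, and it defines $\mathcal{D}^\star$. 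Randomizing $\pi^\star$ over the whole arc, rather than reacting deterministically, is exactly what pins the open-loop/closed-loop discrepancy down to $\varepsilon_h$.

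Three computations then close the proof. First, weak open-loop consistency: at the reactive step the open-loop replay of $\mathcal{D}^\star$'s chunk marginal decorrelates the pre-committed action from the realized target, so the replayed $(\text{target},\text{action})$ law is the independent uniform coupling while the data is the width-$\sigma$ diagonal band, giving total variation $1-\sigma$; at every later intra-chunk step and at the chunk boundary the replay instead deposits the lost mass $1-\sigma$ on $\bot$, again giving $1-\sigma$. Thus \cref{eq:weakoc} holds with equality for $h'=1,\dots,h-1$ and \cref{eq:weakoc2} for $h'=h$, all equal to $\varepsilon_h = 1-\sigma$. Second, optimality of the behavior clone among chunking policies: because $\theta$ is uniform and the winning arc has fixed length $\sigma$, any action—indeed any distribution over actions—committed before $\theta$ is revealed succeeds with probability exactly $\sigma$, so every action-chunking policy ties at per-chunk success $\sigma$ and the behavior clone is optimal, giving $V^\star_{\mathrm{ac}}=\tilde V_{\mathrm{ac}}$. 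Third, the value gap: solving the recursion $V_{\mathrm{ac}}(s_0) = 1 + \gamma\sigma\,(1-\gamma^{h-1})/(1-\gamma) + \gamma^h\sigma\,V_{\mathrm{ac}}(s_0)$ and subtracting from $H$ yields $\gamma(1-\sigma)/[(1-\gamma)(1-\gamma^h\sigma)]$, which equals the stated bound after substituting $\sigma=1-\varepsilon_h$.

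The main obstacle is the tension between the two tightness requirements. Making the open-loop policy fail often (to enlarge the gap) pushes toward strongly state-dependent optimal reactions, but such reactions make the data's action--state correlation large and inflate the open-loop total-variation distance well past $\varepsilon_h$ (a naive two-branch reactive design gives $2(1-\varepsilon_h)\varepsilon_h$, and the deterministic-reaction limit gives $1$), violating weak $\varepsilon_h$-consistency; conversely, interchangeable optimal actions keep the distance small but let the open-loop policy exploit them, shrinking the gap below the bound. The uniform-target/fixed-arc gadget is precisely what decouples these: its symmetry forces every chunking policy to the same success probability $\sigma$ (so no exploitation is possible), while randomizing $\pi^\star$ over the arc sets the decorrelation, hence the total-variation distance, to exactly $1-\sigma$. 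Choosing $\sigma=1-\varepsilon_h$ lands both quantities on $\varepsilon_h$ and the bound simultaneously for every $\varepsilon_h\in[0,1/2]$. The only remaining routine step is to confirm that the gadget composes across chunks, so that the single displayed recursion governs both $V_{\mathrm{ac}}$ and $V^\star$ at the recurring hub $s_0\in\mathrm{supp}(P_{\mathcal{D}^\star})$ at which the equality is asserted.
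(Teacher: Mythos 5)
Your construction is correct and lands exactly on the stated constant: the recursion $V_{\mathrm{ac}}(s_0)=1+\sigma\frac{\gamma-\gamma^h}{1-\gamma}+\sigma\gamma^h V_{\mathrm{ac}}(s_0)$ with $\sigma=1-\varepsilon_h$ gives $V^\star(s_0)-V_{\mathrm{ac}}(s_0)=\frac{\gamma\varepsilon_h}{(1-\gamma)(1-(1-\varepsilon_h)\gamma^h)}$; the total-variation computations (diagonal band vs.\ product coupling at $h'=1$, and mass $1-\sigma$ displaced onto $\bot$ at $h'\ge 2$ and $h'=h$) each evaluate to exactly $\varepsilon_h$; and rotational symmetry forces every chunking policy to a per-chunk success probability of exactly $\sigma$, so the behavior clone is optimal among chunking policies and $V^\star_{\mathrm{ac}}=\tilde V_{\mathrm{ac}}$. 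The overall strategy matches the paper's: a recurring hub at which closed-loop play always succeeds, open-loop replay is absorbed into a zero-value state with probability exactly $\varepsilon_h$ per cycle, and full symmetry neutralizes any advantage a chunking policy could gain. The concrete gadget differs, though: the paper's proof (\cref{fig:vb-example}) uses a finite $(3h-1)$-state MDP with binary actions, achieving the action--state decorrelation by letting $\pi^\star$ randomize uniformly over $\{0,1\}$ on the main chain $X_i$ while two mirror branches $A_i,B_i$ (each reached with probability $\delta/2$) demand opposite actions, with absorption into $Z$ deferred to the last step of the chunk; your continuous arc-guessing state concentrates the entire failure event at the first reactive step, which makes the symmetry argument ("every committed action wins an arc of measure $\sigma$") essentially one line and in fact covers all $\varepsilon_h\in[0,1]$ rather than only $[0,1/2]$, at the price of continuous state and action spaces. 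The only detail you should make explicit in a full write-up is that the chain states $c_2,\dots,c_{h-1}$ must have action-independent rewards and transitions (or a single action), so that the state--action total variation at $h'\ge2$ is carried entirely by the $(1-\sigma)$ mass on $\bot$ and the tie among chunking policies is not broken by choices made after the reactive step.
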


The key observation that enables these results is that $\hat{V}_{\mathrm{ac}}$ obtained from value iteration on $\mathcal{D}^\star$ (data collected by an optimal policy) recovers the value of the optimal policy $V^\star$. 
This allows us to use \cref{thm:bcvbias} to directly obtain a bound on the optimality gap for action chunking policies.

Next, we analyze the performance of the action chunking policy obtained by Q-learning. In particular, we analyze the Q-function obtained as a solution of the bellman optimality equation under $\mathrm{supp}(\mathcal{D})$:
\begin{align}
    \hat{Q}^+_{
    \mathrm{ac}}(s_t, a_{t:t+h}) = \mathbb{E}_{s_{t+1:t+h+1} \sim P_{\mathcal{D}}(\cdot \mid s_t, a_{t:t+h})}\left[R_{t:t+h} + \gamma^h\hat{Q}^+_{
    \mathrm{ac}}(s_{t+h}, \pi^+_{\mathrm{ac}}(s_{t+h}))\right],
\end{align}
where $\pi^+_{\mathrm{ac}}$ is defined as follows:
\begin{align}
\label{eq:acp}
    \pi^+_{\mathrm{ac}} : s_t \mapsto {\arg\max}_{a_{t:t+h} \in \mathrm{supp}(P_{\mathcal{D}}(a_{t:t+h} \mid s_t))} \hat{Q}^+_{
    \mathrm{ac}}(s_t, a_{t:t+h}).
\end{align}

With only the weak open-loop consistency condition, the worst-case performance of the action chunking policy may be arbitrarily low, as formalized below (proof available in \cref{proof:lottery}).
\begin{restatable}[AC Q-Learning under Weak OLC]{proposition}{lottery}
\label{thm:lottery}
For any $h > 1, \gamma\in[0, 1), c \in [0, 1/2)$, $\varepsilon_h \in (0, 1/2)$, there exists an MDP $\mathcal{M}$, a weakly $\varepsilon_h$-open-loop consistent $\mathcal{D}$ and $\mathcal{D}^\star$ with $\mathrm{supp}(P_{\mathcal{D}}(s_t, a_{t:t+h})) \supseteq \mathrm{supp}(P_{\mathcal{D}^\star}(s_t, a_{t:t+h}))$, such that for some $s_t \in \mathrm{supp}(P_{\mathcal{D}^\star}(s_t))$,
\begin{align}
    V^\star(s_t) - V^+_{\mathrm{ac}}(s_t) = V^\star_{\mathrm{ac}}(s_t) - V^+_{\mathrm{ac}}(s_t)  = \frac{\gamma c}{1-\gamma}.
\end{align}
\end{restatable}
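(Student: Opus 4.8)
The plan is to prove this lower bound by an adversarial construction that exploits the fact that weak open-loop consistency (\cref{def:olc}) only controls the marginals of $(s_{t+h'},a_{t+h'})$ \emph{in expectation over the action chunk}, whereas the greedy operator defining $\pi^+_{\mathrm{ac}}$ in \cref{eq:acp} is probability-blind and can latch onto a single, arbitrarily rare ``lottery'' chunk. The mechanism I would engineer is a \emph{backward-inference} effect created by a non-Markovian behavior policy: there is an action chunk $\bar a$ that, in the data $\mathcal D$, is emitted only along trajectories passing through a favorable intermediate state, so that $P_{\mathcal D}(\cdot \mid s_t,\bar a)$ reports a high nominal return; yet rolling $\bar a$ out open-loop from $s_t$ lands in an unfavorable state with probability $1-p$, making its \emph{true} value low. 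The inflated nominal value $\hat Q^+_{\mathrm{ac}}(s_t,\bar a)$ then forces $\pi^+_{\mathrm{ac}}$ to select $\bar a$, incurring the claimed gap.

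Concretely, I would build an MDP with start state $s_t$, a ``risky'' first action transitioning to a good intermediate state $x$ with probability $p$ and a bad intermediate state $y$ with probability $1-p$, and a reward-$1$ region $G^+$ reachable \emph{in the data} only through $x$ via a completion $\beta_x$ that the behavior policy takes exclusively from $x$ (this is what forces $P_{\mathcal D}(s_{t+1}=x \mid s_t, \bar a)=1$). Applying the same $\beta_x$ open-loop from $y$ is defined to route into a bad region of per-step reward $r_{\mathrm{bad}}$, while a genuinely safe alternative yields per-step reward $1-\xi<1$. Two independent knobs drive the construction: the behavior's probability $q$ of ever choosing the risky action, and the triple $(p,\xi,r_{\mathrm{bad}})$. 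The optimal closed-loop policy gambles and then reacts, and I would design a reactive-equivalent open-loop completion $\beta_{\mathrm{safe}}$ that routes optimally from \emph{both} $x$ and $y$, so that an (unconstrained) action chunking policy attains $V^\star$, giving $V^\star=V^\star_{\mathrm{ac}}$.

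The verification proceeds in four steps. First, I solve the nominal behavior Bellman-optimality fixed point and show $\pi^+_{\mathrm{ac}}(s_t)=\bar a$: since $G^+$ is in-data reachable only through the backward-inference chunk, $\hat Q^+_{\mathrm{ac}}(s_t,\bar a)=\gamma/(1-\gamma)$ strictly dominates the nominal value $\gamma(1-\xi)/(1-\gamma)$ of every other in-support chunk. Second, I compute the true open-loop value $V^+_{\mathrm{ac}}(s_t)=\frac{\gamma}{1-\gamma}\big(p+(1-p)r_{\mathrm{bad}}\big)$ and $V^\star=V^\star_{\mathrm{ac}}=\frac{\gamma}{1-\gamma}\big(1-(1-p)\xi\big)$, whose difference is $\frac{\gamma(1-p)}{1-\gamma}(1-\xi-r_{\mathrm{bad}})$; choosing $r_{\mathrm{bad}}=1-\xi-\tfrac{c}{1-p}$ (nonnegative exactly when $c\le(1-p)(1-\xi)$, which accommodates every $c\in[0,1/2)$) pins the gap to $\frac{\gamma c}{1-\gamma}$. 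Third, I verify weak $\varepsilon_h$-OLC: because $\bar a$ (and every chunk whose open-loop rollout diverges from its in-data intermediate states) is taken only with probability $O(q)$, the marginal total-variation discrepancies in \cref{eq:weakoc,eq:weakoc2} are $O(q)$, so choosing $q$ small enough makes them $\le\varepsilon_h$ while leaving the gap untouched. Fourth, I check $\mathrm{supp}(P_{\mathcal D})\supseteq\mathrm{supp}(P_{\mathcal D^\star})$ by having the behavior policy cover exactly the completions the optimal policy uses from $x$ and from $y$, while never emitting $\beta_x$ from $y$ (preserving backward inference).

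The main obstacle is the \emph{decoupling} of $\varepsilon_h$ from $c$: the proposition demands the gap persist at $\frac{\gamma c}{1-\gamma}$ even as $\varepsilon_h\to 0$, which is precisely what distinguishes the failure of weak OLC (this result) from the guarantees obtainable under strong OLC. This forces the construction to separate two sources of randomness---a rare trigger $q$ that alone governs the averaged TV bounds, and an independent transition probability $p$ that alone governs the realized value loss---and to argue that the greedy argmax is insensitive to $q$. The most delicate point is establishing strict greedy selection of $\bar a$: I must ensure the reward-$1$ region is reachable \emph{within the data} only through the backward-inference path, so that no honestly-consistent in-support chunk can match the inflated nominal value, while still certifying $V^\star_{\mathrm{ac}}=V^\star$ through the open-loop completion $\beta_{\mathrm{safe}}$ despite the policy's inability to react---which is feasible here only because a single action sequence can be made to route optimally from both intermediate states.
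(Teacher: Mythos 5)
Your construction is essentially the paper's: the paper proves \cref{thm:lottery} with a six-state MDP in which the behavior policy reacts at the intermediate step (taking $a=0$ from the ``good'' state $B$ and $a=1$ from the ``bad'' state $C$), so the chunk $(0,0)$ appears in $\mathcal{D}$ only on trajectories that actually reached $B$; its nominal value $\gamma/(1-\gamma)$ is therefore inflated exactly by the backward-inference effect you describe, the greedy argmax selects it, and the branch probability $\theta$ is tuned separately from the gap parameter so that weak OLC holds for any $\varepsilon_h>0$ while the gap stays at $\gamma c/(1-\gamma)$. Your two independent knobs ($q$ for the TV bounds, $(p,\xi,r_{\mathrm{bad}})$ for the realized loss) are the same decoupling the paper uses with $(\theta)$ versus $(\delta,\tilde c)$.

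One point where you are actually more careful than the paper: the statement requires $V^\star(s_t)-V^+_{\mathrm{ac}}(s_t)=V^\star_{\mathrm{ac}}(s_t)-V^+_{\mathrm{ac}}(s_t)$, i.e.\ $V^\star=V^\star_{\mathrm{ac}}$ at $s_t$. In the paper's MDP the optimal \emph{closed-loop} policy gambles at $A$ (take $a=0$, then react at $B$/$C$), giving $V^\star(A)=\gamma(\delta+(1-\delta)\tilde c)/(1-\gamma)>\gamma\tilde c/(1-\gamma)=V^\star_{\mathrm{ac}}(A)$ whenever $\tilde c<1$, so the first equality in the claimed chain does not actually hold for that construction as written. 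Your insistence on a single completion $\beta_{\mathrm{safe}}$ that routes optimally from \emph{both} intermediate states (so that some open-loop chunk matches the reactive optimum and $V^\star_{\mathrm{ac}}=V^\star$) is precisely what is needed to repair this, and the rest of your verification (the argmax preferring $\bar a$ because every other in-support chunk has nominal value at most $\gamma(1-(1-p)\xi)/(1-\gamma)<\gamma/(1-\gamma)$; the choice $r_{\mathrm{bad}}=1-\xi-c/(1-p)$ with $p,\xi$ small enough that this is nonnegative for all $c<1/2$; the support containment) goes through. The proposal is correct and, modulo writing out the explicit transition table and the $O(q)$ TV computation, complete.
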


Intuitively, the chunked critic $Q(s_t, a_{t:t+h})$ has no way of differentiating a low-probability, `lucky' success from a closed-loop, high-probability success. This can cause the learned policy $\pi^+_{\mathrm{ac}}$ to erroneously prefer very low-value action chunks even when the optimal action chunks are available in the data distribution.
With \cref{thm:lottery}, we conclude that the weak open-loop consistency is \emph{insufficient} for effectively bounding the sub-optimality of action chunking Q-learning. Fortunately, the strong open-loop consistency (\cref{eq:strongoc}) is sufficient as quantified by the following bound:
\begin{restatable}[AC Q-Learning under Strong OLC]{theorem}{suboptact}
\label{thm:suboptact}
If $\mathcal{D}$ and $\mathcal{D}^\star$ are strongly $\varepsilon_h$-open-loop consistent and $\mathrm{supp}(P_{\mathcal{D}}(s_t, a_{t:t+h})) \supseteq \mathrm{supp}(P_{\mathcal{D}^\star}(s_t, a_{t:t+h}))$, then for all $s_t \in \mathrm{supp}(P_{\mathcal{D}^\star}(s_t))$,
\begin{equation}
\begin{aligned}
    V^\star(s_t) - V^+_{\mathrm{ac}}(s_t) &\leq \frac{\varepsilon_h\gamma}{1-\gamma}\left[\frac{2}{1-(1-2\varepsilon_h)\gamma^h} + \frac{1}{1-(1-\varepsilon_h)\gamma^h}\right] \leq 3\varepsilon_hH\bar H,
\end{aligned}
\end{equation}
where $V^\star$ is the value of a closed-loop optimal policy and $V^+_{\mathrm{ac}}$ is the \emph{true} value of $\pi^+_{\mathrm{ac}}$.
\end{restatable}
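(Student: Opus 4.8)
The plan is to bound the suboptimality by inserting the \emph{nominal} optimal value $\hat V^+_{\mathrm{ac}}(s_t) := \max_{a_{t:t+h}\in\mathrm{supp}(P_{\mathcal{D}}(\cdot\mid s_t))}\hat Q^+_{\mathrm{ac}}(s_t, a_{t:t+h}) = \hat Q^+_{\mathrm{ac}}(s_t, \pi^+_{\mathrm{ac}}(s_t))$ as a pivot and splitting
\[
V^\star(s_t) - V^+_{\mathrm{ac}}(s_t) = \underbrace{\big(V^\star(s_t) - \hat V^+_{\mathrm{ac}}(s_t)\big)}_{T_1(s_t)} + \underbrace{\big(\hat V^+_{\mathrm{ac}}(s_t) - V^+_{\mathrm{ac}}(s_t)\big)}_{T_2(s_t)} .
\]
Here $T_2$ is the gap between the \emph{nominal} and the \emph{true} value of the single deterministic policy $\pi^+_{\mathrm{ac}}$, while $T_1$ is the amount by which the nominal optimum overshoots the true optimum. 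I expect $T_2$ to produce the $\tfrac{1}{1-(1-\varepsilon_h)\gamma^h}$ term and $T_1$ the $\tfrac{2}{1-(1-2\varepsilon_h)\gamma^h}$ term.

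For $T_2$ I would simply re-run the argument behind \cref{thm:bcvbias}, but applied to the fixed deterministic policy $\pi^+_{\mathrm{ac}}$ rather than to a behavior-cloning policy. Writing $\hat V^+_{\mathrm{ac}}$ and $V^+_{\mathrm{ac}}$ through their one-chunk recursions at the chunk $a^+=\pi^+_{\mathrm{ac}}(s_t)$ (which lies in $\mathrm{supp}(P_{\mathcal{D}}(\cdot\mid s_t))$, so strong OLC applies), the reward terms differ by the discounted intermediate-state mismatches $\sum_{h'}\gamma^{h'}\varepsilon_h$, and the bootstrap term is controlled by a maximal coupling of $P_{\mathcal{D}}(s_{t+h}\mid s_t,a^+)$ and $T(s_{t+h}\mid s_t,a^+)$, which agree with probability at least $1-\varepsilon_h$ by \cref{eq:strongoc}. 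On agreement the next-state gap is again $T_2$; on disagreement it is at most $H$; and exactly as in \cref{thm:bcvbias} the range term $\gamma^h\varepsilon_h H$ folds into the reward mismatch, giving $\sup_s T_2 \le \tfrac{\gamma\varepsilon_h}{1-\gamma} + (1-\varepsilon_h)\gamma^h\sup_s T_2$, i.e. $T_2\le \tfrac{\gamma\varepsilon_h}{(1-\gamma)(1-(1-\varepsilon_h)\gamma^h)}$.

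For $T_1$ the starting point is the observation already underpinning \cref{corollary:vb}: nominal value iteration on the optimal data $\mathcal{D}^\star$ recovers $V^\star$, so $V^\star$ is the nominal value of $\tilde\pi^{\mathcal{D}^\star}_{\mathrm{ac}}$ under the $\mathcal{D}^\star$-backup. Since $\mathrm{supp}(P_{\mathcal{D}^\star})\subseteq\mathrm{supp}(P_{\mathcal{D}})$, every chunk that $\tilde\pi^{\mathcal{D}^\star}_{\mathrm{ac}}$ plays is a feasible argmax candidate, so $\hat V^+_{\mathrm{ac}}(s_t)\ge \mathbb{E}_{a\sim\tilde\pi^{\mathcal{D}^\star}_{\mathrm{ac}}(\cdot\mid s_t)}\hat Q^+_{\mathrm{ac}}(s_t,a)$. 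Subtracting the two one-chunk expressions bounds $T_1(s_t)$ by reward and bootstrap mismatches that are now between the $\mathcal{D}^\star$-conditional $P_{\mathcal{D}^\star}(\cdot\mid s_t,a)$ and the $\mathcal{D}$-conditional $P_{\mathcal{D}}(\cdot\mid s_t,a)$ for the \emph{same} chunk $a$. The crux, and the step I expect to be the main obstacle, is that these two conditionals are \emph{each} within $\varepsilon_h$ in total variation of the \emph{common} open-loop reference $T(s_{t+h'}\mid s_t,a_{t:t+h'})$ (strong OLC applied to $\mathcal{D}$ and to $\mathcal{D}^\star$ separately), hence within $2\varepsilon_h$ of each other by the triangle inequality. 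A coupling then makes them agree with probability at least $1-2\varepsilon_h$, and on the agreement event the next-chunk gap is exactly $T_1$, so the recursion closes on $T_1$ itself; this is precisely what yields the sharper $(1-2\varepsilon_h)$ contraction rather than the $(1-\varepsilon_h)$ one would obtain by naively routing $V^\star-\hat V^+_{\mathrm{ac}}$ through two independent single-$\varepsilon_h$ couplings. With the same folding of the $2\varepsilon_h H$ range term into the doubled reward mismatch, this gives $T_1(s_t)\le \tfrac{2\gamma\varepsilon_h}{1-\gamma} + (1-2\varepsilon_h)\gamma^h\sup_s T_1$, hence $T_1\le \tfrac{2\gamma\varepsilon_h}{(1-\gamma)(1-(1-2\varepsilon_h)\gamma^h)}$.

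Adding the two bounds reproduces the stated estimate, and the final inequality $\le 3\varepsilon_h H\bar H$ follows from $\tfrac{\gamma}{1-\gamma}\le H$ together with $\tfrac{1}{1-(1-c\varepsilon_h)\gamma^h}\le \tfrac{1}{1-\gamma^h}=\bar H$ for $c\in\{1,2\}$. I anticipate the delicate bookkeeping to be (i) justifying the $2\varepsilon_h$ shared-reference coupling so that the recursion genuinely closes on $T_1$ instead of leaking into separate over- and under-estimation gaps, and (ii) tracking the exact discount indices on the intermediate-state rewards so that the disagreement range-term and the reward mismatch combine cleanly into the numerators $\tfrac{2\gamma\varepsilon_h}{1-\gamma}$ and $\tfrac{\gamma\varepsilon_h}{1-\gamma}$.
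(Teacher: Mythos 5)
Your proposal is correct and follows essentially the same route as the paper: the same pivot on the nominal value $\hat V^+_{\mathrm{ac}}$, the same $2\varepsilon_h$ triangle-inequality coupling of $P_{\mathcal{D}}(\cdot\mid s_t,a_{t:t+h})$ and $P_{\mathcal{D}^\star}(\cdot\mid s_t,a_{t:t+h})$ through the common open-loop reference $T$ yielding the $(1-2\varepsilon_h)\gamma^h$ contraction for $V^\star-\hat V^+_{\mathrm{ac}}$, and the same reuse of the \cref{thm:bcvbias} argument (valid for the learned deterministic chunking policy since its chunks lie in $\mathrm{supp}(P_{\mathcal{D}})$ where strong OLC applies) for $\hat V^+_{\mathrm{ac}}-V^+_{\mathrm{ac}}$. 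The only cosmetic difference is that the paper closes the recursion on the chunk-level quantity $\Delta(s_t,a_{t:t+h})=Q^\star(s_t,a_{t:t+h})-\hat Q^+_{\mathrm{ac}}(s_t,a_{t:t+h})$ rather than directly on $T_1(s_t)$, which is equivalent in substance.
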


\cref{thm:suboptact} (proof in \hyperref[proof:suboptact]{\cref{proof:suboptact}})
shows that as long as both $\mathcal{D}$ and $\mathcal{D}^\star$ satisfy the strongly open-loop consistency condition and $\mathcal{D}$ contains the behavior in $\mathcal{D}^\star$, Q-learning with action chunking is guaranteed to converge to a near-optimal action chunking policy regardless of how sub-optimal the data $\mathcal{D}$ might be. Also, we show this bound is \emph{tight} (proof in \cref{proof:suboptactight}): 
\begin{restatable}[Worst-case Analysis of Q-Learning with Action Chunking Policy on Off-policy Data]{theorem}{suboptactight}
\label{thm:suboptactight}
For any $h > 1,\gamma \in (0, 1), \varepsilon_h \in \left(0, 1/5\right), c_1 \in (0, \varepsilon_h/2)$, and $c_2 \in (0, 2\varepsilon_h\gamma)$,
there exists an MDP $\mathcal{M}$ and strongly $\varepsilon_h$-open-loop consistent data distributions $\mathcal{D}$ and $\mathcal{D}^\star$ with $\mathrm{supp}(P_{\mathcal{D}}(s_t, a_{t:t+h})) \supseteq \mathrm{supp}(P_{\mathcal{D}^\star}(s_t, a_{t:t+h}))$, such that for some $s_t \in \mathrm{supp}(P_{\mathcal{D}^\star}(s_t))$,
\begin{align}
    V^\star(s_t) - V^+_{\mathrm{ac}}(s_t) = \frac{2\varepsilon_h\gamma- c_2}{(1-\gamma)(1-(1-2\varepsilon_h)\gamma^h)} + \frac{\varepsilon_h\gamma}{(1-\gamma)(1-(1-\varepsilon_h-c_1)\gamma^h)},
\end{align}
where $V^\star$ is the value of an optimal policy and $V^+_{\mathrm{ac}}$ is the \emph{true} value of $\pi^+_{\mathrm{ac}}$. 
As $c_1, c_2 \rightarrow 0$,
\begin{align}
    V^\star(s_t) - V^+_{\mathrm{ac}}(s_t) \rightarrow \frac{\varepsilon_h\gamma}{1-\gamma}\left[\frac{2}{1-(1-2\varepsilon_h)\gamma^h} + \frac{1}{1-(1-\varepsilon_h)\gamma^h}\right].
\end{align}
\end{restatable}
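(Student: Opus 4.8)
The plan is to prove tightness by an explicit worst-case construction that saturates the two-term upper bound of \cref{thm:suboptact}. The guiding principle is the decomposition
\[
V^\star(s_t) - V^+_{\mathrm{ac}}(s_t) = \big[V^\star(s_t) - V^\star_{\mathrm{ac}}(s_t)\big] + \big[V^\star_{\mathrm{ac}}(s_t) - V^+_{\mathrm{ac}}(s_t)\big],
\]
where the first bracket is the \emph{value-estimation bias} of the optimal action chunking policy (both summands are nonnegative since the closed-loop optimum dominates the AC optimum, which in turn dominates the learned AC policy). The first term is exactly the quantity made tight in \cref{corollary:vbtight}, whose supremum is $\tfrac{\varepsilon_h\gamma}{(1-\gamma)(1-(1-\varepsilon_h)\gamma^h)}$, matching the ``$\tfrac{1}{1-(1-\varepsilon_h)\gamma^h}$'' summand. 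The second term is the \emph{policy-selection} gap incurred because Q-learning on $\mathcal{D}$ maximizes the nominal critic $\hat{Q}^+_{\mathrm{ac}}$ rather than the true value; it must be driven to $\tfrac{2\varepsilon_h\gamma}{(1-\gamma)(1-(1-2\varepsilon_h)\gamma^h)}$, where the factor of two reflects that $\pi^+_{\mathrm{ac}}$ can be fooled both by an \emph{overestimated decoy chunk} and by an \emph{underestimated optimal chunk}, each deviating by the full $\varepsilon_h$ budget.

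Concretely, I would build the MDP as a composition of two self-similar gadgets that loop back to (a copy of) themselves every $h$ steps. The first gadget reuses the bias construction underlying \cref{thm:bcvbiastight}/\cref{corollary:vbtight}, adapted to be strongly consistent: along the optimal chunks there is an $\varepsilon_h$-probability open-loop ``leak'' to a low-reward branch, so open-loop replay loses value at the compounding rate $(1-\varepsilon_h-c_1)\gamma^h$. The second gadget is a strong-OLC-admissible generalization of the lottery gadget from \cref{thm:lottery}: in $\mathrm{supp}(P_{\mathcal{D}})$ it offers a \emph{decoy} chunk whose nominal value is inflated by an $\varepsilon_h$ leak toward a high-reward state, while the genuinely optimal chunk's nominal value is simultaneously deflated by an $\varepsilon_h$ leak; together these make $\pi^+_{\mathrm{ac}}$ prefer the decoy, whose \emph{true} value is low, with loss compounding at rate $(1-2\varepsilon_h)\gamma^h$. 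I set $\mathcal{D}^\star$ to support only the optimal chunk and $\mathcal{D}$ to additionally support the decoy, so that $\mathrm{supp}(P_{\mathcal{D}}(s_t,a_{t:t+h})) \supseteq \mathrm{supp}(P_{\mathcal{D}^\star}(s_t,a_{t:t+h}))$ holds by design.

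The execution order is: (i) specify states, actions, rewards, and the two leak probabilities; (ii) write down $\mathcal{D}$ and $\mathcal{D}^\star$ and check support containment; (iii) verify strong $\varepsilon_h$-OLC (\cref{eq:strongoc}) for \emph{every} chunk in both supports by directly computing the total-variation distance between $T(s_{t+h'}\mid s_t,a_{t:t+h'})$ and the dataset conditional; (iv) solve the nominal Bellman-optimality fixed point for $\hat{Q}^+_{\mathrm{ac}}$, confirm $\pi^+_{\mathrm{ac}}$ selects the decoy, and evaluate its \emph{true} rolled-out value $V^+_{\mathrm{ac}}$, yielding the geometric series with ratio $(1-2\varepsilon_h)\gamma^h$; (v) compute $V^\star$ and $V^\star_{\mathrm{ac}}$, extracting the $(1-\varepsilon_h-c_1)\gamma^h$ series from the first gadget; (vi) sum the two contributions and read off the stated closed form, then take $c_1,c_2 \to 0$ to recover the supremum of the bound in \cref{thm:suboptact}.

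The main obstacle is step (iii) together with the simultaneous activation in step (iv): strong OLC must hold \emph{per action chunk}, not merely in expectation, so the decoy must sit exactly on the boundary of the $\varepsilon_h$ constraint while remaining strongly consistent, and the two leak mechanisms must coexist in one MDP without interfering, so that the two rates $(1-\varepsilon_h)\gamma^h$ and $(1-2\varepsilon_h)\gamma^h$ appear cleanly in distinct summands. The slack parameters $c_1$ (perturbing the effective consistency level of the bias recursion from $\varepsilon_h$ to $\varepsilon_h+c_1$) and $c_2$ (shaving the decoy's numerator below $2\varepsilon_h\gamma$) are precisely the devices that keep the construction strictly feasible—strict support and non-degenerate OLC inequalities—while its value approaches the supremum; getting their bookkeeping to reproduce the exact expression is the delicate analytic core.
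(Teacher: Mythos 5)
Your proposal follows essentially the same route as the paper's proof: an explicit MDP built from two gadgets glued at a recurrent state, one driving the optimal chunk's \emph{nominal} value down to a bar of roughly $\tfrac{2\varepsilon_h\gamma}{(1-\gamma)(1-(1-2\varepsilon_h)\gamma^h)}$ (true sub-optimality of the AC policy plus underestimation), and one supplying a decoy chunk whose nominal value sits just above that bar but whose true value is lower by a further overestimation of order $\varepsilon_h H\bar H$, with $c_1,c_2$ acting exactly as the strict-feasibility slack you describe. Two caveats are worth flagging. First, your bookkeeping differs from the construction that is actually carried out: you assign the $\tfrac{\varepsilon_h\gamma}{(1-\gamma)(1-(1-\varepsilon_h-c_1)\gamma^h)}$ summand to $V^\star-V^\star_{\mathrm{ac}}$ via \cref{corollary:vbtight}, whereas in the paper that summand is the overestimation $\hat{V}^+_{\mathrm{ac}}-V^+_{\mathrm{ac}}$ of the \emph{selected decoy} (whose open-loop return probability is $1-\varepsilon_h-c_1$), and $V^\star-V^\star_{\mathrm{ac}}$ instead carries a $(1-2\varepsilon_h)\gamma^h$ recursion; the paper explicitly remarks that the $\varepsilon_h$ TV budget on the optimal chunk forces a trade-off between the underestimation's recursion rate and a nonzero true gap. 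Your alternative split sums to the same expression and looks realizable, but you would need to re-verify per-chunk strong OLC for both $\mathcal{D}$ and $\mathcal{D}^\star$ under it and re-derive where $c_1$ and $c_2$ enter so that the stated \emph{equality} (not just the limit) is hit. Second, the step you defer is the bulk of the proof: \cref{thm:lottery}'s gadget is precisely \emph{not} strongly OLC, and making an overestimated decoy coexist with strong OLC requires constructing each branch's data as a mixture of two behavior policies (plus an auxiliary terminating action that keeps the six candidate chunks' supports disjoint), with mixture weights solved so that every conditional $P_{\mathcal{D}}(s_{t+h'}\mid s_t,a_{t:t+h})$ lands exactly on the $\varepsilon_h$ boundary and all competing chunks' nominal values are strictly dominated by the decoy's. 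As written, the proposal is a correct plan that matches the paper's strategy, but it is not yet a proof.
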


Up to now, none of the bounds that we have shown so far depend on the sub-optimality of the data. Indeed, we can make the data arbitrarily sub-optimal while the action chunking policy learning is still guaranteed to be near optimal. 
As we will show in the following section, this is in contrast to $n$-step return policy where its performance depends on the sub-optimality of the data.

\subsection{Comparing to $n$-step return Q-learning}
\label{sec:theory-compare}
We now characterize the condition when action chunking Q-learning should be preferred over the standard $n$-step return backup. We start by introducing a notion of sub-optimality:
\begin{definition}[Sub-optimal Data]
\label{def:suboptdata}
$\mathcal{D}$ is $\delta_n$-sub-optimal for a backup horizon length of $n > 1$ if
\begin{align}
    Q^\star(s_t, a_t) -  \mathbb{E}_{\clP(\cdot \mid s_t, a_t)}\left[R_{t:t+n} + \gamma^n V^\star(s_{t+n})\right] \geq \delta_n, \forall s_t, a_t \in \mathrm{supp}(P_{\mathcal{D}}(s_t, a_t)).
\end{align}
\end{definition}
Intuitively, $\delta_n$ captures how much worse the $n$-step return policy can get compared to the optimal policy incurred by the backup bias. Under such condition, we can show that the action chunking policy is provably better than the $n$-step return policy as long as $\delta_n$ is large.
\begin{restatable}[Comparing action chunking backup and $n$-step return backup]{proposition}{compare}
\label{thm:compare}
Let $\mathcal{D}$ be strongly $\varepsilon_h$-open-loop consistent and $\delta_n$-sub-optimal, and $\mathrm{supp}(P_{\mathcal{D}}(s_t)) \supseteq \mathrm{supp}(P_{\mathcal{D}^\star}(s_t))$. Let $\pi^+_n : s_t \mapsto {\arg\max}_{a_t} \hat Q^+_n(s_t, a_t)$ be the policy learned from $\mathcal{D}$, via $n$-step return backup:
\begin{align}
\label{eq:nsp} 
\hat Q^+_n(s_t, a_t) = \mathbb{E} \left[R_{t:t+n} + \gamma^n \hat Q^+_n(s_{t+n}, \pi^+_n(s_{t+n}))\right].
\end{align}
Then, for all $s_t \in \mathrm{supp}(P_{\mathcal{D}^\star}(s_t))$ (and with $\bar H_n = 1/(1-\gamma^n)$),
\begin{equation}
\begin{aligned}
    V^+_{\mathrm{ac}}(s_t) - \hat V^+_n(s_t) &\geq  \frac{\delta_n}{1-\gamma^n} -\frac{\varepsilon_h\gamma}{1-\gamma}\left[\frac{2}{1-(1-2\varepsilon_h)\gamma^h} + \frac{1}{1-(1-\varepsilon_h)\gamma^h}\right],
    \\
    &\geq \delta_n \bar H_n - 3\varepsilon_h H \bar H.
\end{aligned}
\end{equation}
\end{restatable}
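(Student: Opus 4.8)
The plan is to prove the bound by inserting the optimal value $V^\star$ and splitting the target gap into two pieces:
\[
V^+_{\mathrm{ac}}(s_t) - \hat V^+_n(s_t) = \underbrace{\bigl(V^\star(s_t) - \hat V^+_n(s_t)\bigr)}_{\text{$n$-step underestimation}} - \underbrace{\bigl(V^\star(s_t) - V^+_{\mathrm{ac}}(s_t)\bigr)}_{\text{AC optimality gap}}.
\]
I would lower-bound the first term using the $\delta_n$-sub-optimality of the $n$-step fixed point, and upper-bound the second term by directly invoking \cref{thm:suboptact}. Subtracting these two bounds gives the claimed first inequality, after which the second (coarser) inequality follows from $\bar H_n = 1/(1-\gamma^n)$ together with the estimate $\tfrac{\varepsilon_h\gamma}{1-\gamma}[\cdots]\le 3\varepsilon_hH\bar H$ already recorded in \cref{thm:suboptact}.

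For the first term, set $W(s) := V^\star(s) - \hat V^+_n(s)$ and fix $s_t \in \mathrm{supp}(P_{\mathcal{D}^\star}(s_t)) \subseteq \mathrm{supp}(\clP(s_t))$. Writing $a_t = \pi^+_n(s_t)$ and using $\hat V^+_n(s) = \max_a \hat Q^+_n(s,a) = \hat Q^+_n(s,\pi^+_n(s))$, the defining fixed-point equation for $\hat Q^+_n$ gives $\hat V^+_n(s_t) = \mathbb{E}_{\clP(\cdot\mid s_t,a_t)}[R_{t:t+n} + \gamma^n \hat V^+_n(s_{t+n})]$. On the other hand, $\delta_n$-sub-optimality at $(s_t,a_t)$ together with $V^\star(s_t)\ge Q^\star(s_t,a_t)$ yields $\mathbb{E}_{\clP(\cdot\mid s_t,a_t)}[R_{t:t+n} + \gamma^n V^\star(s_{t+n})] \le V^\star(s_t) - \delta_n$. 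Subtracting the two relations produces the key recursion
\[
W(s_t) \ge \delta_n + \gamma^n\, \mathbb{E}_{\clP(\cdot\mid s_t,a_t)}\!\left[W(s_{t+n})\right].
\]

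I would then iterate this recursion along the trajectory. Since $a_t=\pi^+_n(s_t)$ lies in $\mathrm{supp}(\clP(a_t\mid s_t))$ and the data obeys the dynamics (\cref{assumption:data}), every successor $s_{t+n}$ entering the expectation again lies in $\mathrm{supp}(\clP)$, so the recursion reapplies at each step. Because $V^\star$ and $\hat V^+_n$ both lie in $[0,H]$, we have $|W|\le H$; unrolling $K$ times gives $W(s_t)\ge \delta_n\tfrac{1-\gamma^{Kn}}{1-\gamma^n} - \gamma^{Kn}H$, and sending $K\to\infty$ yields $W(s_t)\ge \delta_n/(1-\gamma^n)$. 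Combining this with the \cref{thm:suboptact} bound on $V^\star(s_t)-V^+_{\mathrm{ac}}(s_t)$ in the decomposition above finishes the argument.

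The main obstacle I anticipate is the support-closure bookkeeping: I must verify that iterating the recursion never leaves $\mathrm{supp}(\clP)$, so that both the fixed-point identity and the $\delta_n$-sub-optimality condition remain valid at every step, which rests on $\pi^+_n$ selecting in-support actions and on the trajectory structure of $\mathcal{D}$. A secondary bookkeeping point is ensuring the hypotheses needed by \cref{thm:suboptact} (strong $\varepsilon_h$-OLC of both $\mathcal{D}$ and $\mathcal{D}^\star$ and action-chunk-level support containment) are in force here, so that the AC optimality-gap bound may be applied verbatim to the second term.
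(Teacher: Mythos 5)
Your proposal is correct and matches the paper's own argument: the paper packages your recursion $W(s_t)\ge \delta_n + \gamma^n\,\mathbb{E}[W(s_{t+n})]$ as Lemma~\ref{lemma:ns} (stated at the $Q$-function level and then specialized to the greedy action), and then combines the resulting bound $V^\star - \hat V^+_n \ge \delta_n/(1-\gamma^n)$ with \cref{thm:suboptact} via exactly your decomposition through $V^\star$. Your support-closure bookkeeping is in fact slightly more careful than the paper's, which states the lemma for all $(s_t,a_t)$ without the support qualification.
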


The proof of \cref{thm:compare} is available in \hyperref[proof:compare]{\cref{proof:compare}}. 
Notably, for $n=h$, as long as $\mathcal{D}$ is more than $(3\varepsilon_h H)$-sub-optimal, the value of the action chunking policy is provably better than the value of the $n$-step return policy. It is worth noting that \cref{thm:compare} uses the \emph{nominal} value of the $n$-step return, which may be lower than its \emph{actual} value. We refer the readers to \cref{appendix:n-step-rigor} for examples where the $n$-step return policy is provably worse than the action chunking policy.

Up to now, we have characterized the conditions under which action chunking policies are better than $n$-step return policies. However, action chunking policies are still fundamentally limited when subject to poor open-loop consistent data. To tackle this challenge, we explore \emph{closed-loop execution} of an action chunking policy (\emph{i.e.}, carrying out the first action of the full action chunk at every step). While this has been explored in robotic applications~\citep{zhao2023learning, chi2023diffusion, lin2025learning, black2025real} to reduce latency and improve smoothness, the theoretical property of closed-loop execution of action chunking policies is not well-understood, especially in the context of Q-learning.

\subsection{Closed-loop execution of action chunking policy}
\label{sec:theory-closed}

If we reuse the same strongly $\varepsilon_h$-open-loop consistency assumption, we can guarantee that closed-loop execution of the action chunking policy is also near-optimal. The intuition is that in order for action chunking policy to be near-optimal, the first action in the chunk cannot be too sub-optimal:
\begin{restatable}[Optimality of Closed-loop Execution of Action Chunking Policy]{proposition}{suboptdqc}
\label{thm:suboptdqc}
Let $V^\bullet$ be the value of the one-step policy, $\pi^\bullet$, as a result of the closed-loop execution of the action chunking policy $\pi^+_{\mathrm{ac}}$ learned from $\mathcal{D}$. That is, for each $s_t \in \mathrm{supp}(P_{\mathcal{D}}(s_t))$, 
\begin{align}
    \pi^\bullet(s_t) = a^+_t, \quad\text{where }a^+_{t:t+h} = \pi^+_{\mathrm{ac}}(s_t).
\end{align}
If $\mathcal{D}$ and $\mathcal{D}^\star$ are both strongly $\varepsilon_h$-open-loop consistent and $\mathrm{supp}(P_{\mathcal{D}}(s_t, a_{t:t+h})) \supseteq \mathrm{supp}(P_{\mathcal{D}^\star}(s_t, a_{t:t+h}))$, then for all $s_t \in \mathrm{supp}(P_{\mathcal{D}^\star}(s_t))$,
\begin{align}
    V^\star(s_t) - V^\bullet(s_t) &\leq \frac{\varepsilon_h\gamma}{(1-\gamma)^2}\left[\frac{2}{1-(1-2\varepsilon_h)\gamma^h} + \frac{1}{1-(1-\varepsilon_h)\gamma^h}\right] \leq 3\varepsilon_h H^2\bar H.
\end{align}
\end{restatable}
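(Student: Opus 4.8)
The plan is to reduce the closed-loop sub-optimality $V^\star(s_t) - V^\bullet(s_t)$ to the \emph{single-step} sub-optimality of the first action chosen by $\pi^+_{\mathrm{ac}}$, and then absorb a discounted sum over the trajectory of $\pi^\bullet$ to pick up the extra factor of $H=1/(1-\gamma)$ relative to \cref{thm:suboptact}. Writing $a^+_{s:s+h} = \pi^+_{\mathrm{ac}}(s)$ for the chunk selected at $s$ and $a^+_s$ for its first action (so $\pi^\bullet(s)=a^+_s$), the central quantity is the first-action gap $V^\star(s) - Q^\star(s, a^+_s)$.

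First I would establish the key comparison $Q^\star(s, a^+_s) \ge V^+_{\mathrm{ac}}(s)$. The point is that both quantities share the same first reward and first transition: $Q^\star(s, a^+_s) = r(s,a^+_s) + \gamma\,\mathbb{E}_{s'\sim T(\cdot\mid s, a^+_s)}[V^\star(s')]$, whereas $V^+_{\mathrm{ac}}(s) = r(s,a^+_s) + \gamma\,\mathbb{E}_{s'\sim T(\cdot\mid s, a^+_s)}[W(s')]$, where $W(s')$ is the value of continuing the committed chunk $a^+_{s+1:s+h}$ open-loop and thereafter following $\pi^+_{\mathrm{ac}}$. Since continuing that committed chunk and then following $\pi^+_{\mathrm{ac}}$ is itself a valid (history-dependent) policy from $s'$, and $V^\star$ upper-bounds the value of \emph{any} policy, we get $V^\star(s')\ge W(s')$ pointwise, hence the claim. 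Combining with \cref{thm:suboptact}, for every $s\in\mathrm{supp}(P_{\mathcal{D}^\star}(s))$,
\[
V^\star(s) - Q^\star(s, a^+_s) \;\le\; V^\star(s) - V^+_{\mathrm{ac}}(s) \;\le\; \frac{\varepsilon_h\gamma}{1-\gamma}\left[\frac{2}{1-(1-2\varepsilon_h)\gamma^h} + \frac{1}{1-(1-\varepsilon_h)\gamma^h}\right] =: B.
\]

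Next I would unroll the Bellman equation for the fixed Markov policy $\pi^\bullet$. Decomposing
\[
V^\star(s)-V^\bullet(s) = \underbrace{\big(V^\star(s)-Q^\star(s,a^+_s)\big)}_{\le\, B} + \underbrace{\big(Q^\star(s,a^+_s)-V^\bullet(s)\big)}_{=\,\gamma\,\mathbb{E}_{s'\sim T(\cdot\mid s,a^+_s)}[V^\star(s')-V^\bullet(s')]},
\]
where the second equality uses that $Q^\star(s,a^+_s)$ and $V^\bullet(s)$ agree on the first reward and first transition kernel. Setting $\Delta(s)=V^\star(s)-V^\bullet(s)$ gives the recursion $\Delta(s)\le B + \gamma\,\mathbb{E}_{s'}[\Delta(s')]$; iterating over the discounted trajectory of $\pi^\bullet$ yields $\Delta(s_t)\le B/(1-\gamma) = B\cdot H$, which is exactly the claimed exact bound $\tfrac{\varepsilon_h\gamma}{(1-\gamma)^2}[\cdots]$, and the final simplification uses $B \le 3\varepsilon_h H\bar H$ from \cref{thm:suboptact} to obtain $3\varepsilon_h H^2\bar H$.

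The main obstacle is keeping the per-state gap $B$ valid along the entire rollout of $\pi^\bullet$: \cref{thm:suboptact} certifies $V^\star(s)-V^+_{\mathrm{ac}}(s)\le B$ only for $s\in\mathrm{supp}(P_{\mathcal{D}^\star}(s))$, yet the recursion evaluates $\Delta$ at the successor states $s'\sim T(\cdot\mid s,a^+_s)$. I would therefore argue that this reference support is closed under $\pi^\bullet$ — that executing the (in-support) first action $a^+_s$ keeps the trajectory within $\mathrm{supp}(P_{\mathcal{D}^\star}(s))$ — using the strong $\varepsilon_h$-open-loop consistency of $\mathcal{D}^\star$ together with the support-containment hypothesis $\mathrm{supp}(P_{\mathcal{D}}(s_t,a_{t:t+h}))\supseteq\mathrm{supp}(P_{\mathcal{D}^\star}(s_t,a_{t:t+h}))$. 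Handling this support-closure cleanly (or, alternatively, restating the recursion so that the bound is only ever invoked on in-support states) is the delicate part; everything else reduces to the two-line value comparison and the geometric-series bookkeeping above.
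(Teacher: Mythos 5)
Your proposal is correct and takes essentially the same route as the paper's own proof: the key step $V^+_{\mathrm{ac}}(s_t) = Q^+_{\mathrm{ac}}(s_t, a^+_{t:t+h}) \leq Q^\star(s_t, a^+_t)$ (the committed chunk is a valid continuation policy), combined with \cref{thm:suboptact} and the one-step recursion $V^\star(s_t) - V^\bullet(s_t) \leq \Delta + \gamma\,\mathbb{E}_{T(\cdot \mid s_t, a^+_t)}\left[V^\star(s_{t+1}) - V^\bullet(s_{t+1})\right]$, which telescopes to the extra factor of $1/(1-\gamma)$. The support-closure issue you flag at the end is a genuine subtlety, but the paper's proof leaves it equally implicit and simply iterates the recursion, so your treatment is if anything more careful than the original.
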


The proof is available in \cref{proof:suboptdqc}. This result demonstrates that closed-loop execution is also near-optimal as long as the action chunking policy is near-optimal, though we might have to pay up to a horizon factor $H$ (\emph{i.e.}, $1/(1-\gamma)$) in sub-optimality gap in the worst case. Can we do better than this?

In practical applications, the data distributions that we are dealing with often have more structure. For example, it is common to have a dataset consisting of multiple sources where each data source is collected by either a human expert or a scripted policy that exhibits a somewhat predictable behavior (\emph{e.g.}, after a robot arm picks up a cube, it will always move up rather than dropping it right away). We formalize this kind of structures as a notion of optimality variability as follows:

\begin{definition}[Optimality Variability]
\label{def:opt-var}
$\mathcal{D}$ exhibits $\vartheta_h$-bounded variability in optimality conditioned on an event $X$ if 
\begin{align}
    \max_{\mathrm{supp}(P_{\mathcal{D}}(\cdot \mid X))}\left[R_{t:t+h} +\gamma^h V^\star(s_{t+h})\right] - \min_{\mathrm{supp}(P_{\mathcal{D}}(\cdot \mid X))}\left[R_{t:t+h} +\gamma^h V^\star(s_{t+h})\right] \leq \vartheta_h.
\end{align}
\end{definition}

If we pick $X$ to be the current state and the current action, a bounded optimality variability subject to such conditioning means that as long as we observe the initial action, the optimality of the outcome after $h$-steps does not vary too much. It turns out that if (1) the data distribution is a mixture of a bunch of data sources where the optimality variability conditioned on the \emph{current actions} is bounded within each data source, and additionally (2) the optimality variability conditioned on the \emph{current action chunks} is bounded globally across mixture, we can form a much stronger bound on the optimality of $\pi^\bullet$. It is worth noting that the second optimality variability condition is \emph{much weaker} than the first one because it is conditioned on the event where we observe the state $s_t$ and the entire action chunk $a_{t:t+h}$ (rather than only the first action $a_t$). We now state our theorem as follows:

\begin{restatable}[Closed-loop AC Policy under Bounded OV]{theorem}{optvar}
\label{thm:optvar}
Let $\mathcal{D}^\star$ be the data distribution collected by an optimal policy. Assume $\mathcal{D}$ can be decomposed into a mixture of data distributions $\{\mathcal{D}^\star, \mathcal{D}_1, \mathcal{D}_2, \cdots \mathcal{D}_{M}\}$ such that each data distribution component satisfies \cref{assumption:data} and for some $\vartheta^L_h, \vartheta^G_h \geq 0$, they satisfy the following two conditions:

\textbf{1. Locally bounded optimality variability condition}: every $\mathcal{D}_i$ (including $\mathcal{D}^\star$) exhibits $\vartheta^L_h$-bounded variability in optimality conditioned on $s_t, a_t$ for all $(s_t, a_t) \in \mathrm{supp}(P_{\mathcal{D}_i}(s_t, a_t))$, and

\textbf{2. Globally bounded optimality variability condition}: $\mathcal{D}$ as a whole exhibits $\vartheta^G_h$-variability in optimality conditioned on $s_t, a_{t:t+h}$ for all $(s_t, a_{t:t+h}) \in \mathrm{supp}(P_{\mathcal{D}}(s_t, a_{t:t+h}))$.

Then for all $s_t \in \mathrm{supp}(P_{\mathcal{D}^\star}(s_t))$,
\begin{equation}
\begin{aligned}
    V^\star(s_t) - V^\bullet(s_t) \leq 
    \frac{\vartheta^L_h}{1-\gamma} + \frac{\vartheta^G_h + \gamma^h\min(\vartheta^L_h,\vartheta^G_h)}{(1-\gamma)(1-\gamma^h)} \leq \vartheta^L_hH + 2\vartheta^G_hH\bar H.
\end{aligned}
\end{equation}
\end{restatable}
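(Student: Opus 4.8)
The plan is to bound the closed-loop sub-optimality $V^\star(s_t)-V^\bullet(s_t)$ by first reducing it, via a performance-difference-style unrolling, to a one-step advantage gap of the greedy first action $a^+_t$ (picking up a factor $H=1/(1-\gamma)$), and then controlling that one-step gap with the two bounded-optimality-variability conditions of \cref{def:opt-var}. Throughout I write $G := R_{t:t+h}+\gamma^h V^\star(s_{t+h})$ for the $h$-step optimal-bootstrapped return, and I rely on the elementary fact that rolling a fixed chunk out open-loop and bootstrapping with $V^\star$ can only underestimate $Q^\star$; chaining the Bellman-optimality inequality gives $\mathbb{E}[G\mid s_t,a_t]\le Q^\star(s_t,a_t)\le V^\star(s_t)$ whenever the expectation conditions on $s_t$ and \emph{only} the first action, so that each intermediate next-state genuinely follows $T$.

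First I set up the unrolling. Writing $\Delta(s):=V^\star(s)-V^\bullet(s)$ and $\delta(s):=V^\star(s)-Q^\star(s,\pi^\bullet(s))$, the identity $Q^\star(s_t,a^+_t)-V^\bullet(s_t)=\gamma\,\mathbb{E}_{s_{t+1}\sim T(\cdot\mid s_t,a^+_t)}[\Delta(s_{t+1})]$ gives $\Delta(s_t)=\delta(s_t)+\gamma\,\mathbb{E}[\Delta(s_{t+1})]$, hence $\Delta(s_t)\le H\sup_s\delta(s)$ over states visited by $\pi^\bullet$. (A subtlety to address is closedness of the relevant support under the greedy closed-loop transitions, which I handle using $\mathcal D^\star\subseteq\mathcal D$ and that the conclusion is stated on $\mathrm{supp}(P_{\mathcal D^\star}(s_t))$.) It then remains to show $\delta(s_t)\le \vartheta^L_h+\frac{\vartheta^G_h+\gamma^h\min(\vartheta^L_h,\vartheta^G_h)}{1-\gamma^h}$.

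Next I bound this one-step gap. Let $a^+_{t:t+h}=\pi^+_{\mathrm{ac}}(s_t)$ be the greedy chunk and $\mathcal D_i$ a mixture component containing $(s_t,a^+_{t:t+h})$ in its support. The chain is: (i) conditioning on the first action only, $Q^\star(s_t,a^+_t)\ge \mathbb{E}_{\mathcal D_i}[G\mid s_t,a^+_t]$ by the elementary fact above; (ii) the \emph{local} condition on $\mathcal D_i$ places every chunk sharing $a^+_t$ inside a $G$-window of width $\vartheta^L_h$, so $\mathbb{E}_{\mathcal D_i}[G\mid s_t,a^+_t]\ge \mathbb{E}_{\mathcal D_i}[G\mid s_t,a^+_{t:t+h}]-\vartheta^L_h$; (iii) the \emph{global} condition lets me swap the source-$i$ conditional for the full-mixture conditional at cost $\vartheta^G_h$, identifying $\mathbb{E}_{\mathcal D}[G\mid s_t,a^+_{t:t+h}]$ with the critic value $\hat Q^+_{\mathrm{ac}}(s_t,a^+_{t:t+h})=\hat V^+_{\mathrm{ac}}(s_t)$ up to a nominal-value error $\gamma^h(\hat V^+_{\mathrm{ac}}-V^\star)$ at the bootstrapped state. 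Combining these reduces $\delta(s_t)$ to $V^\star(s_t)-\hat V^+_{\mathrm{ac}}(s_t)$ plus the $\vartheta^L_h$ and $\vartheta^G_h$ terms.

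The technical heart, which I expect to be the main obstacle, is bounding the nominal chunked value-iteration error $|\hat V^+_{\mathrm{ac}}(s)-V^\star(s)|$ under BOV alone, without open-loop consistency. Here I run a fixed-point argument on the chunked Bellman operator: for the lower bound I evaluate the greedy max at an optimal chunk from $\mathcal D^\star$ (available since $\mathcal D^\star\subseteq\mathcal D$), use that an optimal chunk makes the first-action bound tight so $\mathbb{E}_{\mathcal D^\star}[G\mid s_t,a^\star_t]=Q^\star(s_t,a^\star_t)=V^\star(s_t)$, then pay local variability to pass to the full chunk and global variability to swap to the mixture conditional; the bootstrapped term contributes $\gamma^h\inf(\hat V^+_{\mathrm{ac}}-V^\star)$, and solving the self-referential inequality yields the $1/(1-\gamma^h)=\bar H$ geometric factor, with the symmetric argument over all chunks giving the matching upper bound. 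The delicate points are (a) that the critic's expectation is taken under the confounded conditional $P_{\mathcal D}(\cdot\mid s_t,a_{t:t+h})$ rather than an honest open-loop $T$-rollout, so I must never chain Bellman inequalities through a full-chunk conditional and instead chain only through first-action conditionals (where the immediate transition is exactly $T$) and move between conditionals solely via the variability windows; and (b) tracking which of $\vartheta^L_h,\vartheta^G_h$ is tighter at the bootstrapped step, which is where the $\gamma^h\min(\vartheta^L_h,\vartheta^G_h)$ term originates. Assembling the $H$ from the unrolling with the $\bar H$ from the value-iteration recursion produces the claimed $\vartheta^L_h H+2\vartheta^G_h H\bar H$ bound.
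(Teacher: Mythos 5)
Your overall architecture matches the paper's: unroll the closed-loop gap to a per-state advantage gap $\delta(s)=V^\star(s)-Q^\star(s,\pi^\bullet(s))$ at a cost of $H$, then control $\delta$ by sandwiching the nominal chunked value $\hat V^+_{\mathrm{ac}}$ between $V^\star$ (from below) and $Q^\star(\cdot,a^+_t)$ plus variability terms (from above), with a $\bar H$ recursion through the bootstrapped state. The reduction in your second paragraph and your steps (i)--(iii) are essentially the paper's upper-bound argument (the paper avoids your extra $+\vartheta^G_h$ in step (iii) by bounding the mixture conditional directly by $\max_i \tilde Q^i_{\max}(s_t,a^+_t)\le Q^\star(s_t,a^+_t)+\vartheta^L_h$, but that only affects a constant the stated bound could arguably absorb).

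The genuine gap is in your underestimation bound, the step you correctly flag as the technical heart. You lower-bound $\hat V^+_{\mathrm{ac}}(s_t)$ by evaluating the critic at \emph{an} optimal chunk $a^\star_{t:t+h}$, anchoring at $\mathbb{E}_{\mathcal D^\star}[G\mid s_t,a^\star_t]=V^\star(s_t)$, and then ``paying local variability to pass to the full chunk.'' That $\vartheta^L_h$ payment sits inside the $h$-step fixed-point recursion, so it gets amplified to $\vartheta^L_h/(1-\gamma^h)$, and after the outer unrolling you end up with a $\vartheta^L_h H\bar H$ term. This cannot prove the theorem: the entire content of the statement (and of the matching lower bound in \cref{thm:optvartight}) is that the $\vartheta^L_h$ term appears \emph{without} the $\bar H$ factor. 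The payment is also unavoidable on your route, since a particular optimal chunk sharing $a^\star_t$ can legitimately sit at the bottom of the local $\vartheta^L_h$-window. The paper's fix is to choose the comparison chunk as
$a^\circ_{t:t+h}={\arg\max}_{a_{t:t+h}\in\mathrm{supp}(P_{\mathcal D^\star}(\cdot\mid s_t))}\,\mathbb{E}_{P_{\mathcal D^\star}(\cdot\mid s_t,a_{t:t+h})}[R_{t:t+h}+\gamma^h V^\star(s_{t+h})]$;
since the max over chunks dominates the mean, which equals $V^\star(s_t)$, the full-chunk conditional under $\mathcal D^\star$ is already $\ge V^\star(s_t)$ with no local cost, and only the global condition ($\vartheta^G_h$ per step) is needed to pass to the $\mathcal D$-conditional the critic actually bootstraps with, yielding $V^\star-\hat V^+_{\mathrm{ac}}\le \vartheta^G_h/(1-\gamma^h)$. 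With that sharpened lower bound, the overestimation recursion closes through $\hat V^+_{\mathrm{ac}}-V^\star\le\min\{\vartheta^G_h/(1-\gamma^h),\hat V^+_{\mathrm{ac}}-Q^\star(\cdot,a^+_t)\}$, which is where the $\gamma^h\min(\vartheta^L_h,\vartheta^G_h)$ term and the unamplified $\vartheta^L_h$ actually come from; your sketch gestures at this min but does not supply the two separate bounds whose combination produces it.
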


The proof of \cref{thm:optvar} (available in \cref{proof:optvar}) is made possible by observing that $V^\star(s_t) - \hat{V}^+_{\mathrm{ac}}(s_t)$ and $\hat{V}^+_{\mathrm{ac}}(s_t) - Q^\star(s_t, a^+_t)$ are bounded by $\vartheta^G_h/(1-\gamma^h)$ and $\vartheta^L_h/(1-\gamma^h)$ respectively.
Combining these two bounds na\"ively already allows us to  derive a relatively loose bound $V^\star(s_t) - Q^\star(s_t, a^+_t) \leq (\vartheta^L_h+\vartheta^G_h)/(1-\gamma^h)$ which leads to $V^\star(s_t) - V^\bullet(s_t) \leq (\vartheta^L_h+\vartheta^G_h)/(1-\gamma^h)/(1-\gamma)$. To obtain the tight bound in \cref{thm:optvar}, we leverage a key insight that the amount of overestimation in $V^+_{\mathrm{ac}}$ can \emph{never exceed} $\vartheta^L_h + \frac{\vartheta^G}{1-\gamma^h}$ as otherwise the nominal value of the action chunking policy $h$-step into the future, $\hat{V}^+_{\mathrm{ac}}(s_{t+h})$, would have an optimality gap higher than $\vartheta^G_h/(1-\gamma^h)$, which is impossible under the global optimality variability condition. Forming this tight bound is important because it effectively shaves off a factor of $\bar H = 1/(1-\gamma^h)$ from the $\vartheta^L_h$ term (the stronger local condition) and only bumps up a factor of $\approx 2$ to the $\vartheta^G_h$ term (the weaker global condition).

It is worth noting that although the global optimality variability condition looks similar to the strong open-loop consistency condition, they have completely different properties. For instance, a nearly strong open-loop consistent data distribution $\mathcal{D}$ can have unbounded global optimality variability and a data distribution that exhibits zero optimality variability can also have large open-loop inconsistency. The implication of this is that while the closed-loop execution of an action chunking policy can be near-optimal, the same action chunking policy executed in chunks can be sub-optimal. We formalize this intuition as the worse-case result below:
\begin{restatable}[Worst-case Closed-loop AC Policy under BOV]{theorem}{optvartight}
\label{thm:optvartight}
For any $h > 1, \gamma \in (0, 1), \vartheta^G_h, \vartheta^L_h \in \left(0, \frac{\gamma-\gamma^h}{4(1-\gamma)}\right], c \in \left[0, \frac{\gamma-\gamma^h}{4(1-\gamma^h)}\right), \sigma \in \left(0, \frac{\min(\vartheta^G_h, \vartheta^L_h)}{1-\gamma}\right)$,
there exists $\mathcal{M}$ and $\mathcal{D}$ satisfying the assumptions in \cref{thm:optvar} such that there exists $s_t \in \mathrm{supp}(P_{\mathcal{D}^\star}(s_t))$, where
\begin{align}
    V^\star(s_t) - V^\bullet(s_t) = \frac{\vartheta^L_h}{1-\gamma} + \frac{\vartheta^G_h + \gamma^h\min(\vartheta^L_h,\vartheta^G_h)}{(1-\gamma)(1-\gamma^h)} - \sigma, V^\star(s_t) - V^+_{\mathrm{ac}}(s_t) \geq \frac{c}{1-\gamma}.
\end{align}
\end{restatable}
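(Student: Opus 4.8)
The plan is to prove this tightness result by an explicit construction, reverse-engineering the worst case from the proof of \cref{thm:optvar}. Recall that the upper bound there is obtained by telescoping the closed-loop rollout: by the performance-difference identity one writes $V^\star(s_t)-V^\bullet(s_t)=\sum_{k\geq 0}\gamma^k\,\mathbb{E}[V^\star(s_{t+k})-Q^\star(s_{t+k},a^+_{t+k})]$, where $a^+_{t+k}$ is the first action of the greedy chunk $\pi^+_{\mathrm{ac}}(s_{t+k})$, and each per-step term $V^\star-Q^\star(\cdot,a^+)$ is bounded by $\vartheta^L_h+\frac{\vartheta^G_h+\gamma^h\min(\vartheta^L_h,\vartheta^G_h)}{1-\gamma^h}$. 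To make this tight I would build a \emph{self-similar} (recurrent) MDP $\mathcal{M}$ and mixture $\mathcal{D}$ in which every state reached by the closed-loop policy $\pi^\bullet$ reproduces the same worst-case per-step gap, so that the geometric sum with factor $\gamma$ returns exactly $\frac{\vartheta^L_h}{1-\gamma}+\frac{\vartheta^G_h+\gamma^h\min(\vartheta^L_h,\vartheta^G_h)}{(1-\gamma)(1-\gamma^h)}$.

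The core of the construction is to realize the two distinct sources of overestimation simultaneously at their worst case. First I would engineer a \emph{local} discrepancy of size $\vartheta^L_h$ inside a single component $\mathcal{D}_i$: among chunks sharing the same first action, the greedy AC critic prefers one whose nominal $h$-step continuation is optimistic by $\vartheta^L_h$, exactly saturating \cref{def:opt-var} conditioned on $(s_t,a_t)$. Second, mimicking the lottery mechanism of \cref{thm:lottery}, I would add across the mixture a low-probability ``lucky'' transition so the chunked critic is fooled into a globally optimistic estimate of size $\vartheta^G_h$, exactly saturating the global variability condition conditioned on $(s_t,a_{t:t+h})$. Rewards are calibrated to lie in $[0,1]$---which the constraints $\vartheta^L_h,\vartheta^G_h\le\frac{\gamma-\gamma^h}{4(1-\gamma)}$ together with the prescribed ranges for $c,\sigma$ guarantee---so that the induced gaps hit $\vartheta^L_h$ and $\vartheta^G_h$ on the nose; I would then verify \cref{assumption:data} and both BOV conditions directly from the transition kernel. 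The cross term $\gamma^h\min(\vartheta^L_h,\vartheta^G_h)$ is produced by letting the optimistic continuation state itself be a worst-case state $h$ steps downstream, which is precisely where self-similarity is essential.

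With the MDP fixed, I would solve the support-constrained Bellman optimality equation for $\hat Q^+_{\mathrm{ac}}$, read off $\pi^+_{\mathrm{ac}}$ and hence $\pi^\bullet$, and compute the \emph{true} values $V^\bullet$ and $V^+_{\mathrm{ac}}$ by rolling the induced policies out in $\mathcal{M}$. The first equality then follows from the recurrent accounting above, with the additive $-\sigma$ realized by detuning a free parameter (e.g.\ slightly lowering the lucky-transition probability below its extremal value); the half-open range for $\sigma$ guarantees such a choice remains admissible. For the second claim I would exploit that the same greedy chunk, when executed \emph{open-loop}, no longer benefits from re-planning: the lucky outcome fails to materialize under the true kernel, so the chunk-executed value $V^+_{\mathrm{ac}}$ pays a fixed per-chunk penalty that telescopes to at least $\frac{c}{1-\gamma}$, with $c$ free up to $\frac{\gamma-\gamma^h}{4(1-\gamma^h)}$. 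I expect the main obstacle to be \emph{simultaneity}: the same $\mathcal{M}$ and $\mathcal{D}$ must drive $V^\star-V^\bullet$ up to the full \cref{thm:optvar} bound while independently forcing $V^\star-V^+_{\mathrm{ac}}\ge c/(1-\gamma)$, all while the BOV parameters stay pinned at exactly $\vartheta^L_h,\vartheta^G_h$ and every reward and probability remains valid. Keeping the local and global optimality-variability gaps from interfering, and verifying the range conditions that confine all values to $[0,H]$, is the delicate bookkeeping the construction must manage.
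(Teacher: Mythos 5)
Your overall architecture is right: an explicit, self-similar MDP in which the closed-loop policy revisits the same worst-case state so the per-step gap telescopes with factor $1/(1-\gamma)$, a free parameter detuned to realize the $-\sigma$ slack, and stochastic transitions that the open-loop chunk cannot react to in order to force $V^\star - V^+_{\mathrm{ac}} \geq c/(1-\gamma)$. All of these appear in the paper's construction. But there is a structural error in how you propose to realize the $\frac{\vartheta^G_h}{(1-\gamma)(1-\gamma^h)}$ term. Decompose the per-step gap as $V^\star(s_t)-Q^\star(s_t,a^+_t) = \bigl[V^\star(s_t)-\hat V^+_{\mathrm{ac}}(s_t)\bigr] + \bigl[\hat V^+_{\mathrm{ac}}(s_t)-Q^\star(s_t,a^+_t)\bigr]$. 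In the proof of \cref{thm:optvar}, the global BOV condition caps the \emph{first} bracket --- an \emph{under}estimation of the state's nominal value --- at $\vartheta^G_h/(1-\gamma^h)$, while the local condition (plus downstream recursion) caps the second bracket at $\vartheta^L_h + \gamma^h\min(\vartheta^L_h,\vartheta^G_h)/(1-\gamma^h)$. To saturate the sum you must drive \emph{both} brackets to their caps simultaneously. Your proposal describes both contributions as optimism about the selected chunk (a ``lucky'' transition making the critic globally optimistic by $\vartheta^G_h$); pure overestimation of the chosen chunk only ever controls the second bracket, which by the paper's own inequalities tops out at roughly $\frac{\vartheta^L_h+\vartheta^G_h}{1-\gamma^h}$ or $\vartheta^L_h+\gamma^h\vartheta^G_h/(1-\gamma^h)$ --- strictly short of the claimed constant whenever $\vartheta^G_h>0$.

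What is missing is the gadget the paper calls ``the Castle'': a mixture of \emph{two} optimal closed-loop policies whose action chunks interleave (each policy is individually $0$-variable locally, so the local BOV condition is untouched), arranged so that conditioning on any full in-support chunk mixes in continuations that are worse by exactly $\vartheta^G_h$ in $h$-step optimality. This pushes the nominal value of every \emph{competing good} chunk down to $V^\star - \vartheta^G_h/(1-\gamma^h)$, which is what (i) lets the genuinely bad chunk of the second gadget win the support-constrained argmax without its own estimate ever exceeding $V^\star$, and (ii) contributes the remaining $\frac{\vartheta^G_h}{(1-\gamma)(1-\gamma^h)}$ to the total. Without an underestimation mechanism of this kind, the greedy selection either picks a good chunk or the total per-step gap falls short of the bound, so the construction as you describe it cannot reach equality. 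Beyond this, your write-up is a plan rather than a proof --- no states, kernels, rewards, or verification of \cref{assumption:data} and the two BOV conditions are given --- and the delicate part you correctly anticipate (keeping the local and global gaps from interfering while the bad chunk remains preferred) is exactly where the missing gadget does the work.
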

The examples in the proof of \cref{thm:optvartight} (available in \cref{proof:optvartight}) serve as a dual purpose---they not only show that our upper-bound in \cref{thm:optvar} is \emph{tight} (since we can make $\sigma \rightarrow 0$), but also show that the sub-optimality of the action chunking policy can be made arbitrarily large. Furthermore, \emph{both} the local optimality ($\vartheta^L_h$) condition and the global optimality ($\vartheta^G_h$) are \emph{necessary} to guarantee $\pi^\bullet$ being near-optimal. 
When any of them is large, \cref{thm:optvartight} implies that there exists an MDP where $\pi^\bullet$ is sub-optimal. 
As a side note, we can guarantee $\pi^\bullet$ to be near-optimal with an alternative `stochastic shortcut' assumption (a weaker form of the global optimality variability assumption) and a slightly stronger data mixing assumption. We refer the readers to \cref{appendix:ssf} for the formal results under this alternative assumption.

Overall, combining \cref{thm:optvar} and \cref{thm:suboptdqc} shows that, compared to executing the action chunking policy in open-loop chunks, closed-loop execution attains a similar bound under the strongly $\varepsilon_h$-open-loop consistent assumption, and excels under the bounded optimality variability assumptions. Conceptually, closed-loop execution of the learned action chunking policy decouples the open-loop execution horizon (policy chunk length) from the value-learning horizon (critic chunk length). Such decoupling inherits the strength of action chunking TD and 1-step TD: (1) the value learning speedup of action chunking Q-learning, and (2) the reactivity of a standard, single-step policy. Furthermore, executing the first action (or more generally a partial chunk) of the original action chunk also brings practical benefits: it removes the need to explicitly train a policy to predict the full action chunk all at once, which can be especially challenging when the chunk size grows big. Can we develop a practical method that realizes such potential?

\section{Decoupled Q-chunking}
\label{sec:method}

In this section, we propose a new algorithm that enjoys the benefits of value backup speedup of critic chunking while avoiding the difficulty of learning an open-loop action chunking policy with a large chunk size. As we have elucidated in the previous section, our core idea is to decouple the chunk size of the critic from that of the policy where the policy only predicts a partial action chunk. In particular, we train a policy  $\pi(a_{t:t+h_a} \mid s_t)$ to output an action chunk (with a size of $h_a \ll h$) using the following objective:
\begin{align}
\label{eq:dqc-orig}
     L(\pi) := -\mathbb{E}_{a_{t:t+h_a} \sim \pi(\cdot \mid s_t)}[Q_\phi(s_t, [a_{t:t+h_a}, a^\star_{t+h_a:t+h}])],
\end{align}
where $[a_{t:t+h_a}, a^\star_{t+h_a:t+h}]$ represents the concatenation of two partial action chunks (size $h_a$ and size $h - h_a$) into a full action chunk $a_{t:t+h}$ of size $h$, and $a^\star_{t+h_a:t+h}$ is the best `second-half' of the action chunk that maximizes the critic value under $Q_\phi$:
\begin{align}
    a^\star_{t+h_a:t+h} := {\arg\max}_{a_{t+h_a:t+h}} Q_\phi(s_t, [a_{t:t+h_a}, a_{t+h_a:t+h}]).
\end{align}
Essentially, we want our policy to predict the partial action chunk (of size $h_a$) within an optimal action chunk of size $h$, rather than the entire optimal action chunk. This lowers the policy expressivity requirement and hence the learning challenges associated with it. 

However, directly optimizing the objective in \cref{eq:dqc-orig} does not lead to a new algorithm because taking the maximization over $a_{t+h_a:t+h}$ seemingly requires us to learn a policy of the original chunk size anyways. To address this issue, we learn a separate partial critic $Q^P_\psi$, which only takes in the partial action chunk (of size $h_a$) as input, to approximate the maximum value this partial action chunk can achieve when it is extended to the full action chunk (of size $h$): 
\begin{align}
    Q^P_\psi(s_t, a_{t:t+h_a}) \approx Q_\phi(s_t, [a_{t:t+h_a}, a^\star_{t+h_a:t+h}]).
\end{align}
To train $Q^P_\psi$, we can use an \emph{implicit maximization} loss function (as described in \cref{eq:imp-backup}):
\begin{align}
\label{eq:dqc-max}
    L(\psi) := f^{\kappa_d}_{\mathrm{imp}} (\bar Q_\phi(s_t, a_{t:t+h}) - Q^P_\psi(s_t, a_{t:t+h_a})),
\end{align}
where $s_t, a_{t:t+h}$ are sampled from the offline dataset $D$. As a result, the partial critic, $Q^P_\psi$, is distilled from the original critic via an optimistic regression, where its optimum $Q^\star_\psi(s_t, a_{t:t+h_a})$ approximates $Q_\phi(s_t, [a_{t:t+h_a}, a^\star_{t+h_a:t+h}])$ in \cref{eq:dqc-orig}, conveniently removing the need for training a policy to predict the whole optimal action chunk entirely. This allows us to simplify the policy objective as
\begin{align}
\label{eq:dqc-final}
    L(\pi) := -\mathbb{E}_{a_{t:t+h_a} \sim \pi(\cdot \mid s_t)}\left[Q^P_\psi(s_t, a_{t:t+h_a})\right].
\end{align}

In summary, DQC trains a policy to predict a partial chunk, $a_{t:t+h_a}$ (of size $h_a$), by hill climbing the value of a partial critic $Q^P_\psi(s_t, a_{t:t+h_a})$ that is distilled from the original chunked critic $Q_\phi(s_t, a_{t:t+h})$ via an implicit maximization loss. This allows our policy to fully leverage the chunked critic $Q_\phi$ (and thus the value speedup benefits associated with Q-chunking) without the need to predict the full action chunk (of size $h$), mitigating the learning challenge of an action chunking policy. 

\textbf{Practical considerations for offline RL.} Finally, we describe several implementation details that we find to work well in the offline RL setting, which our experiments focus on. Our implementation draws inspiration from a prior method, IDQL~\citep{hansen2023idql}. 

We first train a behavior cloning flow policy $\pi_\beta$ using a standard flow-matching objective~\citep{liu2022flow} on the offline dataset $D$. Then, we approximate the policy optimization objective in \cref{eq:dqc-final} using best-of-N sampling without explicitly modeling $\pi$:
\begin{align}
    a^\star_{t:t+h_a} \leftarrow {\arg\max}_{\left\{a^i_{t:t+h_a}\right\}_{i=1}^{N}} Q^P_\psi(s_t, a_{t:t+h_a}),  \quad \text{where } a^1_{t:t+h_a}, \cdots, a^N_{t:t+h_a} \sim \pi_\beta(\cdot \mid s_t),
\end{align}
and $a^\star_{t:t+h_a}$ is output of the policy that we extract from $Q_\psi^P$ for state $s_t$. 
Essentially, this sampling procedure is a test-time approximation of the objective in \cref{eq:dqc-final}, where it outputs an action (chunk) that maximizes $Q^P_\psi$, subject to the behavior prior, as modeled by $\pi_\beta$.

For TD learning of $Q_\phi$, directly computing the TD backup target from either $Q_{\bar \phi}$ or $Q^P_{\bar \psi}$ is computationally expensive, as either requires samples from the current policy, which is approximated via the best-of-N sampling procedure as described above. Instead, we use the implicit value backup~\citep{kostrikov2021offline} (\emph{i.e.}, as described in \cref{eq:imp-backup}) to approximate the target:
\begin{align}
    L(\xi) = f^{\kappa_b}_{\mathrm{quantile}}(\bar Q^P_\psi(s_t, a_{t:t+h_a}) - V_\xi(s_t)),
\end{align}
where we pick the quantile regression loss as the implicit maximization loss function. 
This is because the Q-value obtained from best-of-N sampling can be seen as the largest order statistic of a random batch (of size $N$) of the behavior Q-values. Such statistic estimates the behavior Q-value distribution's $\frac{N-1}{N}$-quantile, which is the same as $V_\xi(s)$ at the optimum of $L(\xi)$ if we set $\kappa_b = \frac{N-1}{N}$. In practice, we use a smaller $\kappa_b$ for numerical stability (see \cref{tab:specific-hparam2}).

Finally, we pick the expectile regression loss for training the distilled partial critic $Q^P_\psi$ because prior work has found it to work the best among all implicit maximization loss functions~\citep{hansen2023idql}.
A summary of the algorithm is available in \cref{algo:dqc}.

\begin{algorithm}[t]
  
  \caption{Decoupled Q-chunking (DQC).}\label{algo:dqc}
  \begin{algorithmic}
  \State \textbf{Given: $D, Q_\phi(s_t, a_{t:t+h}), Q^P_\psi(s_t, a_{t:t+h_a}), V_\xi(s_t), \pi_\beta(a_{t:t+h_a} \mid s_t)$} 

\vspace{1.5mm}
\noindent\textbf{\color{black} 1. Agent Update:}
\vspace{1.5mm}

  \State $(s_{t:t+h+1}, a_{t:t+h}, r_{t:t+h}) \sim D$. \Comment{sample trajectory chunk from the offline dataset}
  \State Optimize $Q_\phi$ with $L(\phi) = \left(Q_\phi(s_t, a_{t:t+h}) - \sum_{k=0}^{h-1} \gamma^k r_{t+k} - \gamma^h \bar V_\xi(s_{t+h})\right)^2 $.
  \State Optimize $Q^P_\psi$ with $L(\psi) = f^{\kappa_{\mathrm{d}}}_{\text{expectile}}\left(\bar Q_\phi(s_t, a_{t:t+h}) - Q^P_\psi(s_t, a_{t:t+h_a})\right)$. 
  \State Optimize $V_\xi$ with $L(\xi) = f^{\kappa_{\mathrm{b}}}_{\text{quantile}}(\bar Q^P_\psi(s_t, a_{t:t+h_a}) - V_\xi(s_t))$, 
  
\vspace{1.5mm}
\noindent\textbf{\color{black} 2. Policy Extraction:}
\vspace{1.5mm}

    \State $a^1_{t:t+h_a}, a^2_{t:t+h_a}, \cdots, a^N_{t:t+h_a} \sim \pi_\beta(\cdot \mid s_t)$ \Comment{sample $N$ actions from behavior policy}
    \State $a^\star_{t:t+h_a} \leftarrow \arg\max_{\left\{a^i_{t:t+h_a}\right\}_{i=1}^{N}} Q^P_\psi(s_t, a_{t:t+h_a})$ \Comment{take the action with the highest $Q$-value}
  \end{algorithmic}

\end{algorithm}

\begin{figure}[ht]
    \centering
    \includegraphics[width=\linewidth]{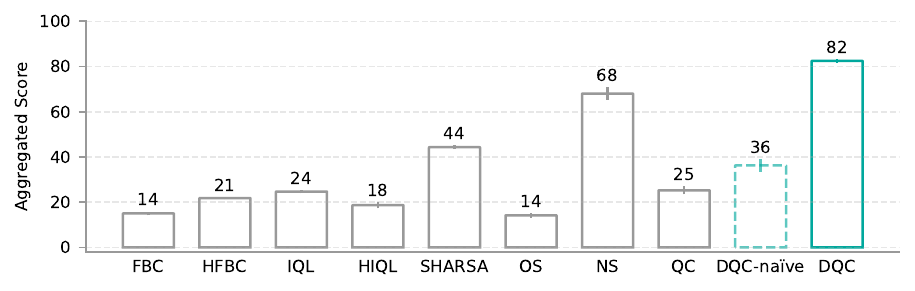}
    \caption{\textbf{Aggregated score across six hardest OGBench environments (10 seeds):} \texttt{cube-\{triple/quadruple/octuple\}}, \texttt{humanoidmaze-giant}, and \texttt{puzzle-\{4x5,4x6\}}.}
    \label{fig:dqc-agg}
\end{figure}

\section{Experimental Setup}

We conduct experiments to evaluate the benefits of decoupling the policy chunk size and the critic chunk size on OGBench~\citep{ogbench_park2024}---a challenging long-horizon, goal-conditioned offline RL benchmark consisting of a diverse set of environments (from manipulation to locomotion). In particular, we use the more difficult environments introduced by \citet{park2025horizon} (\cref{fig:env_renders}), where multi-step return backups are crucial. These environments require highly complex, long-horizon reasoning.
For example, the \texttt{puzzle} tasks require stitching up to $24$ atomic motions to solve a combinatorial puzzle with a robot arm,
and the \texttt{humanoidmaze} tasks require controlling a high-dimensional humanoid robot over $3000$ environment steps to navigate a maze.
These environments serve as an ideal testbed for our algorithm, which improves upon $n$-step returns and Q-chunking.
We now describe our main comparisons. To start with, we consider several direct ablation baselines where the same algorithm backbone is being used (\emph{i.e.}, implicit value backup and best-of-N sampling):

\textbf{QC} (Q-chunking~\citep{li2025reinforcement}) uses a single critic that has the same chunk length as that of the policy (\emph{i.e.}, $h=h_a$). This baseline tests whether having \emph{decoupled} chunk sizes is important.

\textbf{DQC-na\"ive} is a na\"ive attempt at decoupling the critic chunk size from the policy chunk size, where it takes the QC policy to predict full action chunks of size $h$ but only execute the first $h_a$ actions. 

\textbf{NS}: $n$-step return TD backup. This baseline uses a single one-step critic (\emph{i.e.}, $Q(s_t, a_t)$). Compared to DQC with $h=n$ and $h_a=1$, this baseline tests whether using a chunked critic is important. 

\textbf{OS}: Standard $1$-step TD backup. This is the same as NS but with $n=1$.

Beyond the ablation baselines, we also consider the following strong goal-conditioned baselines from prior work:

\textbf{FBC/HFBC}: Goal-conditioned and hierarchical goal-conditioned flow behavior cloning baselines considered in \citet{park2025horizon}.

\textbf{IQL/HIQL}~\citep{kostrikov2021offline, park2023hiql}: These are strong goal-conditioned RL methods that train a goal-conditioned value function with implicit value backups and extract a flat (IQL) or hierarchical (HIQL) policy from the value function.

\textbf{SHARSA}~\citep{park2025horizon}: The previous state-of-the-art method on the long-horizon environments that we evaluate on. The method uses a combination of $n$-step return and bi-level hierarchical policies. 

In our ablation study, we also consider an additional baseline, \textbf{QC-NS}, that uses the idea of decoupled policy chunking and critic chunking ($h_a < h$), but without using a distilled critic. This baseline simply uses $n$-step return targets to directly train a critic with a chunk size of $h_a$ without implicit maximization (\cref{eq:dqc-max}). 
The performance of this baseline helps determine how important it is to learn a separate distilled critic for partial action chunks with implicit maximization. We run $10$ seeds for all methods, and report the means and the $95\%$ confidence intervals.

\begin{figure}[t]
    \centering
    \includegraphics[width=\linewidth]{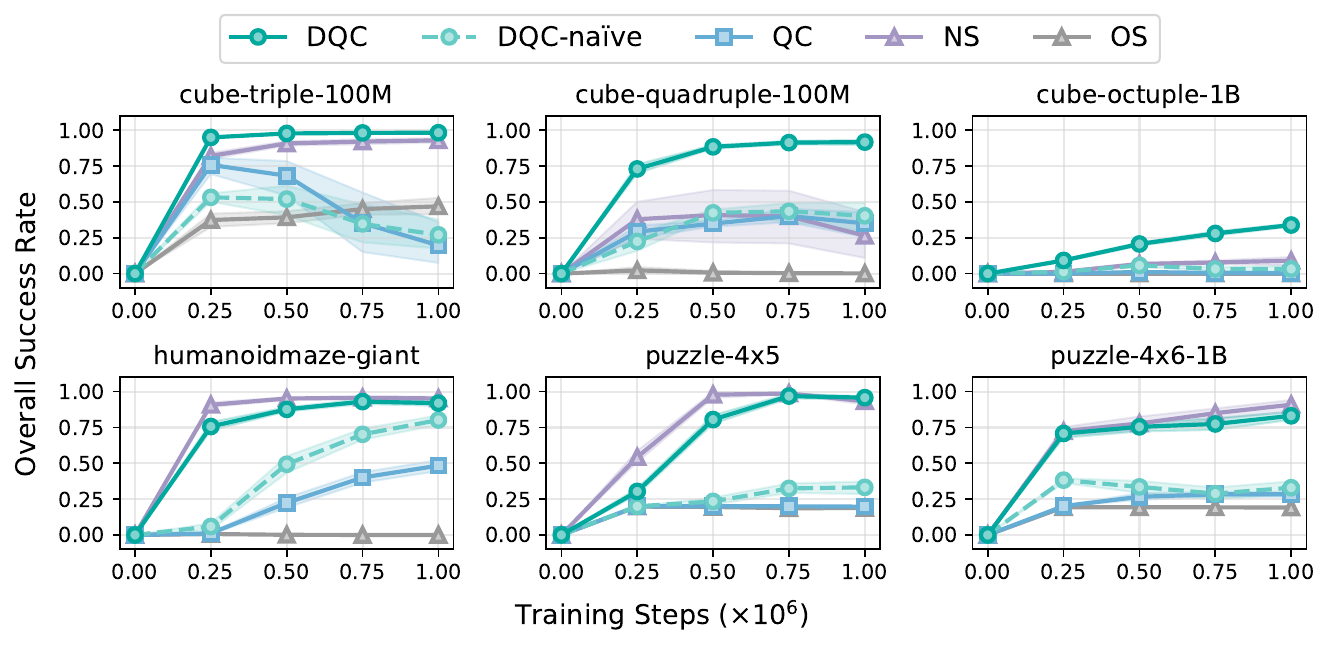}
    \vspace{-5pt}
    \caption{\textbf{Offline goal-conditioned RL results (10 seeds).} Our method (\emph{DQC}) uses \emph{decoupled} critic and policy chunk sizes. \emph{QC}: Q-chunking~\citep{li2025reinforcement}; \emph{NS}: $n$-step return backup; \emph{OS}: $1$-step TD-backup; \emph{DQC-na\"ive}: same as QC but executes a partial action chunk.
    }
    \label{fig:main-results}
\end{figure}

\begin{table}[t]
\centering
\scalebox{0.66}{
\begin{tabular}{lcccccccccc}
\toprule
\textbf{Task} & \textbf{FBC} & \textbf{HFBC} & \textbf{IQL} & \textbf{HIQL} & \textbf{SHARSA} & \textbf{OS} & \textbf{NS} & \textbf{QC} & \textbf{DQC-na\"ive} & \textbf{DQC} \\
\midrule
\texttt{cube-triple-100M} & $54^{\phantom{0}}_{\color{gray}[51, 56]}$ & $56^{\phantom{0}}_{\color{gray}[53, 59]}$ & $66^{\phantom{0}}_{\color{gray}[63, 67]}$ & $35^{\phantom{0}}_{\color{gray}[31, 39]}$ & $83^{\phantom{0}}_{\color{gray}[81, 85]}$ & $47^{\phantom{0}}_{\color{gray}[41, 53]}$ & $93^{\phantom{0}}_{\color{gray}[91, 94]}$ & $20^{\phantom{0}}_{\color{gray}[7, 36]\phantom{0}}$ & $27^{\phantom{0}}_{\color{gray}[18, 38]}$ & $\mathbf{98}^{\phantom{0}}_{\color{gray}[\mathbf{98}, \mathbf{99}]}$ \\
\texttt{cube-quadruple-100M} & $34^{\phantom{0}}_{\color{gray}[32, 37]}$ & $37^{\phantom{0}}_{\color{gray}[34, 40]}$ & $53^{\phantom{0}}_{\color{gray}[52, 55]}$ & $24^{\phantom{0}}_{\color{gray}[21, 28]}$ & $64^{\phantom{0}}_{\color{gray}[62, 68]}$ & $\phantom{0}0^{\phantom{0}}_{\color{gray}[0, 0]\phantom{0}\phantom{0}}$ & $27^{\phantom{0}}_{\color{gray}[11, 43]}$ & $35^{\phantom{0}}_{\color{gray}[26, 43]}$ & $40^{\phantom{0}}_{\color{gray}[29, 49]}$ & $\mathbf{92}^{\phantom{0}}_{\color{gray}[\mathbf{90}, \mathbf{93}]}$ \\
\texttt{cube-octuple-1B} & $\phantom{0}0^{\phantom{0}}_{\color{gray}[0, 0]\phantom{0}\phantom{0}}$ & $28^{\phantom{0}}_{\color{gray}[26, 29]}$ & $\phantom{0}0^{\phantom{0}}_{\color{gray}[0, 0]\phantom{0}\phantom{0}}$ & $20^{\phantom{0}}_{\color{gray}[17, 23]}$ & $\mathbf{34}^{\phantom{0}}_{\color{gray}[\mathbf{31}, \mathbf{36}]}$ & $\phantom{0}0^{\phantom{0}}_{\color{gray}[0, 0]\phantom{0}\phantom{0}}$ & $\phantom{0}9^{\phantom{0}}_{\color{gray}[6, 12]\phantom{0}}$ & $\phantom{0}0^{\phantom{0}}_{\color{gray}[0, 0]\phantom{0}\phantom{0}}$ & $\phantom{0}3^{\phantom{0}}_{\color{gray}[1, 5]\phantom{0}\phantom{0}}$ & $\mathbf{34}^{\phantom{0}}_{\color{gray}[\mathbf{33}, \mathbf{35}]}$ \\
\texttt{humanoidmaze-giant} & $\phantom{0}1^{\phantom{0}}_{\color{gray}[1, 2]\phantom{0}\phantom{0}}$ & $\phantom{0}6^{\phantom{0}}_{\color{gray}[4, 8]\phantom{0}\phantom{0}}$ & $\phantom{0}3^{\phantom{0}}_{\color{gray}[2, 5]\phantom{0}\phantom{0}}$ & $24^{\phantom{0}}_{\color{gray}[22, 26]}$ & $19^{\phantom{0}}_{\color{gray}[16, 23]}$ & $\phantom{0}0^{\phantom{0}}_{\color{gray}[0, 0]\phantom{0}\phantom{0}}$ & $\mathbf{95}^{\phantom{0}}_{\color{gray}[\mathbf{94}, \mathbf{97}]}$ & $48^{\phantom{0}}_{\color{gray}[45, 52]}$ & $80^{\phantom{0}}_{\color{gray}[77, 83]}$ & $92^{\phantom{0}}_{\color{gray}[90, 94]}$ \\
\texttt{puzzle-4x5} & $\phantom{0}0^{\phantom{0}}_{\color{gray}[0, 0]\phantom{0}\phantom{0}}$ & $\phantom{0}0^{\phantom{0}}_{\color{gray}[0, 0]\phantom{0}\phantom{0}}$ & $20^{\phantom{0}}_{\color{gray}[19, 20]}$ & $\phantom{0}0^{\phantom{0}}_{\color{gray}[0, 0]\phantom{0}\phantom{0}}$ & $\phantom{0}1^{\phantom{0}}_{\color{gray}[1, 2]\phantom{0}\phantom{0}}$ & $19^{\phantom{0}}_{\color{gray}[18, 19]}$ & $\mathbf{93}^{\phantom{0}}_{\color{gray}[\mathbf{91}, \mathbf{95}]}$ & $20^{\phantom{0}}_{\color{gray}[20, 20]}$ & $33^{\phantom{0}}_{\color{gray}[29, 37]}$ & $\mathbf{96}^{\phantom{0}}_{\color{gray}[\mathbf{95}, \mathbf{97}]}$ \\
\texttt{puzzle-4x6-1B} & $\phantom{0}1^{\phantom{0}}_{\color{gray}[0, 1]\phantom{0}\phantom{0}}$ & $\phantom{0}4^{\phantom{0}}_{\color{gray}[3, 5]\phantom{0}\phantom{0}}$ & $\phantom{0}6^{\phantom{0}}_{\color{gray}[3, 9]\phantom{0}\phantom{0}}$ & $\phantom{0}9^{\phantom{0}}_{\color{gray}[5, 13]\phantom{0}}$ & $64^{\phantom{0}}_{\color{gray}[60, 68]}$ & $19^{\phantom{0}}_{\color{gray}[19, 20]}$ & $\mathbf{91}^{\phantom{0}}_{\color{gray}[\mathbf{86}, \mathbf{94}]}$ & $28^{\phantom{0}}_{\color{gray}[27, 30]}$ & $33^{\phantom{0}}_{\color{gray}[28, 38]}$ & $83^{\phantom{0}}_{\color{gray}[80, 86]}$ \\
\bottomrule
\end{tabular}
}
\caption{\textbf{Comparisons with prior methods (10 seeds).} Our method outperforms SHARSA~\citep{park2025horizon} (the previous state-of-the-art method on this benchmark) on most environments. 
}
\label{tab:dqc}
\vspace{-2mm}
\end{table}

\section{Results}
\label{sec:results}
In this section, we present our experimental results to answer the following three questions:

\textbf{(Q1) Does DQC improve upon $n$-step return, Q-chunking?}
\cref{fig:main-results} compares DQC (ours) to both $n$-step and QC across six challenging long-horizon GCRL environments, with our method performing on par or better across the board. \cref{tab:dqc} shows DQC also consistently outperforms the previous state-of-the-art method on this benchmark, SHARSA~\citep{park2025horizon}, on all environments. For each environment, we tune DQC (ours), QC, NS, and OS (see the tuning range in \cref{tab:specific-hparam-range}) and pick the best configuration (\cref{tab:specific-hparam}) for hyperparameters used in \cref{fig:main-results} and \cref{tab:dqc}. For all baselines from prior work (SHARSA, HIQL, IQL, HFBC, FBC), we directly use their tuned hyperparameters and run with the same batch size (\emph{i.e.}, $4096$) as used in our method and other baselines. See the complete result table for all combinations of $h, n, h_a$ in \cref{appendix:full-results}.

\begin{figure}[t]
    \centering
    \vspace{-5pt}
    \includegraphics[width=\linewidth]{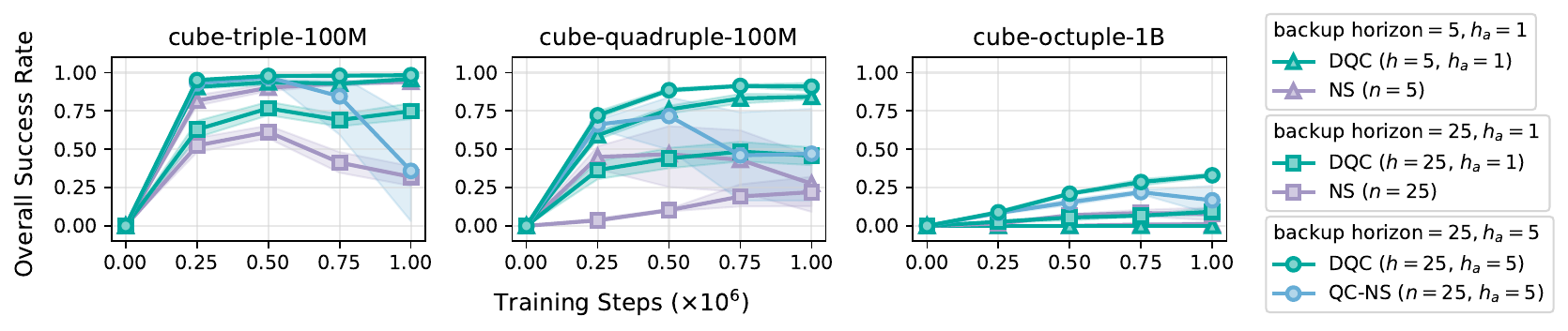}
    \caption{ \textbf{Distilled critic ablations (10 seeds).} Each group in the legend contains DQC and its non-distilled counterpart with the same configuration. Our method (DQC) performs on par or better than the non-distilled counterpart across all configurations.
    }
    \label{fig:distill-ablations}
    \vspace{-5pt}
\end{figure}

\begin{figure}[t]
    \centering
    \includegraphics[width=\linewidth]{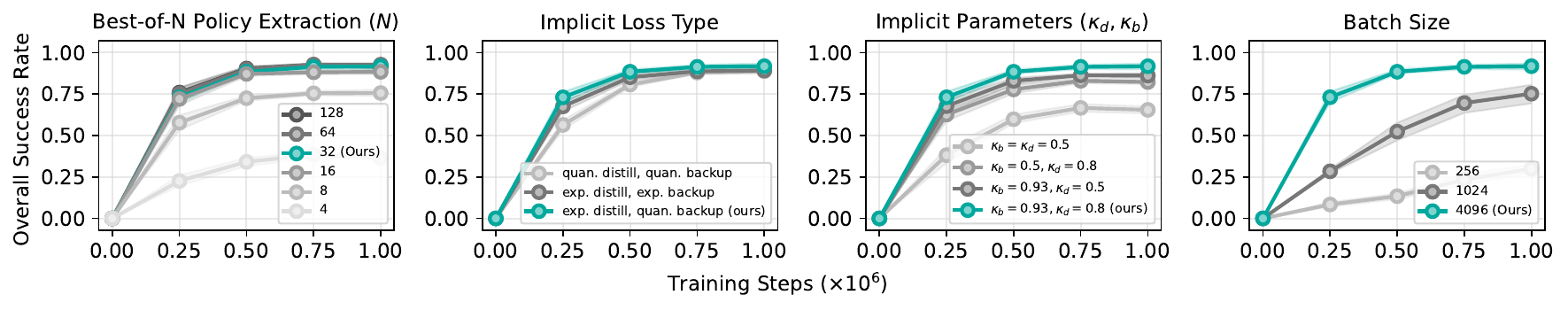}
    \caption{ \textbf{Hyperparameter sensitivity analysis on \texttt{cube-quadruple} (10 seeds).} \emph{Best-of-N}: the number of action samples drawn from $\pi_\beta(\cdot \mid s)$ during policy evaluation; \emph{Implicit loss type}: the implicit maximization loss function used for distillation and value backup; \emph{Batch size}: the number of examples used in each gradient step.}
    \label{fig:sensitivity}
\end{figure}

\textbf{(Q2) Is training a separate distilled critic $Q^P_\psi$ necessary?}
In \cref{fig:distill-ablations}, we compare DQC to DQC without using the distilled critic across three different $(h, h_a)$ configurations: $(h=25, h_a=5)$, $(h=25, h_a=1)$, and $(h=5, h_a=1)$. For configurations with $h_a=1$, the baseline without using the distilled critic is the same as the $n$-step return baseline (with $n=h$) and for the configuration with $h_a=5$, it is the same as combining Q-chunking and $n$-step return (QC-NS). Across three configurations, DQC performs on par or better than its non-distilled counterpart. This highlights that the a separate distilled critic for the partial action chunk is necessary for the effectiveness of DQC. 

\textbf{(Q3) How sensitive is DQC to its hyperparameters?}
\label{sec:results-hyper}
\cref{fig:sensitivity} shows that our method is not sensitive to the implicit backup method (quantile or expectile), and somewhat sensitive to the implicit parameters $\kappa_b, \kappa_d$. In particular, DQC is still reasonably effective as long as some form of optimism is employed (\emph{i.e.}, either $\kappa_b \neq 0.5$ or $\kappa_d \neq 0.5$). Using no optimism ($\kappa_b=\kappa_d=0.5$) results in a big performance drop. The other important hyperparameters are $N$ in the best-of-N policy extraction and the batch size. Having large enough batch size (\emph{i.e.}, $4096$) and $N$ (\emph{e.g.}, $N=32$) is crucial for good performance, although increasing $N$ further (\emph{e.g.}, $N=128$) does not lead to better performance.

\section{Discussion}
\label{sec:discuss}
We provide a theoretical foundation for action chunking Q-learning and demonstrate how to effectively extract policies from chunked critics.
Theoretically, we provide a formal analysis of action chunking Q-learning, identifying the TD backup bias that arises from open-loop inconsistency and characterizing the conditions under which action chunking Q-learning is preferred over $n$-step return learning and the conditions under which closed-loop execution of the action chunking policy is near-optimal. Empirically, we develop a new technique that enables effective policy extraction from chunked critics with long action chunks, scaling up action chunking Q-learning to much harder environments. Together, these contributions advance the goal of tackling bootstrapping bias in TD-learning. Several challenges remain, indicating promising avenues for future research. For example, our method relies on a fixed policy action chunk size $h_a$ and critic action chunk size $h$ across all states, even though the optimal action chunk size may vary by state. Developing practical methods that can support flexible, state-dependent chunk sizes would be a natural next step.

\subsubsection*{Acknowledgments}

This work was supported by DARPA ANSR and ONR N00014-25-1-2060. This research used the Savio computational cluster resource provided by the Berkeley Research Computing program at UC Berkeley. We would like to thank William Chen for discussions and inspiration, especially on the proof for \cref{thm:ddyn-con}. We would also like to thank Andrew Wagenmaker for suggestions and feedback on the theory (\cref{thm:bcvbias,thm:suboptact,thm:suboptdqc,thm:compare}). We would also like to thank Dibya Ghosh for feedback on an early version of the teaser figure and Ameesh Shah for writing feedback on an early draft of the paper.

\bibliography{iclr2026_conference}
\bibliographystyle{iclr2026_conference}

\appendix

\newpage

\section{Full results}
\label{appendix:full-results}
Table \ref{tab:dqc-full} reports the performance of our method (DQC) and baselines for all hyperparameter configurations. All of them use the same hyperparameters in \cref{tab:hyperparams} with the only exception that SHARSA, HIQL, IQL, FBC, and HFBC handle goal-sampling for training behavior cloning policies differently. We discuss this in more details in \cref{appendix:hyperparameters}.

\begin{table}[H]

\scalebox{0.885}{
\begin{tabular}{lllcccccc}
\toprule
\textbf{Method} &  &  & \texttt{c3-100M} & \texttt{c4-100M} & \texttt{c8-1B} & \texttt{hg} & \texttt{p45} & \texttt{p46-1B} \\
\midrule
\textbf{DQC} & $h=25$ & $h_a=1$ & $76_{\color{gray}[73, 80]}$ & $45_{\color{gray}[41, 49]}$ & $\phantom{0}10_{\color{gray}[8, 11]\phantom{0}}$ & $92_{\color{gray}[90, 94]}$ & $91_{\color{gray}[89, 92]}$ & $83_{\color{gray}[80, 86]}$ \\
\textbf{DQC} & $h=25$ & $h_a=5$ & $\mathbf{98}_{\color{gray}[\mathbf{98}, \mathbf{99}]}$ & $\mathbf{92}_{\color{gray}[\mathbf{90}, \mathbf{93}]}$ & $\mathbf{34}_{\color{gray}[\mathbf{33}, \mathbf{35}]}$ & $51_{\color{gray}[48, 54]}$ & $\mathbf{96}_{\color{gray}[\mathbf{95}, \mathbf{97}]}$ & $68_{\color{gray}[66, 71]}$ \\
\textbf{DQC} & $h=5$ & $h_a=1$ & $95_{\color{gray}[94, 97]}$ & $84_{\color{gray}[83, 86]}$ & $\phantom{0}0_{\color{gray}[0, 0]\phantom{0}\phantom{0}}$ & $19_{\color{gray}[15, 22]}$ & $90_{\color{gray}[88, 92]}$ & $44_{\color{gray}[42, 47]}$ \\
\textbf{DQC-na\"ive} & $h=25$ & $h_a=1$ & $14_{\color{gray}[8, 22]\phantom{0}}$ & $16_{\color{gray}[9, 23]\phantom{0}}$ & $\phantom{0}1_{\color{gray}[0, 2]\phantom{0}\phantom{0}}$ & $22_{\color{gray}[20, 24]}$ & $32_{\color{gray}[28, 36]}$ & $33_{\color{gray}[29, 37]}$ \\
\textbf{DQC-na\"ive} & $h=25$ & $h_a=5$ & $27_{\color{gray}[18, 38]}$ & $27_{\color{gray}[15, 39]}$ & $\phantom{0}3_{\color{gray}[1, 5]\phantom{0}\phantom{0}}$ & $\phantom{0}0_{\color{gray}[0, 1]\phantom{0}\phantom{0}}$ & $33_{\color{gray}[29, 37]}$ & $33_{\color{gray}[28, 38]}$ \\
\textbf{DQC-na\"ive} & $h=5$ & $h_a=1$ & $16_{\color{gray}[7, 30]\phantom{0}}$ & $40_{\color{gray}[29, 49]}$ & $\phantom{0}0_{\color{gray}[0, 0]\phantom{0}\phantom{0}}$ & $80_{\color{gray}[77, 83]}$ & $20_{\color{gray}[20, 20]}$ & $26_{\color{gray}[25, 28]}$ \\
\textbf{QC} & $h=25$ & $h_a=25$ & $21_{\color{gray}[13, 31]}$ & $12_{\color{gray}[7, 18]\phantom{0}}$ & $\phantom{0}0_{\color{gray}[0, 0]\phantom{0}\phantom{0}}$ & $\phantom{0}0_{\color{gray}[0, 0]\phantom{0}\phantom{0}}$ & $30_{\color{gray}[27, 33]}$ & $37_{\color{gray}[33, 42]}$ \\
\textbf{QC} & $h=5$ & $h_a=5$ & $20_{\color{gray}[7, 36]\phantom{0}}$ & $35_{\color{gray}[26, 43]}$ & $\phantom{0}0_{\color{gray}[0, 0]\phantom{0}\phantom{0}}$ & $48_{\color{gray}[45, 52]}$ & $20_{\color{gray}[20, 20]}$ & $28_{\color{gray}[27, 30]}$ \\
\textbf{QC-NS} & $n=25$ & $h_a=5$ & $51_{\color{gray}[22, 80]}$ & $53_{\color{gray}[28, 77]}$ & $18_{\color{gray}[10, 25]}$ & $60_{\color{gray}[58, 61]}$ & $\mathbf{95}_{\color{gray}[\mathbf{94}, \mathbf{96}]}$ & $\mathbf{95}_{\color{gray}[\mathbf{93}, \mathbf{97}]}$ \\
\textbf{NS} & $n=25$ & $h_a=1$ & $30_{\color{gray}[26, 35]}$ & $19_{\color{gray}[11, 28]}$ & $\phantom{0}9_{\color{gray}[6, 12]\phantom{0}}$ & $\mathbf{95}_{\color{gray}[\mathbf{94}, \mathbf{97}]}$ & $89_{\color{gray}[87, 91]}$ & $\mathbf{91}_{\color{gray}[\mathbf{86}, \mathbf{94}]}$ \\
\textbf{NS} & $n=5$ & $h_a=1$ & $93_{\color{gray}[91, 94]}$ & $27_{\color{gray}[11, 43]}$ & $\phantom{0}1_{\color{gray}[0, 3]\phantom{0}\phantom{0}}$ & $89_{\color{gray}[87, 91]}$ & $\mathbf{93}_{\color{gray}[\mathbf{91}, \mathbf{95}]}$ & $56_{\color{gray}[48, 63]}$ \\
\textbf{OS} & $n=1$ & $h_a=1$ & $47_{\color{gray}[41, 53]}$ & $\phantom{0}0_{\color{gray}[0, 0]\phantom{0}\phantom{0}}$ & $\phantom{0}0_{\color{gray}[0, 0]\phantom{0}\phantom{0}}$ & $\phantom{0}0_{\color{gray}[0, 0]\phantom{0}\phantom{0}}$ & $19_{\color{gray}[18, 19]}$ & $19_{\color{gray}[19, 20]}$ \\
\textbf{FBC} &  &  & $54_{\color{gray}[51, 56]}$ & $34_{\color{gray}[32, 37]}$ & $\phantom{0}0_{\color{gray}[0, 0]\phantom{0}\phantom{0}}$ & $\phantom{0}1_{\color{gray}[1, 2]\phantom{0}\phantom{0}}$ & $\phantom{0}0_{\color{gray}[0, 0]\phantom{0}\phantom{0}}$ & $\phantom{0}1_{\color{gray}[0, 1]\phantom{0}\phantom{0}}$ \\
\textbf{HFBC} &  &  & $56_{\color{gray}[53, 59]}$ & $37_{\color{gray}[34, 40]}$ & $28_{\color{gray}[26, 29]}$ & $\phantom{0}6_{\color{gray}[4, 8]\phantom{0}\phantom{0}}$ & $\phantom{0}0_{\color{gray}[0, 0]\phantom{0}\phantom{0}}$ & $\phantom{0}4_{\color{gray}[3, 5]\phantom{0}\phantom{0}}$ \\
\textbf{IQL} &  &  & $66_{\color{gray}[63, 67]}$ & $53_{\color{gray}[52, 55]}$ & $\phantom{0}0_{\color{gray}[0, 0]\phantom{0}\phantom{0}}$ & $\phantom{0}3_{\color{gray}[2, 5]\phantom{0}\phantom{0}}$ & $20_{\color{gray}[19, 20]}$ & $\phantom{0}6_{\color{gray}[3, 9]\phantom{0}\phantom{0}}$ \\
\textbf{HIQL} &  &  & $35_{\color{gray}[31, 39]}$ & $24_{\color{gray}[21, 28]}$ & $20_{\color{gray}[17, 23]}$ & $24_{\color{gray}[22, 26]}$ & $\phantom{0}0_{\color{gray}[0, 0]\phantom{0}\phantom{0}}$ & $\phantom{0}9_{\color{gray}[5, 13]\phantom{0}}$ \\
\textbf{SHARSA} &  &  & $83_{\color{gray}[81, 85]}$ & $64_{\color{gray}[62, 68]}$ & $\mathbf{34}_{\color{gray}[\mathbf{31}, \mathbf{36}]}$ & $19_{\color{gray}[16, 23]}$ & $\phantom{0}1_{\color{gray}[1, 2]\phantom{0}\phantom{0}}$ & $64_{\color{gray}[60, 68]}$ \\
\bottomrule
\end{tabular}
}
\caption{\textbf{Complete results for all hyperparameter configurations across different combinations of $h$, $n$ and $h_a$ (10 seeds).} We adopt the following abbreviations: \texttt{c3=cube-triple}, \texttt{c4=cube-quadruple}, \texttt{c8=cube-octuple}, \texttt{hg=humanoidmaze-giant}, \texttt{p45=puzzle-4x5}, \texttt{p46=puzzle-4x6}. The hyperparameters used are specified in \cref{tab:specific-hparam,tab:specific-hparam2}.
}
\label{tab:dqc-full}
\end{table}

\begin{figure}[h]
    \centering
    \includegraphics[width=\linewidth]{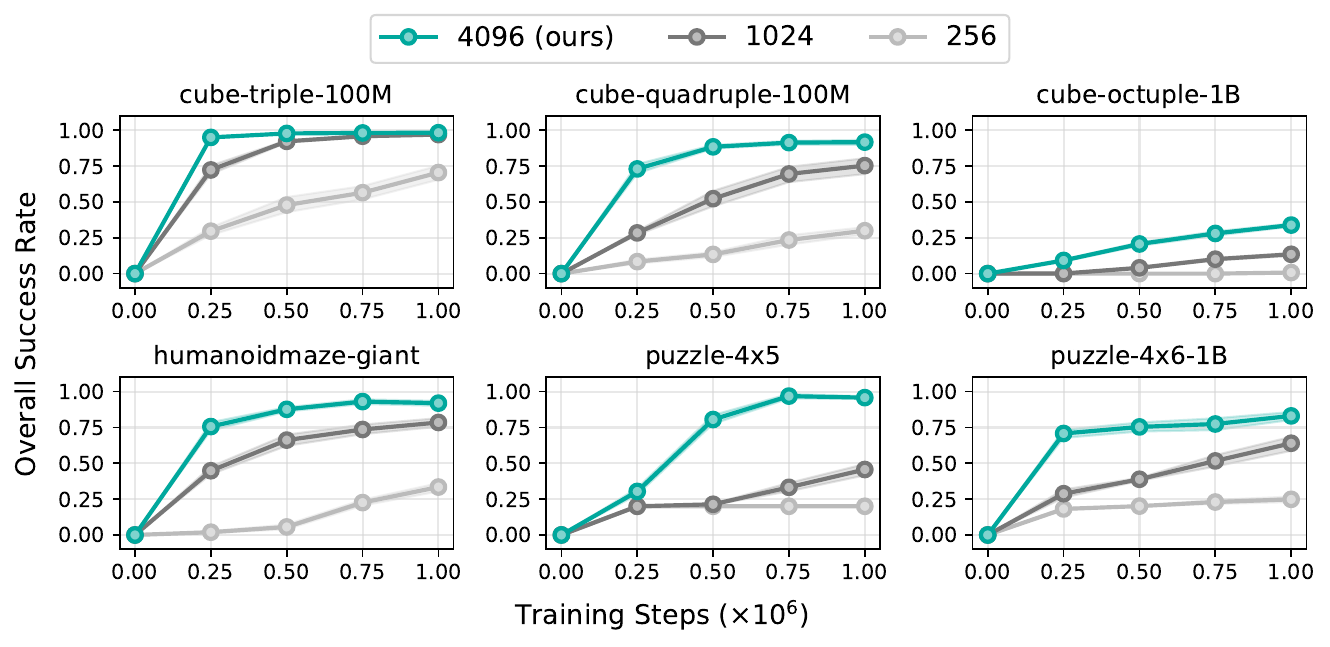}
    \caption{ \textbf{Batch size sensitivity (10 seeds).} Large batch size is crucial for DQC's performance especially on hard tasks (\emph{e.g.}, \texttt{cube-quadruple}, \texttt{cube-octuple}, \texttt{puzzle-4x5} and \texttt{puzzle-4x6}).}
    \label{fig:sensitivity-bs-full}
\end{figure}

\section{Environments and datasets}
\label{appendix:envs}
To evaluate our method, we consider 8 goal-conditioned environments in OGBench with varying difficulties (\cref{fig:env_renders}). The dataset size, episode length, and the action dimension for each environment is available in \cref{tab:metadata}. We describe each of the environments and the datasets we use as follows.

\textbf{Environment} \texttt{cube-*}: We consider three cube environments (\texttt{cube-triple}, \texttt{cube-quadruple}, \texttt{cube-octuple}). As the names suggest, the goal of these environments involve using a robot arm to manipulate $3$/$4$/$8$ cubes from some initial configuration to some specified goal configuration. We use the same five evaluation tasks used in OGBench~\citep{ogbench_park2024} for \texttt{cube-triple} and \texttt{cube-quadruple} and the same five evaluation tasks used in \citet{park2025horizon} for \texttt{cube-octuple}. We refer the environment detail to the corresponding references.

\begin{table}[ht]
    \centering
    \begin{tabular}{@{}cccc@{}}
        \toprule
        \textbf{Environment} & \textbf{Dataset Size}  & \textbf{Episode Length} & \textbf{Action Dimension ($A$)} \\
        \midrule
        \texttt{cube-triple-100M} &  $100$M & $1000$ & $5$\\
        \texttt{cube-quadruple-100M} &  $100$M & $1000$ & $5$\\
        \texttt{cube-octuple-1B} &  $1$B & $1500$ & $5$\\
        \texttt{humanoidmaze-giant} &  $4$M {\color{gray}(default)} & $4000$ & $21$\\
        \texttt{puzzle-4x5} &  $3$M  {\color{gray}(default)} & $1000$ & $5$\\
        \texttt{puzzle-4x6-1B} &  $1$B & $1000$ & $5$\\
    \bottomrule
    \end{tabular}
    \vspace{2mm}
    \caption{ \textbf{Environment metadata.} For both \texttt{humanoidmaze-giant} and \texttt{puzzle-4x5}, we use the default dataset that is released in the original OGBench benchmark~\citep{ogbench_park2024}. For the other environments, we use larger datasets as we find them to be essential for achieving good performances on these environments.}
    
    \label{tab:metadata}
\end{table}

\textbf{Environment} \texttt{humanoidmaze-*}: We also consider the hardest locomotion environment available in OGBench. The goal of the environment is to control and navigate a humanoid agent from some initial location to some specified goal location in a $16 \times 12$ maze. This environment also has the longest episode length (4000, more than twice as long as the second longest episode length as used in \texttt{cube-octuple}). We refer the environment detail to \citet{ogbench_park2024}.

\textbf{Environment} \texttt{puzzle-*}: Finally, we consider two environments that involve solving a combinatorial puzzle with a robot arm. The puzzle consists of a board of $4\times 5$ or $4 \times 6$ buttons, organized as a regular grid ($4$ rows and $5$ or $6$ columns). Each button has a binary state. Whenever the end-effector of the arm touches a button, the button and all its adjacent four buttons (three or two if the button is on the edge of the grid or in the corner) flip its binary state. The goal of the environment is to transform the board from some initial state to some specified goal state. We refer the environment detail to \citet{park2025horizon}.

At the test-time/evaluation-time, the goal-conditioned agent is tested on five evaluation tasks for each of the six environments we consider. The overall success rate is the average over 5 tasks with 50 evaluation trials each. For the prior baselines, SHARSA, HIQL, IQL, HFBC and FBC, we run 15 evaluation trials for each task, following \citet{park2025horizon}. 

\textbf{Datasets.}
We use \texttt{play} datasets for all \texttt{cube-*}  and \texttt{puzzle-*} environments and \texttt{navigate} dataset for \texttt{humanoidmaze-*}. We use the original datasets available for \texttt{humanoidmaze-giant} and \texttt{puzzle-4x5} because they are sufficient for solving the environments. Using larger datasets on these environments do not help differentiate among different methods/baselines. For each of the other environments, we use the largest dataset available from \citet{park2025horizon} as we find it to be necessary to solve these environments (or achieve non-trivial performance on \texttt{cube-octuple}).

\begin{figure*}[h]
    \centering
    \begin{minipage}{0.33\textwidth}
        \begin{subfigure}{\textwidth}
            \centering
            \includegraphics[width=0.98\linewidth]{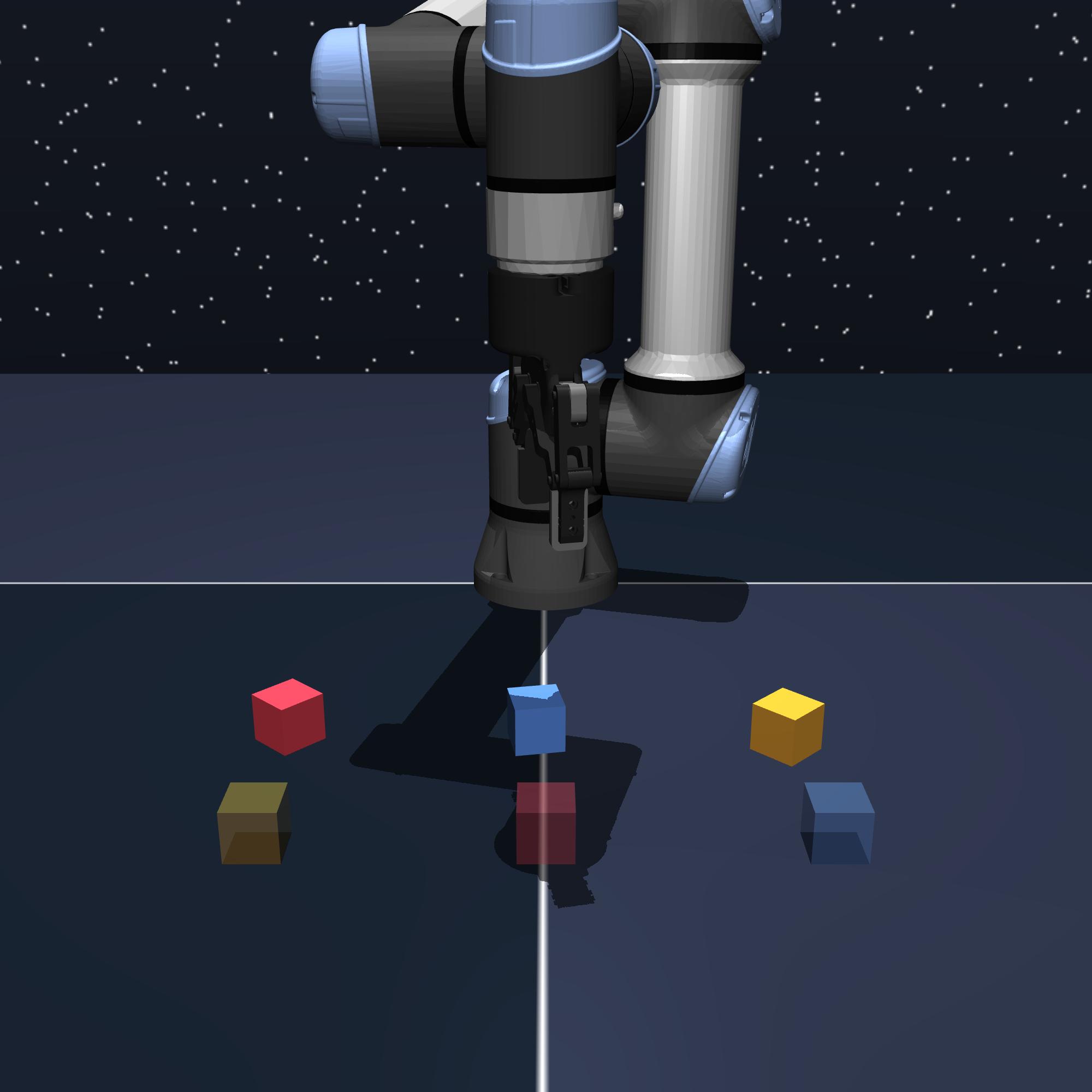}
            \caption{ \texttt{cube-triple}}
            \label{fig:cube-triple-viz}
        \end{subfigure}
    \end{minipage}\hfill
    \begin{minipage}{0.33\textwidth}
        \begin{subfigure}{\textwidth}
            \centering
\includegraphics[width=0.98\linewidth]{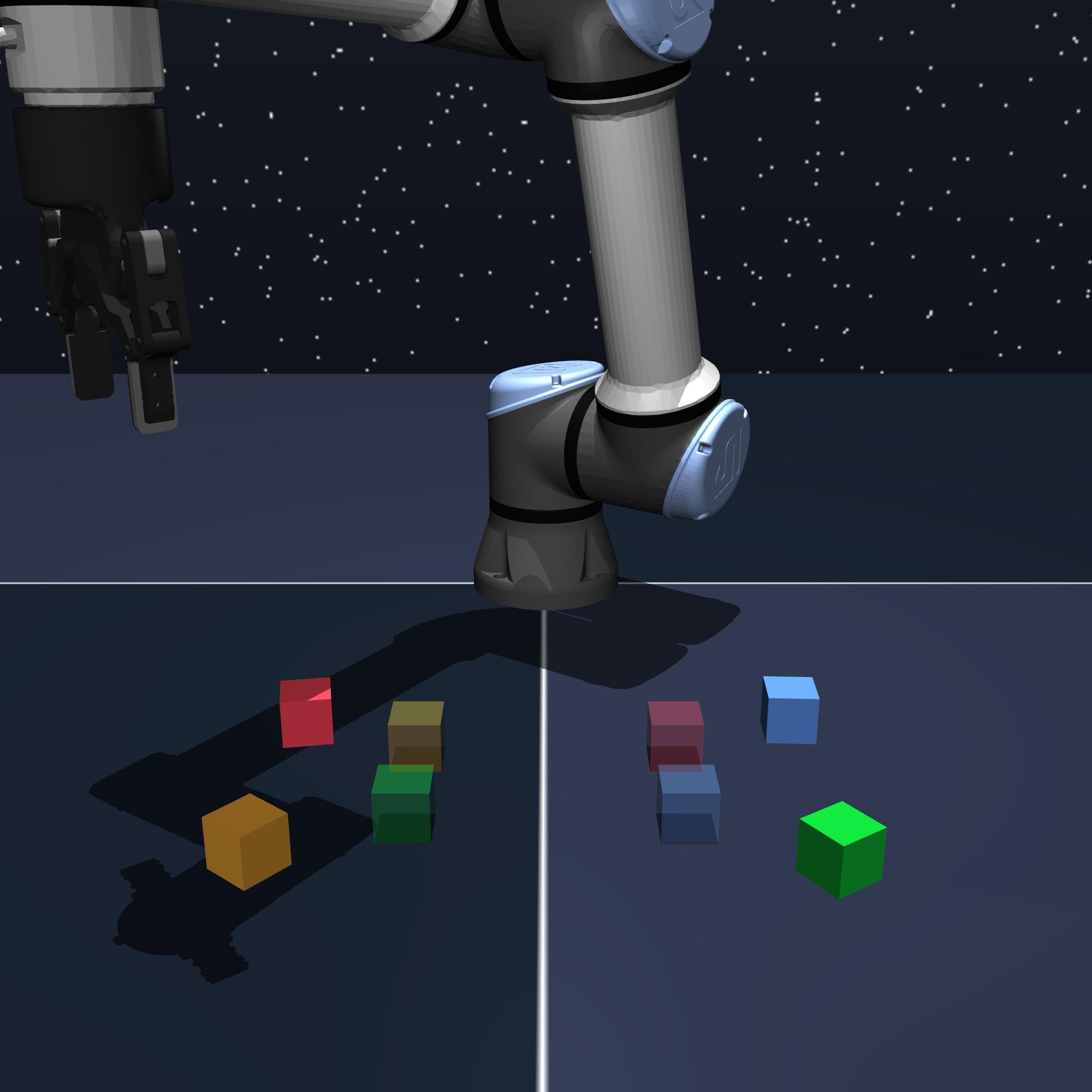}
            \caption{  \texttt{cube-quadruple}}
            \label{fig:cube-quadruple-viz}
        \end{subfigure}
    \end{minipage}\hfill
    \begin{minipage}{0.33\textwidth}
        \begin{subfigure}{\textwidth}
            \centering
\includegraphics[width=0.98\linewidth]{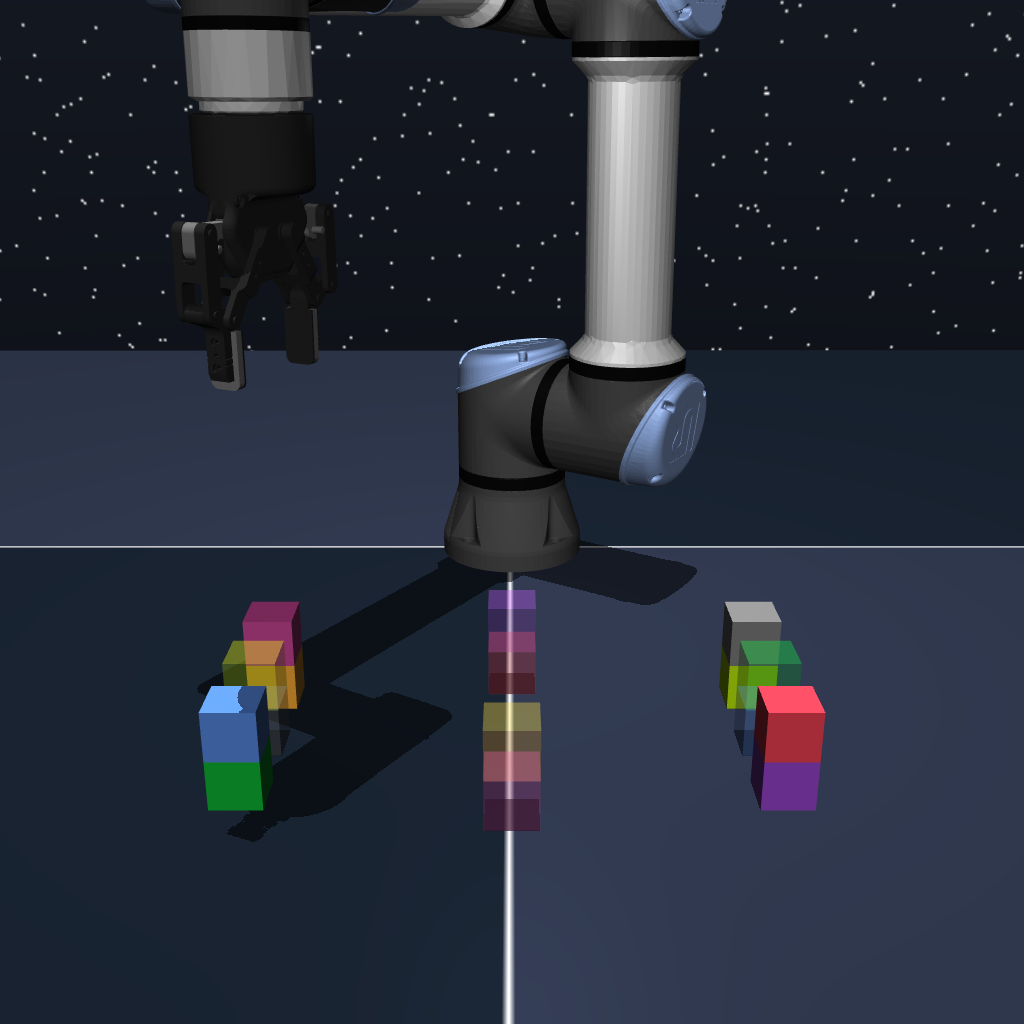}
    \caption{ \texttt{cube-octuple}}
            \label{fig:cube-octuple-viz}
        \end{subfigure}
    \end{minipage}
    
    \begin{minipage}{0.33\textwidth}
        \begin{subfigure}{\textwidth}
            \centering
            \includegraphics[width=\linewidth]{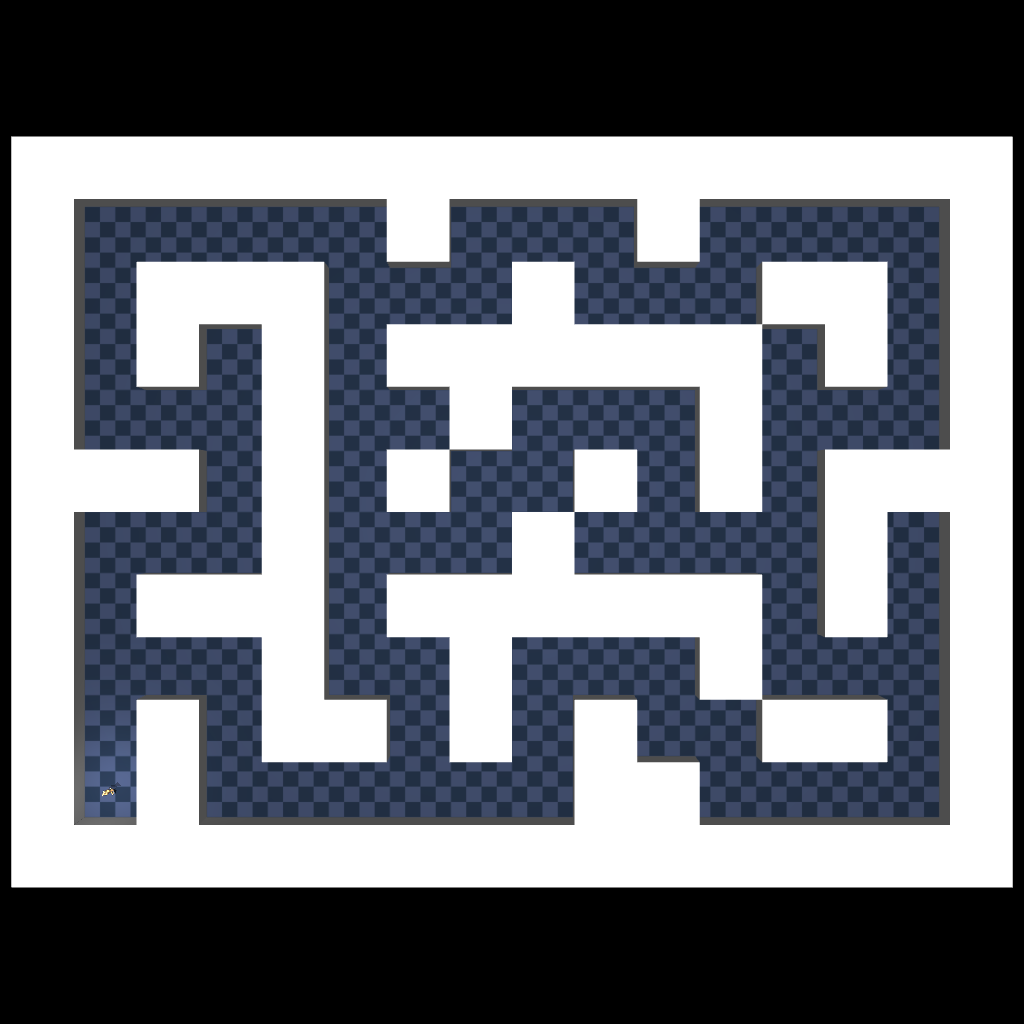}
        \caption{  \texttt{humanoidmaze-giant}}
        \end{subfigure}
    \end{minipage}\hfill
    \begin{minipage}{0.33\textwidth}
        \begin{subfigure}{\textwidth}
            \centering
            \includegraphics[width=\linewidth]{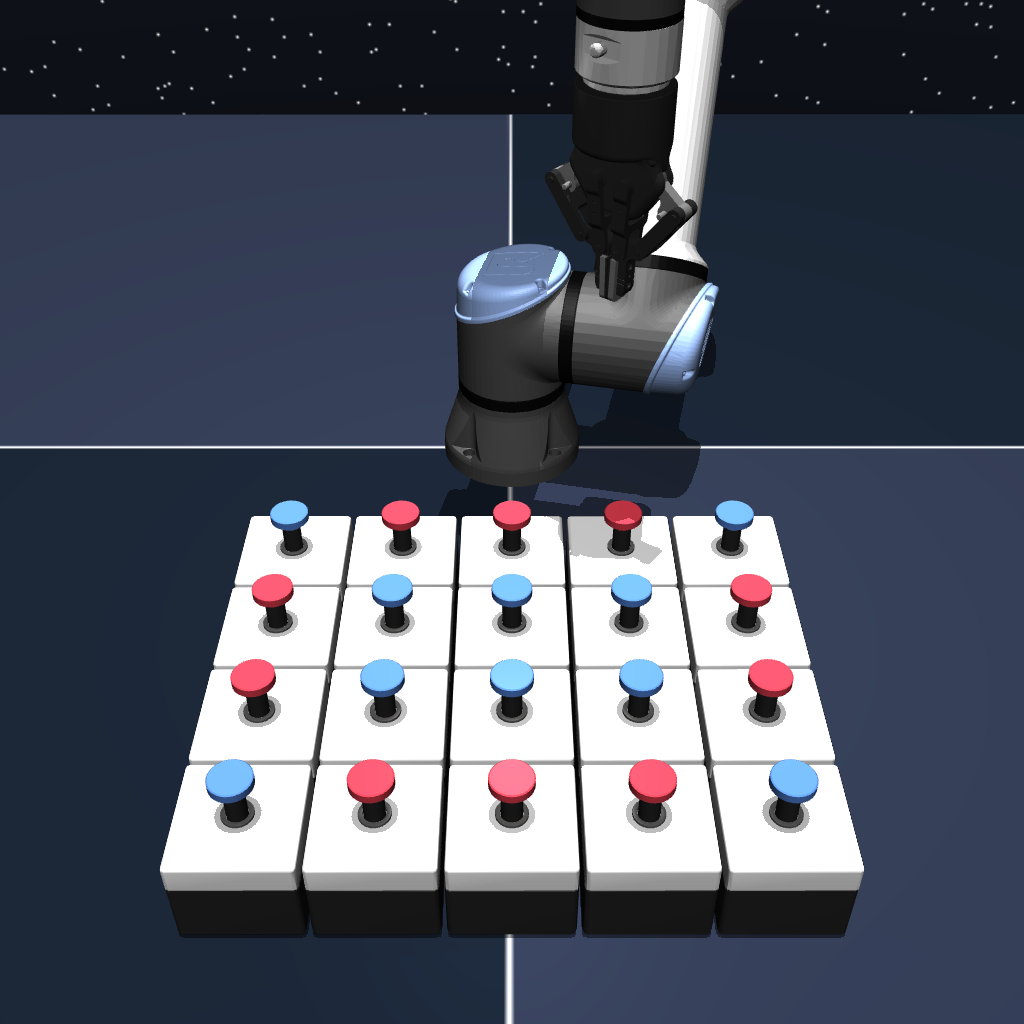}
            \caption{ \texttt{puzzle-4x5}}
        \end{subfigure}
    \end{minipage}\hfill
    \begin{minipage}{0.33\textwidth}
        \begin{subfigure}{\textwidth}
            \centering
            \includegraphics[width=\linewidth]{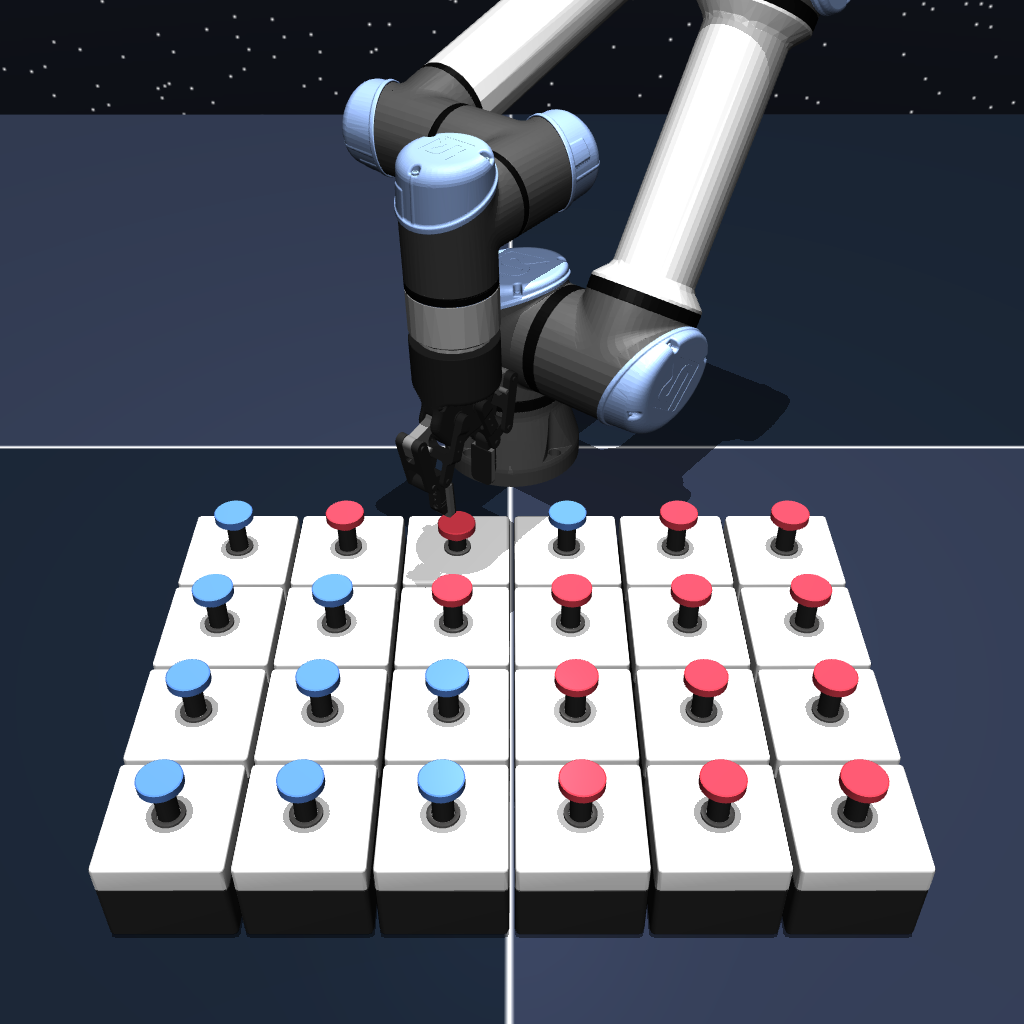}
            \caption{  \texttt{puzzle-4x6}}
        \end{subfigure}
    \end{minipage}
    
    \caption{ \textbf{Environments used in our experiments}. }
    \label{fig:env_renders}
\end{figure*}

\section{Hyperparameters and implementation details}
\label{appendix:hyperparameters}
\textbf{Hyperparameters.}
\cref{tab:hyperparams} describes the common hyperparameters used in all our experiments. \cref{tab:specific-hparam,tab:specific-hparam2} describe the environment-specific hyperparameters and \cref{tab:specific-hparam-range} describes the range of hyperparameters we use for tuning each method. 

\begin{table}[ht]
    \centering
    \begin{tabular}{@{}cc@{}}
        \toprule
        \textbf{Parameter} & \textbf{Value} \\
        \midrule
        Batch size &  $4096$ \\
        Discount factor ($\gamma$) & $0.999$ \\
        Optimizer & Adam \\
        Learning rate & $3 \times 10^{-4}$ \\
        Target network update rate ($\lambda$) & $5 \times 10^{-3}$ \\
        Critic ensemble size ($K$) & 2 \\
        \multirow{2}{*}{Critic target} & $\min(Q_1, Q_2)$ for \texttt{cube-*} \\ & $(Q_1 + Q_2)/2$ for \texttt{puzzle-*} and \texttt{humanoid-*} \\
        Value loss type &  binary cross entropy \\
        Best-of-N sampling ($N$) & $32$       \\
        Number of flow steps & $10$ \\
        Number of training steps & $10^6$ \\
        Network width & $1024$ \\
        Network depth & $4$ hidden layers \\
        \midrule
        Value goal sampling $(w_{\mathrm{cur}}^{\mathrm{v}}, w_{\mathrm{geom}}^{\mathrm{v}}, w_{\mathrm{traj}}^{\mathrm{v}}, w_{\mathrm{rand}}^{\mathrm{v}})$ & $(0.2, 0, 0.5, 0.3)$ \\
        \midrule
        \multirow{4}{*}{Actor goal sampling $(w_{\mathrm{cur}}^{\mathrm{p}}, w_{\mathrm{geom}}^{\mathrm{p}}, w_{\mathrm{traj}}^{\mathrm{p}}, w_{\mathrm{rand}}^{\mathrm{p}})$} & DQC/QC/NS/OS: $\pi_\beta$ is not goal-conditioned \\ & SHARSA (\texttt{cube}): $(0, 1, 0, 0)$ 
        \\ & SHARSA (\texttt{puzzle}): $(0, 0, 1, 0)$ 
        \\ & SHARSA (\texttt{humanoidmaze}): $(0, 0, 1, 0)$ \\
    \bottomrule
    \end{tabular}
    \vspace{1mm}
    \caption{ \textbf{Common hyperparameters.} For the GCRL goal-sampling distribution we follow the same hyperparameters used in \citet{park2025horizon}.}
    \label{tab:hyperparams}
\end{table}

\textbf{Goal-conditioned RL implementation details.} 
While we have described in the main body of the paper how DQC works as a general RL algorithm, we have not touched on how DQC and similarly all our baselines works with the goal-conditioned RL (GCRL) setting. We consider the setting where we have access to an oracle goal representation $\Psi: \mathcal{S} \to \mathcal{G}$ where $\mathcal{G}$ is the goal space (see \cref{tab:oracle-goalrep} for the oracle goal representation description for each environment). The goal-conditioned reward function $r: (s, g) \mapsto \mathbb{I}_{\Psi(s)=g}$ is a binary reward function where its output is $1$ if the goal $g$ is reached by the current state $s$. We can treat $g$ as part of an extended state $\tilde{s} = [s, g] \in \tilde{S} =  \mathcal{S} \times \mathcal{G}$ and learn value functions (\emph{e.g.}, $Q_\phi(\tilde{s}, a)$) normally with such extended state.

\begin{table}[h]
    \centering
    \begin{tabular}{@{}ccccccc@{}}
    \toprule
    \textbf{Environment} & \textbf{Goal Representation} ($\Psi$) & \textbf{Goal Domain} ($\mathcal{G}$) \\
    \midrule
    \texttt{cube-triple}          &  $(x, y, z)$ of three cubes (rel. to center) & $\mathbb{R}^{9}$\\
    \texttt{cube-quadruple}       &   $(x, y, z)$ of four cubes (rel. to center) & $\mathbb{R}^{12}$ \\
    \texttt{cube-octuple}           &  $(x, y, z)$ of eight cubes (rel. to center) & $\mathbb{R}^{24}$ \\
    \texttt{humanoidmaze-giant} &  $(x,y)$ of the humanoid & $\mathbb{R}^{2}$\\
    \texttt{puzzle-4x5}             &  the binary state for each button & $\{0, 1\}^{20}$ \\
    \texttt{puzzle-4x6}             &  the binary state for each button & $\{0, 1\}^{24}$ \\
    \bottomrule
    \end{tabular}
    \vspace{1mm}
    \caption{ \textbf{Oracle goal representation description for each environment.} Following \citet{park2025horizon}, we assume access to an oracle goal representation for each environment. More detailed definition of these oracle goal representations is available in OGBench~\citep{ogbench_park2024}.}
    \label{tab:oracle-goalrep}
\end{table}

\begin{table}[ht]
    \centering
    \scalebox{0.625}{
    \begin{tabular}{@{}ccccccccccc@{}}
    \toprule
    \multirow{2}{*}{\textbf{Environment}} & \textbf{DQC} & \textbf{DQC-na\"ive} & \textbf{QC-NS} & \textbf{QC} & \textbf{NS} & \textbf{OS} & \textbf{SHARSA}  & \textbf{HIQL} & \textbf{IQL} & \textbf{HFBC}  \\
    & ($h$, $h_a$, $\kappa_b$, $\kappa_d$) & ($h$, $h_a$, $\kappa_b$) & ($h$, $h_a$, $\kappa_b$) & ($h=h_a$, $\kappa_b$) & ($n$, $\kappa_b$) & ($\kappa_b$) & ($n$) & ($h$, $\kappa$, $\alpha$) & ($\alpha$) & ($h$) \\ 
    \midrule
    \texttt{cube-triple-100M}        & $(25, 5, 0.93, 0.8)$ & $(25, 5, 0.93)$& $(25, 5, 0.93)$  &  $(5, 0.93)$ & $(5, 0.5)$& $0.5$ & $25$ & $(25, 0.5, 10)$ & $3$ & $25$ \\
    \texttt{cube-quadruple-100M}     & $(25, 5, 0.93, 0.8)$   & $(5, 1, 0.93)$  & $(25, 5, 0.93)$ &  $(5, 0.93)$ & $(5, 0.7)$& $0.7$ & $25$ & $(25, 0.5, 10)$ & $3$ & $25$  \\
    \texttt{cube-octuple-1B}         & $(25, 5, 0.93, 0.5)$   & $(25, 5, 0.93)$  & $(25, 5, 0.93)$ &  $(25, 0.93)$&$(25,0.97)$ & $0.7$ & $25$ & $(50, 0.5, 10)$ & $10$ & $50$  \\
    \texttt{humanoidmaze-giant}      & $(25, 1, 0.5, 0.8)$ &  $(5, 1, 0.9)$  & $(25, 5, 0.5)$ &  $(5, 0.5)$ & $(25, 0.7)$ & $0.5$ & $50$ & $(50, 0.5, 3)$ & $0.3$ & $50$  \\
    \texttt{puzzle-4x5}             & $(25, 5, 0.9, 0.5)$  &   $(25, 5, 0.9)$  & $(25, 5, 0.7)$ &  $(5, 0.9)$ & $(25, 0.7)$ & $0.7$ & $50$ & $(25, 0.7, 3)$ & $1$ & $25$  \\
    \texttt{puzzle-4x6-1B}          & $(25, 1, 0.7, 0.5)$    &  $(25, 5, 0.7)$  & $(25, 5, 0.5)$  &   $(5, 0.7)$ & $(25, 0.5)$ & $0.7$ & $50$ & $(25, 0.7, 3)$ & $1$ & $25$  \\
    \bottomrule
    \end{tabular}}
    \vspace{1mm}
    \caption{\textbf{Environment-specific hyperparameters for \textbf{DQC}, \textbf{QC}, \textbf{NS}, \textbf{OS}, \textbf{SHARSA}, \textbf{HIQL}, \textbf{IQL}, and \textbf{HFBC}.} 
    For SHARSA, HIQL, IQL, and HFBC, we follow the hyperparameters in the original paper~\citep{park2025horizon}.}
    \label{tab:specific-hparam}
\end{table}

\begin{table}[ht]
    \centering
    \scalebox{0.67}{
    \begin{tabular}{@{}cccccccccc@{}}
    \toprule
    \multirow{3}{*}{\textbf{Environment}} & \textbf{DQC} &\textbf{DQC} & \textbf{DQC} & \textbf{QC-NS} & \textbf{NS} & \textbf{NS} & \textbf{QC} & \textbf{QC} & \textbf{OS} \\
    & $h=25$, $h_a=5$ & $h=25$, $h_a=1$ & $h=5$, $h_a=1$ & $n=25$, $h_a=5$ & $n=25$ & $n=5$ & $h=25$ & $h=5$ &  \\ 
    & $(\kappa_b, \kappa_d)$& $(\kappa_b, \kappa_d)$& $(\kappa_b, \kappa_d)$ & $\kappa_b$ & $\kappa_b$ & $\kappa_b$ & $\kappa_b$ & $\kappa_b$ & $\kappa_b$ \\
    \midrule
    \texttt{cube-triple-100M}        & $(0.93, 0.8)$ & $(0.93, 0.8)$ & $(0.5, 0.8)$  &  $0.93$& $0.5$ & $0.5$ & $0.93$ & $0.93$ & $0.5$ \\
    \texttt{cube-quadruple-100M}     & $(0.93, 0.8)$   & $(0.93, 0.8)$   & $(0.5, 0.8)$ &  $0.93$ & $0.5$&  $0.7$ & $0.93$ & $0.93$ & $0.7$ \\
    \texttt{cube-octuple-1B}         & $(0.93, 0.5)$   & $(0.93, 0.5)$    & $(0.93, 0.5)$ &  $0.93$ & $0.97$ & $0.5$ &  $0.93$&  $0.93$ & $0.7$\\
    \texttt{humanoidmaze-giant}      & $(0.5, 0.8)$ & $(0.5, 0.8)$ & $(0.5, 0.5)$ &  $0.5$ & $0.5$ & $0.5$ &  $0.5$&  $0.5$ & $0.5$ \\
    \texttt{puzzle-4x5}             & $(0.9, 0.5)$  & $(0.9, 0.5)$  & $(0.5, 0.5)$ &  $0.7$ & $0.7$ & $0.5$ & $0.9$ & $0.9$ & $0.7$ \\
    \texttt{puzzle-4x6-1B}          & $(0.7, 0.5)$ & $(0.7, 0.5)$    & $(0.5, 0.5)$  & $0.5$ & $0.7$ & $0.5$ & $0.7$& $0.7$ & $0.7$ \\
    \bottomrule
    \end{tabular}}
    \vspace{1mm}
    \caption{ \textbf{Environment-specific hyperparameters under different $h, n, h_a$ configurations for DQC, QC, NS, OS.} 
    For DQC-na\"ive, we use the same hyperparameter as the corresponding QC baseline.}
    \label{tab:specific-hparam2}
\end{table}

\begin{table}[ht]
    \centering
    \scalebox{0.8}{
    \begin{tabular}{@{}ccccc@{}}
    \toprule
    \multirow{2}{*}{\textbf{Environment}} & \textbf{Backup Quantile} & \textbf{Distillation Expectile} & \textbf{Backup Horizon} & \textbf{Policy Chunk Size}\\
    & ($\kappa_b$) & ($\kappa_d$) & ($h$) or ($n$) & ($h_a$) \\ 
    \midrule
    \texttt{cube-*}          &  $\{0.5, 0.7, 0.9, 0.93, 0.95, 0.97, 0.99\}$ & $\{0.5, 0.8\}$  & $\{5, 25\}$ & $\{1, 5, 25\}$ \\
    Others     &  $\{0.5, 0.7, 0.9\}$ & $\{0.5, 0.8\}$ & $\{5, 25\}$ & $\{1, 5, 25\}$ \\
    \bottomrule
    \end{tabular}}
    \vspace{1mm}
    \caption{ \textbf{Hyperparameter tuning range for all methods}. For NS, we only tune $\kappa_b$ and $n$ because the policy chunk size is always $1$ and there is no distilled critic. Similarly, for QC, we only tune $\kappa_b$ and $h=h_a$ because the policy chunk size is the same as the critic chunk size and there is no distilled critic. For OS, we only tune $\kappa_b$.} 
    \label{tab:specific-hparam-range}
\end{table}

A common practical trick in the GCRL setting is goal relabeling. That is, during training for each $(s, a)$ pair in the training batch, a goal $g$ is sampled from some distribution (\emph{i.e.}, $p^{\mathcal{D}}(\cdot \mid s, a)$) and the reward of the transition is relabeled with the goal-conditioned reward function. Following \citet{park2025horizon}, the goal distribution  $P^g(\cdot \mid s, a): \mathcal{S} \times \mathcal{A} \to \Delta_{\mathcal{G}}$ is a mixture of four distributions, conditioned on the training state-action example:
\begin{align}
    P^g = w_\mathrm{cur}P^g_{\mathrm{cur}}  + w_\mathrm{geom}P^g_{\mathrm{geom}} + w_\mathrm{traj}P^g_{\mathrm{traj}} + w_\mathrm{rand}P^g_{\mathrm{rand}},
\end{align}
where
\begin{enumerate}
    \item $P^g_{\mathrm{cur}}(\cdot \mid s, a) = \delta_{\Psi(s)}$: the goal is the same as the current state;
    \item $P^g_{\mathrm{geom}}(\cdot \mid s, a)$: geometric distribution over the future states in the same trajectory that $(s, a)$ is from;
    \item $P^g_{\mathrm{traj}}(\cdot \mid s, a)$: uniform distribution over the future states in the same trajectory that $(s, a)$ is from; and finally
    \item $P^g_{\mathrm{rand}}(\cdot \mid s, a) = \Psi(\mathcal{U}_{\mathcal{D}(s)})$: uniform distribution over the dataset ($\mathcal{D}(s)$ is the distribution of states in the dataset).
\end{enumerate}
and $w_\mathrm{cur}, w_\mathrm{geom}, w_\mathrm{traj}, w_\mathrm{rand} > 0$ are the corresponding weights for each of the distribution components with $w_\mathrm{cur} + w_\mathrm{geom} + w_\mathrm{traj} + w_\mathrm{rand} = 1$. 

In practice, it has been found to be beneficial to use a separate set of goal sampling weights for TD backup~\citep{ogbench_park2024} (\emph{i.e.}, ($w^\mathrm{v}_\mathrm{cur}, w^\mathrm{v}_\mathrm{geom}, w^\mathrm{v}_\mathrm{traj}, w^\mathrm{v}_\mathrm{rand}$)) and for policy learning (\emph{i.e.}, ($w^\mathrm{p}_\mathrm{cur}, w^\mathrm{p}_\mathrm{geom}, w^\mathrm{p}_\mathrm{traj}, w^\mathrm{p}_\mathrm{rand}$)). However, in our implementation of DQC/QC/NS/OS, we do not train a goal-conditioned policy, as our policy extraction is done entirely at test-time by best-of-N sampling from an \emph{unconditional}  (\emph{i.e.}, not goal-conditioned) behavior policy $\pi_\beta$. In particular, we use an unconditioned flow policy $\pi_\beta(\cdot \mid s)$ that is parameterized by a velocity field $v_\beta: \mathcal{S}\times \mathbb{R}^A \times [0, 1] \to \mathbb{R}^{A}$ that is trained with the standard flow-matching objective:
\begin{align}
    L_{\mathrm{FM}}(\beta) = \mathbb{E}_{u\sim \mathcal{U}[0, 1], z \sim \mathcal{N}, (s, a) \sim \mathcal{D}}\left[\|v_\beta(s, (1-u) z + u a, u) - a + z\|_2^2 \right]
\end{align}

For SHARSA, we use the official implementation where both flow policies (high-level and low-level) are goal-conditioned (and thus are trained with the goal distribution mixture specified by $w^\mathrm{p}_\mathrm{cur}, w^\mathrm{p}_\mathrm{geom}, w^\mathrm{p}_\mathrm{traj}, w^\mathrm{p}_\mathrm{rand}$). The goal sampling distribution for training the value networks (for all methods) and the goal sampling distribution for the policy networks (for SHARSA only) are provided in \cref{tab:hyperparams}.

\newpage
\section{Additional related work}
\label{appendix:add-related}

\textbf{Theoretical analysis for reinforcement learning under unobserved confounding variables.} 
RL with action chunking policies can be seen as a special case of RL under unobserved confounding variables~\citep{kallus2021minimax} as the action chunking policies ignore the intermediate states during the execution of an action chunk. Prior analyses are based off either causal-inference-inspired sensitivity models~\citep{kallus2020confounding, NEURIPS2020_da21bae8, kausik2024offline}, confounded MDP models~\citep{bennett2021off, fu2022offline, shi2024off}, or more general partially observable MDP (POMDP) models~\citep{tennenholtz2020off, miao2022off, shi2022minimax, bennett2024proximal} where the confounding variables are modeled as part of the partially observable states. These models largely focus on characterizing either how much confounding variables affect the policy behavior (\emph{e.g.}, bounded odds-ratio between the policy with or without conditioning on the confounding variables~\citep{kallus2020confounding}) or how much the observations reveal the confounding variables (\emph{e.g.}, the full-rank emission matrix assumption~\citep{azizzadenesheli2016reinforcement} and the weak revealing assumption~\citep{liu2022partially} in POMDP). In contrast, our analysis specializes in action chunking policies where the unobserved variables are the intermediate states during an action chunk. This allows us to establish a more specialized (and thus distinct) open-loop consistency condition under which we can identify the exact worst case bias (\emph{i.e.}, with matching lower and upper-bound to the exact value) for both behavioral value estimation and sub-optimality gap of the fixed-point for bellman optimality iteration, which are usually unknown under the more general models/assumptions in the literature.

\textbf{Hierarchical reinforcement learning methods}~\citep{dayan1992feudal, dietterich2000hierarchical, peng2017deeploco, riedmiller2018learning, shankar2020learning, pertsch2021accelerating, gehring2021hierarchical, xie2021latent} solve tasks by typically leveraging a bi-level structure: a set of low-level/skill policies that directly interact with the environment and a high-level policy that selects among low-level policies. The low-level policies can also be learned via online RL~\citep{kulkarni2016hierarchical, vezhnevets2016strategic, vezhnevets2017feudal, nachum2018data} or offline pre-training on a prior dataset~\citep{paraschos2013probabilistic,merel2018neural, ajay2020opal, pertsch2021accelerating, touati2022does, nasiriany2022learning, hu2023unsupervised, frans2024unsupervised, chen2024self, park2024foundation}. In the options framework, these low-level policies are often additionally associated with initiation and termination conditions that specify when and for how long these actions can be used~\citep{sutton1999between, menache2002q, chentanez2004intrinsically, csimcsek2007betweenness, konidaris2011autonomous, daniel2016hierarchical, srinivas2016option, fox2017multi, bacon2017option, bagaria2019option, bagaria2024effectively, de2025learning}. A long-lasting challenge in HRL is optimization stability because the high-level policy needs to optimize for an objective that is shaped by the constantly changing low-level policies~\citep{nachum2018data}. Prior work~\citep{ajay2020opal, pertsch2021accelerating, wilcoxson2024leveraging} avoided this by first pre-training low-level policies and then keeping them frozen during the optimization of the high-level policy. Macro-actions~\citep{mcgovern1998macro, durugkar2016deep}, or action chunking~\citep{zhao2023learning} is another form of temporally extended action, a special case of the low-level policies often considered in HRL, options literature, where a short horizon of actions is predicted all at once and executed in open loop. Such an approach collapses the bi-level structure, conveniently side-stepping optimization instability, and when combined with Q-learning, has shown great empirical successes in offline-to-online RL setting~\citep{seo2024continuous, li2025reinforcement}. Action chunking policies need to predict multiple actions open-loop, which can be difficult to learn and sacrifice reactivity. Our approach regains policy reactivity by predicting and executing only a partial action chunk, while still learning with the fully chunked critic for TD-backup. This design preserves the value propagation benefits of chunked critic without relying on fully open-loop action chunking policies, allowing our approach to work well on a wider range of tasks.

\newpage
\section{Additional theoretical results}

\subsection{$\varepsilon$-deterministic dynamics is weakly open-loop consistent}
\label{appendix:edyn}
To provide some intuitions on what this open-loop consistency implies, we discuss a concrete family of MDPs where any data distribution from these MDPs is (weakly) $\varepsilon_h$-open-loop consistent (\cref{thm:ddyn-con}, with proof available in \hyperref[proof:ddyn-con]{\cref{proof:ddyn-con}}).
\begin{definition}[Near-deterministic Dynamics]
A transition dynamics $T$ is $\varepsilon$-deterministic if there exists a deterministic transition dynamics represented by function $f: \mathcal{S} \times \mathcal{A} \to \mathcal{S}$ and another arbitrary transition dynamics $\tilde{T}: \mathcal{S} \times \mathcal{A} \to \Delta_\mathcal{S}$,  and $T$ is a combination of $f$ and $\tilde{T}$:
\begin{align}
    T(s' \mid s, a) = (1-\varepsilon) \delta_{f(s, a)}(s') + \varepsilon\tilde{T}(s' \mid s, a), \forall s, s' \in \mathcal{S}, a \in \mathcal{A}.
\end{align}
\end{definition}

\begin{restatable}[Deterministic Dynamics are Weakly Open-loop Consistent]{proposition}{ddyncon}
\label{thm:ddyn-con}
If a transition dynamics $\mathcal{M}$ is $\varepsilon$-deterministic, then any data $\mathcal{D}$ collected from $\mathcal{M}$ is weakly $\varepsilon_h$-open-loop consistent with respect to $\mathcal{M}$ for any $h \in \mathbb{N}^+$ as long as $\varepsilon_h \geq 3(1 - (1-\varepsilon)^{h-1})$. 
\end{restatable}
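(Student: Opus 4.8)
The plan is to prove the bound via a coupling argument that passes through two auxiliary ``deterministic-rollout'' reference processes. Fix any $s_t$ with $P_{\mathcal{D}}(s_t)>0$ and let $F_{h'}(s_t, a_{t:t+h'})$ denote the state reached by applying the deterministic map $f$ repeatedly $h'$ times along $a_{t:t+h'}$. I introduce (i) $\Pi^{\mathrm{det}}$, the closed-loop behavior process run with purely deterministic transitions (every transition uses $f$ rather than $T$), whose action-chunk marginal I call $Q^{\mathrm{det}}(a_{t:t+h}\mid s_t)$; and (ii) $\tilde\Pi^{\circ,\mathrm{det}}$, the open-loop process that draws the full chunk from the true marginal $P_{\mathcal{D}}(a_{t:t+h}\mid s_t)$ and then fixes states deterministically. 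In both auxiliary processes the pair $(s_{t+h'},a_{t+h'})$ is the same deterministic function $g(a_{t:t+h}) = (F_{h'}(s_t,a_{t:t+h'}),a_{t+h'})$ of the action chunk.

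First I would bound the two ``stochastic-versus-deterministic transition'' gaps. Using the $\varepsilon$-deterministic representation, each transition independently follows $f$ with probability $1-\varepsilon$; augmenting both $P_{\mathcal{D}}^{\circ}$ and $P_{\mathcal{D}}$ with these Bernoulli coins and coupling them to $\tilde\Pi^{\circ,\mathrm{det}}$ and $\Pi^{\mathrm{det}}$ respectively, the coupled pairs agree on $(s_{t+h'},a_{t+h'})$ whenever the first $h'$ coins come up ``deterministic''. This event has probability $(1-\varepsilon)^{h'}$ (on the closed-loop side this uses that the coin for each transition is independent of the history up to the action just taken), so each of these two gaps is at most $1-(1-\varepsilon)^{h'}$. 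Next I would bound the middle ``action-marginal'' gap: since $\tilde\Pi^{\circ,\mathrm{det}}$ and $\Pi^{\mathrm{det}}$ apply the same map $g$ to their chunks, the data-processing inequality gives that their $(s_{t+h'},a_{t+h'})$-gap is at most $D_{\mathrm{TV}}(P_{\mathcal{D}}(a_{t:t+h}\mid s_t), Q^{\mathrm{det}}(a_{t:t+h}\mid s_t))$. Coupling $P_{\mathcal{D}}$ and $\Pi^{\mathrm{det}}$ step by step (shared behavior randomness, coupled transitions) makes their state histories, and hence their chunks $a_{t:t+h}$, identical as long as the first $h-1$ transitions are deterministic, so this gap is at most $1-(1-\varepsilon)^{h-1}$.

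Summing the three pieces by the triangle inequality and using $h'\le h-1$ yields $D_{\mathrm{TV}}(P_{\mathcal{D}}^{\circ}(s_{t+h'},a_{t+h'}\mid s_t),\,P_{\mathcal{D}}(s_{t+h'},a_{t+h'}\mid s_t)) \le 2(1-(1-\varepsilon)^{h'}) + (1-(1-\varepsilon)^{h-1}) \le 3(1-(1-\varepsilon)^{h-1})$, establishing \cref{eq:weakoc}. For the state-only condition \cref{eq:weakoc2}, I would observe that once the two processes are coupled on $(s_{t+h-1},a_{t+h-1})$, the final transition $s_{t+h}\sim T(\cdot\mid s_{t+h-1},a_{t+h-1})$ is governed by the identical kernel in both and can be coupled exactly; hence the $s_{t+h}$ gap is dominated by the $h'=h-1$ case already proved. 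The case $h=1$ is immediate, since open- and closed-loop coincide over a single step and the right-hand side is $0$.

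The main obstacle is precisely the middle step. Because the behavior policy is non-Markovian and conditions on intermediate states, the action-chunk marginal that the open-loop rollout replays is \emph{not} the distribution of actions generated along a deterministic rollout, so a naive ``condition on all transitions being deterministic'' coupling directly between $P_{\mathcal{D}}^{\circ}$ and $P_{\mathcal{D}}$ fails to make them agree. Routing through $\Pi^{\mathrm{det}}$ isolates this discrepancy into a single total-variation distance between action-chunk marginals, which the deterministic-prefix coupling controls; the only delicate bookkeeping is tracking how many coins must be ``deterministic'' to pin down $s_{t+h'}$, $a_{t+h'}$, and the full chunk separately, and this is exactly what produces the factor $3$ and the exponent $h-1$.
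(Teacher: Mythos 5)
Your proof is correct and follows essentially the same route as the paper's: both decompose the total variation distance into three terms via a triangle inequality through two deterministic-rollout reference distributions (the open-loop chunk replayed under $f$, and the behavior process restricted to deterministic transitions), bound the middle action-marginal term by the data-processing inequality, and bound the two outer terms by the probability $1-(1-\varepsilon)^{h-1}$ that some transition deviates from $f$, yielding the factor of $3$. The only difference is presentational: you phrase the outer bounds as explicit couplings and define the deterministic reference as the policy run in the deterministic MDP, whereas the paper conditions on the event $I_h=1$ that the realized states happened to follow $f$ and uses mixture decompositions; both give the same bounds.
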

An $\varepsilon$-deterministic dynamics acts like a deterministic one most of the time (with $1-\varepsilon$ probability) and a non-deterministic one occasionally (with $\varepsilon$ probability). This bounded stochasticity allows the results of taking an action sequence (of length $h$) open-loop to be deterministically determined in the event that the deterministic dynamics is `triggered' (with a joint $(1-\varepsilon)^{h-1}$ probability across $h$ time steps). It is clear that under such event, there is no gap between the `replayed' open-loop data $P^{\circ}_{\mathcal{D}}$ and the original data distribution $P_{\mathcal{D}}$, and as result there is also no value estimation bias under this event, and thus intuitively we can bound the value estimation error by a function of the probability that the stochastic dynamics is `triggered' (\emph{i.e.}, with $1-(1-\varepsilon)^{h-1}$ probability). 

\subsection{Conditions when $n$-step return policies are provably sub-optimal}
\label{appendix:n-step-rigor}

\begin{definition}[Near Optimal Data]
\label{def:suboptdata-complete}
We say $\mathcal{D}$ is $\tilde\delta_n$-optimal for backup horizon length $n \in \mathbb{N}^+$ if
\begin{align}
\label{eq:nsp-opp}
    Q^\star(s_t, a_t) -  \mathbb{E}_{\clP(\cdot \mid s_t, a_t)}\left[R_{t:t+n} + \gamma^n V^\star(s_{t+n})\right] \leq \tilde\delta_n, \forall s_t, a_t \in \mathrm{supp}(P_{\mathcal{D}}(s_t, a_t)).
\end{align}
\end{definition}

In \cref{thm:compare}, we have shown that the value of the learned action chunking policy is better than the nominal value of $n$-step return policy with a value gap of $3\varepsilon_hH$. However, the actual value of the $n$-step return policy maybe better. Here, we analyze the worst-case performance of $n$-step return policies. 
\begin{restatable}[Worst-case analysis of $n$-step return backup]{proposition}{comparetight}
\label{prop:comparetight}
For any $n \in \mathbb{N}^+$, $\tilde \delta_n \in \left(0, \gamma-\gamma^n\right)$ and $\sigma \in\left(0, \tilde \delta_n/(1-\gamma)\right)$, there exists an MDP $\mathcal{M}$, and a $\tilde \delta_n$-optimal data distribution $\mathcal{D}$ with $\mathrm{supp}(P_\mathcal{D}(s_t, a_t)) \supseteq \mathrm{supp}(P_{\mathcal{D}^\star}(s_t, a_t))$ such that for some $s \in \mathrm{supp}(P_{\mathcal{D}^\star}(s_t))$,
\begin{equation}
\begin{aligned}
    V^+_{\mathrm{ac}}(s) - V^+_n(s) & = \frac{\tilde \delta_n}{1-\gamma}-\sigma,
\end{aligned}
\end{equation}
and for all $s \in \mathrm{supp}(P_{\mathcal{D}^\star}(s_t))$,
\begin{equation}
\begin{aligned}
    V^\star(s) = V^+_{\mathrm{ac}}(s).
\end{aligned}
\end{equation}
\end{restatable}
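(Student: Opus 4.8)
The plan is to prove this by explicit construction, exhibiting a single \emph{deterministic} MDP $\mathcal{M}$ together with a behavior data distribution $\mathcal{D}$ (with $\mathcal{D}^\star$ the optimal sub-distribution). Determinism is the key simplification: a deterministic transition kernel is strongly $0$-open-loop consistent (every conditional in \cref{def:olc} is a point mass at the same successor), so the second claim is essentially free. Applying \cref{thm:suboptact} with $\varepsilon_h = 0$ gives $V^\star(s) - V^+_{\mathrm{ac}}(s) \le 0$ on $\mathrm{supp}(P_{\mathcal{D}^\star}(s_t))$, and since $\pi^+_{\mathrm{ac}}$ executed open-loop is itself a valid policy we also have $V^+_{\mathrm{ac}}(s) \le V^\star(s)$; together these force $V^\star(s) = V^+_{\mathrm{ac}}(s)$. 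Thus all the substantive work is in engineering the $n$-step return policy to be sub-optimal by exactly $\tfrac{\tilde\delta_n}{1-\gamma}-\sigma$.

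For that claim I would build a \textbf{recurring-decision gadget}. The obstruction to a naive one-shot construction is that flipping a single decision with the $n$-step backup requires the backup bias (bounded by $\tilde\delta_n<1$) to exceed the true $Q$-gap of that decision, so one flipped decision costs at most $\approx \tilde\delta_n$, far below the target $\tfrac{\tilde\delta_n}{1-\gamma}$. The remedy is to route the rollout through a state $s_0$ that is revisited at every effective decision, so the same sub-optimal choice is made repeatedly and the per-step losses accumulate geometrically. Concretely, at $s_0$ I place a ``good'' action $a_G$ (taken by $\mathcal{D}^\star$, hence present in $\mathrm{supp}(P_{\mathcal{D}})$, securing the support condition) whose \emph{optimal} continuation is high-value but whose \emph{behavior} continuation over the next $n-1$ steps is deliberately poor, and a ``bad'' action $a_B$ whose behavior continuation is (near-)optimal. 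The off-policy intermediate rewards then make the uncorrected target $R_{t:t+n}+\gamma^n\hat Q^+_n(s_{t+n})$ for $a_G$ smaller than that for $a_B$, so the greedy policy $\pi^+_n$ from \cref{eq:nsp} persistently avoids $a_G$, while the optimal (and action-chunking) policy takes it. The constraint $\tilde\delta_n<\gamma-\gamma^n=(1-\gamma)\sum_{k=1}^{n-1}\gamma^k$ is exactly the budget that lets the intermediate-reward deficit be hidden inside the $n$-step window without violating the $[0,1]$ reward bounds.

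The main obstacle is the self-consistent solution of the $n$-step fixed point $\hat Q^+_n$ with its induced greedy policy, since the bootstrap term $\gamma^n\hat Q^+_n(s_{t+n})$ depends on the policy being solved for; the ``bad behavior'' must therefore be propagated along the escape path (a chain longer than $n$) so that $\hat Q^+_n$ stays uniformly depressed there and the preference $\hat Q^+_n(s_0,a_B)>\hat Q^+_n(s_0,a_G)$ is stable under the recursion. I would then tune the reward magnitudes so that simultaneously: (i) the $n$-step underestimation is at most $\tilde\delta_n$ at \emph{every} $(s_t,a_t)\in\mathrm{supp}(P_{\mathcal{D}})$, verifying $\tilde\delta_n$-optimality per \cref{def:suboptdata-complete}; (ii) $\pi^+_n$ never escapes $s_0$; and (iii) rolling $\pi^+_n$ out in the deterministic MDP yields a true value falling short of $V^\star$ by precisely $\tfrac{\tilde\delta_n}{1-\gamma}-\sigma$, where the free parameter $\sigma\in(0,\tilde\delta_n/(1-\gamma))$ supplies the slack to keep all inequalities strict and to realize any value strictly below the supremum $\tfrac{\tilde\delta_n}{1-\gamma}$. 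Reading off $V^\star(s)=V^+_{\mathrm{ac}}(s)$ from the first step then gives $V^+_{\mathrm{ac}}(s)-V^+_n(s)=V^\star(s)-V^+_n(s)=\tfrac{\tilde\delta_n}{1-\gamma}-\sigma$ at the designated start state, completing the construction.
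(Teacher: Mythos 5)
Your first half coincides with the paper's proof: the paper's construction (\cref{fig:ns-example}) is deterministic, so $\mathcal{D}$ is strongly $0$-open-loop consistent and $V^\star = V^+_{\mathrm{ac}}$ follows from \cref{thm:suboptact}, exactly as you argue. The second half is where you genuinely diverge. The paper does \emph{not} use a recurring-decision gadget: it places a single decision at a state $X$ whose bad action leads to an \emph{absorbing} state $Z$ with per-step reward $1-c_2$, so that one flipped decision already costs $c_2/(1-\gamma)$; the $1/(1-\gamma)$ factor comes from the absorbing state's recurring reward deficit, not from re-making the decision. The $n$-step estimate of the good action is depressed by mixing a zero-reward action into the behavior on a cycle at $Y$, and $c_2=\tilde\delta_n-(1-\gamma)\sigma$ is chosen so that the flip occurs and the loss is exactly $\tilde\delta_n/(1-\gamma)-\sigma$. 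Your stated reason for rejecting a one-shot design --- that the backup bias is bounded by $\tilde\delta_n$ so one flip costs at most $\tilde\delta_n$ --- is miscalibrated: the cost of a flip equals the $Q^\star$-advantage gap of the flipped decision, which is limited only by the \emph{fixed-point} bias of $\hat Q^+_n$, and that accumulates through the recursion to $\tilde\delta_n/(1-\gamma^n)$ (cf.\ \cref{lemma:ns}), not $\tilde\delta_n$.

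That said, your instinct is not baseless. If you check the paper's instance against \cref{def:suboptdata-complete}, the per-pair gap $Q^\star(s,a)-\mathbb{E}[R_{t:t+n}+\gamma^n V^\star(s_{t+n})]$ at $(X,a{=}1)$ and at $Y$ evaluates to $(1-\alpha)(\gamma-\gamma^n)/(1-\gamma)=\tilde\delta_n/(1-\gamma)$ rather than $\tilde\delta_n$; with a behavior that is genuinely $\tilde\delta_n$-optimal in the sense of that definition, a single flip is capped near $\tilde\delta_n/(1-\gamma^n)<\tilde\delta_n/(1-\gamma)$ for $n>1$, and a recurring gadget of the kind you describe (per-visit advantage gap just under $\tilde\delta_n$, incurred at every step because the bad action self-loops) is the natural way to recover the full $1/(1-\gamma)$ factor. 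The genuine gap in your submission is that this gadget is never built: the simultaneous satisfaction of (i)--(iii), the self-consistency of $\hat Q^+_n$ under the recursion, the fact that the ``deliberately poor'' continuation states are themselves in $\mathrm{supp}(P_{\mathcal{D}})$ and must also meet the $\tilde\delta_n$ bound, and the requirement that the bad decision recur at \emph{every} step (a revisit period of $k$ would only yield $1/(1-\gamma^k)$) are all deferred to ``tuning the reward magnitudes.'' Those are exactly the places where such constructions break, so as written this is a plausible plan rather than a proof.
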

The proof (available in \cref{proof:comparetight}) provides concrete examples where $n$-step return policies are worse than action chunking policies. The implication of this result is that the sub-optimality of the data distribution (as characterized by $\delta_n$ and $\tilde \delta_n$) is generally independent from the open-loop consistency (as characterized by $\varepsilon_h$). 

\subsection{Closed-loop execution without stochastic shortcuts}
\label{appendix:ssf}

In this section, we provide an alternative way of bounding the sub-optimality of $\pi^\bullet$, the closed-loop execution of the learned action chunking policy $\pi^+_{\mathrm{ac}}$. In particular, we characterize two conditions when closed-loop execution of an action chunking policy can help mitigate open-loop biases. 

Our first condition is based on the key observation that only a certain type of value overestimation is harmful for closed-loop execution of the action chunking policy. The source of this type of value overestimation comes from \emph{stochastic shortcuts}: 
\begin{definition}[Stochastic Shortcuts]
\label{def:ss}
We say $\mathcal{M}$ is free of $\vartheta_h$-stochastic shortcuts for a horizon $h$ if
\begin{equation}
\begin{aligned}
    &V^\star(s_{t+h}) + R_{t:t+h} - V^\star(s_t) \leq \vartheta_h, \\ 
    &\quad \forall s_{t:t+h+1}, a_{t:t+h}: \prod_{k=0}^{h-1} P(s_{t+k+1} \mid s_{t+k}, a_{t+k}) > 0,
\end{aligned}
\end{equation}
where $V^\star$ is the value function of optimal policy in $\mathcal{M}$.
\end{definition}
Intuitively, stochastic shortcuts are low-probability (but plausible) paths (\emph{i.e.}, $s_t, a_t, \cdots, s_{t+h}$) in the MDP that lead to returns that are much higher than the optimal expected value (\emph{i.e.}, $V^\star$). These stochastic shortcuts are particularly problematic for action chunking value backup because the chunked critic/Q-function cannot distinguish between a low-probability stochastic shortcut and an optimal (or near-optimal) closed-loop trajectory, leading it to erroneously favor the shortcut. 

Our second condition is that our data distribution is a mixture of some data distribution that is collected by some optimal closed-loop policy ($\mathcal{D}^\star$) and some data distribution that is collected by an open-loop policy ($\mathcal{D}^\circ$, and thus is open-loop consistent). Intuitively, this condition makes sure that any non-optimal trajectory can be accurately estimated by the action chunking value function $\hat{V}^+_{\mathrm{ac}}$ and the bounded mixing ratio restricts the amount of bias that the $\hat{V}^+_{\mathrm{ac}}$ has on the estimation of the optimal trajectories when the open-loop action chunks (\emph{e.g.}, in $\mathcal{D}^\circ$) coincide with the action chunks in the optimal data (\emph{e.g.}, in $\mathcal{D}^\star$). We formally define the second condition as follows:
\begin{definition}[Open-loop Data Mix]
We say $\mathcal{D}$ is $\alpha$-open-loop mixed if for some $\beta \in [0, 1)$, $\mathcal{D}$ can be decomposed into two data distributions $\mathcal{D}^\star, \mathcal{D}^\circ$ as
\begin{align}
    P_\mathcal{D}(\cdot \mid s_t) = \beta P_{\mathcal{D}^\star}(\cdot \mid s_t) + (1-\beta) P_{\mathcal{D}^\circ}(\cdot \mid s_t),
\end{align}
where $\mathcal{D}^\star$ is any data distribution collected by an optimal closed-loop policy $\pi^\star$ and $\mathcal{D}^\circ$ is any strongly open-loop consistent data distribution, and 
\begin{align}
    P_{\mathcal{D}^\circ}\left[a_{t:t+h} \in \mathrm{supp}(P_{\mathcal{D}^\star}(a_{t:t+h} \mid s_t)) \mid s_t\right]\leq \frac{\alpha\beta}{(1-\alpha)(1-\beta)}, \quad \forall s_t \label{eq:ssf-bound-c2}
\end{align}
\end{definition}

With such data mixing assumption and in the absence of stochastic shortcuts, we can show that closed-loop execution of the action chunking policy (\emph{i.e.}, only executing the first action of the action chunk) recovers a near-optimal closed-loop policy:
\begin{restatable}[Closed-loop Execution in the Absence of Stochastic Shortcuts]{theorem}{ssfb}
\label{thm:ssf-bound}
$\mathcal{D}$ is $\alpha$-open-loop mixed and $\mathcal{M}$ is free of $\vartheta_h$-stochastic shortcut, the value ($V^\bullet$) of the one-step policy ($\pi^\bullet$) as a result of the closed-loop execution of the action chunking policy $\pi^+_{\mathrm{ac}}$ learned from $\mathcal{D}$ admits the following bound for all $s_t \in \mathrm{supp}(P_{\mathcal{D}^\star}(s_t))$:
\begin{align}
    V^\star(s_t) - V^\bullet(s_t) \leq \frac{\alpha}{(1-\gamma)^2(1-\gamma^h(1-\alpha))} + \frac{\vartheta_h\gamma^h}{(1-\gamma)(1-\gamma^h)}.
\end{align}
\end{restatable}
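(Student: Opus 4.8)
The plan is to reduce the closed-loop suboptimality to a uniform bound on a per-step \emph{first-action} regret, and then split that regret into an overestimation part governed by the absence of stochastic shortcuts and an underestimation part governed by the open-loop mixing ratio $\alpha$. Throughout write $a^+(s)$ for the first action of the chunk $\pi^+_{\mathrm{ac}}(s)$, so that $\pi^\bullet(s)=a^+(s)$, and set $H:=1/(1-\gamma)$. A standard performance-difference telescoping along the closed-loop trajectory gives
\[
V^\star(s_t) - V^\bullet(s_t) = \sum_{k\ge 0}\gamma^k\,\mathbb{E}_{\pi^\bullet}\!\left[V^\star(s_{t+k}) - Q^\star(s_{t+k}, a^+(s_{t+k}))\right],
\]
so it suffices to bound the per-step regret $\rho(s):=V^\star(s)-Q^\star(s,a^+(s))$ uniformly over the states $\pi^\bullet$ visits from $\mathrm{supp}(P_{\mathcal{D}^\star}(s_t))$; the outer geometric sum then contributes the factor $1/(1-\gamma)$. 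I would decompose $\rho(s)=\big[V^\star(s)-\hat V^+_{\mathrm{ac}}(s)\big]+\big[\hat V^+_{\mathrm{ac}}(s)-Q^\star(s,a^+(s))\big]$, mirroring the two-bracket structure of \cref{thm:optvar}.

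For the second bracket I would first establish an overestimation bound $\hat V^+_{\mathrm{ac}}(s)-V^\star(s)\le \vartheta_h/(1-\gamma^h)$ for all $s$. Because the data trajectory obeys the dynamics (\cref{assumption:data}), the conditional $P_{\mathcal{D}}(\cdot\mid s,a^+)$ is supported on plausible trajectories, so the no-stochastic-shortcut hypothesis (\cref{def:ss}) bounds $R_{t:t+h}+V^\star(s_{t+h})$ pointwise by $V^\star(s)+\vartheta_h$. Plugging the supremal overestimation $\bar e:=\sup_s(\hat V^+_{\mathrm{ac}}(s)-V^\star(s))$ into the Bellman fixed-point equation for $\hat V^+_{\mathrm{ac}}$ yields $\bar e\le \vartheta_h+\gamma^h\bar e$, i.e.\ $\bar e\le \vartheta_h/(1-\gamma^h)$. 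When the selected $a^+$ lies only in $\mathrm{supp}(\mathcal{D}^\circ)$, strong open-loop consistency with $\varepsilon_h=0$ identifies $P_{\mathcal{D}}(\cdot\mid s,a^+)$ with the open-loop rollout $\prod_k T(s_{t+k+1}\mid s_{t+k},a_{t+k})$, while $Q^\star(s,a^+(s))\ge \mathbb{E}_{\prod_k T}[R_{t:t+h}+\gamma^h V^\star(s_{t+h})]$ since following the remainder of the chunk and then acting optimally is one admissible continuation of $a^+(s)$; subtracting gives the second bracket $\le \gamma^h\bar e\le \gamma^h\vartheta_h/(1-\gamma^h)$.

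The first bracket is the main obstacle and is where \cref{eq:ssf-bound-c2} enters. For $s\in \mathrm{supp}(P_{\mathcal{D}^\star}(s))$ choose an optimal chunk $a^\star\in\mathrm{supp}(P_{\mathcal{D}^\star}(\cdot\mid s))$; since $\hat V^+_{\mathrm{ac}}(s)\ge \hat Q^+_{\mathrm{ac}}(s,a^\star)$ it suffices to lower-bound the latter. The crux is the posterior mixing weight: the mixing bound \cref{eq:ssf-bound-c2} caps the $\mathcal{D}^\circ$ mass on optimal chunks by $\tfrac{\alpha\beta}{1-\alpha}$, so conditioned on $a^\star$ the posterior weight on $\mathcal{D}^\star$ is at least $1-\alpha$. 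On that component the chunk realizes the optimal closed-loop transition, for which $\mathbb{E}_{\mathcal{D}^\star}[R_{t:t+h}+\gamma^h V^\star(s_{t+h})]=V^\star(s)$ and the successor stays in $\mathrm{supp}(\mathcal{D}^\star)$; on the $\mathcal{D}^\circ$ component I lower-bound the return by $0$. Writing $U:=\sup_s\big(V^\star(s)-\hat V^+_{\mathrm{ac}}(s)\big)_+$ and substituting $\hat V^+_{\mathrm{ac}}(s_{t+h})\ge V^\star(s_{t+h})-U$ produces the self-referential inequality $U\le \alpha H + (1-\alpha)\gamma^h U$, hence $U\le \tfrac{\alpha}{(1-\gamma)(1-(1-\alpha)\gamma^h)}$. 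I expect most of the work to be in (i) verifying this posterior-weight computation from \cref{eq:ssf-bound-c2}, and (ii) closing the recursion, which needs the optimal-policy successors to remain in $\mathrm{supp}(P_{\mathcal{D}^\star}(s))$; that same coverage fact is what licenses applying the Step-1 telescoping with $\rho$ evaluated only on optimal-support states.

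Finally I would combine the two brackets to get $\rho(s)\le \tfrac{\alpha}{(1-\gamma)(1-(1-\alpha)\gamma^h)}+\tfrac{\gamma^h\vartheta_h}{1-\gamma^h}$, and the geometric sum from Step 1 multiplies by $1/(1-\gamma)$ to give exactly the claimed bound. The one remaining case to tidy is the second bracket when the selected $a^+$ \emph{also} lies in the optimal support: there $P_{\mathcal{D}}(\cdot\mid s,a^+)$ is itself the $(1-\alpha)$-weighted mixture, but the no-shortcut bound still controls the contribution of each plausible realization, so this case merges with the weighting of Step 3 without degrading the constants. This mirrors the overall shape of \cref{thm:suboptdqc}, with the stochastic-shortcut and mixing hypotheses replacing the strong open-loop consistency assumption.
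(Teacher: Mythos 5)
Your proposal follows essentially the same route as the paper's proof: the same two-sided sandwich on $\hat V^+_{\mathrm{ac}}$ (overestimation at most $\vartheta_h/(1-\gamma^h)$ from the no-shortcut condition, underestimation at most $\alpha/((1-\gamma)(1-(1-\alpha)\gamma^h))$ from the posterior mixing weight implied by \cref{eq:ssf-bound-c2}), the same monotonicity fact $Q^\star(s_t,a^+_t)\ge\mathbb{E}_{P_{\mathcal{D}^\circ}}\left[R_{t:t+h}+\gamma^h V^\star(s_{t+h})\right]$, and the same per-step recursion contributing the final $1/(1-\gamma)$ factor. The only details to firm up are that the posterior-weight bound holds for \emph{some} optimal chunk rather than every one (the paper's \cref{lemma:mvp-cp} supplies exactly that selection), and that your ``remaining case'' resolves trivially because a selected chunk lying in $\mathrm{supp}(P_{\mathcal{D}^\star}(a_{t:t+h}\mid s_t))$ has an optimal first action, making the per-step regret zero there.
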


A proof is available in \cref{thm:ssfb-app}. Intuitively, the second condition measures how much percentage of the open-loop data has overlapping support as the optimal data. With some algebraic manipulating, assuming the worst case of \cref{eq:ssf-bound-c2}, we can rewrite the data mixture as 
\begin{align}
    \mathcal{D} = \hat{\beta} \left[(1-\alpha) \mathcal{D}^\star + \alpha \mathcal{D}^\circ_{\mathrm{in}}\right] + (1-\hat{\beta}) \mathcal{D}^\circ_{\mathrm{out}},
\end{align}
where $\hat{\beta} = \frac{\beta}{1-\alpha}$, $\mathrm{supp}(P_{\mathcal{D}^\circ_{\mathrm{in}}}(\cdot \mid s_t)) \subseteq \mathrm{supp}(P_{\mathcal{D}^\star}(\cdot \mid s_t))$ and $\mathrm{supp}(P_{\mathcal{D}^\circ_{\mathrm{out}}}(\cdot \mid s_t)) \cap \mathrm{supp}(P_{\mathcal{D}^\star}(\cdot \mid s_t)) = \varnothing$.
As the bound is independent of $\hat \beta$, it becomes clear that $\mathcal{D}^\circ_{\mathrm{out}}$ plays no contribution to the optimality of action chunking policy learning. 
The only harmful portion of the open-loop data distribution is $\mathcal{D}^\circ_{\mathrm{in}}$, as the action chunking Q-function cannot differentiate these open-loop actions in $\mathcal{D}^{\circ}_{\mathrm{in}}$ from the closed-loop optimal actions in $\mathcal{D}^\star$. 
This is reflected as the first term in our bound. The implication is that even if the data $\mathcal{D}$ is arbitrarily sub-optimal (with $\hat \beta \rightarrow 0$, and hence arbitrarily bad for $n$-step return policies), $\pi^\bullet$ remains near-optimal as long as the `in-distribution' open-loop data  $\mathcal{D}^\circ_{\mathrm{in}}$ is relatively low in density compared to the optimal closed-loop data $\mathcal{D}^\star$ (\emph{i.e.}, $\alpha$ is small).

Furthermore, our bound is independent of the open-loop consistency of the data $\mathcal{D}$. As $\alpha, \vartheta \to 0$, closed-loop execution of the action chunking policy exactly recovers the optimal policy. In contrast, even when $\alpha, \vartheta \rightarrow 0$, open-loop execution of the original action chunking policy (\emph{i.e.}, $\pi^+_{\mathrm{ac}}$) can suffer from the open-loop inconsistency of the data $\mathcal{D}$: its value error can only be bounded by $\frac{\varepsilon_h}{(1-\gamma)(1-\gamma^h)}$ (as shown in \cref{thm:bcvbias}), a function of $\varepsilon_h$ (the strong open-loop consistency of $\mathcal{D}$).

\newpage
\section{Proofs of main results}

\label{appendix:proof}

\subsection{Utility Lemmata}
\begin{lemma}[Mean value theorem for conditional probabilities]
\label{lemma:mvp-cp}
Let $P_1, P_2 \in \Delta_{\mathcal{X}\times \mathcal{Y}}$ and $P(x, y): = \hat \alpha(y) P_1(x, y) + (1-\hat \alpha(y)) P_2(x, y)$ and there exists $\alpha > 0$ such that $\hat{\alpha}(y) \leq \alpha, \forall y \in \mathcal{Y}$. Then, there exists $y \in \mathcal{Y}$ and $\tilde \alpha \leq \alpha$ such that
\begin{align}
    P(\cdot \mid y) = \tilde \alpha P_1(\cdot \mid y) + (1-\tilde \alpha) P_2(\cdot \mid y)\label{eq:mvp-cp}
\end{align}
\end{lemma}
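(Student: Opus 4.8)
The plan is to compute the conditional $P(\cdot \mid y)$ in closed form, read off the $y$-dependent mixing weight that makes the identity hold pointwise, and then invoke an averaging (mean-value) argument to locate a single $y$ at which this weight is bounded by $\alpha$.

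First I would marginalize over $x$ to obtain $P(y) = \hat\alpha(y) P_1(y) + (1-\hat\alpha(y)) P_2(y)$, and restrict attention to $y$ in the support of $P$ (so that $P(y) > 0$ and the conditional is well-defined; where $P(y)=0$ both nonnegative summands vanish and there is nothing to condition on). Writing $P_i(x,y) = P_i(y)\, P_i(x \mid y)$ for $i \in \{1,2\}$ and dividing the definition of $P(x,y)$ by $P(y)$ gives
\begin{align}
    P(x \mid y) = \tilde\alpha(y)\, P_1(x \mid y) + (1 - \tilde\alpha(y))\, P_2(x \mid y), \qquad \tilde\alpha(y) := \frac{\hat\alpha(y)\, P_1(y)}{P(y)},
\end{align}
which establishes the mixture identity \eqref{eq:mvp-cp} pointwise in $y$, with $\tilde\alpha(y) \in [0,1]$. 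The only remaining task is therefore to exhibit one $y$ at which $\tilde\alpha(y) \leq \alpha$.

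The crux is a short mean-value computation: taking the expectation of $\tilde\alpha(y)$ under the marginal $P(y)$ causes the denominator $P(y)$ to cancel, leaving $\mathbb{E}_{y \sim P}[\tilde\alpha(y)] = \sum_y \hat\alpha(y)\, P_1(y)$ (with the obvious integral analogue in the continuous case). Since $\hat\alpha(y) \leq \alpha$ by hypothesis and $P_1$ is a probability distribution, this is at most $\alpha \sum_y P_1(y) = \alpha$. An average over a nonempty support that is at most $\alpha$ cannot have every point strictly exceed $\alpha$, so there exists some $y$ in the support of $P$ with $\tilde\alpha(y) \leq \alpha$; pairing this $y$ with $\tilde\alpha := \tilde\alpha(y)$ completes the argument.

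The proof is almost entirely bookkeeping; the only points requiring care are excluding the null set where $P(y)=0$ so that $\tilde\alpha(y)$ is defined, and making the final averaging step precise (it is the elementary fact that a finite or measurable average bounded by $\alpha$ is met-or-beaten somewhere on a nonempty support). I expect the cancellation of the $P(y)$ denominator inside the expectation to be the one genuinely load-bearing step, since it is exactly what converts the hypothesis $\hat\alpha \leq \alpha$ on the joint weights into the desired bound on the conditional weight $\tilde\alpha$.
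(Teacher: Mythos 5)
Your proposal is correct and follows essentially the same route as the paper: you identify the same conditional mixing weight $\tilde\alpha(y) = \hat\alpha(y)P_1(y)/P(y)$ and use the same integral identity $\int \hat\alpha(y)P_1(y)\,\dd{y} \leq \alpha$, only phrased as a direct averaging argument where the paper phrases it as a proof by contradiction (assume $\tilde\alpha(y) > \alpha$ everywhere and integrate). Your handling of the null set $\{y : P(y)=0\}$ is a small point of added care that the paper glosses over.
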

\begin{proof}
\begin{equation}
\begin{aligned}
    \frac{P(x, y)}{P(y)} &= \frac{\hat\alpha(y) P_1(y) P_1(x\mid y)  + (1-\hat\alpha(y)) P_2(x\mid y)}{\hat\alpha(y) P_1(y) + (1-\hat\alpha(y))P_2(y)} \\
    &= \beta(y) P_1(x\mid y) + (1-\beta(y)) P_2(x \mid y)
\end{aligned}
\end{equation}
where $\beta(y) := \frac{\hat\alpha(y) P_1(y)}{\hat\alpha(y) P_1(y) + (1-\hat\alpha(y))P_2(y)}$.
We now prove $\exists y \in \mathcal{Y}, \tilde \alpha \leq \alpha$ for  \cref{eq:mvp-cp} to hold by contradiction.

We first assume $\tilde \alpha = \beta(y) > \alpha, \forall y \in \mathcal{Y}$. Now, substitute $\beta(y)$ in and integrate both side by $y$ to obtain
\begin{align}
    \hat{\alpha}(y) P_1(y) &> \alpha \hat\alpha(y) P_1(y) + \alpha (1-\hat\alpha(y)) P_2(y) \\
    \hat\alpha(y) &>  \alpha \hat\alpha(y) + \alpha  - \alpha \hat\alpha(y) = \alpha,
\end{align}
which is a contradiction to the condition $\hat{\alpha}(y) \leq \alpha$. 

Therefore, there must exist $y \in \mathcal{Y}$ with $\tilde \alpha \leq \alpha$ such that \cref{eq:mvp-cp} holds.
\end{proof}

\begin{lemma}[Expectation difference for bounded function and TV]
\label{lemma:ttvbf}
For two distributions $P, Q \in \Delta_{\mathcal{X}}$ and two bounded functions $f, g : \mathcal{X} \to [0, 1]$, if the TV distance between $P$ and $Q$ is no larger than $\varepsilon$ and $\|f-g\|_\infty \leq \delta$ under $\mathrm{supp}(P) \cap \mathrm{supp}(Q)$, then
\begin{align}
    |\mathbb{E}_{x\sim P}[f(x)] - \mathbb{E}_{x\sim Q}[g(x)]| \leq (1-\varepsilon) \delta + \varepsilon.
\end{align}
\end{lemma}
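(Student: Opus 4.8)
The plan is to exploit the standard \emph{maximal-coupling} decomposition of two distributions whose total-variation distance is controlled. Set $\varepsilon' := D_{\mathrm{TV}}(P, Q) \le \varepsilon$ and let $m := \min(P, Q)$ denote the pointwise minimum measure, whose total mass is $1-\varepsilon'$. I would define the overlap distribution $R := m/(1-\varepsilon')$ and the residual distributions $P' := (P - m)/\varepsilon'$ and $Q' := (Q - m)/\varepsilon'$, giving the decomposition
\[
P = (1-\varepsilon')\,R + \varepsilon'\,P', \qquad Q = (1-\varepsilon')\,R + \varepsilon'\,Q'.
\]
The structural fact that drives the whole argument is that $\mathrm{supp}(R) = \mathrm{supp}(P)\cap\mathrm{supp}(Q)$, since $m(x)>0$ precisely when both $P(x)>0$ and $Q(x)>0$.

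Next I would expand both expectations through this decomposition and subtract, obtaining
\[
\mathbb{E}_P[f] - \mathbb{E}_Q[g] = (1-\varepsilon')\big(\mathbb{E}_R[f] - \mathbb{E}_R[g]\big) + \varepsilon'\big(\mathbb{E}_{P'}[f] - \mathbb{E}_{Q'}[g]\big).
\]
The first (shared) term is controlled by the hypothesis: because $R$ is supported on $\mathrm{supp}(P)\cap\mathrm{supp}(Q)$, where $\|f-g\|_\infty \le \delta$, we get $|\mathbb{E}_R[f]-\mathbb{E}_R[g]| \le \mathbb{E}_R|f-g| \le \delta$. The second (residual) term is controlled by the range constraint $f,g\in[0,1]$, which forces both $\mathbb{E}_{P'}[f]$ and $\mathbb{E}_{Q'}[g]$ into $[0,1]$, hence their difference into $[-1,1]$. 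Combining the two bounds by the triangle inequality yields
\[
\left|\mathbb{E}_P[f] - \mathbb{E}_Q[g]\right| \le (1-\varepsilon')\,\delta + \varepsilon'.
\]

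The final step is to pass from $\varepsilon'$ to $\varepsilon$. I would observe that $(1-\varepsilon')\delta + \varepsilon' = \delta + \varepsilon'(1-\delta)$ is nondecreasing in $\varepsilon'$ whenever $\delta \le 1$, and that $\delta \le 1$ may be assumed without loss of generality since $|f-g|\le 1$ always holds for $[0,1]$-valued functions (if $\delta>1$ the claimed bound exceeds $1$ and holds trivially, as the left-hand side is at most $1$). Monotonicity together with $\varepsilon'\le\varepsilon$ then gives the asserted bound $(1-\varepsilon)\delta + \varepsilon$.

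The main obstacle is genuinely minor: it is the bookkeeping around the support-containment claim (ensuring $\mathrm{supp}(R)$ really sits inside $\mathrm{supp}(P)\cap\mathrm{supp}(Q)$, so that the $\delta$-hypothesis is applicable) and the degenerate cases $\varepsilon'=0$ (then $P=Q$ and $P',Q'$ are undefined) or $\varepsilon'=1$ (then $R$ is undefined and the common support may be empty). I would dispatch these by adopting the convention that the $\varepsilon'$-weighted residual term is omitted when $\varepsilon'=0$ and the $(1-\varepsilon')$-weighted shared term is omitted when $\varepsilon'=1$; alternatively, one can avoid normalization entirely by writing $\mathbb{E}_P[f]-\mathbb{E}_Q[g]$ as the integral of $m\,(f-g)$ plus the two residual pieces, bounding the first by $\delta\cdot(1-\varepsilon')$ and the remaining pieces by $\varepsilon'$ using that the positive and negative residual masses each integrate to $\varepsilon'$. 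This keeps every quantity well-defined across all edge cases.
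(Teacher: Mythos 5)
Your proposal is correct and follows essentially the same route as the paper: the paper's decomposition $P = d_P + d_{PQ}$, $Q = d_{PQ} + d_Q$ with $d_{PQ} = \min(P,Q)$ is exactly your (unnormalized) maximal-coupling split into the overlap measure and two residuals of mass $\hat\varepsilon \le \varepsilon$, and both arguments bound the overlap term by $\delta$ via the common-support hypothesis, bound the residual term by $\hat\varepsilon$ via the range constraint, and finish with monotonicity of $\hat\varepsilon(1-\delta)+\delta$ in $\hat\varepsilon$. The only cosmetic difference is that you normalize the components into probability distributions while the paper works with the raw sub-probability measures, which is the variant you yourself note at the end.
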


\begin{proof}
Let's decompose the probability mass of $P$ and $Q$ in terms of $d_P, d_{PQ}, d_Q : \mathcal{X} \to \mathbb{R}$ as the following:
\begin{align}
    &P(x) = d_{P}(x) + d_{PQ}(x), \\
    &Q(x) = d_{PQ}(x) + d_Q(x).
\end{align}
The $\int d_P(x) \dd x$ maximizing solution is
\begin{align}
    d_P(x) &= \max(P(x), Q(x)) - Q(x) \\
    d_Q(x) &= \max(P(x), Q(x)) - P(x) \\
    d_{PQ}(x) &= P(x) + Q(x) - \max(P(x), Q(x)).
\end{align}
It is clear that under this decomposition,
\begin{align}
    \int d_P(x) \dd x &= \int d_Q(x) \dd x = \hat \varepsilon \leq \varepsilon, \\
    \int d_{PQ}(x) \dd x &= 1 - \hat \varepsilon \geq 1 - \varepsilon.
\end{align}

Now we are ready to bound the expectation difference:
\begin{equation}
\begin{aligned}
\small
    &|\mathbb{E}_{x\sim P}[f(x)] - \mathbb{E}_{x\sim Q}[g(x)]| \\
    &= \left|\left(\int d_P(x) f(x)  \dd x - \int d_Q(x) g(x)  \dd x\right) + \left(\int d_{PQ}(x) (f(x) - g(x)) \dd x\right) \right| \\
    &\leq \left|\int d_P(x) f(x)  \dd x - \int d_Q(x) g(x)  \dd x\right| + \left|\int d_{PQ}(x) (f(x) - g(x)) \dd x\right| \\
    &\leq \max\left(\sup_x f(x) \int d_P(x) \dd x - \inf_x g(x) \int d_Q(x) \dd x, \sup_x g(x) \int d_Q(x) \dd x - \inf_x f(x) \int d_P(x) \dd x \right) \\ &\quad\quad +\left| \left(\sup_{x: d_{PQ}(x) > 0} |f(x) - g(x)|\right)\int d_{PQ}(x) \dd x\right| \\
    &\leq \hat \varepsilon +  \left(\sup_{x \in \mathrm{supp}(P) \cap \mathrm{supp}(Q)} |f(x) - g(x)|\right) (1-\hat \varepsilon) \\
    &=\hat \varepsilon +  \|f-g\|_\infty (1-\hat \varepsilon) \\
    &\leq \hat \varepsilon (1-\delta) + \delta \\
    &\leq \varepsilon (1-\delta) + \delta \\
    & = (1-\varepsilon) \delta + \varepsilon
\end{aligned}
\end{equation}
as desired.
\end{proof}

\begin{lemma}[Total variation under event conditioning] 
\label{lemma:tvc}
For two random variables $X \in \Delta_{\mathcal{X}}$ and $Y \in \Delta_{\mathcal{Y}}$ and any $y \in \mathcal{Y}$,
\begin{align}
    D_{\mathrm{TV}}\infdivx*{P(X\mid Y = y)}{P(X)} \leq 1 - P(Y=y)
\end{align}
\end{lemma}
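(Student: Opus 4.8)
The plan is to prove the bound through the variational (supremum) characterization of total variation distance,
\[
D_{\mathrm{TV}}\infdivx*{P(X \mid Y=y)}{P(X)} = \sup_{A \subseteq \mathcal{X}} \left(P(X \in A \mid Y=y) - P(X \in A)\right),
\]
combined with the law of total probability, which will naturally expose the factor $1 - P(Y=y)$. First I would assume $P(Y=y) > 0$ so that the conditional is well-defined, and dispose of the trivial boundary case $P(Y=y) = 1$: there the conditional coincides with the marginal, so both sides equal zero and the inequality holds.

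For the remaining case, write $p := P(Y=y) \in (0,1)$ and decompose the marginal of $X$ over the event $\{Y=y\}$ and its complement, valid for every measurable $A \subseteq \mathcal{X}$:
\[
P(X \in A) = p\, P(X \in A \mid Y=y) + (1-p)\, P(X \in A \mid Y \neq y).
\]
Substituting this into the signed difference and cancelling the $p\,P(X\in A\mid Y=y)$ contribution gives
\[
P(X \in A \mid Y=y) - P(X \in A) = (1-p)\left[P(X \in A \mid Y=y) - P(X \in A \mid Y \neq y)\right].
\]
Since both conditional probabilities lie in $[0,1]$, the bracketed difference has absolute value at most $1$, so the right-hand side is bounded in magnitude by $1 - p = 1 - P(Y=y)$. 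Taking the supremum over all events $A$ then yields the claim.

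I expect no genuine obstacle here: the argument reduces to the one-line mixture identity from the law of total probability and the elementary bound $\lvert a - b\rvert \le 1$ for $a,b \in [0,1]$. The only points requiring mild care are the degenerate cases $P(Y=y) \in \{0,1\}$, and making explicit that the supremum form of total variation distance that I invoke equals the signed supremum $\sup_A\!\left(P(A) - Q(A)\right)$ (the two agree after passing to complements, so no factor of $\tfrac12$ intervenes and the bound $1 - P(Y=y)$ comes out directly).
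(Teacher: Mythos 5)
Your proof is correct and follows essentially the same route as the paper's: both rest on the law-of-total-probability decomposition $P(X) = P(Y=y)\,P(X\mid Y=y) + P(Y\neq y)\,P(X\mid Y\neq y)$, cancellation of the conditional-on-$y$ term, and the trivial bound of $1$ on the total variation between the two conditionals. The only cosmetic difference is that you use the supremum-over-events characterization of $D_{\mathrm{TV}}$ while the paper uses the $\tfrac12 L^1$ integral form; these are equivalent, and your explicit handling of the degenerate cases $P(Y=y)\in\{0,1\}$ is a minor point of additional care.
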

\begin{proof}
Let $p = P(Y=y)$
\begin{equation}
\begin{aligned}
    &D_{\mathrm{TV}}\infdivx*{P(X\mid Y = y)}{P(X)} \\
    &= \frac{1}{2}\int |P(x) - P(x \mid y)| \dd x \\
    &= \frac{1}{2}\int |P(x \mid Y=y) (P(Y=y) - 1) + P(x \mid Y \neq y) P(Y\neq y) |  \dd x \\
    &= \frac{1 - p}{2}  \int |(P(x \mid Y \neq y) - P(x \mid Y=y)) |  \dd x \\
    &= (1-p) D_{\mathrm{TV}}\infdivx*{P(X\mid Y = y)}{P(X\mid Y \neq y)} \\
    &\leq 1-p
\end{aligned}
\end{equation}
\end{proof}

\begin{lemma}[Data Processing Inequality for $f$-divergence~\citep{csiszar1967information}]
\label{lemma:dpi-fdiv}
For two random variables $A, B \in \Delta_{\mathcal{X}}$ and a deterministic function $f : \mathcal{X} \to \mathcal{Y}$, and $C:= g(A), D:= g(B)$
\begin{align}
    D_{f}\infdivx*{P_A}{P_B} \geq D_{f}\infdivx*{P_C}{P_D}.
\end{align}
Since TV-distance is a $f$-divergence with $f = |x-1|$, we have
\begin{align}
    D_{\mathrm{TV}}\infdivx*{P_A}{P_B} \geq D_{\mathrm{TV}}\infdivx*{P_C}{P_D}.
\end{align}
\end{lemma}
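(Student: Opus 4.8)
The plan is to prove the general $f$-divergence statement by the classical fiber-decomposition-plus-Jensen argument and then read off the total-variation case as a special instance. Throughout I write $g$ for the deterministic map $\mathcal{X}\to\mathcal{Y}$ (the statement's ``$f$'', which clashes with the generator of the divergence) and reserve $f$ for the convex function with $f(1)=0$ defining $D_f(P \| Q)=\sum_x Q(x)\, f(P(x)/Q(x))$ under the usual conventions for the boundary terms. The only structural fact I need about the pushforwards is that, since $g$ is deterministic, the fibers $g^{-1}(y)$ partition $\mathcal{X}$, so that $P_C(y)=\sum_{x\in g^{-1}(y)} P_A(x)$ and $P_D(y)=\sum_{x\in g^{-1}(y)} P_B(x)$.

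First I would split the defining sum of $D_f(P_A \| P_B)$ along these fibers and, on each fiber with $P_D(y)>0$, factor out $P_D(y)$ so that the remaining weights $P_B(x)/P_D(y)$ form a probability distribution over $g^{-1}(y)$. Applying Jensen's inequality to the convex $f$ with these weights, evaluated at the points $P_A(x)/P_B(x)$, gives fiber by fiber
\begin{align}
\sum_{x\in g^{-1}(y)} P_B(x)\, f\!\left(\frac{P_A(x)}{P_B(x)}\right) \;\geq\; P_D(y)\, f\!\left(\frac{1}{P_D(y)}\sum_{x\in g^{-1}(y)} P_A(x)\right) \;=\; P_D(y)\, f\!\left(\frac{P_C(y)}{P_D(y)}\right),
\end{align}
where the final equality is exactly the fiber identity for $P_C$. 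Summing this over $y$ collapses the left-hand side back to $D_f(P_A \| P_B)$ and the right-hand side to $D_f(P_C \| P_D)$, which is precisely the claimed inequality.

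Finally, the total-variation bound follows by instantiating $f(t)=|t-1|$ (the generator the statement uses), which is convex with $f(1)=0$ and reproduces $D_{\mathrm{TV}}$ up to the fixed normalizing constant, so the same inequality transfers verbatim. I expect the only real care-points, rather than genuine obstacles, to be bookkeeping: handling fibers with $P_D(y)=0$ (where $P_C(y)=0$ as well, so those terms drop out under the standard convention), justifying the boundary conventions for $q\, f(p/q)$ when denominators vanish, and, if one wants the continuous case in place of the discrete sums above, replacing the per-fiber averages by regular conditional probabilities of $P_B$ given $g$ and invoking the conditional form of Jensen's inequality. The heart of the argument is the single per-fiber Jensen step; everything else is just assembling it.
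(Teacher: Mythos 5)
Your proof is correct and is essentially the same argument as the paper's: both decompose the divergence over the fibers of $g$ (the paper phrases this as conditioning on $D=y$ under the joint law, you phrase it as an explicit per-fiber sum with weights $P_B(x)/P_D(y)$) and apply Jensen's inequality to the convex generator on each fiber, using the identity that the fiber-average of the likelihood ratio equals $P_C(y)/P_D(y)$. Your handling of the degenerate fibers and the specialization to $f(t)=|t-1|$ for total variation is also fine, so there is nothing to add.
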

\begin{proof}[Proof from \citet{wu2017lecture}]
\begin{equation}
\begin{aligned}
    D_{f}\infdivx*{P_A}{P_B} &= \mathbb{E}_{x\sim P_B}\left[f(P_A(x)/P_B(x))\right] \\
    &= \mathbb{E}_{P_{BD}}\left[f(P_{AC}/P_{BD})\right] \\
    &= \mathbb{E}_{(x, y) \sim P_{D}}\left[\mathbb{E}_{P_{B\mid D}}\left[f(P_{AC}(x, y)/P_{BD}(x, y))\right]\right] \\
    &\geq \mathbb{E}_{y\sim P_{D}}\left[f\left(\mathbb{E}_{x \sim P_{B\mid D=y}}\left[P_{AC}(x, y)/P_{BD}(x, y)\right]\right)\right] \\
    &= \mathbb{E}_{y \sim P_{D}}\left[f\left(\mathbb{E}_{x \sim P_{B\mid D=y}}\left[P_{C}(y)/P_{D}(y)\right]\right)\right] \\
    &= \mathbb{E}_{y \sim P_{D}}\left[f\left(P_{C}(y)/P_{D}(y)\right)\right] \\
    &= D_{f}\infdivx*{P_C}{P_D}.
\end{aligned}
\end{equation}
\end{proof}

\newpage
\subsection{Proof of \cref{thm:bcvbias}}

\bcvbias*
\begin{proof}
\label{proof:bcvbias}
Since $\mathcal{D}$ is $\varepsilon_{h'}$-open-loop consistent in state-action for $h' < h$, the state-action distribution leading up to step $h$ admits the following bound:
\begin{align}
\label{eq:bcvbias-tv1}
    D_{\mathrm{TV}}\infdivx*{\clP(s_{t+h}, a_{t+h} \mid s_t)}{\opP(s_{t+h}, a_{t+h} \mid s_t)} \leq \varepsilon_h
\end{align}

Let $R_{t:t+h} = \sum_{k=0}^{h-1}\gamma^{k} r(s_{t+k}, a_{t+k})$ be the $h$-step reward distribution. Then the difference in $h$-step reward is bounded by
\begin{equation}
\begin{aligned}
    &\left|\mathbb{E}_{\clP(\cdot \mid s_t)}[R_{t:t+h}] - \mathbb{E}_{\opP(\cdot \mid s_t)}[R_{t:t+h}]\right| \\ &\leq \sum_{h'=1}^{h-1} \left[\gamma^{h'} \mathbb{E}_{\clP(s_{t+h'}, a_{t+h'} \mid s_t)}[r(s_{t+h'}, a_{t+h'})] - \mathbb{E}_{\opP(s_{t+h'}, a_{t+h'} \mid s_t)}[r(s_{t+h'}, a_{t+h'})]\right] \\
    &\leq \sum_{h'=1}^{h-1} \gamma^{h'} \varepsilon_h.
\end{aligned}
\end{equation}
where the first inequality uses \cref{lemma:ttvbf} and the fact that TV distance is bounded (\cref{eq:bcvbias-tv1}).

Since $\mathcal{D}$ is $\varepsilon_{h}$-open-loop consistent for $h$ in state, we have
\begin{align}
    D_{\mathrm{TV}}\infdivx*{\clP(s_{t+h}\mid s_t)}{\opP(s_{t+h}\mid s_t)} \leq \varepsilon_{h},
\end{align}
which can then be used to bound the estimation error using \cref{lemma:ttvbf}:
\begin{equation}
\begin{aligned}
    &\left|\mathbb{E}_{s_{t+h} \sim \clP(s_{t+h}\mid s_t)}\left[\hat{V}_{\mathrm{ac}}(s_{t+h})\right] - \mathbb{E}_{s_{t+h} \sim \opP(s_{t+h}\mid s_t)}\left[V_{\mathrm{ac}}(s_{t+h})\right]\right| \\
    &\leq \frac{\varepsilon_h}{1-\gamma} + (1-\varepsilon_h)
    \sup_{s_{t+h} \in \mathrm{supp}(P_{\mathcal{D}}(s_{t+h} \mid s_t))}\left[|\hat{V}_{\mathrm{ac}}(s_{t+h}) - V_{\mathrm{ac}}(s_{t+h})|\right] 
\end{aligned}
\end{equation}

For all $s_t \in \mathrm{supp}(P_{\mathcal{D}}(s_t))$,
\begin{equation}
\begin{aligned}
\label{eq:tdb}
    &\left|\hat{V}_{\mathrm{ac}}(s_t) - V_{\mathrm{ac}}(s_t)\right|\\
    &\leq \left|\mathbb{E}_{\clP(\cdot \mid s_t)}[R_{t:t+h}] - \mathbb{E}_{\opP(\cdot \mid s_t)}[R_{t:t+h}]\right| \\ &\quad +
    \gamma^h \left|\mathbb{E}_{s_{t+h} \sim \clP(s_{t+h}\mid s_t)}\left[\hat{V}_{\mathrm{ac}}(s_{t+h})\right] - \mathbb{E}_{s_{t+h} \sim \opP(s_{t+h}\mid s_t)}\left[V_{\mathrm{ac}}(s_{t+h})\right]\right|\\
    &\leq \sum_{h'=0}^{h-1}\left[\gamma^{h'} \varepsilon_{h}\right] + \frac{\gamma^h \varepsilon_h}{1-\gamma} + \gamma^h (1-\varepsilon_h) \sup_{s_{t+h} \in \mathrm{supp}(P_{\mathcal{D}}(s_{t+h} \mid s_t))}\left[|\hat{V}_{\mathrm{ac}}(s_{t+h}) - V_{\mathrm{ac}}(s_{t+h})|\right].
\end{aligned}
\end{equation}

Since the support of $s_{t+h}\mid s_t$ is a subset of the support for $s_t$
by \cref{assumption:data}, we can recursively apply the inequality to obtain, 
\begin{equation}
\begin{aligned}
   \left|\hat{V}_{\mathrm{ac}}(s_t) - V_{\mathrm{ac}}(s_t)\right| &\leq \frac{1}{1-(1-\varepsilon_h)\gamma^h}\left(\sum_{h'=1}^{h-1}\left[\gamma^{h'} \varepsilon_{h}\right] + \frac{\gamma^h \varepsilon_h}{1-\gamma}\right) \\
    &= \frac{\gamma \varepsilon_h}{(1-\gamma)(1-(1-\varepsilon_h)\gamma^h)},
\end{aligned}
\end{equation}
as desired.
\end{proof}

\newpage
\subsection{Proof of \cref{thm:bcvbiastight}}
\bcvbiastight*

\begin{proof}
\label{proof:bcvbiastight}
Let $\delta \in [0, 1]$ be any value that satisfies $\varepsilon_h = 2\delta(1-\delta)$. $\delta$ must exist because $\varepsilon_h \in [0, 1/2]$. Let us define a MDP that has $S=2h$ states, $\mathcal{S} = \{X_0, X_1, \tilde X_1, \cdots, X_{h-1}, \tilde X_{h-1}, Z\}$, and $A=2$ actions, $\mathcal{A} = \{0, 1\}$, and the following transition function $T$ and reward function $r$ (see a diagram in \cref{fig:wcbcvbias}):
\begin{equation}
\begin{aligned}
    &T(\tilde{X}_{i+1} \mid X_i, a) = T(\tilde{X}_{i+1} \mid \tilde X_i, a) = \delta, &\forall a \in \{0, 1\}, i\in\{1, \cdots, h-2\} \\
    &T(X_{i+1} \mid X_i, a) = T(X_{i+1} \mid \tilde X_i, a) = 1- \delta, &\forall a \in \{0, 1\}, i\in\{0, \cdots, h-2\} \\
    &T(Z \mid \tilde X_{h-1}, a=1) = T(Z \mid X_{h-1}, a=0) = 1  \\
    &T(X_0 \mid \tilde X_{h-1}, a=0) = T(X_0 \mid X_{h-1}, a=1) = 1  \\
    &r(\tilde X_i, a= 0) = r(X_i, a= 1) = 1, &\forall i \in \{0, \cdots, h-1\} \\
    &r(\tilde X_i, a= 1) = r(X_i, a= 0) = 0, &\forall i \in \{0, \cdots, h-1\} \\
    &r(Z, a = 1) = r(Z, a = 0) = 0 \\
    &T(Z \mid Z, a = 0) = T(Z \mid Z, a = 1) = 1
\end{aligned}
\end{equation}

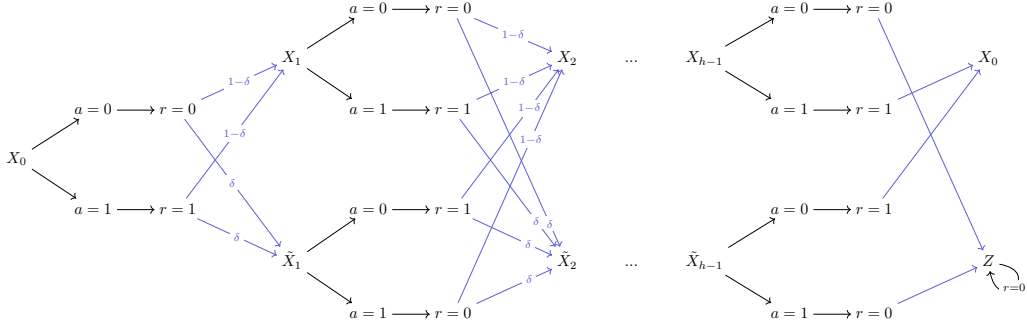
\begin{figure}
\scalebox{0.6}{\begin{tikzcd}[ampersand replacement=\&,cramped]
	\&\&\&\&\& {a=0} \& {r=0} \&\&\&\&\& {a=0} \& {r=0} \\
	\&\&\&\& {X_1} \&\&\&\& {X_2} \& {...} \& {X_{h-1}} \&\&\&\& {X_0} \\
	\& {a=0} \& {r=0} \&\&\& {a=1} \& {r=1} \&\&\&\&\& {a=1} \& {r=1} \\
	{X_0} \\
	\& {a=1} \& {r=1} \&\&\& {a=0} \& {r=1} \&\&\&\&\& {a=0} \& {r=1} \\
	\&\&\&\& {\tilde X_1} \&\&\&\& {\tilde X_2} \& {...} \& {\tilde X_{h-1}} \&\&\&\& Z \\
	\&\&\&\&\& {a=1} \& {r=0} \&\&\&\&\& {a=1} \& {r=0}
	\arrow[from=1-6, to=1-7]
	\arrow["{1-\delta}"{description}, color={rgb,255:red,92;green,92;blue,214}, from=1-7, to=2-9]
	\arrow["\delta"{description, pos=0.9}, color={rgb,255:red,92;green,92;blue,214}, from=1-7, to=6-9]
	\arrow[from=1-12, to=1-13]
	\arrow[color={rgb,255:red,92;green,92;blue,214}, from=1-13, to=6-15]
	\arrow[from=2-5, to=1-6]
	\arrow[from=2-5, to=3-6]
	\arrow[from=2-11, to=1-12]
	\arrow[from=2-11, to=3-12]
	\arrow[from=3-2, to=3-3]
	\arrow["{1-\delta}"{description}, color={rgb,255:red,92;green,92;blue,214}, from=3-3, to=2-5]
	\arrow["\delta"{description}, color={rgb,255:red,92;green,92;blue,214}, from=3-3, to=6-5]
	\arrow[from=3-6, to=3-7]
	\arrow["{1-\delta}"{description}, color={rgb,255:red,92;green,92;blue,214}, from=3-7, to=2-9]
	\arrow["\delta"{description, pos=0.8}, color={rgb,255:red,92;green,92;blue,214}, from=3-7, to=6-9]
	\arrow[from=3-12, to=3-13]
	\arrow[color={rgb,255:red,92;green,92;blue,214}, from=3-13, to=2-15]
	\arrow[from=4-1, to=3-2]
	\arrow[from=4-1, to=5-2]
	\arrow[from=5-2, to=5-3]
	\arrow["{1-\delta}"{description}, color={rgb,255:red,92;green,92;blue,214}, from=5-3, to=2-5]
	\arrow["\delta"{description}, color={rgb,255:red,92;green,92;blue,214}, from=5-3, to=6-5]
	\arrow[from=5-6, to=5-7]
	\arrow["{1-\delta}"{description, pos=0.7}, color={rgb,255:red,92;green,92;blue,214}, from=5-7, to=2-9]
	\arrow["\delta"{description, pos=0.7}, color={rgb,255:red,92;green,92;blue,214}, from=5-7, to=6-9]
	\arrow[from=5-12, to=5-13]
	\arrow[color={rgb,255:red,92;green,92;blue,214}, from=5-13, to=2-15]
	\arrow[from=6-5, to=5-6]
	\arrow[from=6-5, to=7-6]
	\arrow[from=6-11, to=5-12]
	\arrow[from=6-11, to=7-12]
	\arrow["{r=0}"{description}, from=6-15, to=6-15, loop, in=280, out=350, distance=10mm]
	\arrow[from=7-6, to=7-7]
	\arrow["{1-\delta}"{description, pos=0.7}, color={rgb,255:red,92;green,92;blue,214}, from=7-7, to=2-9]
	\arrow["\delta"{description, pos=0.7}, color={rgb,255:red,92;green,92;blue,214}, from=7-7, to=6-9]
	\arrow[from=7-12, to=7-13]
	\arrow[color={rgb,255:red,92;green,92;blue,214}, from=7-13, to=6-15]
\end{tikzcd}}
\caption{ \textbf{A $2h$-state MDP that is constructed to meet the upper-bound in \cref{thm:bcvbias}.} The data distribution $\mathcal{D}$ that achieves such an upper bound is collected by the optimal policy: $\pi(X_i) = 1, \pi(\tilde X_i) = 0$.}
\label{fig:wcbcvbias}
\end{figure}

Now, we assume that the data $\mathcal{D}$ is collected by the optimal closed-loop policy where
\begin{align}
    \pi(X_i) = 1, \pi(\tilde X_i) = 0.
\end{align}
First, we check $\mathcal{D}$ is $\varepsilon_h$-open-loop consistent.

We can show that by computing the distribution for $P_{\mathcal{D}}(s_{t+i}, a_{t+i} \mid s_t=X_0)$ and $P^{\circ}_{\mathcal{D}}(s_{t+i}, a_{t+i} \mid s_t=X_0)$ as follows:
\begin{equation}
\begin{aligned}
\tiny
    \begin{bmatrix}
        P_{\mathcal{D}}(s_{t+i}=\tilde X_i, a_{t+i}=0 \mid X_0) & P_{\mathcal{D}}(s_{t+i}=\tilde X_i, a_{t+i}=1 \mid X_0) \\
        P_{\mathcal{D}}(s_{t+i}=X_i, a_{t+i}=0 \mid X_0) & P_{\mathcal{D}}(s_{t+i}=X_i, a_{t+i}=1 \mid X_0)
    \end{bmatrix} &= \begin{bmatrix}
        \delta & 0 \\
        0 & 1-\delta
    \end{bmatrix} \\
\tiny
    \begin{bmatrix}
        P^{\circ}_{\mathcal{D}}(s_{t+i}=\tilde X_i, a_{t+i}=0 \mid X_0) & P^{\circ}_{\mathcal{D}}(s_{t+i}=\tilde X_i, a_{t+i}=1 \mid X_0) \\
        P^{\circ}_{\mathcal{D}}(s_{t+i}=X_i, a_{t+i}=0 \mid X_0) & P^{\circ}_{\mathcal{D}}(s_{t+i}=X_i, a_{t+i}=1 \mid X_0)
    \end{bmatrix} &= \begin{bmatrix}
        \delta^2 & (1-\delta)\delta \\
        \delta(1-\delta) & (1-\delta)^2
    \end{bmatrix}
\end{aligned}
\end{equation}
From the calculation above, it is clear that
\begin{align}
    D_{\mathrm{TV}}\infdivx*{P_{\mathcal{D}}^{\circ}(s_{t+i}, a_{t+i} \mid s_t)}{ P_{\mathcal{D}}(s_{t+i}, a_{t+i} \mid s_t)} = \varepsilon_h, \quad \forall i \in \{1, 2, \cdots, h-1\}.
\end{align}
From the computed values of $P_{\mathcal{D}}^{\circ}(s_{t+h-1}, a_{t+h-1} \mid s_t)$ and $P_{\mathcal{D}}(s_{t+h-1}, a_{t+h-1} \mid s_t)$, we can derive
\begin{equation}
\begin{aligned}
    P_{\mathcal{D}}(s_{t+h}=Z \mid s_t=X_0) &= 0, \\
    P^{\circ}_{\mathcal{D}}(s_{t+h}=Z \mid s_t=X_0) &= 2(1-\delta)\delta = \varepsilon_h.
\end{aligned}
\end{equation}
From the calculation above, it is clear that
\begin{align}
    D_{\mathrm{TV}}\infdivx*{P_{\mathcal{D}}^{\circ}(s_{t+h} \mid s_t)}{ P_{\mathcal{D}}(s_{t+h} \mid s_t)} = \varepsilon_h.
\end{align}

Up to now, we have checked that $\mathcal{D}$ is $\varepsilon_h$-open-loop consistent. Now, we are left with analyzing $\hat V_{\mathrm{ac}}$ and $V_{\mathrm{ac}}$. With some calculations, we can obtain the following:
\begin{equation}
\begin{aligned}
\mathbb{E}_{P^{\circ}_{\mathcal{D}}}\left[R_{t:t+h}\right] &= 1 + \frac{(1-\varepsilon_h) (\gamma-\gamma^h)}{1-\gamma}, \\
    \hat{V}_{\mathrm{ac}}(X_0) &= \frac{1}{1-\gamma}, \\
    V_{\mathrm{ac}}(Z) &=  0.
\end{aligned}
\end{equation}

Now, we are ready to compute $V_{\mathrm{ac}}(X_0)$:
\begin{equation}
\begin{aligned}
    V_{\mathrm{ac}}(X_0) &= \frac{(1-\gamma^h)- \varepsilon_h(\gamma-\gamma^h)}{(1-\gamma)} + \gamma^h\left[(1-\varepsilon_h)V_{\mathrm{ac}}(X_0) + \varepsilon_h V_{\mathrm{ac}}(Z)\right] \\
    &=\frac{1-\gamma^h - \varepsilon_h(\gamma-\gamma^h)}{(1-\gamma)(1-\gamma^h(1-\varepsilon_h))}
\end{aligned}
\end{equation}
Finally, with $X_0 \in \mathrm{supp}(\mathcal{D})$, we obtain the desired value difference
\begin{align}
    \hat{V}_{\mathrm{ac}}(X_0) - V_{\mathrm{ac}}(X_0) = \frac{\varepsilon_h\gamma}{(1-\gamma)(1-\gamma^h(1-\varepsilon_h))}.
\end{align}

By symmetry, we can flip the reward value (\emph{i.e.}, $0 \rightarrow 1$ and $1 \rightarrow 0$) to construct the example such that
\begin{align}
    V_{\mathrm{ac}}(X_0) - \hat{V}_{\mathrm{ac}}(X_0)  = \frac{\varepsilon_h\gamma}{(1-\gamma)(1-\gamma^h(1-\varepsilon_h))}.
\end{align}
\end{proof}

\newpage
\subsection{Proof of \cref{corollary:vb}}
\vb*
\begin{proof}
\label{proof:vb}
Let $\hat{V}_{\mathrm{ac}}$ be the fixed point of the following equation:
\begin{align}
    \hat{V}_{\mathrm{ac}}(s_t) = \mathbb{E}_{s_{t+1:t+h+1}, a_{t:t+h} \sim P_{\mathcal{D}^\star}(\cdot \mid s_t)}\left[R_{t:t+h} + \gamma^h \hat{V}_{\mathrm{ac}}(s_{t+h})\right]
\end{align}
where again $R_{t:t+h} = \sum_{t'=t}^{t+h} \gamma^{t'-t} r(s_{t'}, a_{t'})$. 
The value of the optimal policy is the fixed point of the following equation:
\begin{equation}
\begin{aligned}
    V^\star(s_t) &= \mathbb{E}_{s_{t+1}, a_{t} \sim P_{\mathcal{D}^\star}(\cdot \mid s_t)}\left[r(s_t, a_t) + \gamma V^\star(s_{t+1})\right] \\
    &= \mathbb{E}_{s_{t:t+2}, a_{t:t+1} \sim P_{\mathcal{D}^\star}(\cdot \mid s_t)}\left[r(s_t, a_t) + \gamma r(s_{t+1}, a_{t+1}) + \gamma V^\star(s_{t+2})\right] \\
    & \cdots 
    \\
    & = \mathbb{E}_{s_{t+1:t+h+1}, a_{t:t+h} \sim P_{\mathcal{D}^\star}(\cdot \mid s_t)}\left[R_{t:t+h} + \gamma^h V^\star(s_{t+h})\right]
\end{aligned}
\end{equation}
which is equivalent to fixed-point equation for $\hat{V}_{\mathrm{ac}}$.
Therefore $\hat{V}_{\mathrm{ac}} = V^\star$. 
By \cref{thm:bcvbias}, we know that the true value $V_{\mathrm{ac}}$ of the action chunking policy $\tilde{\pi}_{\mathrm{ac}}$ that clones $\mathcal{D}^\star$ is close to $\hat{V}_{\mathrm{ac}}$. More specifically, for all $s_t \in \mathrm{supp}(\mathcal{D}^\star)$,
\begin{align}
    \left|\hat{V}_{\mathrm{ac}}(s_t) - \tilde{V}_{\mathrm{ac}}(s_t)\right| \leq \frac{\gamma \varepsilon_h}{(1-\gamma)(1 - (1-\varepsilon_h)\gamma^h)},
\end{align}
which means that
\begin{align}
    V^\star(s_t) - \tilde{V}_{\mathrm{ac}}(s_t) \leq \frac{\gamma \varepsilon_h}{(1-\gamma)(1 - (1-\varepsilon_h)\gamma^h)},
\end{align}
where we can remove the absolute value operator because $V^\star(s_t)$ is by definition always at least as large as $\tilde V_{\mathrm{ac}}(s_t)$. 
Since the optimal action chunking policy, by definition, attains equally good or better values (over $\mathcal{S}$) represented by $V_{\mathrm{ac}}$, and the optimal policy $\pi^\star$ also attains equally good or better value (\emph{i.e.}, $V^\star$) compared to that of the optimal action chunking policy $\pi^\star_{\mathrm{ac}}$ (\emph{i.e.}, $V^\star_{\mathrm{ac}}$), the following inequality holds for all $s_t \in \mathrm{supp}(\mathcal{D}^\star)$:
\begin{align}
    V^\star(s_t) \geq V^\star_{\mathrm{ac}}(s_t) \geq \tilde{V}_{\mathrm{ac}}(s_t).
\end{align}
Therefore,
\begin{align}
    V^\star_{\mathrm{ac}}(s_t) - V^\star(s_t) \leq \tilde{V}_{\mathrm{ac}}(s_t) - V^\star(s_t) \leq \frac{\gamma \varepsilon_h}{(1-\gamma)(1 - (1-\varepsilon_h)\gamma^h)},
\end{align}
as desired.
\end{proof}

\newpage
\subsection{Proof of \cref{corollary:vbtight}}

\vbtight*
\begin{proof}
\label{proof:vbtight}

To show this, we need a slightly more complicated MDP (compared to the $2h$-state MDP we use in the proof \cref{proof:bcvbiastight}). The MDP we construct for this proof is a $(3h-1)$-state MDP as illustrated in \cref{fig:vb-example}.
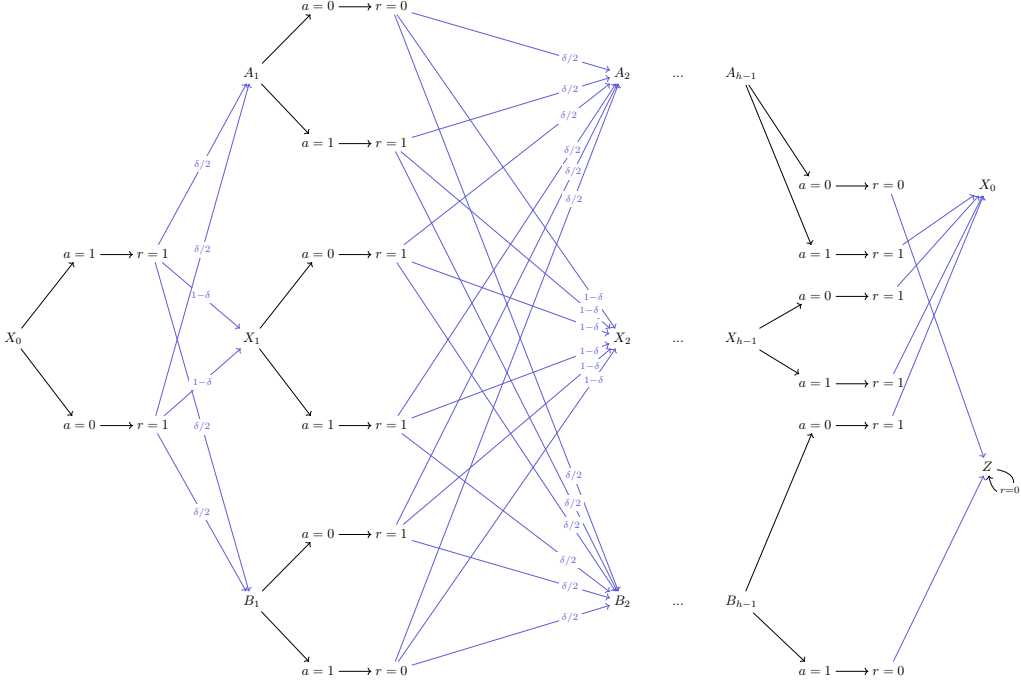
\begin{figure}[h]
\scalebox{0.52}{\begin{tikzcd}[ampersand replacement=\&,cramped]
	\&\&\&\&\& {a=0} \& {r=0} \\
	\\
	\&\&\&\& {A_1} \&\&\&\&\&\&\&\& {A_2} \& {...} \& {A_{h-1}} \\
	\\
	\&\&\&\&\& {a=1} \& {r=1} \\
	\&\&\&\&\&\&\&\&\&\&\&\&\&\&\& {a=0} \& {r=0} \&\& {X_0} \\
	\\
	\& {a=1} \& {r=1} \&\&\& {a=0} \& {r=1} \&\&\&\&\&\&\&\&\& {a=1} \& {r=1} \\
	\&\&\&\&\&\&\&\&\&\&\&\&\&\&\& {a=0} \& {r=1} \\
	{X_0} \&\&\&\& {X_1} \&\&\&\&\&\&\&\& {X_2} \& {...} \& {X_{h-1}} \\
	\&\&\&\&\&\&\&\&\&\&\&\&\&\&\& {a=1} \& {r=1} \\
	\& {a=0} \& {r=1} \&\&\& {a=1} \& {r=1} \&\&\&\&\&\&\&\&\& {a=0} \& {r=1} \\
	\&\&\&\&\&\&\&\&\&\&\&\&\&\&\&\&\&\& Z \\
	\\
	\&\&\&\&\& {a=0} \& {r=1} \\
	\\
	\&\&\&\& {B_1} \&\&\&\&\&\&\&\& {B_2} \& {...} \& {B_{h-1}} \\
	\\
	\&\&\&\&\& {a=1} \& {r=0} \&\&\&\&\&\&\&\&\& {a=1} \& {r=0}
	\arrow[from=1-6, to=1-7]
	\arrow["{\delta/2}"{description, pos=0.8}, color={rgb,255:red,92;green,92;blue,214}, from=1-7, to=3-13]
	\arrow["{1-\delta}"{description, pos=0.9}, color={rgb,255:red,92;green,92;blue,214}, from=1-7, to=10-13]
	\arrow["{\delta/2}"{description, pos=0.8}, color={rgb,255:red,92;green,92;blue,214}, from=1-7, to=17-13]
	\arrow[from=3-5, to=1-6]
	\arrow[from=3-5, to=5-6]
	\arrow[from=3-15, to=6-16]
	\arrow[from=3-15, to=8-16]
	\arrow[from=5-6, to=5-7]
	\arrow["{\delta/2}"{description, pos=0.8}, color={rgb,255:red,92;green,92;blue,214}, from=5-7, to=3-13]
	\arrow["{1-\delta}"{description, pos=0.9}, color={rgb,255:red,92;green,92;blue,214}, from=5-7, to=10-13]
	\arrow["{\delta/2}"{description, pos=0.8}, color={rgb,255:red,92;green,92;blue,214}, from=5-7, to=17-13]
	\arrow[from=6-16, to=6-17]
	\arrow[draw={rgb,255:red,92;green,92;blue,214}, from=6-17, to=13-19]
	\arrow[from=8-2, to=8-3]
	\arrow["{\delta/2}"{description}, color={rgb,255:red,92;green,92;blue,214}, from=8-3, to=3-5]
	\arrow["{1-\delta}"{description}, color={rgb,255:red,92;green,92;blue,214}, from=8-3, to=10-5]
	\arrow["{\delta/2}"{description}, color={rgb,255:red,92;green,92;blue,214}, from=8-3, to=17-5]
	\arrow[from=8-6, to=8-7]
	\arrow["{\delta/2}"{description, pos=0.8}, color={rgb,255:red,92;green,92;blue,214}, from=8-7, to=3-13]
	\arrow["{1-\delta}"{description, pos=0.9}, color={rgb,255:red,92;green,92;blue,214}, from=8-7, to=10-13]
	\arrow["{\delta/2}"{description, pos=0.8}, color={rgb,255:red,92;green,92;blue,214}, from=8-7, to=17-13]
	\arrow[from=8-16, to=8-17]
	\arrow[draw={rgb,255:red,92;green,92;blue,214}, from=8-17, to=6-19]
	\arrow[from=9-16, to=9-17]
	\arrow[draw={rgb,255:red,92;green,92;blue,214}, from=9-17, to=6-19]
	\arrow[from=10-1, to=8-2]
	\arrow[from=10-1, to=12-2]
	\arrow[from=10-5, to=8-6]
	\arrow[from=10-5, to=12-6]
	\arrow[from=10-15, to=9-16]
	\arrow[from=10-15, to=11-16]
	\arrow[from=11-16, to=11-17]
	\arrow[draw={rgb,255:red,92;green,92;blue,214}, from=11-17, to=6-19]
	\arrow[from=12-2, to=12-3]
	\arrow["{\delta/2}"{description}, color={rgb,255:red,92;green,92;blue,214}, from=12-3, to=3-5]
	\arrow["{1-\delta}"{description}, color={rgb,255:red,92;green,92;blue,214}, from=12-3, to=10-5]
	\arrow["{\delta/2}"{description}, color={rgb,255:red,92;green,92;blue,214}, from=12-3, to=17-5]
	\arrow[from=12-6, to=12-7]
	\arrow["{\delta/2}"{description, pos=0.8}, color={rgb,255:red,92;green,92;blue,214}, from=12-7, to=3-13]
	\arrow["{1-\delta}"{description, pos=0.9}, color={rgb,255:red,92;green,92;blue,214}, from=12-7, to=10-13]
	\arrow["{\delta/2}"{description, pos=0.8}, color={rgb,255:red,92;green,92;blue,214}, from=12-7, to=17-13]
	\arrow[from=12-16, to=12-17]
	\arrow[draw={rgb,255:red,92;green,92;blue,214}, from=12-17, to=6-19]
	\arrow["{r=0}"{description}, from=13-19, to=13-19, loop, in=280, out=350, distance=10mm]
	\arrow[from=15-6, to=15-7]
	\arrow["{\delta/2}"{description, pos=0.8}, color={rgb,255:red,92;green,92;blue,214}, from=15-7, to=3-13]
	\arrow["{1-\delta}"{description, pos=0.9}, color={rgb,255:red,92;green,92;blue,214}, from=15-7, to=10-13]
	\arrow["{\delta/2}"{description, pos=0.8}, color={rgb,255:red,92;green,92;blue,214}, from=15-7, to=17-13]
	\arrow[from=17-5, to=15-6]
	\arrow[from=17-5, to=19-6]
	\arrow[from=17-15, to=12-16]
	\arrow[from=17-15, to=19-16]
	\arrow[from=19-6, to=19-7]
	\arrow["{\delta/2}"{description, pos=0.8}, color={rgb,255:red,92;green,92;blue,214}, from=19-7, to=3-13]
	\arrow["{1-\delta}"{description, pos=0.9}, color={rgb,255:red,92;green,92;blue,214}, from=19-7, to=10-13]
	\arrow["{\delta/2}"{description, pos=0.8}, color={rgb,255:red,92;green,92;blue,214}, from=19-7, to=17-13]
	\arrow[from=19-16, to=19-17]
	\arrow[draw={rgb,255:red,92;green,92;blue,214}, from=19-17, to=13-19]
\end{tikzcd}}
\caption{ \textbf{A $(3h-1)$-state MDP that is constructed to meet the upper-bound in \cref{corollary:vb}.}}
\label{fig:vb-example}
\end{figure}

The optimal policy we pick is described as follows:
\begin{equation}
\begin{aligned}
    \pi^\star(a=0 \mid X_i) &= 1/2 \\
    \pi^\star(a=1 \mid X_i) &= 1/2 \\
    \pi^\star(a=1 \mid A_i) &= 1 \\
    \pi^\star(a=0 \mid B_i) &= 1/2
\end{aligned}
\end{equation}
This induces the following state distribution,
\begin{equation}
\begin{aligned}
    &P_{\mathcal{D}^\star}(s_{t+i} =A_i \mid s_t=X_0) = P_{\mathcal{D}^\star}(s_{t+i} =B_i \mid s_t=X_0) \\
    &\quad = P^{\circ}_{\mathcal{D}^\star}(s_{t+i} =A_i \mid s_t=X_0) = P^{\circ}_{\mathcal{D}^\star}(s_{t+i} =B_i \mid s_t=X_0) = \delta/2,\\
    &P_{\mathcal{D}^\star}(s_{t+i} =X_i \mid s_t=X_0) = P^{\circ}_{\mathcal{D}^\star}(s_{t+i} =X_i \mid s_t=X_0) = 1 - \delta,
\end{aligned}
\end{equation}
and a fully factorized distribution for the action chunk,
\begin{align}
    P^{\circ}_{\mathcal{D}^\star}(a_{t+i} = 0 \mid s_t) = P^{\circ}_{\mathcal{D}^\star}(a_{t+i} = 0 \mid s_t, a_{t:t+i}) = \frac{1}{2}(\delta_{a=0} + \delta_{a=1}).
\end{align}

Now, we derive the condition on $\delta$ when the optimal data $\mathcal{D}^\star$ is $\varepsilon_h$-open-loop consistent. We start by calculating the TV distance discrepancy for the future state-action distribution:
\begin{equation}
\begin{aligned}
    &D_{\mathrm{TV}}\infdivx*{P^\mathrm{open}_{\mathcal{D}^\star}(s_{t+i}, a_{t+i} \mid s_t)}{P_{\mathcal{D}^\star}(s_{t+i}, a_{t+i} \mid s_t)} \\
    &= \frac{1}{2}\left\|\begin{bmatrix}
        0 & \delta/2 \\
        (1-\delta) / 2& (1-\delta)/2 \\
        \delta/2 & 0
    \end{bmatrix} - \begin{bmatrix}
        \delta/4 & \delta/4 \\
        (1-\delta) / 2& (1-\delta)/2 \\
        \delta/4 & \delta/4
    \end{bmatrix}\right\|_{1, 1} \\
    &= \delta/2.
\end{aligned}
\end{equation}
In the second line of the equations above, each row in the matrix corresponds to a distinct action $a_{t+i} \in \{0, 1\}$ and each row in the matrix corresponds to a distinct state $s_{t+i} \in \{A_i, X_i, B_i\}$.

Next, we calculate the TV distance discrepancy for $s_{t+h}$:
\begin{equation}
\begin{aligned}
    &D_{\mathrm{TV}}\infdivx*{P^\mathrm{open}_{\mathcal{D}^\star}(s_{t+h} \mid s_t)}{P_{\mathcal{D}^\star}(s_{t+h} \mid s_t)} \\
    &= \frac{1}{2}\left\|\begin{bmatrix}
        1 & 0 
    \end{bmatrix} - \begin{bmatrix}
        1-\delta/2 & \delta/2 
    \end{bmatrix}\right\|_1 \\
    &= \delta/2.
\end{aligned}
\end{equation}
In the second line of the equations above, each element in the vector corresponds to a distinct state $s_{t+h} \in \{X_0, Z\}$. Up to now, we have concluded that $\mathcal{D}^\star$ is $(\delta/2)$-open-loop consistent.

Due to the symmetric structure of this MDP, it is clear that any action chunking policy $\pi_{\mathrm{ac}}(X_0) = a_{t:t+h}$ with $a_{t:t+h} \in \{0, 1\}$ is optimal and achieves the following value:
\begin{equation}
\begin{aligned}
    V^\star_{\mathrm{ac}}(X_0) &=  1 + (1-\delta/2)\left[\frac{\gamma-\gamma^h}{1-\gamma} + \gamma^h V^\star_{\mathrm{ac}}(X_0)\right] \\
     &= \frac{(1-\gamma) + (1-\delta/2)(\gamma-\gamma^h)}{(1-\gamma)(1-(1-\delta/2)\gamma^h)}.
\end{aligned}
\end{equation}
The optimal closed-loop policy can achieve the maximum possible return
\begin{align}
    V^\star(X_0) &= \frac{1}{1-\gamma}.
\end{align}
Therefore, with $\varepsilon_h = \delta/2$, the optimality gap achieved by this $(3h-1)$-state MDP is
\begin{align}
    V^\star(X_0) - V^\star_{\mathrm{ac}}(X_0) = \frac{\varepsilon_h\gamma}{(1-\gamma)(1-(1-\varepsilon_h)\gamma^h)},
\end{align}
as desired.
\end{proof}

\newpage
\subsection{Proof of \cref{thm:lottery}}

\lottery*
\begin{proof}
\label{proof:lottery}
To prove this theorem, we show an example where the optimal action chunking policy defined in \cref{eq:acp} can be highly sub-optimal
in the absence of the strong open-loop consistency condition.

We define an MDP as follows.
Let $\mathcal{S} = \{A, B, C, D, E, Z\}$ and $\mathcal{A} = \{0, 1\}$.
Define the transition dynamics and reward function as shown in the diagram below (\cref{fig:lottery-example}):
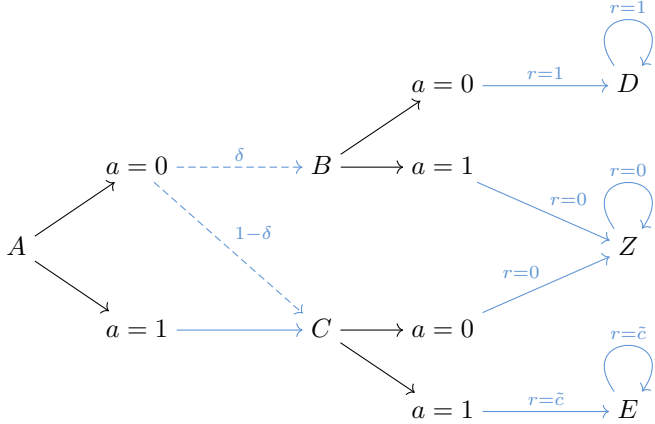
\begin{figure}[h]
\begin{tikzcd}[cramped]
	&&&& {a=0} && D \\
	& {a=0} && B & {a=1} \\
	A &&&&&& Z \\
	& {a=1} && C & {a=0} \\
	&&&& {a=1} && E
	\arrow["{r=1}", color={rgb,255:red,93;green,147;blue,213}, from=1-5, to=1-7]
	\arrow["{r=1}", color={rgb,255:red,93;green,147;blue,213}, from=1-7, to=1-7, loop, in=55, out=125, distance=10mm]
	\arrow["\delta", color={rgb,255:red,93;green,147;blue,213}, dashed, from=2-2, to=2-4]
	\arrow["{1-\delta}", color={rgb,255:red,93;green,147;blue,213}, dashed, from=2-2, to=4-4]
	\arrow[from=2-4, to=1-5]
	\arrow[from=2-4, to=2-5]
	\arrow["{r=0}", color={rgb,255:red,93;green,147;blue,213}, from=2-5, to=3-7]
	\arrow[from=3-1, to=2-2]
	\arrow[from=3-1, to=4-2]
	\arrow["{r=0}", color={rgb,255:red,93;green,147;blue,213}, from=3-7, to=3-7, loop, in=55, out=125, distance=10mm]
	\arrow[color={rgb,255:red,93;green,147;blue,213}, from=4-2, to=4-4]
	\arrow[from=4-4, to=4-5]
	\arrow[from=4-4, to=5-5]
	\arrow["{r=0}", color={rgb,255:red,93;green,147;blue,213}, from=4-5, to=3-7]
	\arrow["{r=\tilde c}", color={rgb,255:red,93;green,147;blue,213}, from=5-5, to=5-7]
	\arrow["{r=\tilde c}", color={rgb,255:red,93;green,147;blue,213}, from=5-7, to=5-7, loop, in=55, out=125, distance=10mm]
\end{tikzcd}
\caption{ \textbf{A $6$-state MDP that is constructed to illustrate the pathological failure mode of action chunking Q-learning under weak open-loop consistency.}}
\label{fig:lottery-example}
\end{figure}

where $\delta, \tilde c \in [0, 1)$ are real numbers
and dotted lines denote stochastic transitions.
The reward function depends only on states ($r(A) = r(B) = r(C) = r(F) = 0$, $r(D) = 1$, and $r(G) = c$).
Assume that the dataset is collected by a policy $\pi_\mathcal{D}$
defined as $\pi_\mathcal{D}(A) = 0$ (with probability $\theta$) or $1$ (with probability $(1-\theta)$),
$\pi_\mathcal{D}(B) = 0$ (with probability $1$), $\pi_\mathcal{D}(C) = 1$ (with probability $1$), and $\pi_\mathcal{D}(D) = \pi_\mathcal{D}(Z) = \pi_\mathcal{D}(E) = 0$ (with probability $1$).

Let $\pi^\star$ be the deterministic policy with $\pi(A) = 1, \pi(C)=1, \pi(E)=0$. Then, it is clear that $\mathrm{supp}(P_{\mathcal{D}}(s_t, a_{t:t+h})) \supseteq \mathrm{supp}(P_{\mathcal{D}^\star}(s_t, a_{t:t+h})) = \{(s_t=A, a_t=1, a_{t+1}=1, a_{t+2:t+h} = 0)\}$. Now we are left to show that $\mathcal{D}$ is $\varepsilon_h$-open-loop consistent. 

Since $D, E, Z$ are all self-loops, we only need to analyze the first two actions in the action chunk; the rest is all $a_{t+2:t+h} = 0$ and does not affect the value.

We can compute the following:
\begin{equation}
\begin{aligned}
P_\mathcal{D}(A, (0, 0)) &= D, \ R(A, (0, 0)) = \gamma, \\
P_\mathcal{D}(A, (0, 1)) &= E, \ R(A, (0, 1)) = 2c\gamma, \\
P_\mathcal{D}(A, (1, 1)) &= E, \ R(A, (1, 1)) = 2c\gamma,
\end{aligned}
\end{equation}
where we denote action chunks as a tuple
and slightly abuse notation to denote deterministic outputs of $P_\mathcal{D}(\cdot \mid s_0, a_{0:2})$ (\emph{e.g.}, $P_{\mathcal D}(A, (0, 0)) = D$ indicates that all length-$2$ trajectories in $\mathcal{D}$ from state $A$ with $a_0 = a_1 = 0$ have $s_2 = D$ with probability $1$).

We can similarly compute the marginal state distributions as follows:
\begin{align}
    \begin{bmatrix}
        P_{\mathcal{D}}(D \mid A)  \\
        P_{\mathcal{D}}(Z \mid A) \\
        P_{\mathcal{D}}(E \mid A) 
    \end{bmatrix} = \begin{bmatrix}
        \theta\delta \\
        0 \\
        1-\theta\delta
    \end{bmatrix},
\end{align}
and
\begin{align}
\small
    \begin{bmatrix}
        P_{\mathcal{D}}(s_{t+1} = B, a_t=0 \mid s_t=A)  & P_{\mathcal{D}}(s_{t+1} = B, a_t=1 \mid s_t=A)  \\
        P_{\mathcal{D}}(s_{t+1} = C, a_t=0 \mid s_t=A)  & P_{\mathcal{D}}(s_{t+1} = C, a_t=1 \mid s_t=A)
    \end{bmatrix} = \begin{bmatrix}
        \theta\delta & 0 \\
        0 & 1-\theta\delta
    \end{bmatrix}.
\end{align}

The marginal probability distribution of the action chunks is
\begin{equation}
\begin{aligned}
    P_{\mathcal{D}}(a_{0, 1} &= (0,0) \mid A) = \theta\delta, \\ 
    P_{\mathcal{D}}(a_{0, 1} &= (0,1) \mid A) = (1-\delta)\theta, \\ 
    P_{\mathcal{D}}(a_{0, 1} &= (1,1) \mid A) = 1-\theta.
\end{aligned}
\end{equation}

The induced $P_{\mathcal{D}}^{\circ}$ is then
\begin{align}
    \begin{bmatrix}
        P^{\circ}_{\mathcal{D}}(s_{t+h} = D \mid s_t=A)  \\
        P^{\circ}_{\mathcal{D}}(s_{t+h} = Z \mid s_t=A) \\
        P^{\circ}_{\mathcal{D}}(s_{t+h} = E \mid s_t=A) 
    \end{bmatrix} = \begin{bmatrix}
        \theta\delta^2 \\
        (1-\delta)\theta\delta  \\
        (1-\delta)^2\theta + (1-\theta)
    \end{bmatrix},
\end{align}
and
\begin{equation}
\begin{aligned}
\small
    &\begin{bmatrix}
        P^{\circ}_{\mathcal{D}}(s_{t+1} = B, a_{t+1}=0 \mid s_t=A)  & P^{\circ}_{\mathcal{D}}(s_{t+1} = B, a_{t+1}=1 \mid s_t=A)  \\
        P^{\circ}_{\mathcal{D}}(s_{t+1} = C, a_{t+1}=0 \mid s_t=A)  & P^{\circ}_{\mathcal{D}}(s_{t+1} = C, a_{t+1}=1 \mid s_t=A)
    \end{bmatrix} \\
    &= \begin{bmatrix}
        \delta^2\theta & (1-\delta)\delta\theta \\
        \delta(1-\delta)\theta & (1-\delta)^2\theta + 1-\theta
    \end{bmatrix}.
\end{aligned}
\end{equation}

We can then compute
\begin{equation}
\begin{aligned}
    D_{\mathrm{TV}}\infdivx*{P^{\circ}_{\mathcal{D}}(s_{t+1}, a_{t+1} \mid s_t=A)}{P_{\mathcal{D}}(s_{t+1}, a_{t+1} \mid s_t=A)}  &= 2\theta\delta(1-\delta)\\
    D_{\mathrm{TV}}\infdivx*{P^{\circ}_{\mathcal{D}}(s_{t+h} \mid s_t=A)}{P_{\mathcal{D}}(s_{t+h} \mid s_t=A)} 
    &= \frac{3}{2}\theta\delta(1-\delta)
\end{aligned}
\end{equation}
Note that we only need to check $t+1$ time step for the state-action distribution because
\begin{equation}
\begin{aligned}
    &D_{\mathrm{TV}}\infdivx*{P^{\circ}_{\mathcal{D}}(s_{t+h'}, a_{t+h'} \mid s_t=A)}{P_{\mathcal{D}}(s_{t+h'}, a_{t+h'} \mid s_t=A)} = \\
    &\quad D_{\mathrm{TV}}\infdivx*{P^{\circ}_{\mathcal{D}}(s_{t+h} \mid s_t=A)}{P_{\mathcal{D}}(s_{t+h} \mid s_t=A)}, \quad \forall h' > 1,
\end{aligned}
\end{equation}
coming from the fact that the rest of the action chunks are all constant $0$ (\emph{i.e.}, $a_{t+2:t+h} = 0$).

Now, we can set $\delta$ to be solution of $\varepsilon_h = 2(1-\delta)\delta\theta$, such that $\mathcal{D}$ is $\varepsilon_h$-open-loop consistent as required by the assumption.

Next, we analyze the action chunking policy $\pi^+_{\mathrm{ac}}$ learned from this $\varepsilon_h$-open-loop consistent data distribution. We start by calculating $\hat{Q}^+_\mathrm{ac}$ as follows:
\begin{equation}
\begin{aligned}
\hat Q^+_\mathrm{ac} (A, (0, 0)) &= \frac{\gamma}{1-\gamma}, \\
\hat Q^+_\mathrm{ac} (A, (0, 1)) &= \frac{\tilde c\gamma}{1-\gamma}, \\
\hat Q^+_\mathrm{ac} (A, (1, 1)) &= \frac{\tilde c\gamma}{1-\gamma}.
\end{aligned}
\end{equation}

The optimal action chunking policy is then $\hat \pi^+_\mathrm{ac}(A) = (0, 0)$ (\cref{eq:acp}) because $c < 1$.

The true value of this action chunking policy is
\begin{align}
    V^{+}_{\mathrm{ac}}(A) = \frac{\delta \gamma}{1-\gamma}
\end{align}
As long as $\tilde c > \delta$, the optimal strategy in this MDP is to always choose $(a_0, a_1) = (1, 1)$,
in which case the agent receives a constant return:
\begin{align}
    V^\star(A) = V^\star_{\mathrm{ac}}(A) = \frac{\tilde c\gamma}{1-\gamma}.
\end{align}

The optimality gap in this example is therefore
\begin{align}
    V^\star(A) - V^+_{\mathrm{ac}}(A) = (\tilde c - \delta)\frac{\gamma}{1-\gamma}.
\end{align}
Finally, we solve $\varepsilon_h = 2(1-\delta)\delta\theta$ and pick the smaller solution:
\begin{align}
    \delta = \frac{1-\sqrt{1 - 2\varepsilon_h/\theta}}{2}.
\end{align}
If we set $\theta = 2\varepsilon_h$ and $\tilde c = c + \frac{1}{2}$, then we get
\begin{align}
    V^\star(A) - V^+_{\mathrm{ac}}(A) = \frac{c\gamma}{1-\gamma},
\end{align}
as desired.

As a small extra note, the last step is where the assumption $\varepsilon_h > 0$ becomes necessary, since otherwise the term $2\varepsilon_h / \theta$ (with $\theta = 2\varepsilon_h$) would be undefined.
\end{proof}

\newpage
\subsection{Proof of \cref{thm:suboptact}}
\suboptact*

\begin{proof}[Proof of \cref{thm:suboptact}]
\label{proof:suboptact}
We start by constructing a bound between $\hat{Q}^+_{\mathrm{ac}}$ and $Q^\star_{\mathrm{ac}}$, the solution of the following bellman equation:
\begin{align}
    Q^\star_{\mathrm{ac}}(s_t, a_{t:t+h}) = \mathbb{E}_{s_{t+1:t+h+1} \sim \opP(\cdot \mid s_t, a_{t:t+h})}\left[R_{t:t+h} + \gamma^h \max_{a_{t+h:t+2h}} Q^\star_{\mathrm{ac}}(s_{t+h}, a_{t+h:t+2h})\right].
\end{align}
Intuitively, $Q^\star_{\mathrm{ac}}$ is the Q-function of the optimal action chunking policy $\pi^\star_{\mathrm{ac}}$ that can be learned from $\mathcal{D}$. Because $\mathrm{supp}(\mathcal{D}) \supseteq \mathrm{supp}(\mathcal{D}^\star)$, $\pi^\star_{\mathrm{ac}}$ is at least as good as $\tilde{\pi}_{\mathrm{ac}}$, the action chunking policy obtained by behavior cloning $\mathcal{D}^\star$. Bounding the difference between $\hat{Q}^+_{\mathrm{ac}}$ and $Q^\star_{\mathrm{ac}}$ allows us to leverage the bound in \cref{corollary:vb} to form a bound between $\hat{V}^+_{\mathrm{ac}}$ and $V^\star$.

Since $\mathcal{D}$ is strongly $\varepsilon_h$-open-loop consistent, 
\begin{align}
    D_{\mathrm{TV}}\infdivx*{T(s_{t+h'} \mid s_t, a_{t:t+h'})}{\clP(s_{t+h'} \mid s_t, a_{t:t+h})} \leq \varepsilon_h, \forall h' \in \{1, \cdots, h - 1\}.
\end{align}

Since $\mathcal{D}^\star$ is also strongly $\varepsilon_h$-open-loop consistent, 
\begin{align}
    D_{\mathrm{TV}}\infdivx*{T(s_{t+h'} \mid s_t, a_{t:t+h'})}{P_{\mathcal{D}^\star}(s_{t+h'} \mid s_t, a_{t:t+h})} \leq \varepsilon_h, \forall h' \in \{1, \cdots, h - 1\}.
\end{align}
Using the transitive property of TV distance, we have
\begin{align}
    D_{\mathrm{TV}}\infdivx*{P_{\mathcal{D}}(s_{t+h'} \mid s_t, a_{t:t+h})}{P_{\mathcal{D}^\star}(s_{t+h'} \mid s_t, a_{t:t+h})} \leq 2\varepsilon_h, \forall h' \in \{1, \cdots, h - 1\}.
\end{align}

Now, for the $h$-step reward, we have
\begin{equation}
\begin{aligned}
    \quad &\left|\mathbb{E}_{\clP(\cdot \mid s_t, a_{t:t+h})}\left[R_{t:t+h}\right] - \mathbb{E}_{P_{\mathcal{D}^\star}(\cdot \mid s_t, a_{t:t+h})}\left[R_{t:t+h}\right]\right| \\ &\leq \sum_{h'=1}^{h-1}\left[\gamma^{h'}D_{\mathrm{TV}}\infdivx*{\clP(s_{t+h'} \mid s_t, a_{t:t+h})}{P_{\mathcal{D}^\star}(s_{t+h'} \mid s_t, a_{t:t+h})}\right]
    \\ &\leq \frac{2(\gamma- \gamma^h)\varepsilon_h}{1-\gamma}.
\end{aligned}
\end{equation}

Similarly, for the value $h$-step into the future, we can use \cref{lemma:ttvbf} to obtain the following bound:
\begin{equation}
\begin{aligned}
    &\quad \left|\mathbb{E}_{s_{t+h} \sim \clP(s_{t+h} \mid s_t)} \left[V^\star(s_{t+h})\right] - 
    \mathbb{E}_{s_{t+h} \sim P_{\mathcal{D}^\star}(s_{t+h} \mid s_t)} \left[\hat{V}^+_{\mathrm{ac}}(s_{t+h})\right]\right| \\
    &\leq 2\varepsilon_h + (1-2\varepsilon_h)\sup_{s_{t+h} \in \mathrm{\mathcal{D}^\star}}\left|V^\star(s_{t+h}) - \hat{V}^+_{\mathrm{ac}}(s_{t+h})\right|.
\end{aligned}
\end{equation}
We define $Q^\star(s_t, a_{t:t+h})$ to be
\begin{align}
    Q^\star(s_t, a_{t:t+h}) := \mathbb{E}_{P_{\mathcal{D^\star}}(\cdot \mid s_t, a_{t:t+h})}\left[R_{t:t+h} + \gamma^h V^\star(s_{t+h})\right].
\end{align}
It is clear that
\begin{align}
    V^\star(s_t) = \mathbb{E}_{a_{t:t+h} \sim P_{\mathcal{D}^\star}}\left[Q^\star(s_t, a_{t:t+h})\right].
\end{align}

Combining the bound for the $h$-step reward and the bound on the value for $s_{t+h}$, for all $s_t, a_{t:t+h} \in \mathrm{supp}(P_{\mathcal{D}^\star}(s_t, a_{t:t+h}))$,
\begin{equation}
\begin{aligned}
    &\Delta(s_t, a_{t:t+h}) = Q^\star(s_t, a_{t:t+h}) - \hat{Q}^+_{\mathrm{ac}}(s_t, a_{t:t+h}) \\
    &\leq 2\varepsilon_h \gamma^h + \frac{2(\gamma- \gamma^h)\varepsilon_h}{1-\gamma} + (1-2\varepsilon_h)\gamma^h \left( V^\star(s_{t+h}) - \hat{V}^+_{\mathrm{ac}}(s_{t+h})\right) \\
    &\leq \frac{2\varepsilon_h\gamma}{1-\gamma} + (1-2\varepsilon_h)\gamma^h\left(\mathbb{E}_{P_{\mathcal{D}^\star}}\left[Q^\star(s_{t+h}, a_{t+h:t+2h}) \right] - \sup_{a_{t+h:t+2h}} \hat{Q}^+_{\mathrm{ac}}(s_{t+h}, a_{t+h:t+2h})\right) \\
    &\leq \frac{2\varepsilon_h\gamma}{1-\gamma} + (1-2\varepsilon_h)\gamma^h\left(\mathbb{E}_{P_{\mathcal{D}^\star}}\left[\hat{Q}^+_{\mathrm{ac}}(s_{t+h}, a_{t+h:t+2h}) + \Delta(s_{t+h}, a_{t+h:t+2h}) \right] - \sup_{a_{t+h:t+2h}} \hat{Q}^+_{\mathrm{ac}}(s_{t+h}, a_{t+h:t+2h})\right) \\
    &\leq \frac{2\varepsilon_h\gamma}{1-\gamma} + (1-2\varepsilon_h)\gamma^h\sup_{s_{t+h}, a_{t+h:t+2h}}[\Delta(s_{t+h}, a_{t+h:t+2h})],
\end{aligned}
\end{equation}
which can be recursively expanded to obtain
\begin{align}
    V^\star(s_t) - \hat{V}^+_{\mathrm{ac}}(s_t) \leq \frac{2\varepsilon_h\gamma}{(1-\gamma)(1-(1-2\varepsilon_h)\gamma^h)}.
\end{align}
By \cref{thm:bcvbias}, for all $s_t \in \mathrm{supp}(\mathcal{D})$,
\begin{align}
    \left|\hat{V}^+_{\mathrm{ac}}(s_t) - V^+_{\mathrm{ac}}(s_t)\right| \leq \frac{\varepsilon_h\gamma}{(1-\gamma)(1-(1-\varepsilon_h)\gamma^h)}.
\end{align}
Combining the two inequalities above, for all $s_t \in \mathrm{supp}(\mathcal{D}^\star)$,
\begin{align}
    V^\star(s_t) - V^+_{\mathrm{ac}}(s_t) \leq \frac{\varepsilon_h\gamma}{1-\gamma}\left[\frac{2}{1-(1-2\varepsilon_h)\gamma^h} + \frac{1}{1-(1-\varepsilon_h)\gamma^h}\right].
\end{align}
\end{proof}

\newpage
\subsection{Proof of \cref{thm:suboptactight}}
\suboptactight*

The examples in the following proof of \cref{thm:suboptactight} (available in \cref{proof:suboptactight}) provide insights on the factor of 3 in $V^\star - V^+_{\mathrm{ac}} \leq 3\varepsilon_hH\bar H$ (with $H = 1/(1-\gamma), \bar H= 1/(1-\gamma^h)$) is necessary. In particular, the worst case can be roughly seen as a combination of the two main results that we have presented so far:
\begin{enumerate}
    \item $V^\star - V^\star_{\mathrm{ac}} \approx \varepsilon_hH\bar H$ (\cref{corollary:vb}, \cref{corollary:vbtight}): the optimal action chunking policy is $(\varepsilon_h H^2)$-sub-optimal due to its inability to react to environment stochasticity, quantified by the strongly-$\varepsilon_h$ open-loop consistency of $\mathcal{D}^\star$. 
    \item $V^\star_{\mathrm{ac}} - \hat{V}^+_{\mathrm{ac}} \approx  \varepsilon_hH\bar H$ (a transformation of \cref{thm:bcvbias} and \cref{thm:bcvbiastight} on the optimal action chunking policy $\pi^\star_{\mathrm{ac}}$): the value \emph{under-estimation} bias can incur another factor of $\varepsilon_hH\bar H$ bringing up the sub-optimality of $\hat{V}^+_{\mathrm{ac}}$ to at most $2\varepsilon_hH\bar H$, and finally,
    \item $\hat{V}^+_{\mathrm{ac}} - V^+_{\mathrm{ac}} \approx \varepsilon_h H\bar H$ (\cref{thm:bcvbias}, \cref{thm:bcvbiastight}): the action chunking value function may prefer an \emph{overestimated} action chunking policy $\pi^+_{\mathrm{ac}}$ where its actual value is again $\varepsilon_hH\bar H$ from its estimated value, resulting in a total sub-optimality of $3\varepsilon_hH\bar H$.
\end{enumerate}
Our construction (in the proof of \cref{thm:suboptactight}) directly builds on the above insights by using a 2-part MDP where the first part corresponds to an $(\varepsilon_hH\bar H)$-underestimated action chunking policy that has a $(\varepsilon_hH\bar H)$-optimality gap from the optimal closed-loop policy and the second part corresponds to an $(\varepsilon_hH\bar H)$-overestimated action chunking policy that has a $(3\varepsilon_hH\bar H)$-optimality gap that is preferred by the value function. 

Before we start our main proof, we first introduce a Lemma that helps simplifies the inequalities.
\begin{lemma}[Optimality gap comparator]
\label{lemma:opt-comp}
For any $\tilde \gamma \in [0, 1)$ and $0 < \varepsilon_1 < \varepsilon_2 < 1$,
\begin{align}
    \frac{\varepsilon_1}{1-(1-\varepsilon_1)\tilde \gamma} < \frac{\varepsilon_2}{1-(1-\varepsilon_2)\tilde \gamma}.
\end{align}
\end{lemma}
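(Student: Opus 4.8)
The plan is to prove the claim by reducing it to a single sign condition via cross-multiplication, after first verifying that both denominators are strictly positive. First I would observe that for $\tilde\gamma \in [0,1)$ and $\varepsilon \in (0,1)$ we have $(1-\varepsilon)\tilde\gamma < \tilde\gamma < 1$, hence $1 - (1-\varepsilon)\tilde\gamma > 0$. Applying this with $\varepsilon = \varepsilon_1$ and with $\varepsilon = \varepsilon_2$ shows that both denominators appearing in the statement are positive. This is the only place where the hypotheses $\tilde\gamma < 1$ and $\varepsilon_i \in (0,1)$ are used to license the manipulation that follows.

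Since both denominators are positive, the desired strict inequality is equivalent to the cross-multiplied inequality
\begin{align}
    \varepsilon_1\bigl(1 - (1-\varepsilon_2)\tilde\gamma\bigr) < \varepsilon_2\bigl(1 - (1-\varepsilon_1)\tilde\gamma\bigr).
\end{align}
Expanding both sides, the mixed term $\varepsilon_1\varepsilon_2\tilde\gamma$ cancels, and the difference (right side minus left side) collapses to $(\varepsilon_2 - \varepsilon_1)(1-\tilde\gamma)$. Since $\varepsilon_2 > \varepsilon_1$ and $\tilde\gamma < 1$, this quantity is strictly positive, which is exactly the cross-multiplied inequality; reversing the equivalence then yields the claim.

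Alternatively, I could rewrite the denominator as $1-(1-\varepsilon)\tilde\gamma = (1-\tilde\gamma) + \varepsilon\tilde\gamma$ and hence express the quantity of interest as $\bigl(\tilde\gamma + (1-\tilde\gamma)/\varepsilon\bigr)^{-1}$. Because $1-\tilde\gamma > 0$, the term $(1-\tilde\gamma)/\varepsilon$ is strictly decreasing in $\varepsilon$, so the whole expression is strictly increasing in $\varepsilon$ on $(0,1)$, giving the result directly. I would present the cross-multiplication argument as the main proof and relegate this reciprocal rewrite to a one-line remark.

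There is no genuine obstacle here: the lemma is an elementary monotonicity fact, stated separately only because it streamlines the denominator comparisons that recur in the tightness constructions (e.g.\ \cref{thm:suboptactight}). The only points requiring a small amount of care are \emph{(i)} confirming positivity of the denominators so that cross-multiplication preserves the direction of the inequality, and \emph{(ii)} tracking strictness, which is guaranteed precisely because both $\varepsilon_2 - \varepsilon_1$ and $1 - \tilde\gamma$ are strictly positive.
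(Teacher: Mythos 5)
Your proof is correct and is essentially the paper's argument run in reverse: the paper starts from $0 < (1-\tilde\gamma)(\varepsilon_2-\varepsilon_1)$, expands, and divides by the product of the (positive) denominators, which is exactly your cross-multiplication reduction to $(\varepsilon_2-\varepsilon_1)(1-\tilde\gamma) > 0$. Your secondary remark rewriting the quantity as $\bigl(\tilde\gamma + (1-\tilde\gamma)/\varepsilon\bigr)^{-1}$ is a nice additional observation but not needed.
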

\begin{proof}
\begin{equation}
\begin{aligned}
    0 &< (1-\gamma)(\varepsilon_2 - \varepsilon_1) \\
      &= \varepsilon_2 - \varepsilon_2\tilde \gamma -\varepsilon_1+\varepsilon_1\tilde \gamma \\
      &= \varepsilon_2 - \varepsilon_2\tilde \gamma + \varepsilon_1\varepsilon_2\tilde \gamma -\varepsilon_1+\varepsilon_1\tilde \gamma - \varepsilon_1\varepsilon_2\tilde \gamma \\
      &= \varepsilon_2(1-(1-\varepsilon_1)\tilde \gamma) - \varepsilon_1(1-(1-\varepsilon_2)\tilde \gamma)
\end{aligned}
\end{equation}
Since $1-(1-\varepsilon_1)\tilde \gamma > 0$ and $1-(1-\varepsilon_2)\tilde \gamma > 0$, we can divide both sides by $(1-(1-\varepsilon_1)\tilde \gamma)(1-(1-\varepsilon_2)\tilde \gamma)$ to get
\begin{align}
    0 < \frac{\varepsilon_2}{1-(1-\varepsilon_2)\tilde \gamma} - \frac{\varepsilon_1}{1-(1-\varepsilon_1)\tilde \gamma},
\end{align}
as desired.
\end{proof}

Now, we begin the main proof as follows.

\begin{proof}[Proof of \cref{thm:suboptactight}]
\label{proof:suboptactight}

We prove by constructing the following $(2h+4)$-state MDP where the agent can take any of the three actions $\{0, 1, 2\}$ at each state (see a diagram in \cref{fig:suboptact-example}).

\textbf{Notations:}
we start by introducing some abbreviations for all action chunks that appear in this proof:
\begin{equation}
\begin{aligned}
    a^\star_{t:t+h} &= (0, 0, 0, \cdots, 0) \\
    a^\diamond_{t:t+h} &= (0, 1, 0, \cdots, 0) \\
    a^\bullet_{t:t+h} &= (0, 2, 0, \cdots, 0) \\
    a^\triangle_{t:t+h} &= (1, 1, 1, \cdots, 1) \\
    a^\circ_{t:t+h} &= (1, 0, 1, \cdots, 1) \\
    a^\times_{t:t+h} &= (1, 2, 1, \cdots, 1)
\end{aligned}
\end{equation}
The first three action chunks $a^\star_{t:t+h}, a^\diamond_{t:t+h}, a^\bullet_{t:t+h}$ are only possible in the top branch and the last three action chunks $a^\triangle_{t:t+h}, a^\circ_{t:t+h}, a^\times_{t:t+h}$ are only possible in the bottom branch because the first action in the action chunk deterministically divides it into the two branches.

Among these action chunks, it is clear by inspection that $\pi_{\mathrm{ac}}(X_0) = (0, 0, \cdots, 0)$ is the optimal action chunking policy, and thus we directly use `$\star$' to denote $a^\star_{t:t+h} = (0, 0, \cdots, 0)$. $a^\triangle_{t:t+h}$ is also of great importance: as we will show later, $\pi^+_{\mathrm{ac}}(X_0) = a^\triangle_{t:t+h}$. The \emph{actual} values and \emph{nominal}/\emph{estimated} values for these action chunks are $(V^\star_{\mathrm{ac}}, V^\diamond_{\mathrm{ac}}, V^\bullet_{\mathrm{ac}}, V^\triangle_{\mathrm{ac}}, V^\circ_{\mathrm{ac}}, V^\times_{\mathrm{ac}})$ and $(\hat V^\star_{\mathrm{ac}}, \hat V^\diamond_{\mathrm{ac}}, \hat V^\bullet_{\mathrm{ac}}, \hat V^\triangle_{\mathrm{ac}}, \hat V^\circ_{\mathrm{ac}}, \hat V^\times_{\mathrm{ac}})$ respectively. Much of the focus of this proof is to calculate the optimality gap, which is the difference between the optimal closed-loop value and the action chunking policy value (either estimated or actual):
\begin{align}
    \texttt{actual optimality gap:} \quad V^\star(X_0) - V^{[\cdot]}_{\mathrm{ac}}(X_0) \\
    \texttt{nominal optimality gap:} \quad V^\star(X_0) - \hat V^{[\cdot]}_{\mathrm{ac}}(X_0)
\end{align}

\textbf{High-level proof sketch:}
The MDP contains two branches: a top branch where (as we will show) both the optimal policy $\pi^\star$ and the optimal action chunking policy $\pi^\star_{\mathrm{ac}}$ take, and a bottom branch where (as we will also show) the learned action chunking policy $\pi^+_{\mathrm{ac}}$ takes. The key idea of the construction is that for the top branch, we have
\begin{align}
    V^\star(X_0) - \hat{V}^\star_{\mathrm{ac}}(X_0) \approx \frac{2\varepsilon_h\gamma}{(1-\gamma)(1-(1-2\varepsilon_h)\gamma^h)},
\end{align}
and for the bottom branch, we have
\begin{align}
    \hat{V}^\star_{\mathrm{ac}}(X_0) < \hat{V}^+_{\mathrm{ac}}(X_0) \approx V^+_{\mathrm{ac}}(X_0) + \frac{\varepsilon_h\gamma}{(1-\gamma)(1-(1-\varepsilon_h)\gamma^h)}.
\end{align}
Combining these two together gives
\begin{align}
    V^\star(X_0) - V^+_{\mathrm{ac}}(X_0) \approx \frac{\varepsilon_h\gamma}{1-\gamma}\left[\frac{2}{1-(1-2\varepsilon_h)\gamma^h} + \frac{1}{1-(1-\varepsilon_h)\gamma^h}\right].
\end{align}
We use `$\approx$' because the equalities are not strictly achievable but (as we will show) can be made arbitrarily close.

The proof can be roughly divided into the following steps (we use `$\approx$' to help illustrate the high-level idea below and use more precise argument in the actual proof):
\begin{enumerate}[label=\color{ourmiddle}\texttt{\theenumi. }]
    \item {\color{ourmiddle}\texttt{MDP description}}: we formally describe the transition dynamics $T$ and the reward function $r$ for each state-action pair for both the top and the bottom branches.
    \item {\color{ourmiddle}\texttt{Strong $\varepsilon_h$-open-loop consistency of $\mathcal{D}^\star$:}} we then check the strong open-loop consistency assumption for $\mathcal{D}^\star$.
    \item {\color{ourmiddle}\texttt{Data distribution $\mathcal{D}_{\mathrm{top}}$ for the top branch}}: we use a mixture data distribution from two policies to construct $\mathcal{D}_{\mathrm{top}}$.
    \item {\color{ourmiddle}\texttt{Strong $\varepsilon_h$-open-loop consistency of $\mathcal{D}_{\mathrm{top}}$:}} we then check that the constructed data distribution of the top branch satisfies the strongly open-loop consistency assumption. Note that we can do so separately for the top and the bottom because these two distributions have non-overlapping support in $a_{t:t+h}$.
    \item {\color{ourmiddle}\texttt{The optimality gap and value estimation error for the top branch:}} we prove that $V^\star(X_0) - V^\star_{\mathrm{ac}}(X_0) = \frac{\varepsilon_h\gamma}{(1-\gamma)(1-(1-2\varepsilon_h)\gamma^h)}$ and $V^\star(X_0) - \hat{V}^\star_{\mathrm{ac}}(X_0) = \frac{2\varepsilon_h\gamma}{(1-\gamma)(1-(1-2\varepsilon_h)\gamma^h)}$ and the other two possible action chunks $a^\diamond_{t:t+h} = (0, 1, 0, \cdots)$ and $a^\bullet_{t:t+h} = (0, 2, 0, \cdots)$ both admit lower estimated values compared to $a^\star_{t:t+h}$: $\hat{V}^\diamond_{\mathrm{ac}}(X_0) <\hat{V}^\star_{\mathrm{ac}}(X_0)$ and $\hat{V}^\bullet_{\mathrm{ac}}(X_0) <\hat{V}^\star_{\mathrm{ac}}(X_0)$.
    \item {\color{ourmiddle}\texttt{Data distribution $\mathcal{D}_{\mathrm{bottom}}$ for the bottom branch}}: we again use a mixture data distribution from two different policies to construct $\mathcal{D}_{\mathrm{bottom}}$.
    \item {\color{ourmiddle}\texttt{Strong $\varepsilon_h$-open-loop consistency of $\mathcal{D}_{\mathrm{bottom}}$:}} we then check that the constructed data distribution of the bottom branch satisfies the strongly open-loop consistency assumption.
    \item {\color{ourmiddle}\texttt{The optimality gap and value estimation error for the bottom branch:}} we prove that $V^\star(X_0) - \hat{V}^\triangle_{\mathrm{ac}}(X_0) \approx \frac{2\varepsilon_h\gamma}{(1-\gamma)(1-(1-2\varepsilon_h)\gamma^h)}$ and $\hat{V}^\triangle_{\mathrm{ac}}(X_0) - V^\triangle_{\mathrm{ac}}(X_0) = \frac{\varepsilon_h\gamma}{(1-\gamma)(1-(1-\varepsilon_h)\gamma^h)}$, and the other two possible action chunks $a^\circ_{t:t+h} = (1, 0, 0, \cdots)$ and $a^\times_{t:t+h} = (1, 2, 0, \cdots)$ both admit lower estimated values compared to $a^\triangle_{t:t+h}$: $\hat{V}^\circ_{\mathrm{ac}}(X_0) <\hat{V}^\triangle_{\mathrm{ac}}(X_0)$ and $\hat{V}^\times_{\mathrm{ac}}(X_0) <\hat{V}^\triangle_{\mathrm{ac}}(X_0)$. Moreover $a^\star_{t:t+h}$ also admits a lower estimated value compared to $a^\triangle_{t:t+h}$: $\hat{V}_{\mathrm{ac}}^{\star}(X_0) < \hat{V}_{\mathrm{ac}}^{\triangle}(X_0)$ which proves $\pi^+_{\mathrm{ac}}(X_0) = a^\triangle_{t:t+h}$ and thus concluding our proof: $V^\star(X_0) - V_{\mathrm{ac}}^+(X_0) \approx  \frac{2\varepsilon_h\gamma}{(1-\gamma)(1-(1-2\varepsilon_h)\gamma^h)} + \frac{\varepsilon_h\gamma}{(1-\gamma)(1-(1-\varepsilon_h)\gamma^h)}$.
\end{enumerate}

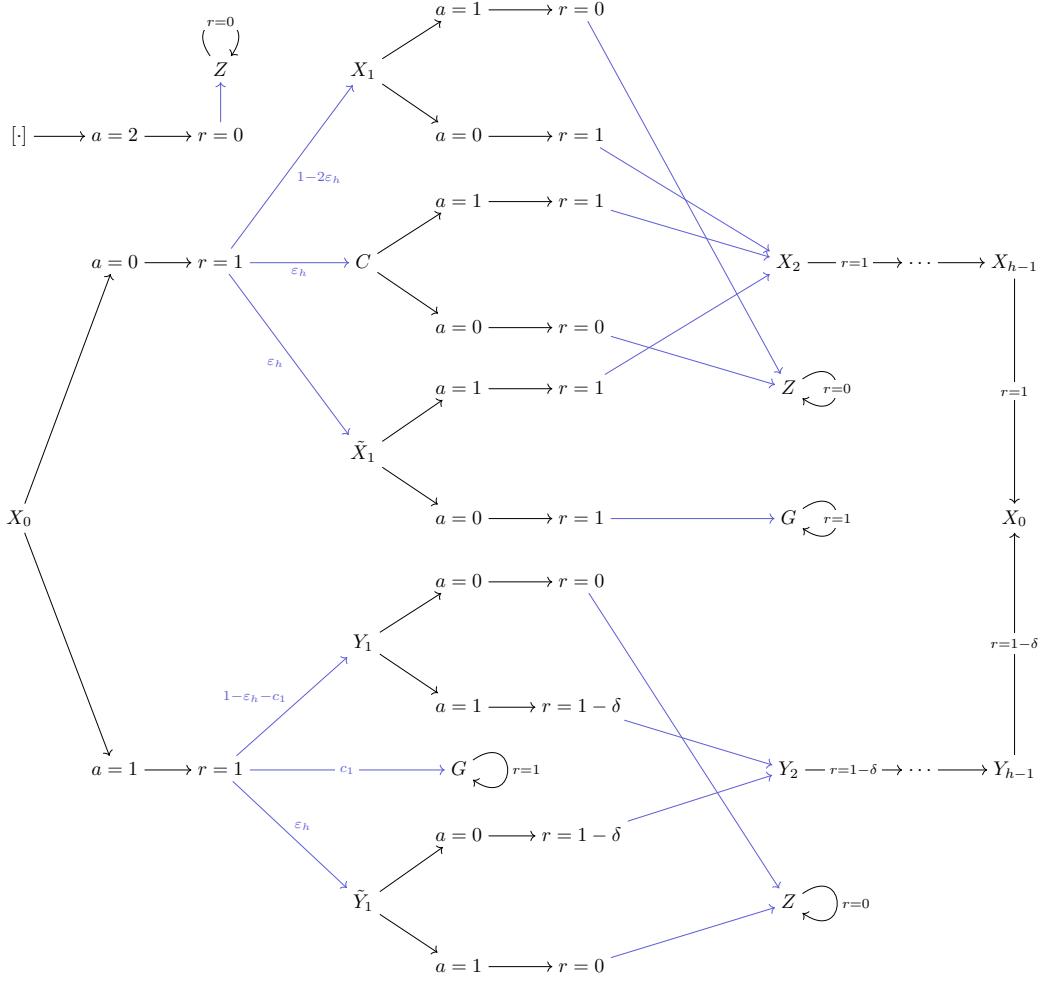
\begin{figure}[h]
\scalebox{0.75}{\begin{tikzcd}[ampersand replacement=\&,cramped]
	\&\&\&\&\& {a=1} \& {r=0} \\
	\&\& Z \&\& {X_1} \\
	{[\cdot]} \& {a=2} \& {r=0} \&\&\& {a=0} \& {r=1} \\
	\&\&\&\&\& {a=1} \& {r=1} \\
	\& {a=0} \& {r=1} \&\& C \&\&\&\&\& {X_2} \&\& \cdots \& {X_{h-1}} \\
	\&\&\&\&\& {a=0} \& {r=0} \\
	\&\&\&\&\& {a=1} \& {r=1} \&\&\& Z \\
	\&\&\&\& {\tilde{X}_1} \\
	{X_0} \&\&\&\&\& {a=0} \& {r=1} \&\&\& G \&\&\& {X_0} \\
	\&\&\&\&\& {a=0} \& {r=0} \\
	\&\&\&\& {Y_1} \\
	\&\&\&\&\& {a=1} \& {r=1-\delta} \\
	\& {a=1} \& {r=1} \&\&\& G \&\&\&\& {Y_2} \&\& \cdots \& {Y_{h-1}} \\
	\&\&\&\&\& {a=0} \& {r=1-\delta} \\
	\&\&\&\& {\tilde Y_1} \&\&\&\&\& Z \\
	\&\&\&\&\& {a=1} \& {r=0}
	\arrow[from=1-6, to=1-7]
	\arrow[draw={rgb,255:red,92;green,92;blue,214}, from=1-7, to=7-10]
	\arrow["{r=0}"{description}, from=2-3, to=2-3, loop, in=55, out=125, distance=10mm]
	\arrow[from=2-5, to=1-6]
	\arrow[from=2-5, to=3-6]
	\arrow[from=3-1, to=3-2]
	\arrow[from=3-2, to=3-3]
	\arrow[draw={rgb,255:red,92;green,92;blue,214}, from=3-3, to=2-3]
	\arrow[from=3-6, to=3-7]
	\arrow[draw={rgb,255:red,92;green,92;blue,214}, from=3-7, to=5-10]
	\arrow[from=4-6, to=4-7]
	\arrow[draw={rgb,255:red,92;green,92;blue,214}, from=4-7, to=5-10]
	\arrow[from=5-2, to=5-3]
	\arrow["{1-2\varepsilon_h}"', color={rgb,255:red,92;green,92;blue,214}, from=5-3, to=2-5]
	\arrow["{\varepsilon_h}"', color={rgb,255:red,92;green,92;blue,214}, from=5-3, to=5-5]
	\arrow["{\varepsilon_h}"', color={rgb,255:red,92;green,92;blue,214}, from=5-3, to=8-5]
	\arrow[from=5-5, to=4-6]
	\arrow[from=5-5, to=6-6]
	\arrow["{r=1}"{description}, from=5-10, to=5-12]
	\arrow[from=5-12, to=5-13]
	\arrow["{r=1}"{description}, from=5-13, to=9-13]
	\arrow[from=6-6, to=6-7]
	\arrow[draw={rgb,255:red,92;green,92;blue,214}, from=6-7, to=7-10]
	\arrow[from=7-6, to=7-7]
	\arrow[draw={rgb,255:red,92;green,92;blue,214}, from=7-7, to=5-10]
	\arrow["{r=0}"{description}, from=7-10, to=7-10, loop, in=325, out=35, distance=10mm]
	\arrow[from=8-5, to=7-6]
	\arrow[from=8-5, to=9-6]
	\arrow[from=9-1, to=5-2]
	\arrow[from=9-1, to=13-2]
	\arrow[from=9-6, to=9-7]
	\arrow[draw={rgb,255:red,92;green,92;blue,214}, from=9-7, to=9-10]
	\arrow["{r=1}"{description}, from=9-10, to=9-10, loop, in=325, out=35, distance=10mm]
	\arrow[from=10-6, to=10-7]
	\arrow[draw={rgb,255:red,92;green,92;blue,214}, from=10-7, to=15-10]
	\arrow[from=11-5, to=10-6]
	\arrow[from=11-5, to=12-6]
	\arrow[from=12-6, to=12-7]
	\arrow[draw={rgb,255:red,92;green,92;blue,214}, from=12-7, to=13-10]
	\arrow[from=13-2, to=13-3]
	\arrow["{1-\varepsilon_h-c_1}", color={rgb,255:red,92;green,92;blue,214}, from=13-3, to=11-5]
	\arrow["{c_1}"{description}, color={rgb,255:red,92;green,92;blue,214}, from=13-3, to=13-6]
	\arrow["{\varepsilon_h}", color={rgb,255:red,92;green,92;blue,214}, from=13-3, to=15-5]
	\arrow["{r=1}", from=13-6, to=13-6, loop, in=325, out=35, distance=10mm]
	\arrow["{r=1-\delta}"{description}, from=13-10, to=13-12]
	\arrow[from=13-12, to=13-13]
	\arrow["{r=1-\delta}"{description}, from=13-13, to=9-13]
	\arrow[from=14-6, to=14-7]
	\arrow[draw={rgb,255:red,92;green,92;blue,214}, from=14-7, to=13-10]
	\arrow[from=15-5, to=14-6]
	\arrow[from=15-5, to=16-6]
	\arrow["{r=0}", from=15-10, to=15-10, loop, in=325, out=35, distance=10mm]
	\arrow[from=16-6, to=16-7]
	\arrow[draw={rgb,255:red,92;green,92;blue,214}, from=16-7, to=15-10]
\end{tikzcd}}
\caption{ \textbf{A $(2h+4)$-state MDP that is constructed to illustrate the MDP constructed to meet the exact upper-bound optimality gap in \cref{thm:suboptact}.} We redraw the same states $Z$, $G$, $X_0$ in multiple locations in the diagram above for better clarity.}
\label{fig:suboptact-example}
\end{figure}

Now we begin our proof as follows.

{\color{ourmiddle}\texttt{Step 1. MDP description (\cref{fig:suboptact-example}).}} 

The transition function $T$ of the MDP is defined as follows (from left to right):
\begin{equation}
\begin{aligned}
    T(Z \mid Z, a) = T(G\mid G, a) &= 1, \quad \forall a, \\
    T(Z \mid s, a=2) &= 1, \quad \forall a, \forall s: s \neq G \\
    T(X_1 \mid X_0, a=0) &= 1-2\varepsilon_h  \\
    T(\tilde X_1 \mid X_0, a=0) &= \varepsilon_h  \\
    T(C \mid X_0, a=0) &= \varepsilon_h  \\
    T(Y_1 \mid X_0, a=1) &= 1-\varepsilon_h-c_1  \\
    T(\tilde Y_1 \mid X_0, a=1) &= \varepsilon_h \\
    T(G \mid X_0, a=1) &= c_1 \\
    T(X_2 \mid X_1, a=0) &= 1  \\
    T(X_2 \mid \tilde X_1, a=1) &= 1  \\
    T(X_2 \mid C, a=1) &= 1  \\
    T(Z \mid X_1, a=1) &= 1  \\
    T(Z \mid C, a=0) &= 1  \\
    T(G \mid \tilde X_1, a=0) &= 1 \\
    T(Y_2 \mid Y_1, a=1) &= 1  \\
    T(Y_2 \mid \tilde Y_1, a=0) &= 1  \\
    T(Z \mid Y_1, a=0) &= 1  \\
    T(Z \mid \tilde Y_1, a=1) &= 1  \\
    T(X_{i+1} \mid X_i, a \in \{0, 1\}) = T(Y_{i+1} \mid Y_i, a \in \{0, 1\}) &= 1, \quad \forall i \in \{2, \cdots, h-2\} \\
    T(X_0 \mid X_{h-1}, a \in \{0, 1\}) = T(Y_0 \mid Y_{h-1}, a \in \{0, 1\}) &= 1 
\end{aligned}
\end{equation}
The reward function is defined as
\begin{equation}
\begin{aligned}
    r(Z, a) &= 0, \quad \forall a \\
    r(G, a) &= 1, \quad \forall a \\
    r(s, a=2) &= 0, \quad \forall s: s \neq G \\
    r(X_0, a=0) = r(X_0, a=1) &= 1, \\
    r(C, a=1) = r(X_1, a=0) = r(\tilde X_1, a \in \{0, 1\}) &= 1, \\
    r(C, a=0) = r(X_1, a=1)  &= 0, \\
    r(Y_1, a=1) = r(\tilde Y_1, a =0) &= 1-\delta, \\
    r(Y_1, a=0) = r(\tilde Y_1, a = 1) &= 0, \\
    r(X_i, a \in \{0, 1\}) &= 1, \quad \forall i \in \{2, \cdots, h-1\} \\
    r(Y_i, a \in \{0, 1\}) &= 1-\delta, \quad \forall i \in \{2, \cdots, h-1\} 
\end{aligned}
\end{equation}

Notably, there are some special states:
\begin{itemize}
    \item State $Z$: a self-looping ``black hole'' state that always gets $0$ reward at each time step and thus has a constant value of $0$.
    \item State $G$: a self-looping ``black hole'' state that always gets $1$ reward at each time step and thus has a constant value of $1/(1-\gamma)$.
    \item State $X_0$: the special state that branches out based on the action taken. The agent periodically visit this state every $h$ steps unless it has been trapped in either $Z$ or $G$. As we proceed in the proof, we will encounter factors in the form of $\frac{1}{1-b\gamma^h}$ in the calculation of the optimality gap. These factors come from the agent looping around and revisiting $X_0$ with $b$-probability each cycle. 
\end{itemize}
These two absorbing states are important because their values sit at the boundary of the value range of our value function $V(s) \in [0, 1/(1-\gamma)]$. Shifting the reaching probability from $Z$ to $G$ or the other way around results in the biggest possible difference in the policy value. Our construction hinges on the constructing $\mathcal{D}$ such that 
\begin{enumerate}
    \item $P_{\mathcal{D}}(\cdot \mid s_t, \pi^\star(s_t))$ and $T(\cdot \mid s_t, \pi^\star(s_t))$ differs by only $\varepsilon_h$ (in TV distance as required by the strongly open-loop consistency assumption) where precisely $\varepsilon_h$ probability mass is moved from reaching state $Z$ to reaching state $G$. This causes the $\hat{V}^\star_{\mathrm{ac}}$ to precisely underestimates the value of $V^\star_{\mathrm{ac}}$ by $\frac{\varepsilon_h\gamma^h}{(1-\gamma)(1-(1-2\varepsilon_h)\gamma^h)}$. It is worth noting that we cannot make the $2\varepsilon_h$ in the denominator $\varepsilon_h$ because $V^\star_{\mathrm{ac}}$ needs to simultaneously maintain a value gap with $V^\star$. If we were to construct an example where $\hat{V}^\star_{\mathrm{ac}}(X_0) - V^\star_{\mathrm{ac}}(X_0) = V^\star_{\mathrm{ac}}$ be $\frac{\varepsilon_h\gamma^h}{(1-\gamma)(1-(1-\varepsilon_h)\gamma^h)}$, it would enforce $V^\star_{\mathrm{ac}}(X_0) = V^\star(X_0)$ because there would be no probability mass left to create the gap between $V^\star$ and $V^\star_{\mathrm{ac}}$. With an extra $\varepsilon_h$ in the denominator, we can also make the optimality gap of $V^\star_{\mathrm{ac}}$ precisely $\frac{\varepsilon_h\gamma^h}{(1-\gamma)(1-(1-2\varepsilon_h)\gamma^h)}$, bringing the combined value gap (between $V^\star$ and $\hat{V}^\star_{\mathrm{ac}}$) up to $\frac{2\varepsilon_h\gamma^h}{(1-\gamma)(1-(1-2\varepsilon_h)\gamma^h)}$.
    \item $P_{\mathcal{D}}(\cdot \mid s_t, \pi^+(s_t))$ and $T(\cdot \mid s_t, \pi^+(s_t))$ differs by only $\varepsilon_h$ (again in TV distance as required by the strongly open-loop consistency assumption) where precisely $\varepsilon_h$ probability mass is moved from reaching state $G$ to reaching state $Z$. This causes the $\hat{V}^+_{\mathrm{ac}}$ to precisely overestimates the value of $V^+_{\mathrm{ac}}$ by $\frac{\varepsilon_h\gamma^h}{(1-\gamma)(1-(1-\varepsilon_h)\gamma^h)}$.
\end{enumerate}

We use a special action $a=2$ where upon taking the action the agent immediately transitions to $Z$ and receives a reward of $0$ (except in $G$). As we will see soon, this action is useful for constructing a data distribution with an easily `controllable' probability of reaching $Z$ for the top branch and an easily `controllable' probability of reaching $G$ for the bottom branch. Before we start constructing $\mathcal{D}$, we first check the condition that $\mathcal{D}^\star$ is strongly $\varepsilon_h$-open-loop consistent.

{\color{ourmiddle}\texttt{Step 2. Strong $\varepsilon_h$-open-loop consistency of $\mathcal{D}^\star$:}} It is clear that one possible $\pi^\star$ that achieves $1/(1-\gamma)$ value is 
\begin{equation}
\begin{aligned}
    \pi^\star(X_i) &= 0 \\
    \pi^\star(C) &= 1 \\
    \pi^\star(\tilde X) &= 0
\end{aligned}
\end{equation}
We can easily check that $\mathcal{D}^\star$ collected by $\pi^\star$ is strongly $\varepsilon_h$-open-loop consistent by observing that the only path that $\pi^\star$ outputs $(0, 1, 0, 0, \cdots)$ has $\varepsilon_h$ probability, which causes the state distribution of $s_{t+1}$ to differ by at most $\varepsilon_h$ under the TV distance (subject to $a = (0, 1, 0, 0, \cdots)$ or $a=(0, 0, 0, \cdots)$ conditioning). This concludes that $\mathcal{D}^\star$ generated by $\pi^\star$ above is strongly $\varepsilon_h$-open-loop consistent. 

Now, depending on the first action $a_t$, the MDP can be decomposed into two parts: \emph{the top} ($a=0$) and \emph{the bottom} ($a=1$). We construct the data distribution for each branch and analyze the \emph{actual} and \emph{nominal} optimality gap for each branch in the following steps.

{\color{ourmiddle}\texttt{Step 3. Data distribution $\mathcal{D}_{\mathrm{top}}$ for the top branch}}:
we use a mixture of the following two policies to construct a strongly $\varepsilon_h$-open-loop consistent $\mathcal{D}_{\mathrm{top}}$.

\emph{Policy $\pi^1_{\mathrm{top}}$}:
\begin{equation}
\begin{aligned}
    \pi^1_{\mathrm{top}}(X_0) = \pi^1_{\mathrm{top}}(C) = \pi^1_{\mathrm{top}}(Z) &= 0, \\
    \pi^1_{\mathrm{top}}(X_1) = \pi^1_{\mathrm{top}}(\tilde X_1) &= 2.
\end{aligned}
\end{equation}
$\pi^1_{\mathrm{top}}$ always take $a=2$ unless it is in state $X_0$, $C$ or $Z$ where it always takes $a=0$. It is clear that this policy only produces two possible action chunks: $a_{t:t+h}=(0, 0, \cdots, 0)$ or $a_{t:t+h}=a^\bullet_{t:t+h}:=(0, 2, 0, \cdots)$. 
We note that the $a_{t:t+h}$ policy always leads to state $Z$:
\begin{align}
    P_{\mathcal{D}_{\pi^1_{\mathrm{top}}}}(s_{t+h}=Z \mid s_t, a_{t:t+h}=0) = 1.
\end{align}

\emph{Policy $\pi^2_{\mathrm{top}}$}:
\begin{equation}
\begin{aligned}
\pi^2_{\mathrm{top}}(X_0) &= 0, \\
    \pi^2_{\mathrm{top}}(\tilde X_1) =\pi^2_{\mathrm{top}}(\tilde C) &= 1, \\
    \pi^2_{\mathrm{top}}(a=0 \mid X_1) &= 1- \delta_G, \\
    \pi^2_{\mathrm{top}}(a=1 \mid X_1) &= \delta_G, 
\end{aligned}
\end{equation}
with some $\delta_G \in (0, 1)$.
$\pi^2_{\mathrm{top}}$ can also only produce two possible action chunks: $a_{t:t+h} = (0, 0, \cdots, 0)$ or $a_{t:t+h} = a^\diamond_{t:t+h} := (0, 1, 0, \cdots, 0)$.

The distribution of $s_{t+h}$ conditioned on $a_{t:t+h} = 0$ is
\begin{equation}
\begin{aligned}
    P_{\mathcal{D}_{\pi^2_{\mathrm{top}}}}(s_{t+h}=Z \mid s_t, a_{t:t+h}=0) &= 0, \\
    P_{\mathcal{D}_{\pi^2_{\mathrm{top}}}}(s_{t+h}=G \mid s_t, a_{t:t+h}=0) &= 0, \\
    P_{\mathcal{D}_{\pi^2_{\mathrm{top}}}}(s_{t+h}=X_0 \mid s_t, a_{t:t+h}=0) &= 1.
\end{aligned}
\end{equation}

\emph{Mixing $\pi^1_{\mathrm{top}}$ and $\pi^2_{\mathrm{top}}$:}
Let $\mathcal{D}_{\mathrm{top}}$ be a mixture of $\mathcal{D}_{\pi^1_{\mathrm{top}}}$ and $\mathcal{D}_{\pi^2_{\mathrm{top}}}$: 
\begin{align}
    P_{\mathcal{D}_{\mathrm{top}}} = (1-\varsigma) P_{\mathcal{D}^1_{\mathrm{top}}} + \varsigma P_{\mathcal{D}^2_{\mathrm{top}}},
\end{align}
where
\begin{align}
    \varsigma = \frac{1}{2(1-\delta_G)+1}.
\end{align}
It is clear that $0<\varsigma<1$ (because $\delta_G \in (0, 1)$), making it valid mixing ratio.

We now compute the marginal state distribution of the mixture by first analyzing the action probability:
\begin{equation}
\begin{aligned}
    P_{\mathcal{D}^{1}_{\mathrm{top}}}(a^\star_{t:t+h} \mid s_t) &= \varepsilon_h, \\
    P_{\mathcal{D}^{2}_{\mathrm{top}}}(a^\star_{t:t+h} \mid s_t) &= (1-2\varepsilon_h)(1-\delta_G).
\end{aligned}
\end{equation}
The state marginals are then
\begin{equation}
\begin{aligned}
    P_{\mathcal{D}_{\mathrm{top}}}(s_{t+h} = Z \mid s_t, a^\star_{t:t+h}) &= \frac{P_{\mathcal{D}_{\mathrm{top}}}(s_{t+h}=Z, a^\star_{t:t+h} \mid s_t)}{P_{\mathcal{D}_{\mathrm{top}}}(a^\star_{t:t+h} \mid s_t)} \\
    &= \frac{(1-\varsigma) P_{\mathcal{D}^{1}_{\mathrm{top}}}(a^\star_{t:t+h} \mid s_t)}{(1-\varsigma) P_{\mathcal{D}^{1}_{\mathrm{top}}}(a^\star_{t:t+h} \mid s_t) + \varsigma P_{\mathcal{D}^{2}_{\mathrm{top}}}(a^\star_{t:t+h} \mid s_t)} \\
    &=\frac{\varepsilon_h(1-\varsigma)}{\varepsilon_h(1-\varsigma) + (1-2\varepsilon_h)(1-\delta_G)\varsigma} \\
    &= 2\varepsilon_h.
\end{aligned}
\end{equation}

Therefore,
\begin{equation}
\label{eq:suboptact-dtop}
\begin{aligned}
    P_{\mathcal{D}_{\mathrm{top}}}(s_{t+h} = Z \mid s_t, a^\star_{t:t+h}) &= 2\varepsilon_h, \\
    P_{\mathcal{D}_{\mathrm{top}}}(s_{t+h} = X_0 \mid s_t, a^\star_{t:t+h}) &= 1-2\varepsilon_h, \\
    P_{\mathcal{D}_{\mathrm{top}}}(s_{t+h} = G \mid s_t, a^\star_{t:t+h}) &= 0.
\end{aligned}
\end{equation}

{\color{ourmiddle}\texttt{Step 4. Strong $\varepsilon_h$-open-loop consistency of $\mathcal{D}_{\mathrm{top}}$:}}
Now, we check for strong open-loop consistency for the three possible action chunks on the top branch:
\begin{equation}
\begin{aligned}
    a^\star_{t:t+h} &= (0, 0, 0, \cdots) \\
    a^\diamond_{t:t+h} &= (0, 1, 0, \cdots) \\
    a^\bullet_{t:t+h} &= (0, 2, 0, \cdots)
\end{aligned}
\end{equation}

\emph{For $a^\star_{t:t+h} = 0$,}
we can compute open-loop marginal state distribution as follows:
\begin{equation}
\label{eq:suboptact-dtopt}
\begin{aligned}
    T(s_{t+h} = Z \mid s_t, a^\star_{t:t+h}) &= \varepsilon_h, \\
    T(s_{t+h} = X_0 \mid s_t, a^\star_{t:t+h}) &= 1-2\varepsilon_h,  \\
    T(s_{t+h} = G \mid s_t, a^\star_{t:t+h}) &= \varepsilon_h.
\end{aligned}
\end{equation}
Combining this with the data distribution calculated in \cref{eq:suboptact-dtop}, it is clear that
\begin{align}
    D_{\mathrm{TV}}\infdivx*{ T(s_{t+h}\mid s_t, a_{t:t+h}=0)}{P_{\mathcal{D}_{\mathrm{top}}}(s_{t+h}\mid s_t, a_{t:t+h}=0)} = \varepsilon_h.
\end{align}
We can repeat the same procedure to show that
\begin{align}
    D_{\mathrm{TV}}\infdivx*{ T(s_{t+h'}\mid s_t, a_{t:t+h'}=0)}{P_{\mathcal{D}_{\mathrm{top}}}(s_{t+h'}\mid s_t, a_{t:t+h}=0)} = \varepsilon_h, \quad \forall h' \in \{1, \cdots, h-1\}
\end{align}
because the only difference in these distributions is that they occupy $s_{t+h'} = X_{h'}$ with $2\varepsilon_h$ probability instead of $s_{t+h} = X_0$ with $2\varepsilon_h$ probability.

\emph{For $a^\bullet_{t:t+h} = (0, 2, 0, \cdots)$,} it is clear that 
\begin{align}
\label{eq:suboptact-blsoc}
    D_{\mathrm{TV}}\infdivx*{T(s_{t+h'} \mid s_t, a_{t:t+h}=a^\bullet_{t:t+h})}{P_{\mathcal{D}_{\mathrm{top}}}(s_{t+h'} \mid s_t, a_{t:t+h} =a^\bullet_{t:t+h})} = \varepsilon_h
\end{align}
holds for any $h' \in \{1, 2, \cdots, h\}$
since the only difference between these two distributions is the $\varepsilon_h$-probability path (\emph{i.e.}, $X_0 \rightarrow C \rightarrow Z$ where the probability is under $T(\cdot \mid s_t, a^\bullet_{t:t+h})$).

\emph{For $a^\diamond_{t:t+h} = (0, 1, 0, \cdots)$,}
we first compute the marginal state distributions:
\begin{equation}
\begin{aligned}
    P_{\mathcal{D}_{\mathrm{top}}}(s_{t+h} = Z \mid s_t, a^\diamond_{t:t+h}) &= \frac{(1-2\varepsilon_h)\delta_G}{2\varepsilon_h + (1-2\varepsilon_h)\delta_G}, \\
    P_{\mathcal{D}_{\mathrm{top}}}(s_{t+h} = X_0 \mid s_t, a^\diamond_{t:t+h}) &= \frac{2\varepsilon_h}{2\varepsilon_h + (1-2\varepsilon_h)\delta_G}, \\
    P_{\mathcal{D}_{\mathrm{top}}}(s_{t+h} = G \mid s_t, a^\diamond_{t:t+h}) &= 0. \\
    P_{\mathcal{D}_{\mathrm{top}}}(s_{t+1} = X_1 \mid s_t, a^\diamond_{t:t+h}) &= \frac{(1-2\varepsilon_h)\delta_G}{2\varepsilon_h + (1-2\varepsilon_h)\delta_G}. \\
    P_{\mathcal{D}_{\mathrm{top}}}(s_{t+1} = \tilde{X}_1\mid s_t, a^\diamond_{t:t+h}) &= \frac{\varepsilon_h}{2\varepsilon_h + (1-2\varepsilon_h)\delta_G}. \\
    P_{\mathcal{D}_{\mathrm{top}}}(s_{t+1} = C\mid s_t, a^\diamond_{t:t+h}) &= \frac{\varepsilon_h}{2\varepsilon_h + (1-2\varepsilon_h)\delta_G}. \\
\end{aligned}
\end{equation}
We can also compute the open-loop marginal state distribution as follows:
\begin{equation}
\label{eq:dtopzc3open}
\begin{aligned}
    T(s_{t+h} = Z \mid s_t, a^\diamond_{t:t+h}) &=1 -2 \varepsilon_h \\
    T(s_{t+h} = X_0 \mid s_t, a^\diamond_{t:t+h}) &= 2\varepsilon_h  \\
    T(s_{t+h} = G \mid s_t, a^\diamond_{t:t+h}) &= 0. \\
    T(s_{t+1} = X_1 \mid s_t, a^\diamond_{t:t+h}) &= 1-2\varepsilon_h. \\
    T(s_{t+1} = \tilde X_1 \mid s_t, a^\diamond_{t:t+h}) &= \varepsilon_h. \\
    T(s_{t+1} = C \mid s_t, a^\diamond_{t:t+h}) &= \varepsilon_h. 
\end{aligned}
\end{equation}
Let $c_3$ be any value that satisfies $c_3\in (0, \varepsilon_h/2)$, we can set
\begin{align}
\label{eq:delta-G}
    \delta_G = \frac{\varepsilon_h(1-2\varepsilon_h-2c_3)}{(\varepsilon_h + c_3)(1-2\varepsilon_h)},
\end{align}
such that
\begin{equation}
\label{eq:dtopzc3closed}
\begin{aligned}
    P_{\mathcal{D}_{\mathrm{top}}}(s_{t+h} = Z \mid s_t, a^\diamond_{t:t+h}) &=1 - 2\varepsilon_h-2c_3, \\
    P_{\mathcal{D}_{\mathrm{top}}}(s_{t+h} = X_0 \mid s_t, a^\diamond_{t:t+h}) &= 2\varepsilon_h + 2c_3, \\
    P_{\mathcal{D}_{\mathrm{top}}}(s_{t+h} = G \mid s_t, a^\diamond_{t:t+h}) &= 0, \\
    P_{\mathcal{D}_{\mathrm{top}}}(s_{t+1} = X_1 \mid s_t, a^\diamond_{t:t+h}) &= 1 - 2\varepsilon_h - 2c_3, \\
    P_{\mathcal{D}_{\mathrm{top}}}(s_{t+1} = \tilde{X}_1\mid s_t, a^\diamond_{t:t+h}) &= \varepsilon_h + c_3, \\
    P_{\mathcal{D}_{\mathrm{top}}}(s_{t+1} = C\mid s_t, a^\diamond_{t:t+h}) &= \varepsilon_h + c_3. \\
\end{aligned}
\end{equation}
It is easy to check that $0 < \delta_G < 1$ (a valid probability) because in \cref{eq:delta-G}, each term in the numerator has a larger term in the denominator (\emph{i.e.}, $\varepsilon_h < \varepsilon_h + c_3$ and $1-2\varepsilon_h-2c_3 < 1-2\varepsilon_h$).

Now, for all $h' \in \{1, 2, \cdots, h\}$, using the values calculated in \cref{eq:dtopzc3open,eq:dtopzc3closed}, we have
\begin{align}
    D_{\mathrm{TV}}\infdivx*{T(s_{t+h'} \mid s_t, a_{t:t+h'}=a^\diamond_{t:t+h'})}{P_{\mathcal{D}_{\mathrm{top}}}(s_{t+h'} \mid s_t, a_{t:t+h} =a^\diamond_{t:t+h})} = 2c_3.
\end{align}
Since $c_3 < \varepsilon_h/2$, the strong open-loop consistency assumption holds for $a^\diamond_{t:t+h}$ as well. 

{\color{ourmiddle}\texttt{Step 5. The optimality gap and value estimation error for the top branch:}}
Now we can compute the optimality gap for the estimated value for $a^\diamond_{t:t+h}$:
\begin{align}
    V^\star(X_0) - \hat{V}^{\diamond}_{\mathrm{ac}}(X_0) &= \frac{(1-2\varepsilon_h-2c_3)\gamma}{(1-\gamma)(1- 2(\varepsilon_h+c_3)\gamma^h)}, 
\end{align}
where the $h$-step reward sub-optimality gap is due to the agent reaching $Z$ with $(1-2\varepsilon_h-2c_3)$ probability, and the $h$-step distribution gap is reflected in the $(1-2(\varepsilon_h+c_3)\gamma^h)$ term at bottom because the probability of reaching $X_0$ after $h$ steps is $2(\varepsilon_h+c_3)$.

Similarly, we can compute the optimality gap for $V^\star_{\mathrm{ac}}$ and $\hat V^\star_{\mathrm{ac}}$:
\begin{equation}
\begin{aligned}
    V^\star(X_0) - V^\star_{\mathrm{ac}}(X_0) &= \varepsilon_h\frac{\gamma-\gamma^h}{1-\gamma} + \frac{\varepsilon_h\gamma^h}{1-\gamma} + \gamma^h(1-2\varepsilon_h)(V^\star-\hat{V}_{\mathrm{ac}}) \\
    &= \frac{\varepsilon_h\gamma}{(1-\gamma)(1-(1-2\varepsilon_h)\gamma^h)},
\end{aligned}
\end{equation}
\begin{equation}
\begin{aligned}
    V^\star(X_0) - \hat{V}^\star_{\mathrm{ac}}(X_0) &= \frac{2\varepsilon_h(\gamma-\gamma^h)}{1-\gamma} + \frac{2\varepsilon_h\gamma^h}{1-\gamma}\gamma^h(1-2\varepsilon_h)(V^\star-\hat{V}_{\mathrm{ac}}) \\
    &= \frac{2\varepsilon_h\gamma}{(1-\gamma)(1-(1-2\varepsilon_h)\gamma^h)}.
\end{aligned}
\end{equation}

Now, we observe that
\begin{align}
    1-2\varepsilon_h-2c_3 > 1-3\varepsilon_h > 2\varepsilon_h,
\end{align}
where the first inequality is due to $c_3\in(0, \varepsilon_h/2)$ and the second inequality is due to $\varepsilon_h \in (0, 1/5)$ in our assumption. This allows us to lower-bound the estimated optimality gap for $a^\diamond_{t:t+h}$ as follows:
\begin{equation}
\begin{aligned}
    V^\star(X_0) - \hat{V}^{\diamond}_{\mathrm{ac}}(X_0)
    &= \frac{(1-2\varepsilon_h-2c_3)\gamma}{(1-\gamma)(1- 2(\varepsilon_h+c_3)\gamma^h)} \\ 
    &> \frac{2\varepsilon_h\gamma}{(1-\gamma)(1- (1-2\varepsilon_h)\gamma^h)}  \\
    &=V^\star(X_0) - \hat{V}^\star_{\mathrm{ac}}(X_0),
\end{aligned}
\end{equation}
where the inequality is obtained by triggering \cref{lemma:opt-comp} (\emph{e.g.}, by setting $\varepsilon_1 = 2\varepsilon_h, \varepsilon_2 =(1-2\varepsilon_h-2c_3), \tilde\gamma = \gamma^h$).
The bound above rules out the possibility of $a^\diamond_{t:t+h}$ being picked by $\hat{\pi}^+_{\mathrm{ac}}$ because it has a lower estimated value compared to $a^\star_{t:t+h}$. 

Finally, for $a^\bullet_{t:t+h}$, since it is correlated with $s_{t+h}=Z$ and receives no reward except the first step in $\mathcal{D}_{\mathrm{top}}$, the estimated value is just $1$, being trivially smaller than $\hat{V}^\star_{\mathrm{ac}}(X_0)$ and would never get picked by $\hat{\pi}^+_{\mathrm{ac}}$. Up to now, we have finished our data distribution construction and analysis for the top branch. We summarize the key intermediate results as the remark below:
\begin{remark}[Intermediate results from Step 1-4] The optimal action chunk is $a^\star_{t:t+h}$ and the estimated values for the two other possible action chunks $a^\bullet_{t:t+h}$, $a^\diamond_{t:t+h}$ are smaller than that of $a^\star_{t:t+h}$:
\begin{align}
   \hat{V}^{\bullet}_{\mathrm{ac}}(X_0) < \hat{V}_{\mathrm{ac}}^\diamond(X_0) < \hat{V}^\star_{\mathrm{ac}}(X_0) = V^\star(X_0) - \frac{2\varepsilon_h\gamma}{(1-\gamma)(1-(1-2\varepsilon_h)\gamma^h)}.
\end{align}
In addition, both $\mathcal{D}_{\mathrm{top}}$ and $\mathcal{D}^\star$ are strongly $\varepsilon_h$-open-loop consistent.
\end{remark}

Next, we move on to the bottom branch.

{\color{ourmiddle}\texttt{Step 6. Data distribution $\mathcal{D}_{\mathrm{bottom}}$ for the bottom branch}}:
For the bottom, we again use two policies.

{\color{ourmiddle}\emph{Policy $\pi^1_{\mathrm{bottom}}$:}}

\begin{equation}
\begin{aligned}
    \pi^1_{\mathrm{bottom}}(X_0) = \pi^1_{\mathrm{bottom}}(G) = \pi^1_{\mathrm{bottom}}(Z) &= 1, \\
    \pi^1_{\mathrm{bottom}}(Y_1) = \pi^1_{\mathrm{bottom}}(\tilde Y_1) &= 2.
\end{aligned}
\end{equation}

$\pi^1_{\mathrm{bottom}}$ takes $a=1$ at $X_0$ and $G$ and $Z$, and takes $a=2$ otherwise (at $Y_1$, $\tilde Y_1$). It is clear that this policy only produces two possible action chunks: $a^\triangle_{t:t+h} = (1, 1, 1, \cdots)$ or $a^\times_{t:t+h} = (1, 2, 1, \cdots)$. 

{\color{ourmiddle}\emph{Policy $\pi^2_{\mathrm{bottom}}$:}}

\begin{equation}
\begin{aligned}
    \pi^2_{\mathrm{bottom}}(X_0) &= 1, \\
    \pi^2_{\mathrm{bottom}}(a=0 \mid Y_1) &= \delta_Z,  \\
    \pi^2_{\mathrm{bottom}}(a=1 \mid Y_1) &= 1-\delta_Z, \\
    \pi^2_{\mathrm{bottom}}(\tilde Y_1) &= 0, \\
    \pi^2_{\mathrm{bottom}}(Y_i) & = 1, \quad \forall i\in \{2, \cdots, h-1\},
\end{aligned}
\end{equation}
where $\delta_Z \in (0, 1)$ and we shall specify the exact value of $\delta_Z$ shortly.

$\pi^2_{\mathrm{bottom}}$ takes $a=1$ when it is at $Y_i$ and takes either $a=0$ (with $\delta_Z$ probability) or $a=1$ (with $1-\delta_Z$ probability) when it is at $\tilde Y_1$. It is clear that this policy only produces two possible action chunks: $a^\triangle_{t:t+h} = (1, 1, 1, \cdots)$ or $a^\circ_{t:t+h} = (1, 0, 1, \cdots)$. 

Now, we observe that the marginal state distributions for both policies conditioned on $a^\triangle_{t:t+h}$ are independent of $c_1$ and $\delta_Z$ because the action chunk only appears when $\pi^1_{\mathrm{bottom}}$ reaches $G$ and when $\pi^2_{\mathrm{bottom}}$ reaches $X_0$. More specifically, 
\begin{align}
    P_{\mathcal{D}^1_{\mathrm{bottom}}}(s_{t+1} = G \mid s_t, a^\triangle_{t:t+h}) =P_{\mathcal{D}^1_{\mathrm{bottom}}}(s_{t+h} = G \mid s_t, a^\triangle_{t:t+h}) &= 1, \\
    P_{\mathcal{D}^2_{\mathrm{bottom}}}(s_{t+i} = X_i \mid s_t, a^\triangle_{t:t+h}) = P_{\mathcal{D}^2_{\mathrm{bottom}}}(s_{t+h} = X_0 \mid s_t, a^\triangle_{t:t+h}) &= 1, \forall i\in \{1, \cdots, h-1\}.
\end{align}
We can now mix $\mathcal{D}^1_{\mathrm{bottom}}$ and $\mathcal{D}^2_{\mathrm{bottom}}$ with an appropriate ratio to control the state marginals for $s_{t:t+h} = G$ and $s_{t:t+h} = X_0$ arbitrarily ($s_{t:t+h} = Z$ stays at $0$ because none of the policies take/have taken $a^\triangle_{t:t+h}$ when they reach $Z$).

{\color{ourmiddle}\emph{Mixing $\pi^1_{\mathrm{bottom}}$ and $\pi^2_{\mathrm{bottom}}$:}} Let $\mathcal{D}_{\mathrm{bottom}}$ be a mixture of $\mathcal{D}^1_{\mathrm{bottom}}$ and $\mathcal{D}^2_{\mathrm{bottom}}$:
\begin{align}
    P_{\mathcal{D}_{\mathrm{bottom}}} = (1-\vartheta) P_{\mathcal{D}^1_{\mathrm{bottom}}} + \vartheta P_{\mathcal{D}^2_{\mathrm{bottom}}},
\end{align}
where we set the mixing ratio to be
\begin{align}
    \vartheta &= \frac{c_1}{c_1 + (1-\delta_Z)(\varepsilon_h + c_1)}.
\end{align}
This mixing ratio helps the calculations to be simpler later on.

We can now compute the marginal state distribution of the mixture. We start by analyzing the action probability:
\begin{equation}
\begin{aligned}
    P_{\mathcal{D}^{1}_{\mathrm{bottom}}}(a^\triangle_{t:t+h} \mid s_t) &= c_1, \\
    P_{\mathcal{D}^{2}_{\mathrm{bottom}}}(a^\triangle_{t:t+h} \mid s_t) &= (1-\varepsilon_h-c_1)(1-\delta_Z).
\end{aligned}
\end{equation}
The state marginal is then
\begin{equation}
\begin{aligned}
    P_{\mathcal{D}_{\mathrm{bottom}}}(s_{t+h} = X_0 \mid s_t, a^\triangle_{t:t+h}) &= \frac{P_{\mathcal{D}_{\mathrm{bottom}}}(s_{t+h}=X_0, a^\triangle_{t:t+h} \mid s_t)}{P_{\mathcal{D}_{\mathrm{bottom}}}(a^\triangle_{t:t+h} \mid s_t)} \\
    &= \frac{\vartheta P_{\mathcal{D}^{2}_{\mathrm{bottom}}}(a^\triangle_{t:t+h} \mid s_t)}{(1-\vartheta) P_{\mathcal{D}^{1}_{\mathrm{bottom}}}(a^\triangle_{t:t+h} \mid s_t) + \vartheta P_{\mathcal{D}^{2}_{\mathrm{bottom}}}(a^\triangle_{t:t+h} \mid s_t)} \\
    &=\frac{(1-\varepsilon_h-c_1)(1-\delta_Z)\vartheta}{c_1(1-\vartheta) + (1-\varepsilon_h-c_1)(1-\delta_Z)\vartheta} \\
    &= 1-\varepsilon_h - c_1.
\end{aligned}
\end{equation}

We can use it to deduce the rest of the marginals as follows:
\begin{equation}
\label{eq:suboptact-dbottom}
\begin{aligned}
    P_{\mathcal{D}_{\mathrm{bottom}}}(s_{t+h} = G \mid s_t, a^\triangle_{t:t+h}) &= \varepsilon_h + c_1, \quad \forall h' \in \{1, \cdots, h-1\},  \\
    P_{\mathcal{D}_{\mathrm{bottom}}}(s_{t+h} = X_0 \mid s_t, a^\triangle_{t:t+h}) &= 1-\varepsilon_h-c_1, \\
    P_{\mathcal{D}_{\mathrm{bottom}}}(s_{t+h} = Z \mid s_t, a^\triangle_{t:t+h}) &= 0, \\
    P_{\mathcal{D}_{\mathrm{bottom}}}(s_{t+h'} = Y_{h'} \mid s_t, a^\triangle_{t:t+h}) &= 1-\varepsilon_h-c_1, \quad \forall h' \in \{1, \cdots, h-2\}, \\
    P_{\mathcal{D}_{\mathrm{bottom}}}(s_{t+1} = \tilde Y_1 \mid s_t, a^\triangle_{t:t+h}) &= 0.
\end{aligned}
\end{equation}

Up to now, we have established $\mathcal{D}_{\mathrm{bottom}}$ and we are ready to check the strong open-loop consistency.

{\color{ourmiddle}\texttt{Step 7. Strong $\varepsilon_h$-open-loop consistency of $\mathcal{D}_{\mathrm{bottom}}$:}}

{\color{ourmiddle}\emph{For $a^\triangle_{t:t+h} = (1, 1, \cdots)$}}, we can compute the open-loop marginals as follows:
\begin{equation}
\begin{aligned}
    T(s_{t+h'} = G \mid s_t, a^\triangle_{t:t+h}) &= c_1, \quad \forall h' \in \{1, \cdots, h-1\}, \\
    T(s_{t+h} = X_0 \mid s_t, a^\triangle_{t:t+h}) &= 1-\varepsilon_h-c_1, \\
    T(s_{t+h} = Z \mid s_t, a^\triangle_{t:t+h}) &= \varepsilon_h.\\
    T(s_{t+h'} = Y_{h'} \mid s_t, a^\triangle_{t:t+h}) &= 1-\varepsilon_h-c_1, \quad \forall h' \in \{1, \cdots, h-2\}\\
    T(s_{t+1} = \tilde Y_1 \mid s_t, a^\triangle_{t:t+h}) &= \varepsilon_h.
\end{aligned}
\end{equation}
Combining it with the marginals calculated in \cref{eq:suboptact-dbottom}, it is clear that for all $h' \in \{1, \cdots, h-1\}$,
\begin{align}
    D_{\mathrm{TV}}\infdivx*{T(s_{t+h'} \mid s_t, a_{t:t+h'}=a^+_{t:t+h'})}{P_{\mathcal{D}_{\mathrm{bottom}}}(s_{t+h'} \mid s_t, a_{t:t+h} =a^+_{t:t+h})} = \varepsilon_h,
\end{align}
satisfying the open-loop consistency.

{\color{ourmiddle}\emph{For $a^\times_{t:t+h} = (1, 2, 1, \cdots)$}}, the data and open-loop state marginals are
\begin{equation}
\begin{aligned}
    P_{\mathcal{D}_{\mathrm{bottom}}}(s_{t+h} = Z \mid s_t, a^\times_{t:t+h}) &= 1, \\
    P_{\mathcal{D}_{\mathrm{bottom}}}(s_{t+1} = Y_1 \mid s_t, a^\times_{t:t+h}) &= \frac{1-\varepsilon_h-c_1}{1-c_1}, \\
    P_{\mathcal{D}_{\mathrm{bottom}}}(s_{t+1} = \tilde Y_1 \mid s_t, a^\times_{t:t+h}) &= \frac{\varepsilon_h}{1-c_1}, \\
    T(s_{t+h} = Z \mid s_t, a^\times_{t:t+h}) &= 1- c_1, \\
    T(s_{t+h} = G \mid s_t, a^\times_{t:t+h}) &= c_1, \\
    T(s_{t+1} = Y_1 \mid s_t, a^\times_{t:t+h}) &= 1-\varepsilon_h-c_1, \\
    T(s_{t+1} = \tilde Y_1 \mid s_t, a^\times_{t:t+h}) &= \varepsilon_h,\\
    T(s_{t+1} = G \mid s_t, a^\times_{t:t+h}) &= c_1.
\end{aligned}
\end{equation}
This allows us to bound the TV distance for all $h' \in \{1, \cdots, h-1\}$ as
\begin{align}
    D_{\mathrm{TV}}\infdivx*{T(s_{t+h'} \mid s_t, a_{t:t+h'}=a^\times_{t:t+h'})}{P_{\mathcal{D}_{\mathrm{bottom}}}(s_{t+h'} \mid s_t, a_{t:t+h} =a^\times_{t:t+h})} \leq \frac{c_1}{1-c_1}.
\end{align}
Since $c_1 < \varepsilon_h/2 < 1/10$,
\begin{align}
    \frac{c_1}{1-c_1} < \frac{10}{9}c_1 < 5\varepsilon_h/9 < \varepsilon_h,
\end{align}
satisfying the strong open-loop consistency assumption.

{\color{ourmiddle}\emph{For $a^\circ_{t:t+h} = (1, 0, 1, \cdots)$}}, we first compute the state marginals in $\mathcal{D}_{\mathrm{bottom}}$ as follows:
\begin{equation}
\begin{aligned}
    P_{\mathcal{D}_{\mathrm{bottom}}}(s_{t+h} = Z \mid s_t, a^\circ_{t:t+h}) &= \frac{(1-\varepsilon_h-c_1)\delta_Z}{\varepsilon_h + (1-\varepsilon_h-c_1)\delta_Z}, \\
    P_{\mathcal{D}_{\mathrm{bottom}}}(s_{t+h} = X_0 \mid s_t, a^\circ_{t:t+h}) &= \frac{\varepsilon_h}{\varepsilon_h + (1-\varepsilon_h-c_1)\delta_Z}, \\
    P_{\mathcal{D}_{\mathrm{bottom}}}(s_{t+1} = Y_1 \mid s_t, a^\circ_{t:t+h}) &= \frac{(1-\varepsilon_h-c_1)\delta_Z}{\varepsilon_h + (1-\varepsilon_h-c_1)\delta_Z}. \\
    P_{\mathcal{D}_{\mathrm{bottom}}}(s_{t+1} = \tilde{Y}_1\mid s_t, a^\circ_{t:t+h}) &= \frac{\varepsilon_h}{\varepsilon_h + (1-\varepsilon_h-c_1)\delta_Z}.
\end{aligned}
\end{equation}

We can also compute the open-loop marginal state distribution as follows:
\begin{equation}
\begin{aligned}
    T(s_{t+h} = Z \mid s_t, a^\circ_{t:t+h}) &= 1-\varepsilon_h-c_1, \\
    T(s_{t+h} = X_0 \mid s_t, a^\circ_{t:t+h}) &= \varepsilon_h,  \\
    T(s_{t+h} = G \mid s_t, a^\circ_{t:t+h}) &= c_1, \\
    T(s_{t+1} = Y_1 \mid s_t, a^\circ_{t:t+h}) &= 1-\varepsilon_h-c_1, \\
    T(s_{t+1} = \tilde Y_1 \mid s_t, a^\circ_{t:t+h}) &= \varepsilon_h,\\
    T(s_{t+1} = G \mid s_t, a^\circ_{t:t+h}) &= c_1.
\end{aligned}
\end{equation}
Let $c_4 \in (c_1, \varepsilon_h)$, and we set
\begin{align}
    \delta_Z = \frac{\varepsilon_h(1-\varepsilon_h-c_4)}{(\varepsilon_h + c_4)(1-\varepsilon_h-c_1)}.
\end{align}
Then, we have
\begin{equation}
\begin{aligned}
    P_{\mathcal{D}_{\mathrm{bottom}}}(s_{t+h} = Z \mid s_t, a^\circ_{t:t+h}) &=1 - \varepsilon_h-c_4, \\
    P_{\mathcal{D}_{\mathrm{bottom}}}(s_{t+h} = X_0 \mid s_t, a^\circ_{t:t+h}) &= \varepsilon_h + c_4, \\
    P_{\mathcal{D}_{\mathrm{bottom}}}(s_{t+h} = G \mid s_t, a^\circ_{t:t+h}) &= 0, \\
    P_{\mathcal{D}_{\mathrm{bottom}}}(s_{t+1} = Y_1 \mid s_t, a^\circ_{t:t+h}) &= 1 - \varepsilon_h - c_4, \\
    P_{\mathcal{D}_{\mathrm{bottom}}}(s_{t+1} = \tilde{Y}_1\mid s_t, a^\circ_{t:t+h}) &= \varepsilon_h + c_4.
\end{aligned}
\end{equation}
The TV distance is then
\begin{align}
    D_{\mathrm{TV}}\infdivx*{T(s_{t+h'} \mid s_t, a_{t:t+h'}=a^\circ_{t:t+h'})}{P_{\mathcal{D}_{\mathrm{bottom}}}(s_{t+h'} \mid s_t, a_{t:t+h} =a^\circ_{t:t+h})} = c_4.
\end{align}
Since $c_4 < \varepsilon_h$, the strong open-loop consistency is also satisfied for $a^\circ_{t:t+h}$.

Up to now, we have checked that all three possible action chunks in the bottom branch satisfy the strong open-loop consistency assumption. Since $\mathcal{D}_{\mathrm{top}}$ and $\mathcal{D}_{\mathrm{bottom}}$ have non-overlapping supports for $a_{t:t+h}$, and they are both strongly $\varepsilon_h$-open-loop consistent on their own, we can construct $\mathcal{D}$ as 
\begin{align}
    P_{\mathcal{D}}(\cdot \mid s_t)  =  (1-\varrho) P_{\mathcal{D}_{\mathrm{top}}}(\cdot \mid s_t) + \varrho P_{\mathcal{D}_{\mathrm{bottom}}}(\cdot \mid s_t),
\end{align}
for any $\varrho \in (0, 1)$, and conclude that 
\begin{remark}[Intermediate result from Step 5-7] $\mathcal{D}$ is strongly $\varepsilon_h$-open-loop consistent.
\end{remark}

Up to now, we have constructed and checked both $\mathcal{D}$ and $\mathcal{D}^\star$ are strongly $\varepsilon_h$-open-loop consistent.

As the final step, we calculate the optimality gap and value estimation error for these action chunks.

{\color{ourmiddle}\texttt{Step 8. The optimality gap and value estimation error for the bottom branch:}}

We first note that similar to $a^\bullet_{t:t+h}$, $a^\times_{t:t+h}$ is correlated with $s_{t+h}=Z$ and always receives $0$ reward except the first step in $\mathcal{D}$. Thus, the estimated value $\hat{V}^\times$ is just $1$, being trivially smaller than $\hat{V}^\star_{\mathrm{ac}}$ and would never get picked by $\hat{\pi}^\triangle_{\mathrm{ac}}$. The only top contenders are $a^+_{t:t+h}, a^\circ_{t:t+h}$ and $a^\star_{t:t+h}$ (which we already analyzed in Step 5 above).

We start with $a^\circ_{t:t+h}$ where we can compute optimality gap as follows:
\begin{align}
    V^\star(X_0) - \hat{V}^{\circ}_{\mathrm{ac}}(X_0) &= \frac{(1-\varepsilon_h-c_4)\gamma + \delta(1-\gamma)+(\varepsilon_h+c_4)\delta(\gamma-\gamma^h)}{(1-\gamma)(1- (\varepsilon_h+c_4)\gamma^h)}.
\end{align}
Now, observe that
\begin{align}
    \varepsilon_h + c_4 < 2\varepsilon_h < 1-2\varepsilon_h,
\end{align}
where again the last inequality comes from the fact that $\varepsilon_h < 1/4$.

We can now lower-bound the optimality gap as follows:
\begin{equation}
\begin{aligned}
    V^\star(X_0) - \hat{V}^{\circ}_{\mathrm{ac}}(X_0) &> \frac{2\varepsilon_h\gamma + \delta(1-\gamma)+(\varepsilon_h + c_4)\delta(\gamma-\gamma^h)}{(1-\gamma)(1- (1-2\varepsilon_h)\gamma^h)} \\
    &> \frac{2\varepsilon_h\gamma}{(1-\gamma)(1-(1-2\varepsilon_h)\gamma^h)} \\
    &= V^\star(X_0) - \hat{V}^\star_{\mathrm{ac}}(X_0).
\end{aligned}
\end{equation}
where the first inequality is obtained by triggering \cref{lemma:opt-comp} (\emph{e.g.}, by setting $\varepsilon_1 = 2\varepsilon_h, \varepsilon_2 =(1-\varepsilon_h-c_4), \tilde\gamma = \gamma^h$).

With this lower-bound, we can conclude that $a^\circ_{t:t+h}$ would not be picked by $\pi^+_{\mathrm{ac}}$ as well because $\hat{V}^\circ_{\mathrm{ac}}(X_0) < \hat{V}^\star_{\mathrm{ac}}(X_0)$.

Up to now, we have eliminated both $a^\circ_{t:t+h}$ and $a^\times_{t:t+h}$ (for the possibility of being picked by $\pi^+_{\mathrm{ac}}$) and the only remaining contender left is $a^\triangle_{t:t+h}$.

We can also compute the estimated and the actual values for $a_{t:t+h} = a^\triangle_{t:t+h} = 1$ in terms of their optimality gaps:
\begin{align}
    V^\star(X_0) - \hat{V}^\triangle_{\mathrm{ac}}(X_0) &=
    \frac{\delta(1-\varepsilon_h-c_1)\gamma}{(1-\gamma)(1-(1-\varepsilon_h-c_1)\gamma^h)},
\end{align}
\begin{align}
    V^\star(X_0) - V^\triangle_{\mathrm{ac}}(X_0) &= 
    \frac{\left[\delta(1-\varepsilon_h-c_1) + \varepsilon_h\right]\gamma}{(1-\gamma)(1-(1-\varepsilon_h-c_1)\gamma^h)}.
\end{align}

Let
\begin{align}
\label{eq:delta-def}
    \delta = \frac{2\varepsilon_h\gamma-c_2}{(1-\varepsilon_h-c_1)\gamma} \frac{1-(1-\varepsilon_h-c_1)\gamma^h}{1-(1-2\varepsilon_h)\gamma^h}.
\end{align}
We first check $1-\delta$ is a valid reward value (within $[0, 1]$):
\begin{equation}
\begin{aligned}
    \delta &< \frac{2\varepsilon_h}{1-\varepsilon_h-c_1} \frac{1-(1-\varepsilon_h-c_1)\gamma^h}{1-(1-2\varepsilon_h)\gamma^h}\\
    &< \frac{2\varepsilon_h}{1-2\varepsilon_h} \frac{1-(1-2\varepsilon_h)\gamma^h}{1-(1-2\varepsilon_h)\gamma^h} \\
    & = \frac{2\varepsilon_h}{1-2\varepsilon_h}\\
    & \leq 1,
\end{aligned}
\end{equation}
where the first inequality is because $c_2 > 0$, the second inequality is due to $c_1 < \varepsilon_h$, and the final inequality is due to $\varepsilon_h < 1/4$. 

It is also clear that $\delta > 0$ because all terms are positive in the fraction (\cref{eq:delta-def}).

Next, we substitute $\delta$ in to obtain 
\begin{align}
    V^\star(X_0) - \hat{V}^\triangle_{\mathrm{ac}}(X_0) &= \frac{2\varepsilon_h\gamma-c_2}{(1-\gamma)(1-(1-2\varepsilon_h)\gamma^h)}, \\
    V^\star(X_0) - V^\triangle_{\mathrm{ac}}(X_0) &= \frac{2\varepsilon_h\gamma- c_2}{(1-\gamma)(1-(1-2\varepsilon_h)\gamma^h)} + \frac{\varepsilon_h\gamma}{(1-\gamma)(1-(1-\varepsilon_h-c_1)\gamma^h)},
\end{align}
where intuitively the second term in $V^\star(X_0) - V^\triangle_{\mathrm{ac}}(X_0)$ is due to the fact that from $P_{\mathcal{D}}(\cdot \mid s_t, a^\triangle_{t:t+h})$ to $T(\cdot\mid s_t, a^\triangle_{t:t+h})$, there is a shift in $\varepsilon_h$ probability mass from $s_{t:t+h}= (X_0, G, \cdots)$ to $s_{t:t+h} = (X_0, \tilde Y_1, Z, \cdots)$ incurring an additional $\frac{\varepsilon_h\gamma}{1-\gamma}$ sub-optimality in terms of the $h$-step reward, and then amplified by the value recursion by an additional factor of $\frac{1}{1-(1-\varepsilon_h-c_1)\gamma^h}$ (where $1-\varepsilon_h-c_1$ is the probability that $a^{\triangle}_{t:t+h}$ reaches $X_0$ for the value recursion to occur).

Since $c_2 > 0$, we can now show that $a^\triangle_{t:t+h}$ achieves the highest estimated value among six possible action chunks:
\begin{align}
    V^\star - \hat{V}^\triangle_{\mathrm{ac}} < V^\star - \hat{V}^\star_{\mathrm{ac}} = \frac{2\varepsilon_h\gamma}{(1-\gamma)(1-(1-2\varepsilon_h)\gamma^h)},
\end{align}
which means that $\pi^+_{\mathrm{ac}}(X_0) =  a^\triangle_{t:t+h} = (1, 1, \cdots)$, or equivalently $\hat{V}^\triangle_{\mathrm{ac}} = \hat{V}^+_{\mathrm{ac}}$!

Finally, putting everything together, we have
\begin{align}
    V^\star(X_0) - V^+_{\mathrm{ac}}(X_0) = \frac{2\varepsilon_h\gamma- c_2}{(1-\gamma)(1-(1-2\varepsilon_h)\gamma^h)} + \frac{\varepsilon_h\gamma}{(1-\gamma)(1-(1-\varepsilon_h-c_1)\gamma^h)},
\end{align}
as desired.
\end{proof}

\newpage
\subsection{Proof of \cref{thm:suboptdqc}}
\suboptdqc*
\begin{proof}
\label{proof:suboptdqc}
We observe that
\begin{equation}
\begin{aligned}
    V^+_{\mathrm{ac}}(s_t) &= Q^+_{\mathrm{ac}}(s_t, a^+_{t:t+h}) \\
    &\leq Q^\star(s_t, a^+_t).
\end{aligned}
\end{equation}
Combining this with \cref{thm:suboptact}, we get
\begin{align}
    Q^\star(s_t, a^+_t) \geq V^\star(s_t) - \Delta,
\end{align}
where $\Delta = \frac{\varepsilon_h\gamma}{1-\gamma}\left[\frac{2}{1-(1-2\varepsilon_h)\gamma^h} + \frac{1}{1-(1-\varepsilon_h)\gamma^h}\right]$.

Now, we can bound $V^\bullet$ as follows:
\begin{equation}
\begin{aligned}
    V^\star(s_t) - V^\bullet(s_t) &\leq Q^\star(s_t, a^+_t) - Q^\bullet(s_t, a^+_t) + \Delta \\
    &\leq \gamma \mathbb{E}_{T(\cdot \mid s_t, a^+_t)}\left[V^\star(s_{t+1}) - V^\bullet(s_{t+1})\right] + \Delta \\
    &\leq \frac{\varepsilon_h\gamma}{(1-\gamma)^2}\left[\frac{2}{1-(1-2\varepsilon_h)\gamma^h} + \frac{1}{1-(1-\varepsilon_h)\gamma^h}\right].
\end{aligned}
\end{equation}
\end{proof}

\newpage
\subsection{Proof of \cref{thm:compare}}

\compare*
\label{thm:compare-app}

To prove \cref{thm:compare}, we first prove the following helper \cref{lemma:ns} to quantify sub-optimality for $n$-step return policy.
\begin{lemma}
\label{lemma:ns}
Let $Q_n^\star$ be the solution of the uncorrected $n$-step return backup equation:
\begin{align}
    Q^\star_n(s_t, a_t) = \mathbb{E}_{\clP(\cdot \mid s_t, a_t)} \left[R_{t:t+n} + \gamma^n \max_{a_{t+n}}Q^\star_n(s_{t+n}, a_{t+n})\right]
\end{align}
The following inequality holds as long as $\mathcal{D}$ is $\delta_{n}$-sub-optimal:
\begin{align}
    Q^\star(s_t, a_t) \geq Q^\star_n(s_t, a_t) + \frac{\delta_{n}}{1-\gamma^n}, \forall s_t \in \mathcal{S}, a_t \in \mathcal{A}
\end{align}
where $Q^\star$ is the Q-function of the optimal policy in $\mathcal{M}$.
For the $n$-step return policy 
\begin{align}
\pi^\star_n : s_t \mapsto \arg\max_{a_t} Q^\star_n(s_t, a_t),
\end{align}
its corresponding value admits a similar bound:
\begin{align}
    V^\star(s_t) \geq V^\star_n(s_t) + \frac{\delta_n}{1-\gamma^n}, \forall s_t
\end{align}
\end{lemma}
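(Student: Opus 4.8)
The plan is to compare the two fixed-point functions $Q^\star$ and $Q^\star_n$ directly through their defining backup equations and to run a standard monotone fixed-point (contraction) argument on their difference. First I would restate the hypothesis in a usable form: the $\delta_n$-sub-optimality of $\mathcal{D}$ (\cref{def:suboptdata}) is exactly the one-sided bound
\[
Q^\star(s_t,a_t) \;\ge\; \mathbb{E}_{\clP(\cdot\mid s_t,a_t)}\!\left[R_{t:t+n}+\gamma^n V^\star(s_{t+n})\right] + \delta_n
\]
for every in-support $(s_t,a_t)$, whereas by definition $Q^\star_n$ satisfies the \emph{equality}
\[
Q^\star_n(s_t,a_t) \;=\; \mathbb{E}_{\clP(\cdot\mid s_t,a_t)}\!\left[R_{t:t+n}+\gamma^n \max_{a}Q^\star_n(s_{t+n},a)\right].
\]
Writing $\Delta(s,a):=Q^\star(s,a)-Q^\star_n(s,a)$ and subtracting the second relation from the first, the reward terms $R_{t:t+n}$ cancel, leaving
\[
\Delta(s_t,a_t) \;\ge\; \delta_n + \gamma^n\,\mathbb{E}_{\clP(\cdot\mid s_t,a_t)}\!\left[V^\star(s_{t+n})-\max_{a}Q^\star_n(s_{t+n},a)\right].
\]

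The key step is to lower bound the bracketed term uniformly by $\Delta_{\min}:=\inf_{(s,a)}\Delta(s,a)$. For any fixed realization $s_{t+n}$, let $a'=\arg\max_{a}Q^\star_n(s_{t+n},a)$; then $V^\star(s_{t+n})=\max_a Q^\star(s_{t+n},a)\ge Q^\star(s_{t+n},a')$, so
\[
V^\star(s_{t+n})-\max_{a}Q^\star_n(s_{t+n},a) \;\ge\; Q^\star(s_{t+n},a')-Q^\star_n(s_{t+n},a') \;=\; \Delta(s_{t+n},a') \;\ge\; \Delta_{\min}.
\]
Since this holds for every $s_{t+n}$, the expectation is at least $\Delta_{\min}$, yielding the self-referential inequality $\Delta(s_t,a_t)\ge \delta_n+\gamma^n\Delta_{\min}$ for all in-support $(s_t,a_t)$. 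Taking the infimum of the left-hand side over $(s_t,a_t)$ gives $\Delta_{\min}\ge \delta_n+\gamma^n\Delta_{\min}$, and since $1-\gamma^n>0$ this rearranges to $\Delta_{\min}\ge \delta_n/(1-\gamma^n)$, which is precisely the claimed $Q$-bound. For the value bound I would reuse the same argmax comparison once more: with $a_n:=\arg\max_{a}Q^\star_n(s_t,a)$ (so $V^\star_n(s_t)=\max_a Q^\star_n(s_t,a)=Q^\star_n(s_t,a_n)$) and $V^\star(s_t)\ge Q^\star(s_t,a_n)$, one gets $V^\star(s_t)-V^\star_n(s_t)\ge \Delta(s_t,a_n)\ge \Delta_{\min}\ge \delta_n/(1-\gamma^n)$.

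The main obstacle is making the infimum step rigorous: I need $\Delta_{\min}$ finite (so the inequality $\Delta_{\min}\ge\delta_n+\gamma^n\Delta_{\min}$ can be rearranged) and I must ensure the argmax action $a'$ used inside the backup lies in the set over which $\Delta_{\min}$ is taken. Both are handled by working over $\mathrm{supp}(P_{\mathcal{D}})$: \cref{assumption:data} guarantees the state support is closed under the transitions appearing in the backup, both $Q^\star$ and $Q^\star_n$ take values in $[0,1/(1-\gamma)]$ so $\Delta$ is bounded, and the maximization ranges over in-support actions, making $\Delta_{\min}$ over that support a genuine uniform lower bound. I would fix these domain conventions explicitly before running the recursion, after which the remaining manipulations are routine.
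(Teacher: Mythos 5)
Your proposal is correct and follows essentially the same route as the paper's proof: both subtract the two backup equations, use the $\delta_n$-sub-optimality bound to cancel the reward terms, compare $V^\star(s_{t+n})$ with $\max_a Q^\star_n(s_{t+n},a)$ via the $Q^\star_n$-argmax action, and close a $\gamma^n$-contraction on the resulting one-sided inequality (the paper phrases this as ``recursively applying the inequality,'' you phrase it as taking the infimum and rearranging). Your explicit attention to finiteness of $\Delta_{\min}$ and to restricting to the support of $\mathcal{D}$ is a minor tightening of a point the paper glosses over, but it does not change the argument.
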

\begin{proof}
Using the definition of sub-optimal data (\cref{def:suboptdata}), we have
\begin{equation}
\begin{aligned}
    Q^\star_n(s_t, a_t) &= \mathbb{E}_{\clP(\cdot \mid s_t, a_t)} \left[R_{t:t+n} + \gamma^n \max_{a_{t+n}}Q^\star_n(s_{t+n}, a_{t+n})\right] \\
    &\leq Q^\star(s_t, a_t) - \delta_n + \gamma^h\mathbb{E}_{\clP(\cdot \mid s_t, a_t)} \left[\max_{a_{t+n}} Q^\star_n(s_{t+n}, a_{t+n}) - V^\star(s_{t+h})\right]
\end{aligned}
\end{equation}
Rearranging the inequality above yields
\begin{align}
    Q^\star_n(s_t, a_t) - Q^\star(s_t, a_t) \leq -\delta_n + \gamma^n \mathbb{E}_{\clP(\cdot \mid s_t)}[V^\star_n(s_{t+n}) - V^\star(s_{t+n})], \forall s_t \in \mathcal{S}, a_t \in \mathcal{A}
\end{align}
By recursively applying the inequality above, we have
\begin{align}
    Q^\star(s_t, a_t) \geq Q^\star_n(s_t, a_t) + \frac{\delta_n}{1-\gamma^n}, \forall s_t \in \mathcal{S}, a_t \in \mathcal{A}
\end{align}
By choosing $a^\star_t = \pi^\star_n(s_t)$, we see that
\begin{equation}
\begin{aligned}
    V^\star(s_t) &\geq Q^\star(s_t, a_t) \\
    &\geq Q^\star_n(s_t, a^\star_t) + \frac{\delta_n}{1-\gamma^n} \\
    &= V^\star_n(s_t) + \frac{\delta_n}{1-\gamma^n}
\end{aligned}
\end{equation}
\end{proof}

Now we are ready to prove the main \cref{thm:compare}.
\begin{proof}[Proof of \cref{thm:compare}]
\label{proof:compare}
From \cref{lemma:ns} and \cref{thm:suboptact}, we have
\begin{align}
    V^\star_n(s) + \frac{\delta_n}{1-\gamma^n} \leq V^\star(s) \leq V^+_{\mathrm{ac}}(s) + \frac{\varepsilon_h\gamma}{1-\gamma}\left[\frac{2}{1-(1-2\varepsilon_h)\gamma^h} + \frac{1}{1-(1-\varepsilon_h)\gamma^h}\right].
\end{align}
Rearranging the terms give
\begin{align}
    V^+_{\mathrm{ac}}(s) - V^\star_n(s) \geq \frac{\delta_n}{1-\gamma^n} -\frac{\varepsilon_h\gamma}{1-\gamma}\left[\frac{2}{1-(1-2\varepsilon_h)\gamma^h} + \frac{1}{1-(1-\varepsilon_h)\gamma^h}\right].
\end{align}
\end{proof}

\subsection{Proof of \cref{thm:ssf-bound}}
\ssfb*

Before we start proving the main theorem, we first prove the following Lemma, which shows that the overestimation of $\hat{V}^+_{\mathrm{ac}}$ is bounded when the advantage of the stochastic shortcut (as defined in \cref{def:ss}) is bounded.

\begin{lemma}[Lack of stochastic shortcut bounds overestimation] 
\label{lemma:ss-bound}
If $\mathcal{M}$ is free of $\vartheta_h$-advantageous stochastic shortcuts for a horizon $h$, then
\begin{align}
    V^+_{\mathrm{ac}}(s_t) - V^\star(s_t) \leq \frac{\vartheta_h}{1-\gamma^h},
\end{align}
where $V^+_{\mathrm{ac}}$ is the value function of the action chunking policy learned from any data distribution $\mathcal{D}$ from $\mathcal{M}$ and $V^\star$ is the optimal value function in $\mathcal{M}$.
\end{lemma}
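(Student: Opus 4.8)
The plan is to run a one-step (in chunk space) contraction argument on the \emph{overestimation gap} of the learned action chunking value function relative to the closed-loop optimum. Write $\hat V^+_{\mathrm{ac}}$ for the fixed point of the support-restricted Bellman optimality equation (the display immediately preceding \cref{eq:acp}), so that for each $s_t\in\mathrm{supp}(P_\mathcal{D}(s_t))$ the maximizing chunk $a^+_{t:t+h}=\pi^+_{\mathrm{ac}}(s_t)$ satisfies
\begin{align}
    \hat V^+_{\mathrm{ac}}(s_t) = \mathbb{E}_{P_\mathcal{D}(\cdot\mid s_t,a^+_{t:t+h})}\!\left[R_{t:t+h}+\gamma^h \hat V^+_{\mathrm{ac}}(s_{t+h})\right].
\end{align}
Define $g(s):=\hat V^+_{\mathrm{ac}}(s)-V^\star(s)$, which is finite since both values lie in $[0,1/(1-\gamma)]$. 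The goal reduces to bounding $\sup_s g(s)$ by $\vartheta_h/(1-\gamma^h)$.

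First I would add and subtract $\gamma^h V^\star(s_{t+h})$ inside the expectation to split the right-hand side into a bootstrapped-$V^\star$ term and a gap term:
\begin{align}
    \hat V^+_{\mathrm{ac}}(s_t) = \mathbb{E}_{P_\mathcal{D}(\cdot\mid s_t,a^+_{t:t+h})}\!\left[R_{t:t+h}+\gamma^h V^\star(s_{t+h})\right] + \gamma^h\,\mathbb{E}_{P_\mathcal{D}(\cdot\mid s_t,a^+_{t:t+h})}\!\left[g(s_{t+h})\right].
\end{align}
The crucial observation is that every trajectory in $\mathrm{supp}(P_\mathcal{D}(\cdot\mid s_t,a^+_{t:t+h}))$ is a \emph{plausible} $h$-step rollout: by \cref{assumption:data} the data obeys the transition dynamics, so $\prod_{k=0}^{h-1}T(s_{t+k+1}\mid s_{t+k},a_{t+k})>0$ along each such path. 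Hence the stochastic-shortcut condition of \cref{def:ss} applies \emph{pointwise} inside the expectation, giving $R_{t:t+h}+\gamma^h V^\star(s_{t+h})\le V^\star(s_t)+\vartheta_h$ for each path. Taking the expectation yields the one-step recursion
\begin{align}
    g(s_t) \le \vartheta_h + \gamma^h\,\mathbb{E}_{P_\mathcal{D}(\cdot\mid s_t,a^+_{t:t+h})}\!\left[g(s_{t+h})\right] \le \vartheta_h + \gamma^h \sup_{s} g(s).
\end{align}

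To close the argument I would take the supremum over $s_t$ on the left, using that the successor states $s_{t+h}$ remain inside $\mathrm{supp}(P_\mathcal{D})$ (again \cref{assumption:data}) so the recursion stays within the domain on which $g$ is defined; this gives $\sup_s g(s)\le \vartheta_h+\gamma^h\sup_s g(s)$, and rearranging (legitimate because $\sup_s g$ is finite) delivers $\sup_s g(s)\le \vartheta_h/(1-\gamma^h)$, which is exactly the claim. The hard part will be the bookkeeping around the $\max$ over chunks together with the pointwise use of the shortcut bound: one must verify that the data-conditional distribution charges only plausible paths (so the per-path inequality may be pulled through the expectation) and that restricting the max to $\mathrm{supp}(P_\mathcal{D}(a_{t:t+h}\mid s_t))$ does not break the recursion. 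I should also flag the notational point that this overestimation bound is a statement about the \emph{learned} value $\hat V^+_{\mathrm{ac}}$; the actual open-loop value $V^+_{\mathrm{ac}}$ would satisfy $V^+_{\mathrm{ac}}\le V^\star$ trivially, so the content here is precisely that the Bellman fixed point cannot overshoot $V^\star$ by more than $\vartheta_h/(1-\gamma^h)$ in the absence of exploitable shortcuts.
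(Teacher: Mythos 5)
Your proposal is correct and takes essentially the same route as the paper's own proof: write the support-restricted Bellman fixed point for $\hat V^+_{\mathrm{ac}}$, add and subtract $\gamma^h V^\star(s_{t+h})$, apply the stochastic-shortcut bound pointwise over the data support (valid because, by \cref{assumption:data}, every trajectory charged by $P_\mathcal{D}(\cdot\mid s_t,a^+_{t:t+h})$ is a plausible path under $T$), and unroll the resulting $\gamma^h$-contraction to obtain $\vartheta_h/(1-\gamma^h)$. Your closing remark is also well taken: the bound is genuinely about the nominal value $\hat V^+_{\mathrm{ac}}$ (which is exactly how the lemma is invoked in the proof of \cref{thm:ssf-bound}), so the $V^+_{\mathrm{ac}}$ appearing in the lemma statement is a notational slip rather than a substantive issue.
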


\begin{proof}
\begin{equation}
\begin{aligned}
    \hat{V}^+_{\mathrm{ac}}(s_t) - V^\star(s_t) &= \mathbb{E}_{P_\mathcal{D}}\left[R_{t:t+h} + \gamma^h \hat{V}^+_{\mathrm{ac}}(s_{t+h})\right] - V^\star \\ 
    &\leq \mathbb{E}_{P_\mathcal{D}}\left[\gamma^h (\hat{V}^+_{\mathrm{ac}}(s_{t+h}) - V^\star(s_{t+h}))\right] + \vartheta_h \\
    &\leq \frac{\vartheta_h}{1-\gamma^h}.
\end{aligned}
\end{equation}
\end{proof}

\begin{lemma}[Monotonicity of optimality]
\label{lemma:opt-mono}
Let $\mathcal{D}^\circ$ be any data distribution that is collected by an open-loop policy. Then, for all $s_t, a_{t:t+h}$,
\begin{align}
    V^\star(s_t) \geq \mathbb{E}_{s_{t+1} \sim T(\cdot \mid s_t, a_t)}\left[r_t + \gamma V^\star(s_{t+1})\right] \geq \mathbb{E}_{P_{\mathcal{D}^\circ}(\cdot \mid s_t, a_{t:t+h})}[R_{t:t+h} + \gamma^h V^\star(s_{t+h})]
\end{align}
\end{lemma}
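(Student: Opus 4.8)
The plan is to read the displayed chain $V^\star(s_t) \geq (\text{one-step backup}) \geq (\text{$h$-step open-loop backup})$ from the outside in and prove each inequality separately, with the left one coming from Bellman optimality and the right one from a telescoping argument that crucially uses the open-loop hypothesis.

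For the first inequality I would simply invoke the Bellman optimality equation. For any fixed $a_t$ the one-step backup is exactly $Q^\star(s_t, a_t) = \mathbb{E}_{s_{t+1}\sim T(\cdot\mid s_t, a_t)}[r_t + \gamma V^\star(s_{t+1})]$, and since $V^\star(s_t) = \max_{a} Q^\star(s_t, a) \geq Q^\star(s_t, a_t)$, the inequality holds for every $a_t$, in particular for the first action $a_t$ of the chunk $a_{t:t+h}$.

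The crux is the second inequality, and here I would first isolate the one structural fact that the open-loop assumption supplies. Because $\mathcal{D}^\circ$ is generated by an open-loop policy $\pi^\circ_{\mathcal{D}} : \mathcal{S} \to \Delta_{\mathcal{A}^h}$ that commits to the entire chunk $a_{t:t+h}$ at time $t$ from $s_t$ alone, the action chunk is independent of the intermediate states given $s_t$; hence conditioning on the full chunk does not distort the intermediate-state law, and by \cref{assumption:data} we get the factorization $P_{\mathcal{D}^\circ}(s_{t+1:t+h}\mid s_t, a_{t:t+h}) = \prod_{k=0}^{h-1} T(s_{t+k+1}\mid s_{t+k}, a_{t+k})$. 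This is precisely the step where ``open-loop'' is indispensable: for a closed-loop behavior policy the later actions would carry information about the earlier intermediate states, so conditioning on $a_{t:t+h}$ would bias the intermediate-state distribution and the monotonicity could fail.

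With this factorization in hand I would telescope the $h$-step backup inward. Peeling off the last step, the innermost expectation over $s_{t+h}$ conditioned on $(s_{t+h-1}, a_{t+h-1})$ is exactly $\mathbb{E}_{s_{t+h}\sim T(\cdot\mid s_{t+h-1}, a_{t+h-1})}[r_{t+h-1} + \gamma V^\star(s_{t+h})] = Q^\star(s_{t+h-1}, a_{t+h-1}) \leq V^\star(s_{t+h-1})$. Substituting this bound collapses step $h$ into step $h-1$ and reproduces the same functional form one step earlier; iterating from $k = h-1$ down to $k = 1$, each application of $Q^\star \leq V^\star$ replaces $\mathbb{E}[r_{t+k} + \gamma V^\star(s_{t+k+1})]$ by $V^\star(s_{t+k})$ until the whole expression is dominated by $\mathbb{E}_{s_{t+1}\sim T(\cdot\mid s_t, a_t)}[r_t + \gamma V^\star(s_{t+1})]$, which is exactly the middle term. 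I expect the only routine difficulty to be bookkeeping the discount powers and the nested conditional expectations cleanly; the genuine conceptual obstacle is the factorization above, since that is the single place the open-loop hypothesis enters and the point at which a closed-loop counterexample would destroy monotonicity.
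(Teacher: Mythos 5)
Your proposal is correct and follows essentially the same route as the paper: the first inequality is Bellman optimality ($V^\star = \max_a Q^\star \geq Q^\star(s_t,a_t)$), and the second is the same backward induction that peels off the last step via $V^\star(s_{t+h-1}) \geq Q^\star(s_{t+h-1}, a_{t+h-1})$ and re-expands $Q^\star$ under the dynamics. The only difference is presentational: you state the open-loop factorization $P_{\mathcal{D}^\circ}(s_{t+1:t+h}\mid s_t, a_{t:t+h}) = \prod_{k} T(s_{t+k+1}\mid s_{t+k}, a_{t+k})$ explicitly, whereas the paper uses it implicitly when it rewrites the conditional expectation of $Q^\star(s_{t+h-1}, a_{t+h-1})$ as an expectation over $s_{t+h}$ under $P_{\mathcal{D}^\circ}(\cdot\mid s_t, a_{t:t+h})$ --- a worthwhile clarification, since that is indeed the one place the open-loop hypothesis is needed.
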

\begin{proof}
The first inequality is clear from the definition of $Q^\star(s_t, a_t)$ and $V^\star(s_t)$:
\begin{align}
    Q^\star(s_t, a_t) &:= \mathbb{E}_{s_{t+1} \sim T(\cdot \mid s_t, a_t)}\left[r_t + \gamma V^\star(s_{t+1})\right] \\
    V^\star(s_t) &:= \max_{a^\star_t} Q^\star(s_t, a^\star_t) \geq Q^\star(s_t, a_t)
\end{align}
For the second inequality, we observe that
\begin{equation}
\begin{aligned}
    &\mathbb{E}_{P_{\mathcal{D}^\circ}(\cdot \mid s_t, a_{t:t+h-1})}\left[R_{t:t+h-1} + \gamma^{h-1}V^\star(s_{t+h-1})\right] \\
    &= \mathbb{E}_{P_{\mathcal{D}^\circ}(\cdot \mid s_t, a_{t:t+h-1})}\left[R_{t:t+h-1} + \gamma^{h-1}\max_{a^\star_{t+h-1}} Q^\star(s_{t+h-1}, a^\star_{t+h-1})\right] \\
    &\geq \mathbb{E}_{P_{\mathcal{D}^\circ}(\cdot \mid s_t, a_{t:t+h-1})}\left[R_{t:t+h-1} + \gamma^{h-1} Q^\star(s_{t+h-1}, a_{t+h-1})\right] \\
    &= \mathbb{E}_{P_{\mathcal{D}^\circ}(\cdot \mid s_t, a_{t:t+h})}\left[R_{t:t+h-1} + \gamma^{h-1} \left[r(s_{t+h-1}, a_{t+h-1}) + \gamma V^\star(s_{t+h})\right]\right] \\
    &= \mathbb{E}_{P_{\mathcal{D}^\circ}(\cdot \mid s_t, a_{t:t+h})}\left[R_{t:t+h} + \gamma^{h} V^\star(s_{t+h})\right]
\end{aligned}
\end{equation}
By induction, 
\begin{align}
    \mathbb{E}_{T(\cdot \mid s_t, a_t)}\left[r_t + \gamma V^\star(s_{t+1})\right] \geq \mathbb{E}_{P_{\mathcal{D}^\circ}(\cdot \mid s_t, a_{t:t+h})}\left[R_{t:t+h} + \gamma^{h} V^\star(s_{t+h})\right],
\end{align}
as desired.
\end{proof}

With these two lemmata, we are now ready to present our main proof.

\begin{proof}[Proof of \cref{thm:ssf-bound}]
\label{thm:ssfb-app}
The key idea of our proof is to analyze the action chunk taken by the policy $\pi^+_{\mathrm{ac}}$ (\emph{i.e.}, $\pi^+_{\mathrm{ac}}: s_t \mapsto \arg\max_a \hat{Q}(s_t, a_{t:t+h})$). Due to our construction of $\mathcal{D}$, the action chunk learned by $\pi^+_{\mathrm{ac}}$ either comes from $\mathcal{D}^\star$ or $\mathcal{D}^\circ$. We can $\mathcal{D}^\circ$ express as the aggregation of two open-loop data distributions, $\mathcal{D}^\circ_{\mathrm{in}}$ and $\mathcal{D}^\circ_{\mathrm{out}}$:
\begin{align}
     P_{\mathcal{D}^\circ}(\cdot \mid s_t) := \hat{\alpha} P_{\mathcal{D}^\circ_{\mathrm{in}}}(\cdot \mid s_t) + (1-\hat{\alpha}) P_{\mathcal{D}^\circ_{\mathrm{out}}}(\cdot \mid s_t)
\end{align}
where $\mathrm{supp}(P_{\mathcal{D}^\circ_{\mathrm{out}}}(a_{t:t+h} \mid s_t)) \cap \mathrm{supp}(P_{\mathcal{D}^\star}(a_{t:t+h} \mid s_t)) = \varnothing$ and
\begin{align}
    \hat{\alpha} \leq \frac{\alpha\beta}{(1-\alpha)(1-\beta)}.
\end{align}

\textbf{Case 1, from $\mathcal{D}^\star$}: 
If $\pi^+_{\mathrm{ac}}(\cdot \mid s_t) \cap \mathrm{supp}(P_{\mathcal{D}^\circ}(a_{t:t+h} \mid s_t)) = \varnothing$, then we know that
\begin{align}
    a^\diamond_{t:t+h} \in \mathrm{supp}(P_{\mathcal{D}^\star}(a_{t:t+h} \mid s_t)),
\end{align}
for any $a^\diamond_{t:t+h} \in \pi^+_{\mathrm{ac}}$
Thus, the closed-loop execution policy $\pi^\bullet$ takes the optimal action at $s_t$. This leads to the following equalities:
\begin{equation}
\begin{aligned}
    V^\star(s_t) - V^\bullet(s_t) &= V^\star(s_t) - Q^\bullet(s_t, a_t^\diamond) \label{eq:dqc-vb-easy}  \\
    &=\gamma \mathbb{E}_{T(\cdot \mid s_t, a^\diamond_t)}\left[(V^\star(s_{t+1}) - V^\bullet(s_{t+1}))\right]
\end{aligned}
\end{equation}

\begin{equation}
\begin{aligned}
    \hat{V}^+_{\mathrm{ac}}(s_t) - V^\star(s_t) &= \mathbb{E}_{P_{\mathcal{D}^\star}}\left[\hat{Q}^+_{\mathrm{ac}}(s_t, a^\diamond_{t:t+h})\right] - V^\star(s_t) \\
    &= \gamma^h \mathbb{E}_{P_{\mathcal{D}^\star}}\left[\hat{V}^+_{\mathrm{ac}}(s_{t+h}) - V^\star(s_{t+h})\right]
    \label{eq:hatacplus-easy}
\end{aligned}
\end{equation}

\textbf{Case 2, from $\mathcal{D}^\circ$}: If $\pi^+_{\mathrm{ac}}(\cdot \mid s_t) \cap \mathrm{supp}(P_{\mathcal{D}^\circ}(a_{t:t+h} \mid s_t)) \neq \varnothing$, then there exists $a^\circ_{t:t+h} \in \pi^+_{\mathrm{ac}}(\cdot \mid s_t)$ such that
\begin{align}
    a^\circ_{t:t+h} \in \mathrm{supp}(P_{\mathcal{D}^\circ}(a_{t:t+h}\mid s_t)).
\end{align}
For any $a^+_{t:t+h} \in \pi^+_{\mathrm{ac}}(\cdot \mid s_t)$ such that $a^+_{t:t+h} \notin \mathrm{supp}(P_{\mathcal{D}^\circ}(a_{t:t+h}\mid s_t))$, we know from the previous case that $V^\star(s_t) - Q^\bullet(s_t, a^+_t) = \gamma \mathbb{E}_{T(\cdot \mid s_t, a^\diamond_t)}\left[(V^\star(s_{t+1}) - V^\bullet(s_{t+1}))\right]$. We are left with analyzing $V^\star - Q^\bullet(s_t, a^\circ_t)$ for the second case. 

Since $P_\mathcal{D}(\cdot \mid s_t) = \beta P_{\mathcal{D}^\star}(\cdot \mid s_t) + (1-\beta) \hat\alpha P_{\mathcal{D}^\circ_{\mathrm{in}}}(\cdot \mid s_t) + (1-\beta) (1-\hat\alpha) P_{\mathcal{D}^\circ_{\mathrm{out}}}(\cdot \mid s_t)$ with $\mathrm{supp}(P_{\mathcal{D}^\circ_{\mathrm{out}}}(a_{t:t+h} \mid s_t)) \cap \mathrm{supp}(P_{\mathcal{D}^\star}(a_{t:t+h} \mid s_t)) = \varnothing$, 
we can isolate the contribution of $\mathcal{D}^\circ_{\mathrm{out}}$ in $\mathcal{D}^\circ$ by an event $I$ that is $1$ when $a_{t:t+h} \in \mathrm{supp}(P_{\mathcal{D}^\star}(a_{t:t+h} \mid s_t))$ and $0$ otherwise. 
Now, we can remove $\mathcal{D}^\circ_{\mathrm{out}}$ when conditioned on $I=1$ as follows: 
\begin{equation}
\begin{aligned}
    P_\mathcal{D}(\cdot \mid s_t, I=1) &= \frac{\beta}{\beta + (1-\beta)\hat{\alpha}} P_{\mathcal{D}^\star}(\cdot \mid s_t, I=1) + \frac{(1-\beta)\hat{\alpha}}{\beta + (1-\beta)\hat{\alpha}} P_{\mathcal{D}^\circ}(\cdot \mid s_t, I=1) \\
    &= (1-\bar \alpha) P_{\mathcal{D}^\star}(\cdot \mid s_t, I=1) + \bar \alpha P_{\mathcal{D}^\circ}(\cdot \mid s_t, I=1)
\end{aligned}
\end{equation}
Since $\hat{\alpha} \leq \frac{\alpha\beta}{(1-\alpha)(1-\beta)}$, with some algebraic manipulation, we can obtain $\bar \alpha \leq \alpha$.

By \cref{lemma:mvp-cp}, there exists $a^\diamond_{t:t+h} \in \mathrm{supp}(P_{\mathcal{D}^\star}(a_{t:t+h} \mid s_t))$ such that
\begin{align}
    P_\mathcal{D}(\cdot \mid s_t, a^\diamond_{t:t+h}) = (1-\tilde \alpha) P_{\mathcal{D}^\star}(\cdot \mid s_t, a^\diamond_{t:t+h}) + \tilde{\alpha} P_{\mathcal{D}^\circ}(\cdot \mid s_t, a^\diamond_{t:t+h})
\end{align}
with $\tilde{\alpha} \leq \bar \alpha \leq \alpha$.

We are now ready to bound $\hat{V}^+_{\mathrm{ac}}$ as follows:
\begin{equation}
\begin{aligned}
    \hat{V}^+_{\mathrm{ac}}(s_t) - V^\star(s_t) &\geq 
    \hat{Q}^+_{\mathrm{ac}}(s_t, a^\diamond_{t:t+h}) - V^\star(s_t) \\
    &= (1-\tilde \alpha)\mathbb{E}_{P_{\mathcal{D}^\star}}\left[\hat{Q}^+_{\mathrm{ac}}(s_t, a^\diamond_{t:t+h})\right] + \tilde \alpha\mathbb{E}_{P_{\mathcal{D}^\circ}}\left[\hat{Q}^+_{\mathrm{ac}}(s_t, a^\diamond_{t:t+h})\right] - V^\star(s_t) \\
    &\geq (1-\alpha)\mathbb{E}_{P_{\mathcal{D}^\star}}\left[\gamma^h (\hat{V}^+_{\mathrm{ac}}(s_{t+h}) - V^\star(s_{t+h}))\right] - \alpha V^\star(s_t) \\
    &\geq (1-\alpha)\gamma^h \mathbb{E}_{P_{\mathcal{D}^\star}}\left[\hat{V}^+_{\mathrm{ac}}(s_{t+h}) - V^\star(s_{t+h})\right] - \frac{\alpha}{1-\gamma}  \label{eq:hatacplus-hard}
\end{aligned}
\end{equation}
where we drop the term $\tilde \alpha\mathbb{E}_{\mathcal{D}^\circ}\left[\hat{Q}^+_{\mathrm{ac}}(s_t, a_{t:t+h}^\diamond)\right] \geq 0$ for the second inequality.

By combining \cref{eq:hatacplus-hard} (when $\pi_{\mathrm{ac}}^+(\cdot \mid s_t) \cap \mathrm{supp}(P_{\mathcal{D}^\circ}(\cdot \mid s_t))\neq \varnothing$) with \cref{eq:hatacplus-easy})(when $\pi_{\mathrm{ac}}^+(\cdot \mid s_t) \cap \mathrm{supp}(P_{\mathcal{D}^\circ}(\cdot \mid s_t)) = \varnothing$), we can now recursively bound $\hat{V}^+_{\mathrm{ac}}(s_t) - V^\star(s_t)$ as follows:
\begin{align}
    \hat{V}^+_{\mathrm{ac}}(s_t) - V^\star(s_t) \geq -\frac{\alpha}{(1-\gamma)(1-\gamma^h(1-\alpha))}.
\end{align}

Combining this with the result from \cref{lemma:ss-bound}, we have both a lower-bound and an upper-bound on $\hat{V}^+_{\mathrm{ac}}$ in terms of $V^\star$:
\begin{align}
    V^\star(s_t) - C_{\alpha} \leq \hat{V}^+_{\mathrm{ac}}(s_t) \leq V^\star(s_t) + C_{\vartheta}.
\end{align}
where $C_{\alpha} := \frac{\alpha}{(1-\gamma)(1-\gamma^h(1-\alpha))}, C_{\vartheta} := \frac{\vartheta_h}{1-\gamma^h}$.

Now, we are ready to bound $V^\star - Q^\bullet(s_t, a_t^\circ)$ for the second case (when $\pi_{\mathrm{ac}}^+(\cdot \mid s_t) \cap \mathrm{supp}(P_{\mathcal{D}^\circ}(\cdot \mid s_t))\neq \varnothing$). For notation convenience, for the following equation, we use $P_{\mathcal{D}^\circ}$ as an abbreviation for $P_{\mathcal{D}^\circ}(\cdot\mid s_t, a^\circ_{t:t+h})$.
\begin{equation}
\begin{aligned}
    &V^\star(s_t) - Q^\bullet(s_t, a^\circ_{t}) \\
    &= V^\star - \mathbb{E}_{P_{\mathcal{D}^\circ}}\left[r_t + \gamma V^\bullet(s_{t+1})\right] \\ 
    &\leq V^\star - \mathbb{E}_{P_{\mathcal{D}^\circ}}\left[R_{t:t+h} + \gamma^h V^\star(s_{t+h}) - \gamma(V^\star(s_{t+1}) - V^\bullet(s_{t+1}))\right] \\
    &\leq \hat{V}^+_{\mathrm{ac}}(s_t) + C_{\alpha} - \mathbb{E}_{P_{\mathcal{D}^\circ}}\left[R_{t:t+h} + \gamma^h (\hat{V}^+_{\mathrm{ac}}(s_{t+h}) - C_{\vartheta})  - \gamma(V^\star(s_{t+1}) - V^\bullet(s_{t+1}))\right]  \\
    &= \hat{V}^+_{\mathrm{ac}}(s_t) - \mathbb{E}_{P_{\mathcal{D}^\circ}}\left[R_{t:t+h} + \gamma^h (\hat{V}^+_{\mathrm{ac}}(s_{t+h}))\right] + C_{\alpha} +\gamma^h C_{\vartheta} + \mathbb{E}_{T(\cdot \mid s_t, a^\circ_t)}\left[\gamma(V^\star(s_{t+1}) - V^\bullet(s_{t+1}))\right]  \\
    &= C_{\alpha} + \gamma^h C_{\vartheta} + \gamma \mathbb{E}_{T(\cdot \mid s_t, a^\circ_t)}[V^\star(s_{t+1}) - V^\bullet(s_{t+1})], \label{eq:dqc-vb-hard} 
\end{aligned}
\end{equation}
where the first inequality uses \cref{lemma:opt-mono},
\begin{align}
    \mathbb{E}_{T(\cdot\mid s_t, a^\circ_t)}\left[r_t + \gamma V^\star(s_{t+1})\right] \geq \mathbb{E}_{P_{\mathcal{D}^\circ}(\cdot\mid s_t, a^\circ_{t:t+h})}[R_{t:t+h} + \gamma^h V^\star(s_{t+h})], 
\end{align}
and the second inequality uses the lowerbound for $\hat{V}^+_{\mathrm{ac}}(s_t)$ and the upperbound for $\hat{V}^+_{\mathrm{ac}}(s_{t+h})$.

Combining the bounds of $V^\star - Q^\bullet(s_t, a^\circ_t)$ for both cases (\cref{eq:dqc-vb-easy} and \cref{eq:dqc-vb-hard}), we have
\begin{equation}
\begin{aligned}
    V^\star - V^\bullet(s_t) &\leq \max\Big(\gamma \mathbb{E}_{T(\cdot \mid s_t, a^\diamond_t)}\left[(V^\star(s_{t+1}) - V^\bullet(s_{t+1}))\right],  \\ &\quad\quad C_{\alpha} + \gamma^h C_{\vartheta} + \gamma 
    \mathbb{E}_{T(\cdot \mid s_t, a^\circ_t)}[V^\star(s_{t+1}) - V^\bullet(s_{t+1})] \Big) \\
    &= C_{\alpha} + \gamma^h C_{\vartheta} + \gamma 
    \mathbb{E}_{T(\cdot \mid s_t, a^\circ_t)}[V^\star(s_{t+1}) - V^\bullet(s_{t+1})] \\
    & \leq \frac{C_{\alpha} + \gamma^h C_{\vartheta}}{1-\gamma} \\
    &= \frac{\alpha}{(1-\gamma)^2(1-\gamma^h(1-\alpha))} + \frac{\vartheta\gamma^h}{(1-\gamma)(1-\gamma^h)}.
\end{aligned}
\end{equation}
\end{proof}

\subsection{Proof \cref{thm:optvar}}

\optvar*

\begin{proof}[Proof of \cref{thm:optvar}]
\label{proof:optvar}

Consider any $s_t \in \mathrm{supp}(P_{\mathcal{D}^\star}(s_t))$. Let $a^+_{t:t+h} = \pi^+_{\mathrm{ac}}(s_t)$ and
\begin{align}
    a^\circ_{t:t+h} := {\arg\max}_{a_{t:t+h} \in \mathrm{supp}(P_{\mathcal{D}^\star}(a_{t:t+h} \mid s_t))} \left[ \mathbb{E}_{P_{\mathcal{D}^\star}(\cdot \mid s_t, a_{t:t+h})}\left[R_{t:t+h} + \gamma^h {V}^\star(s_{t+h})\right]\right].
\end{align}
We first observe that 
\begin{align}
    \mathbb{E}_{P_{\mathcal{D}^\star}(\cdot \mid s_t, a^\circ_{t:t+h})}\left[R_{t:t+h} + \gamma^h {V}^\star(s_{t+h})\right] \geq V^\star(s_t),
\end{align}
because 
\begin{align}
    V^\star(s_t) = \mathbb{E}_{a_{t:t+h} \sim P_{\mathcal{D}^\star}(\cdot \mid s_t)} \left[\mathbb{E}_{P_{\mathcal{D}^\star}(\cdot \mid s_t, a_{t:t+h})}\left[R_{t:t+h} + \gamma^h {V}^\star(s_{t+h})\right]\right],
\end{align}
and the maximum value of a random variable is no less than its expectation. 

Let
\begin{align}
    \tilde{Q}_{\mathrm{min}}(s_t, a^\circ_{t:t+h}) &:= \min_{\mathrm{supp}(P_{\mathcal{D}}(\cdot \mid s_t, a^\circ_{t:t+h}))} \left[R_{t:t+h} + V^\star(s_{t+h})\right],\\
    \tilde{Q}_{\mathrm{max}}(s_t, a^\circ_{t:t+h}) &:= \max_{\mathrm{supp}(P_{\mathcal{D}}(\cdot \mid s_t, a^\circ_{t:t+h}))} \left[R_{t:t+h} + V^\star(s_{t+h})\right].
\end{align}

Since $\mathcal{D}$ exhibits $\vartheta_h^G$-variability in optimality, we have
\begin{align}
    \tilde{Q}_{\mathrm{min}}(s_t, a^\circ_{t:t+h}) \geq \tilde{Q}_{\mathrm{max}}(s_t, a^\circ_{t:t+h}) - \vartheta^G_h.
\end{align}

\begin{equation}
\begin{aligned}
    &V^\star(s_t) - Q^\star(s_t, a^+_t) \\
    &= V^\star(s_t) - \hat{V}^+_{\mathrm{ac}}(s_t) + \hat{V}^+_{\mathrm{ac}}(s_t) - Q^\star(s_t, a^+_t) \\
    &= V^\star(s_t) - \hat{Q}^+_{\mathrm{ac}}(s_t, a^+_{t:t+h}) + \hat{Q}^+_{\mathrm{ac}}(s_t, a^+_{t:t+h}) - Q^\star(s_t, a^+_t) \\
    &\leq V^\star(s_t) - \hat{Q}^+_{\mathrm{ac}}(s_t, a^\circ_{t:t+h}) + \vartheta^L_h+\gamma^h\mathbb{E}_{P_{\mathcal{D}}(\cdot \mid s_t, a^+_{t:t+h})}\left[ \hat{V}^+_{\mathrm{ac}}(s_{t+h}) - V^\star(s_{t+h})\right] \\
    &=V^\star(s_t) - \hat{Q}^+_{\mathrm{ac}}(s_t, a^\circ_{t:t+h}) + \vartheta^L_h+\gamma^h\mathbb{E}_{P_{\mathcal{D}}(\cdot \mid s_t, a^+_{t:t+h})}\left[ \hat{V}^+_{\mathrm{ac}}(s_{t+h}) - Q^\star(s_{t+h}, a^+_{t+h})\right] - \\ &\quad \quad \gamma^h\mathbb{E}_{P_{\mathcal{D}}(\cdot \mid s_t, a^+_{t:t+h})}\left[V^\star(s_{t+h}) - Q^\star(s_{t+h}, a^+_{t+h})\right].
\end{aligned}
\end{equation}

We can use it to lower-bound $\hat{V}^+_{\mathrm{ac}}(s_t)$ as follows:
\begin{equation}
\begin{aligned}
    \hat{V}^+_{\mathrm{ac}}(s_t) &= \hat{Q}^+_{\mathrm{ac}}(s_t, a^+_{t:t+h}) \\
    &\geq \hat{Q}^+_{\mathrm{ac}}(s_t, a^\circ_{t:t+h}) \\
    &= \mathbb{E}_{P_{\mathcal{D}}(\cdot \mid s_t, a^\circ_{t:t+h})}\left[R_{t:t+h} + \gamma^h \hat{V}^+_{\mathrm{ac}}(s_{t+h})\right]  \\
    &= \mathbb{E}_{P_{\mathcal{D}}(\cdot \mid s_t, a^\circ_{t:t+h})}\left[R_{t:t+h} + \gamma^h {V}^\star(s_{t+h})\right] + \mathbb{E}_{P_{\mathcal{D}}(\cdot \mid s_t, a^\circ_{t:t+h})}\left[\gamma^h (\hat{V}^+_{\mathrm{ac}}(s_{t+h}) - V^\star(s_{t+h}))\right] \\
    &\geq \tilde{Q}_{\mathrm{min}}(s_t, a^\circ_{t:t+h}) + \mathbb{E}_{P_{\mathcal{D}}(\cdot \mid s_t, a^\circ_{t:t+h})}\left[\gamma^h (\hat{V}^+_{\mathrm{ac}}(s_{t+h}) - V^\star(s_{t+h}))\right] \\
    &\geq \tilde{Q}_{\mathrm{max}}(s_t, a^\circ_{t:t+h}) - \vartheta^G_h + \mathbb{E}_{P_{\mathcal{D}}(\cdot \mid s_t, a^\circ_{t:t+h})}\left[\gamma^h (\hat{V}^+_{\mathrm{ac}}(s_{t+h}) - V^\star(s_{t+h}))\right] \\
    &\geq \mathbb{E}_{P_{\mathcal{D}^\star}(\cdot \mid s_t, a^\circ_{t:t+h})}\left[R_{t:t+h} + \gamma^h {V}^\star(s_{t+h})\right] - \vartheta^G_h + \gamma^h\mathbb{E}_{P_{\mathcal{D}}(\cdot \mid s_t, a^\circ_{t:t+h})}\left[ (\hat{V}^+_{\mathrm{ac}}(s_{t+h}) - V^\star(s_{t+h}))\right] \\
    &\geq V^\star(s_t) - \vartheta^G_h + \gamma^h\mathbb{E}_{P_{\mathcal{D}}(\cdot \mid s_t, a^\circ_{t:t+h})}\left[ (\hat{V}^+_{\mathrm{ac}}(s_{t+h}) - V^\star(s_{t+h}))\right] \\
    &\geq V^\star(s_t) - \frac{\vartheta^G_h}{1-\gamma^h}.
\end{aligned}
\end{equation}

Let $\mathbb{M}^+ = \{\tilde{\mathcal{D}}_1, \cdots \tilde{\mathcal{D}}_{M^+}\}$ be all data distributions from $\{\mathcal{D}^\star, \mathcal{D}_1, \mathcal{D}_2, \cdots, \mathcal{D}_M\}$ where $(s_t, a^+_{t:t+h})$ is in the support. 
Let $\tilde{\mathcal{D}}^+$ be any mixture of $\mathbb{M}$ 
where each mixture component has non-zero weight:
\begin{align}
    P_{\tilde{\mathcal{D}}^+} = \sum_{i=1}^{M}w_i P_{\tilde{\mathcal{D}}_i}, 
\end{align}
where $w_i > 0, \sum_{i} w_i = 1$.

Let
\begin{align}
    \tilde{Q}^\star_{\mathrm{min}}(s_t, a_t) &:= \min_{\mathrm{supp}(P_{\mathcal{D}^\star}(\cdot \mid s_t, a_t))} \left[R_{t:t+h} + V^\star(s_{t+h})\right], \\
    \tilde{Q}^\star_{\mathrm{max}}(s_t, a_t) &:= \max_{\mathrm{supp}(P_{\mathcal{D}^\star}(\cdot \mid s_t, a_t))} \left[R_{t:t+h} + V^\star(s_{t+h})\right], \\
    \tilde{Q}^{i}_{\mathrm{min}}(s_t, a_t) &:= \min_{\mathrm{supp}(P_{\mathcal{D}^i}(\cdot \mid s_t, a_t))} \left[R_{t:t+h} + V^\star(s_{t+h})\right], \\
    \tilde{Q}^{i}_{\mathrm{max}}(s_t, a_t) &:= \max_{\mathrm{supp}(P_{\mathcal{D}^i}(\cdot \mid s_t, a_t))} \left[R_{t:t+h} + V^\star(s_{t+h})\right], \\
    \tilde{Q}^{+}_{\mathrm{max}}(s_t, a^+_t) &:= \max_{\mathrm{supp}(P_{\tilde{\mathcal{D}}^{+}}(\cdot \mid s_t, a^+_t))} \left[R_{t:t+h} + V^\star(s_{t+h})\right], \\
    \tilde{Q}^{+}_{\mathrm{max}}(s_t, a^+_{t:t+h}) &:= \max_{\mathrm{supp}(P_{\tilde{\mathcal{D}}^{+}}(\cdot \mid s_t, a^+_{t:t+h}))} \left[R_{t:t+h} + V^\star(s_{t+h})\right].
\end{align}
The minimum and the maximum is over the remaining trajectory conditioned on $s_t, a_t$ or $s_t, a_{t:t+h}$ that is still in the support of the corresponding data distribution.

From the $\vartheta^L_h$-bounded variability in optimality and the \cref{assumption:data} of each data mixture, we observe that
\begin{align}
    Q^\star(s_t, a_t) &\geq \tilde{Q}^i_{\mathrm{min}}(s_t, a_t) \geq \tilde{Q}^i_{\mathrm{max}}(s_t, a_t) - \vartheta_h^L, \quad \forall i \in \{1, 2, \cdots, M\} \\
    Q^\star(s_t, a_t) &\geq \tilde{Q}^\star_{\mathrm{min}}(s_t, a_t) \geq \tilde{Q}^\star_{\mathrm{max}}(s_t, a_t) - \vartheta_h^L.
\end{align}
We can then derive that
\begin{equation}
\begin{aligned}
    \tilde{Q}^+_{\mathrm{max}}(s_t, a^+_t) &= \max(\tilde{Q}^\star_{\mathrm{max}}(s_t, a^+_t), \tilde{Q}^1_{\mathrm{max}}(s_t, a^+_t), \cdots, \tilde{Q}^M_{\mathrm{max}}(s_t, a^+_t))\\
    &\leq Q^\star(s_t, a_t) + \vartheta^L_h.
\end{aligned}
\end{equation}

With this, we can now upper-bound $\hat{V}^+_{\mathrm{ac}}(s_t)$ as follows:
\begin{equation}
\begin{aligned}
    \hat{V}^+_{\mathrm{ac}}(s_t) &= \hat{Q}^+_{\mathrm{ac}}(s_t, a^+_{t:t+h}) \\
    & = \mathbb{E}_{P_{\mathcal{D}}(\cdot \mid s_t, a^+_{t:t+h})}\left[R_{t:t+h} + \gamma^h \hat{V}^+_{\mathrm{ac}}(s_{t+h})\right]\\
    & = \mathbb{E}_{P_{\tilde{\mathcal{D}}^+}(\cdot \mid s_t, a^+_{t:t+h})}\left[R_{t:t+h} + \gamma^h \hat{V}^+_{\mathrm{ac}}(s_{t+h})\right]\\
    & = \mathbb{E}_{P_{\tilde{\mathcal{D}}^+}(\cdot \mid s_t, a^+_{t:t+h})}\left[R_{t:t+h} + \gamma^h V^\star(s_{t+h})\right] + \gamma^h \mathbb{E}_{P_{\tilde{\mathcal{D}}^+}(\cdot \mid s_t, a^+_{t:t+h})}\left[\hat{V}^+_{\mathrm{ac}}(s_{t+h}) - V^\star(s_{t+h})\right]\\
    &\leq \tilde{Q}^+_{\mathrm{max}}(s_t, a^+_{t:t+h}) + \gamma^h \mathbb{E}_{P_{\mathcal{D}}(\cdot \mid s_t, a^+_{t:t+h})}\left[\hat{V}^+_{\mathrm{ac}}(s_{t+h}) - V^\star(s_{t+h})\right] \\
    &\leq \tilde{Q}^+_{\mathrm{max}}(s_t, a^+_{t}) + \gamma^h \mathbb{E}_{P_{\mathcal{D}}(\cdot \mid s_t, a^+_{t:t+h})}\left[\hat{V}^+_{\mathrm{ac}}(s_{t+h}) - V^\star(s_{t+h})\right] \\
    &\leq Q^\star(s_t, a_t^+) + \vartheta^L_h + \gamma^h \mathbb{E}_{P_{\mathcal{D}}(\cdot \mid s_t, a^+_{t:t+h})}\left[\hat{V}^+_{\mathrm{ac}}(s_{t+h}) - V^\star(s_{t+h})\right].
\end{aligned}
\end{equation}

Let
\begin{align}
    \Delta(s_t) &:= V^\star(s_t) - Q^\star(s_t, a^+_t). \\
    \hat \Delta(s_t) &:= \hat{V}^+_{\mathrm{ac}}(s_t) - Q^\star(s_t, a^+_t).
\end{align}

From the inequalities above, we have
\begin{align}
    \hat\Delta(s_t) &\leq \vartheta_h^L + \gamma^h\sup_{s_{t+h}}\left[\hat\Delta(s_{t+h}) - \Delta(s_{t+h})\right], \\
    0\leq \Delta(s_t) &\leq \frac{\vartheta_h^G}{1-\gamma^h}  + \hat{\Delta}(s_t), \\ 
    \hat{\Delta}(s_t) - \Delta(s_t)  &\leq \min\left\{\frac{\vartheta_h^G}{1-\gamma^h}, \hat\Delta(s_t)\right\}.
\end{align}

The minimum operator allows us to obtain two upper-bounds on $\Delta$:
\begin{align}
    \Delta(s_t) &\leq \vartheta^L_h+\frac{(1+\gamma^h)\vartheta^G_h}{1-\gamma^h}, \\
    \Delta(s_t) &\leq \frac{\vartheta^G_h}{1-\gamma^h} + \hat\Delta(s_t) \leq \frac{\vartheta^L_h+\vartheta^G_h}{1-\gamma^h}.
\end{align}

Finally, combining these two upper-bounds together and recursively applying the inequality yields our desired results:
\begin{align}
    V^\star(s_t) - Q^\bullet(s_t, a^+_t) \leq  \frac{\vartheta^L_h}{1-\gamma} + \frac{\vartheta^L_h}{(1-\gamma)(1-\gamma^h)} + \frac{\gamma^h\min(\vartheta^G_h, \vartheta^L_h)}{(1-\gamma)(1-\gamma^h)}.
\end{align}

\end{proof}

\newpage
\subsection{Proof of \cref{thm:optvartight}}
\optvartight*

To show that our upper bound is achievable, we need to carefully design both the MDP and the data distribution. For clarity of the proof, we divide up the construction into two parts. The first part (\cref{lemma:optvar-1}) focuses on designing part of the MDP and two data distributions $\mathcal{D}^\star$ and $\mathcal{D}^\diamond$ such that any action chunk that has a value bigger than $V^\star-\frac{\vartheta^G_h}{1-\gamma^h}$ is preferred over the action chunks in $\mathcal{D}^\star$ and $\mathcal{D}^\diamond$. The second part (\cref{lemma:optvar-2}) focuses on constructing the remaining MDP and the $\mathcal{D}^\triangle$ that contains the action chunk that $\pi^+_{\mathrm{ac}}$ picks where $\hat{V}^+_{\mathrm{ac}}$
 overestimates the value of this action chunk by $\vartheta^L_h + \frac{\gamma^h\min(\vartheta^L, \vartheta^G_h)}{1-\gamma^h}$. Finally, we assemble these two results (combining $\mathcal{D}^\star$, $\mathcal{D}^\diamond$, $\mathcal{D}^\triangle$) to show that the MDP and the mixture data achieve our upper-bound \emph{exactly}.

\begin{lemma}[``The Castle'']
\label{lemma:optvar-1}
For $\delta \in (0, 1), \vartheta_h^G < \frac{\gamma-\gamma^h}{2(1-\gamma)}$, consider a 2-state, 2-action MDP in \cref{fig:optvartight-p1}. Let there be two data distributions, $\mathcal{D}^\star$ and $\mathcal{D}^\diamond$. $\mathcal{D}^\star$ is collected by the following optimal closed-loop policy from $X$ and $Y$:
\begin{align}
    \pi^\star(X) = 0, \pi^\star(Y) = 1.
\end{align}
$\mathcal{D}^\diamond$ is collected by the following optimal closed-loop policy from $X$ and $Y$:
\begin{align}
    \pi^\diamond(X) = 1, \pi^\diamond(Y) = 0.
\end{align}
Let $\mathcal{D}$ be a mixture of $\mathcal{D}^\star$ and $\mathcal{D}^\diamond$ with
\begin{align}
    P_{\mathcal{D}} = (1-\varsigma) P_{\mathcal{D}^\star} + \varsigma P_{\mathcal{D}^\diamond}.
\end{align}
There exists $c_1 \in (0, 1/2)$ such that
\begin{enumerate}
    \item $\mathcal{D}^\star$ and $\mathcal{D}^\diamond$ both individually exhibits $0$-variability in optimality conditioned on $s_t, a_t$ for all $s_t, a_t \in \mathrm{supp}(P_{\mathcal{D}}(s_t, a_t))$,
    \item $\mathcal{D}$ exhibits $\vartheta^G_h$-variability in optimality conditioned on $s_t, a_{t:t+h}$ for all $s_t, a_{t:t+h} \in \mathrm{supp}(P_{\mathcal{D}}(s_t, a_{t:t+h}))$,
\end{enumerate}
and
\begin{align}
    \hat{V}^+_{\mathrm{ac}}(X) = \hat{V}^+_{\mathrm{ac}}(Y) = \frac{1-\gamma+\varsigma(\gamma-\gamma^h)}{2(1-\gamma^h)(1-\gamma)} - \frac{\varsigma\vartheta_h^G}{1-\gamma^h}.
\end{align}
\end{lemma}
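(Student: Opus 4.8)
The plan is to realize the bound constructively: write down the $2$-state MDP of \cref{fig:optvartight-p1} explicitly, exhibit the two optimal data distributions, and then verify the three assertions in order, deferring the fixed-point computation of $\hat V^+_{\mathrm{ac}}$ to the end. The organizing principle is a $(X\!\leftrightarrow\! Y,\,0\!\leftrightarrow\!1)$ symmetry of the transition kernel and reward (tuned by the reward parameter $\delta$ and the free parameter $c_1$): this is exactly what makes both $\pi^\star$ and $\pi^\diamond$ optimal with a common value, and what will later force $\hat V^+_{\mathrm{ac}}(X)=\hat V^+_{\mathrm{ac}}(Y)$ so that the two backup equations collapse to one. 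First I would read off the deterministic ``spine'' of the castle traversed by each optimal policy, identify its period-$h$ return to $\{X,Y\}$, and solve the scalar fixed point $V^\star(X)=\mathbb E[R_{t:t+h}]+\gamma^h V^\star(X)$ to pin down $V^\star$, which by symmetry is shared by $X$ and $Y$.

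Next I would dispatch the local condition: since $\mathcal{D}^\star$ and $\mathcal{D}^\diamond$ are each generated by a \emph{deterministic} optimal closed-loop policy obeying \cref{assumption:data}, conditioning on $(s_t,a_t)$ fixes the whole length-$h$ continuation along the spine, so $R_{t:t+h}+\gamma^h V^\star(s_{t+h})$ is constant on each such support and its max-minus-min vanishes, giving the $\vartheta^L_h=0$ claim in the sense of \cref{def:opt-var}. The global condition is more delicate. I would locate the action chunk(s) lying in the support of both $P_{\mathcal{D}^\star}(\cdot\mid s_t)$ and $P_{\mathcal{D}^\diamond}(\cdot\mid s_t)$: under the mixture these chunks are \emph{confounded}, the same initial actions being reached along two distinct stochastic branches governed by $c_1$, so $R_{t:t+h}+\gamma^h V^\star(s_{t+h})$ now spreads over an interval. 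I would then solve for the $c_1\in(0,1/2)$ that makes this spread exactly $\vartheta^G_h$, using the hypothesis $\vartheta^G_h<\frac{\gamma-\gamma^h}{2(1-\gamma)}=\frac12\sum_{k=1}^{h-1}\gamma^k$ to guarantee $c_1$ remains in range.

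The final and hardest step is evaluating $\hat V^+_{\mathrm{ac}}$, the fixed point of the action-chunking Bellman optimality backup (\cref{eq:acp}) over $\mathrm{supp}(P_{\mathcal{D}})$. For each candidate chunk I would assemble $\hat Q^+_{\mathrm{ac}}(X,a_{t:t+h})=\mathbb E_{P_{\mathcal D}(\cdot\mid X,a_{t:t+h})}[R_{t:t+h}+\gamma^h\hat V^+_{\mathrm{ac}}(s_{t+h})]$, where the crucial point is that conditioning on a confounded chunk induces a \emph{posterior} over the two components, not the prior $\varsigma$-weights; by the symmetric construction this posterior places weight $\varsigma$ on the lower branch, so the confounded chunk's nominal return is its best-branch value pulled down by precisely $\varsigma\vartheta^G_h$, while its bootstrap term returns to $\{X,Y\}$ with coefficient exactly $\gamma^h$. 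Showing this confounded chunk is the maximizer and invoking $\hat V^+_{\mathrm{ac}}(X)=\hat V^+_{\mathrm{ac}}(Y)$ closes the recursion into the single equation $(1-\gamma^h)\hat V^+_{\mathrm{ac}}(X)=\tfrac12+\tfrac{\varsigma}{2}\sum_{k=1}^{h-1}\gamma^k-\varsigma\vartheta^G_h$, whose solution is the stated value (notably $\delta$-independent, so I would check that any spine rewards written in terms of $\delta$ cancel). The main obstacle is exactly this last bookkeeping: correctly forming the per-chunk posterior next-state law under the mixture and confirming that the optimality spread $\vartheta^G_h$ enters the maximizing chunk's value with weight $\varsigma$ rather than $1$ or $1-\varsigma$, together with verifying that no competing chunk attains a higher nominal value.
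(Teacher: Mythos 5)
Your plan has the right architecture and even lands on the right scalar equation: you set $c_1$ so that $c_1\frac{\gamma-\gamma^h}{1-\gamma}=\vartheta^G_h$ (the paper takes $c_1=\frac{(1-\gamma)\vartheta^G_h}{\gamma-\gamma^h}$, with the hypothesis $\vartheta^G_h<\frac{\gamma-\gamma^h}{2(1-\gamma)}$ guaranteeing $c_1<1/2$), you exploit the $X\leftrightarrow Y$ symmetry to collapse the backup to a single equation, and your displayed fixed point $(1-\gamma^h)\hat V^{+}_{\mathrm{ac}}(X)=\tfrac12+\tfrac{\varsigma}{2}\sum_{k=1}^{h-1}\gamma^k-\varsigma\vartheta^G_h$ is algebraically identical to the stated value. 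However, two of your premises do not match the construction. First, there is no ``deterministic spine'': every state--action pair in the castle transitions to $Y$ with probability $\delta$ and to $X$ with probability $1-\delta$. The local $0$-variability holds for a different reason --- each component policy receives the \emph{same} per-step reward regardless of the realized state ($1/2$ for $\pi^\star$, $1/2-c_1$ for $\pi^\diamond$), and $V^\star(X)=V^\star(Y)=\frac{1}{2(1-\gamma)}$, so $R_{t:t+h}+\gamma^h V^\star(s_{t+h})$ is almost surely constant even though the state sequence is random.

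Second, and more seriously, the step you yourself flag as the main obstacle --- forming the per-chunk posterior over the two mixture components --- cannot be carried out as you describe, because in this MDP the two components have \emph{disjoint} chunk supports: $\pi^\star$ and $\pi^\diamond$ disagree on the action at both $X$ and $Y$, so conditioning on $(s_t,a_{t:t+h})$ (indeed already on $(s_t,a_t)$) identifies the component, and the conditional law of the trajectory is then reward-deterministic. There is no confounded chunk whose nominal value is ``pulled down by $\varsigma\vartheta^G_h$,'' and a direct evaluation of the Bellman-optimal backup over $\mathrm{supp}(P_{\mathcal{D}})$ for the MDP as drawn gives $\hat V^{+}_{\mathrm{ac}}(X)=\frac{1}{2(1-\gamma)}$ rather than the stated formula. (The paper's own proof also evaluates the maximizing chunk as a $(1-\varsigma,\varsigma)$-weighted mixture, so your instinct tracks its intended calculation, but that weighting requires the two component policies to share action chunks --- e.g., stochastic components or agreement on the first action --- which the construction as given does not provide.) Until you either exhibit the overlap that makes the posterior nontrivial or identify another mechanism producing the $-\varsigma\vartheta^G_h/(1-\gamma^h)$ underestimation, the final value computation, and hence the lemma, is not established by your argument.
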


\begin{figure}[h]
    \centering
    \scalebox{1.0}{\begin{tikzcd}[ampersand replacement=\&,cramped,row sep=huge]
	\& Y \&\&\& X \\
	{a=0} \&\& {a=1} \& {a=1} \&\& {a=0} \\
	{r=1/2-c_1} \&\& {r=1/2} \& {r=1/2-c_1} \&\& {r=1/2} \\
	\\
	\\
	Y \&\&\&\&\& X
	\arrow[from=1-2, to=2-1]
	\arrow[from=1-2, to=2-3]
	\arrow[from=1-5, to=2-4]
	\arrow[from=1-5, to=2-6]
	\arrow[from=2-1, to=3-1]
	\arrow[from=2-3, to=3-3]
	\arrow[from=2-4, to=3-4]
	\arrow[from=2-6, to=3-6]
	\arrow["\delta"{description, pos=0.6}, from=3-1, to=6-1]
	\arrow["{1-\delta}"{description, pos=0.6}, from=3-1, to=6-6]
	\arrow["\delta"{description, pos=0.6}, from=3-3, to=6-1]
	\arrow["{1-\delta}"{description, pos=0.6}, from=3-3, to=6-6]
	\arrow["\delta"{description, pos=0.6}, from=3-4, to=6-1]
	\arrow["{1-\delta}"{description, pos=0.6}, from=3-4, to=6-6]
	\arrow["\delta"{description, pos=0.6}, from=3-6, to=6-1]
	\arrow["{1-\delta}"{description, pos=0.6}, from=3-6, to=6-6]
\end{tikzcd}}
    \caption{\textbf{MDP construction Part 1 for \cref{thm:optvartight} (``the castle'').} This diagram describes state $X$ and $Y$ and how actions $a=0$ and $a=1$ transition between them. The main purpose of this construction is to make $\hat{V}^+_{\mathrm{ac}}(X)$ underestimate $V^\star$ by exactly $\vartheta^G_h/(1-\gamma^h)$. This allows the action chunk that appears in the second part of the construction to be preferred (by $\pi^+_{\mathrm{ac}}$) over the action chunks that start with $a=0$ or $a=1$.}
    \label{fig:optvartight-p1}
\end{figure}
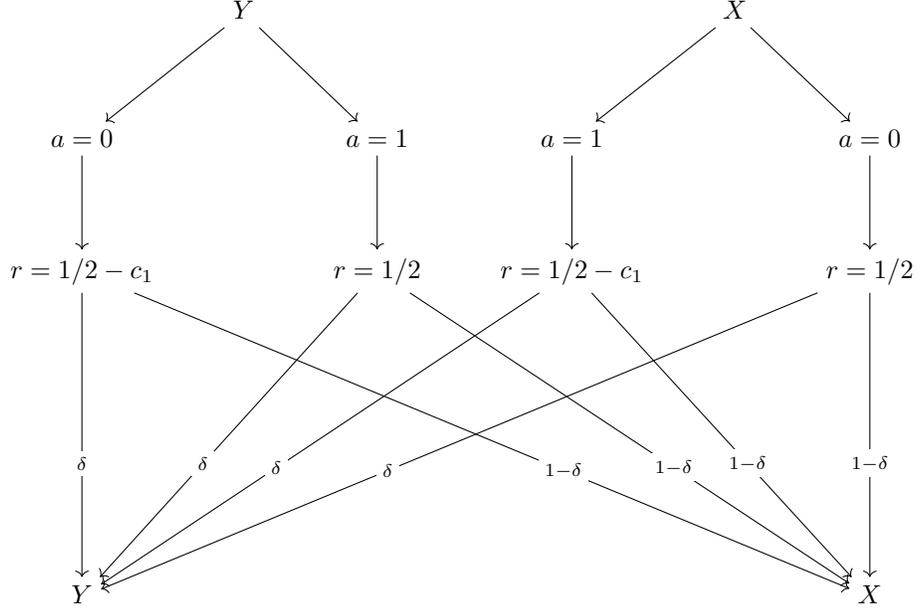

\begin{proof}
Set
\begin{align}
    c_1 = \frac{(1-\gamma)\vartheta^G_h}{\gamma-\gamma^h}.
\end{align}
We first check whether $c_1 \in (0, 1/2)$. For the upper-bound, it is clear that $c_1 < 1/2$ because $\vartheta^G_h < \frac{\gamma-\gamma^h}{2(1-\gamma)}$. For the lower-bound, $c >0$ because all terms in the fraction are positive.

We now check the two optimality variability conditions. The first (local) one is trivial because $\pi^\diamond$ always receives $r=1/2-c_1$ and $\pi^\star$ always receives $r=1/2$, and the optimal value for $X$ and $Y$ are both $V^\star(X) = V^\star(Y) = \frac{1}{2(1-\gamma)}$.

Next, we check the second (global) condition by analyzing all possible states and action chunks in $\mathcal{D}$. 
We observe that for any $a_{t:t+h}$ that starts with $a_t=0$, we have
\begin{align}
    \tilde{Q}_{\mathrm{min}}(X, a_{t:t+h}) &= \frac{1 - 2c_1(\gamma-\gamma^h)}{2(1-\gamma)}, \\
    \tilde{Q}_{\mathrm{max}}(X, a_{t:t+h}) &= \frac{1}{2(1-\gamma)},
\end{align}
which gives
\begin{align}
    \tilde{Q}_{\mathrm{max}}(X, a_{t:t+h}) - \tilde{Q}_{\mathrm{min}}(X, a_{t:t+h}) = \vartheta^G_h.
\end{align}
By symmetry, we also have
\begin{align}
    \tilde{Q}_{\mathrm{max}}(Y, a_{t:t+h}) - \tilde{Q}_{\mathrm{min}}(Y, a_{t:t+h}) = \vartheta^G_h.
\end{align}
for all $a_{t:t+h}$ that starts with $a_t=1$. 

Now, for any $a_{t:t+h}$ that starts with $a_t=1$, we have
\begin{align}
    \tilde{Q}_{\mathrm{min}}(X, a_{t:t+h}) &= \frac{\gamma - 2c_1(\gamma-\gamma^h)}{2(1-\gamma)}, \\
    \tilde{Q}_{\mathrm{max}}(X, a_{t:t+h}) &= \frac{\gamma}{2(1-\gamma)},
\end{align}
which admits the same gap as the case when $a_t=0$. The same also holds for $Y$ with $a_t=1$. Thus, $\mathcal{D}$ exhibits $\vartheta^G_h$-variability in optimality conditioned on $s_t, a_{t:t+h}$ for all $s_t, a_{t:t+h} \in \mathrm{supp}(P_{\mathcal{D}}(s_t, a_{t:t+h}))$.

Finally, we check for the value,
\begin{equation}
\begin{aligned}
    \hat{V}^+_{\mathrm{ac}}(X) = \hat{V}^+_{\mathrm{ac}}(Y) &=  
    (1-\varsigma)/2 + \varsigma(1/2+(1-2c_1)\frac{\gamma-\gamma^h}{2(1-\gamma)}) \\
    &= \frac{1}{1-\gamma^h}\left[1/2+\varsigma \frac{(1-2c_1)(\gamma-\gamma^h)}{2(1-\gamma)}\right] \\
    &= \frac{1}{2(1-\gamma^h)}\left[1+\varsigma \frac{\gamma-\gamma^h -2(1-\gamma)\vartheta_h^G}{1-\gamma}\right] \\
    &= \frac{1-\gamma+\varsigma(\gamma-\gamma^h)}{2(1-\gamma^h)(1-\gamma)} - \frac{\varsigma\vartheta_h^G}{1-\gamma^h},
\end{aligned}
\end{equation}
as desired.
\end{proof}

\begin{lemma}[``The Flower'']
\label{lemma:optvar-2}
Assume $\vartheta^G_h \in \left(0, \frac{1-\gamma^h}{8}\right], \vartheta^L_h \in \left(0, \frac{\gamma-\gamma^h}{4(1-\gamma)}\right]$, $\gamma \in (0, 1)$, and
Consider a $5$-state, 3-action MDP in \cref{fig:optvartight-p2a} building on top of the transitions that already in \cref{fig:optvartight-p1}. Let $\mathcal{D}^\triangle$ be a data distribution induced by a cycling, time-dependent (with a time cycle length of $h$) policy $\pi^\triangle$ (we use the subscript to indicate the time step from $0$ to $h-1$):
\begin{align}
    \pi_0^\triangle(s_t=X) &= \pi_0^\triangle(s_t=\tilde X) = 2, \\
    \pi_0^\triangle(s_t=Y) &= 3\\
    \pi_k^\triangle(s_{t+k}=\tilde C) = \pi_k^\triangle(s_{t+k}=\tilde D) &= 2, \quad\forall k \in \{1, 2, \cdots, h-2\},\\
    \pi_k^\triangle(s_{t+h-1}=\tilde C) = \pi_k^\triangle(s_{t+h-1}=\tilde D) &= 0, \\
    \pi_k^\triangle(s_{t+k}=X) &= 0, \quad\forall k \in \{1, 2, \cdots, h-1\},\\
    \pi_k^\triangle(s_{t+k}=Y) &= 1, \quad\forall k \in \{1, 2, \cdots, h-1\}.
\end{align}

Let $\hat{V}^+_{\mathrm{ac}}$ be the nominal value of the action chunking policy $\pi^+_{\mathrm{ac}}$ learned from $\mathcal{D}^\triangle$ and let
\begin{align}
    \Delta = \vartheta^L_h + \frac{\vartheta^G_h}{1-\gamma^h} + \frac{\gamma^h\min(\vartheta^G_h,\vartheta^L_h)}{1-\gamma^h}.
\end{align}

For any $c \in \left[0, \frac{\gamma-\gamma^h}{4(1-\gamma^h)}\right)$, there exists some $0 < c_2\leq 1/2$, $0 < c_3\leq 1/2$, $\delta, \delta_2 \in (0, 1)$, such that for every $0 < \tilde \Delta < \min\left(\Delta, \frac{2\vartheta^G_h}{1-\gamma^h}\right)$,
\begin{enumerate}
    \item $\mathcal{D}^\triangle$ exhibits $0$-variability in optimality conditioned on $s_t, a_{t:t+h}$ for all $s_t, a_{t:t+h} \in \mathrm{supp}(P_{\mathcal{D}^\triangle}(s_t, a_{t:t+h}))$,
    \item $\mathcal{D}^\triangle$ exhibits $\vartheta^L_h$-variability in optimality conditioned on $s_t, a_t$ for all $s_t, a_t \in \mathrm{supp}(P_{\mathcal{D}^\triangle}(s_t, a_t))$,
\end{enumerate}
and
\begin{align}
    \hat{V}^+_{\mathrm{ac}}(X) &= \frac{1}{2(1-\gamma)} - \frac{\vartheta^G_h}{1-\gamma^h} + \tilde \Delta,\\
    V^\star(X) - V^\bullet(X) &= \frac{\Delta - \tilde\Delta}{1-\gamma},\\
    V^\star(X) - V^+_{\mathrm{ac}}(X) &\geq \frac{c}{1-\gamma}, \\
    V^\star(X) - V^\star_{\mathrm{ac}}(X) &\geq \frac{c}{1-\gamma}.
\end{align}
\end{lemma}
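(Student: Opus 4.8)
The plan is to prove \cref{lemma:optvar-2} by an explicit construct-and-verify argument in the same spirit as \cref{proof:suboptactight}, gluing the ``flower'' gadget onto the ``castle'' of \cref{lemma:optvar-1}. First I would write out the full MDP on the five states $X,Y,\tilde X,\tilde C,\tilde D$: beyond the castle transitions of \cref{fig:optvartight-p1} between $X$ and $Y$, the chunk-initiating actions of $\pi^\triangle$ (e.g. $a=2$ from $X,\tilde X$ and $a=3$ from $Y$) send the agent onto the chain $\tilde C,\tilde D$, which it then traverses for the remaining $h-1$ steps before cycling back to $X$. Four free parameters control the gadget: $\delta,\delta_2$ set the stochastic branching along the chain and $c_2,c_3$ set the per-step reward offsets. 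The guiding principle, furnished by \cref{lemma:optvar-1}, is that the castle already pins the \emph{nominal} value of every chunk beginning with $a\in\{0,1\}$ down to $V^\star(X)-\vartheta^G_h/(1-\gamma^h)$; hence it suffices to arrange that the flower chunk $a^\triangle_{t:t+h}$ has nominal value $\tfrac{1}{2(1-\gamma)}-\vartheta^G_h/(1-\gamma^h)+\tilde\Delta$, strictly above that threshold, which forces $\pi^+_{\mathrm{ac}}(X)=a^\triangle_{t:t+h}$.

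With the MDP fixed I would verify the six claims in order. (1) Zero global variability of $\mathcal{D}^\triangle$: because $\pi^\triangle$ is deterministic within each phase and the full action chunk determines the supported trajectory, $R_{t:t+h}+\gamma^h V^\star(s_{t+h})$ takes a single value, so the $\max$ and $\min$ in \cref{def:opt-var} agree. (2) Local $\vartheta^L_h$-variability conditioned on $(s_t,a_t)$: committing only to the first action leaves several supported continuations whose optimality spread I would tune, via $c_2,c_3$, to exactly $\vartheta^L_h$. (3) Solving the chunked Bellman fixed point on $\mathcal{D}^\triangle$---the length-$h$ cycle back to $X$ generates the $1/(1-\gamma^h)$ geometric factor---and choosing parameters so $\hat V^+_{\mathrm{ac}}(X)$ equals the prescribed target. (4) Rolling the chunk out under the \emph{true} dynamics and then analyzing $\pi^\bullet$, which executes the first action and re-plans: the key point is that the flower chunk is preferred only because, in the data, the \emph{whole} chunk correlates with a good outcome, whereas the true one-step value $Q^\star(X,a^+_t)$ of committing to the first action alone is low, and the gap is exactly $\vartheta^L_h$ plus its recursive amplification $\gamma^h\min(\vartheta^L_h,\vartheta^G_h)/(1-\gamma^h)$, producing $V^\star(X)-V^\bullet(X)=(\Delta-\tilde\Delta)/(1-\gamma)$. (5)--(6) Lower-bounding $V^\star(X)-V^+_{\mathrm{ac}}(X)$ and $V^\star(X)-V^\star_{\mathrm{ac}}(X)$ by $c/(1-\gamma)$ using the reward slack parameterized by $c$ together with the ceiling $c<\tfrac{\gamma-\gamma^h}{4(1-\gamma^h)}$.

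The remaining bookkeeping is to confirm the parameters are admissible---$c_2,c_3\in(0,1/2]$, $\delta,\delta_2\in(0,1)$---uniformly over $0<\tilde\Delta<\min(\Delta,2\vartheta^G_h/(1-\gamma^h))$, which amounts to checking that each defining ratio has numerator dominated by denominator exactly as in the $\delta_G,\delta_Z$ computations of \cref{proof:suboptactight}; this is where the hypotheses $\vartheta^G_h\le(1-\gamma^h)/8$ and $\vartheta^L_h\le\tfrac{\gamma-\gamma^h}{4(1-\gamma)}$ are consumed. I expect the hard part to be steps (3)--(4): engineering one family of transition and reward parameters so that the \emph{nominal} value lands precisely at $\tfrac{1}{2(1-\gamma)}-\vartheta^G_h/(1-\gamma^h)+\tilde\Delta$ \emph{while} the \emph{actual} closed-loop gap is simultaneously $(\Delta-\tilde\Delta)/(1-\gamma)$, and in particular reproducing the $\min(\vartheta^L_h,\vartheta^G_h)$ term. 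That minimum is the subtle feature---whichever of the local or global budget is tighter caps how far $\hat V^+_{\mathrm{ac}}$ may overshoot (precisely the mechanism noted after \cref{thm:optvar})---so the chain must be built to saturate the smaller budget, and I would treat the cases $\vartheta^L_h\le\vartheta^G_h$ and $\vartheta^L_h>\vartheta^G_h$ with the same gadget but distinct parameter assignments.
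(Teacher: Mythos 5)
Your plan matches the paper's proof in all essential respects: an explicit construct-and-verify argument with reward offsets $c_2,c_3$ (plus a $c_4$ in the paper) tuned to realize the local/global variability budgets, the length-$h$ cycle generating the $1/(1-\gamma^h)$ factor in the nominal fixed point, the stochastic branching $\delta,\delta_2$ used to force $V^\star(X)-V^+_{\mathrm{ac}}(X)\geq c/(1-\gamma)$, and—crucially—the two-case split on $\vartheta^L_h\lessgtr\vartheta^G_h$ with distinct parameter assignments to reproduce the $\min(\vartheta^L_h,\vartheta^G_h)$ term, exactly as the paper does. One minor organizational correction: within this lemma the argmax competition is between the two chunks supported by $\mathcal{D}^\triangle$ alone (both starting with $a=2$ from $X$), and the comparison against the castle chunks starting with $a\in\{0,1\}$ is deferred to the assembly in \cref{thm:optvartight}, not furnished here by \cref{lemma:optvar-1}.
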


\begin{figure}
    \centering
\scalebox{0.88}{\begin{tikzcd}[ampersand replacement=\&,cramped,column sep=tiny,row sep=large]
	\&\&\& X \\
	\\
	\\
	\&\&\& \begin{array}{c} a=2\\r=\frac{1+c_4}{2} \end{array} \\
	\begin{array}{c} a=0\\r=1/2 \end{array} \&\&\&\&\&\& \begin{array}{c} a=2\\r=1/2 \end{array} \\
	\&\&\& {\tilde D} \\
	\\
	\\
	\&\&\& {\tilde X} \\
	\\
	\\
	\& \textcolor{white}{\begin{array}{c} a=2\\r=\frac{1+c_3}{2} \end{array}} \&\& {\tilde C} \&\& \begin{array}{c} a=2\\r=\frac{1+c_3}{2} \end{array} \\
	\\
	X \&\&\& {r=\frac{1-c_2}{2}} \&\&\& Y \\
	\\
	\&\& {a=2} \&\& {a=3} \\
	\\
	\\
	\& {a=3} \&\& {r=0} \&\& {a=2}
	\arrow[curve={height=24pt}, from=4-4, to=6-4]
	\arrow[from=5-1, to=1-4]
	\arrow["{1-\delta_2}"{description}, from=5-7, to=1-4]
	\arrow["{\delta_2}"{description}, from=5-7, to=6-4]
	\arrow[curve={height=24pt}, from=6-4, to=4-4]
	\arrow["\begin{array}{c} a=0\\r=(1+c_4)/2 \end{array}"{description}, from=6-4, to=9-4]
	\arrow[from=9-4, to=5-1]
	\arrow[from=9-4, to=5-7]
	\arrow["\begin{array}{c} a=0\\r=\frac{1+c_3}{2} \end{array}"{description}, from=12-4, to=9-4]
	\arrow[curve={height=-18pt}, from=12-4, to=12-6]
	\arrow[curve={height=-18pt}, from=12-6, to=12-4]
	\arrow[curve={height=6pt}, from=14-1, to=16-3]
	\arrow[from=14-1, to=19-2]
	\arrow["{\delta_2}"{description}, from=14-4, to=12-4]
	\arrow["{(1-\delta_2)(1-\delta)}"{description}, from=14-4, to=14-1]
	\arrow["{(1-\delta_2)\delta}", from=14-4, to=14-7]
	\arrow[curve={height=-6pt}, from=14-7, to=16-5]
	\arrow[from=14-7, to=19-6]
	\arrow[from=16-3, to=14-4]
	\arrow[from=16-5, to=14-4]
	\arrow[from=19-2, to=19-4]
	\arrow["{1-\delta}"{description}, curve={height=-12pt}, from=19-4, to=14-1]
	\arrow["\delta"{description}, curve={height=12pt}, from=19-4, to=14-7]
	\arrow[from=19-6, to=19-4]
\end{tikzcd}}
    \caption{\textbf{MDP construction Part 2 for \cref{thm:optvartight} (``the flower'').} This diagram describes the remaining states $\tilde C$, $\tilde D$ and $\tilde X$, and what actions $a=2$ and $a=3$ do in state $X$ and $Y$. The main purpose of this construction is to make $\hat{V}^+_{\mathrm{ac}}(X)$ overestimate the optimal value of the action chunks that $\pi^+_{\mathrm{ac}}$, $Q^\star(X, a^+_t)$, by exactly $\vartheta^L_h + \gamma^h\min(\vartheta^L_h,\vartheta^G_h)/(1-\gamma^h)$.}
    \label{fig:optvartight-p2a}
\end{figure}
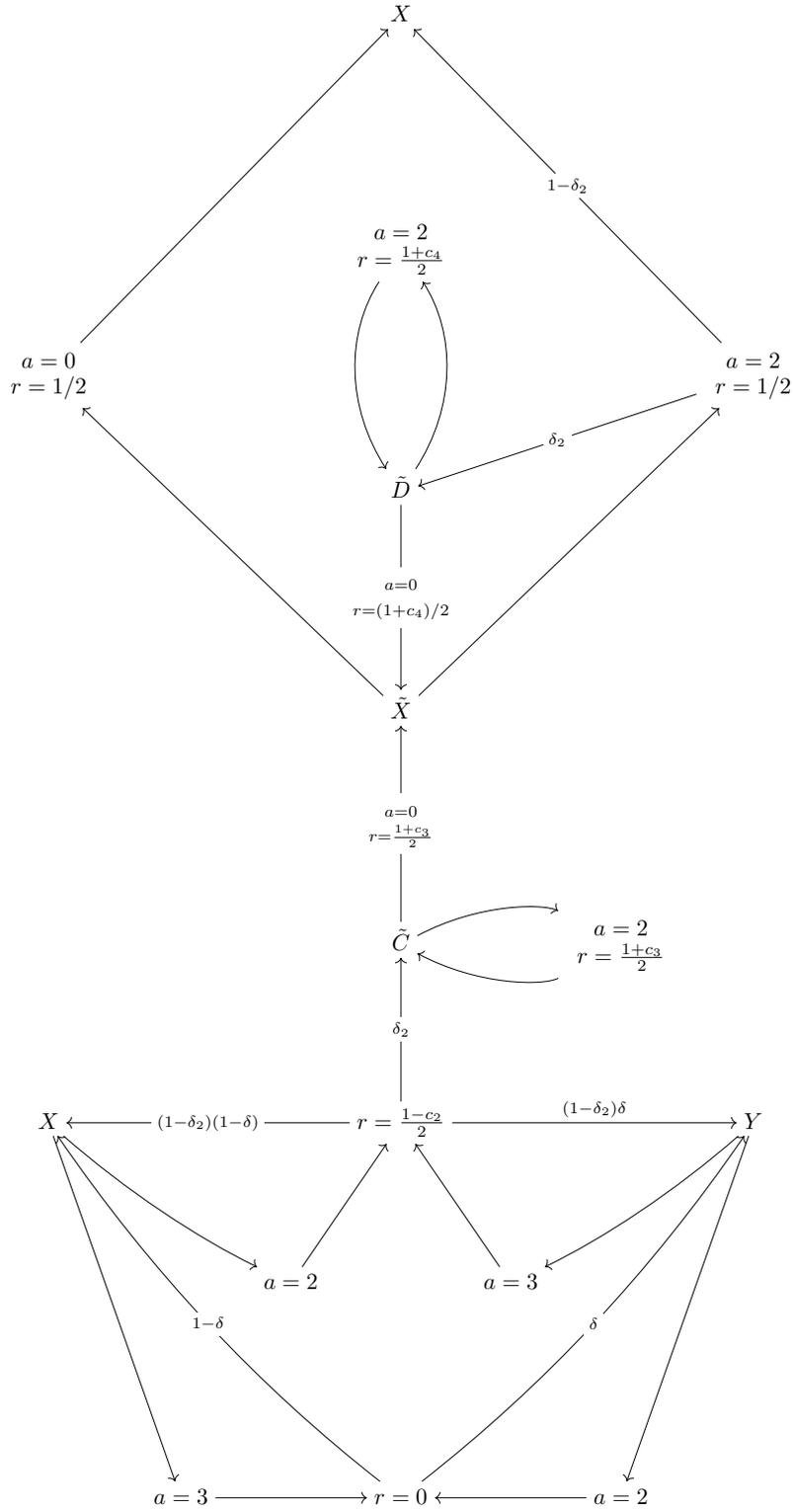

\begin{proof}

Without the loss of generality, we assume we always start from state $X$. Due to symmetry, the same analysis applies to state $Y$ (with the first action being $a_t=3$ rather than $a_t=2$).

Due to cycling nature of the data collection policy, we observe that all action chunks starting from $X$ are in the form of  $a_{t:t+h} = (2, \underbrace{\cdots}_{0\text{'s and }1\text{'s}})$ or $a_{t:t+h} = (2, 2, \cdots, 2, 0)$. These two possibilities correspond to two different paths that the data collection policy takes:
\begin{itemize}
    \item $a^\circ_{t:t+h} = (2, \underbrace{\cdots}_{0\text{'s and }1\text{'s}})$: Stay in either $X$ or $Y$. The agent going on this path receives a constant reward of $1/2$ except the first step where it receives a reward of $(1-c_2)/2$. 
    \item $a^\triangle_{t:t+h} = (2, 2, \cdots, 2, 0)$: Visit $\tilde C$ and then stays there for $h-1$ until it goes out with $a=0$ to visit $\tilde X$. The agent going on this path receives a constant reward of $(1+c_3)/2$ except the first step where it receives a reward of $(1-c_2)/2$.
\end{itemize}

Similarly, all action chunks starting from $\tilde X$ are in the form of $a_{t:t+h} = (2, \underbrace{\cdots}_{0\text{'s and }1\text{'s}})$ or $a_{t:t+h} = (2, 2, \cdots, 2, 0)$. These two possibilities correspond to two different paths that the data collection policy takes: 
\begin{itemize}
    \item $a^\circ_{t:t+h} = (2, \underbrace{\cdots}_{0\text{'s and }1\text{'s}})$: Stay in either $X$ or $Y$. The agent going on this path receives a constant reward of $1/2$.
    \item $a^\triangle_{t:t+h} = (2, 2, \cdots, 2, 0)$: Visit $\tilde C$ and then stays there for $h-1$ until it goes out with $a=0$ to visit $\tilde X$. The agent going on this path receives a constant reward of $(1+c_4)/2$ except the first step where it receives a reward of $1/2$.
\end{itemize}

Now, we divide up the problem into two cases depending on the relative values of $\vartheta^L_h$ and $\vartheta^G_h$.

{\color{ourmiddle}\emph{1. Case $\vartheta^L_h \geq \vartheta^G_h$}}:

Set 
\begin{align}
    c_2 &= 2\left[\vartheta^L_h + \frac{(1+\gamma^h)\vartheta^G_h}{1-\gamma^h}\right] - 2\tilde \Delta > 0, \\
    c_3 &= \frac{2(1-\gamma)\vartheta^L_h}{\gamma-\gamma^h} > 0, \\
    c_4 &= \frac{2(1-\gamma)\vartheta^G_h}{\gamma-\gamma^h} > 0.
\end{align}

Next, we check that $c_2, c_3, c_4 \leq 1/2$.

We first observe that
\begin{align}
    (1-\gamma)(1-\gamma^h)-2(\gamma-\gamma^h) = 1-3\gamma+ \gamma^h(\gamma+1) \leq 1-3\gamma + \gamma(\gamma+1) = (1-\gamma)^2 \geq 0.
\end{align}
Dividing both sides by $8(1-\gamma)$ yields
\begin{align}
    \frac{1-\gamma^h}{8} \geq \frac{\gamma-\gamma^h}{4(1-\gamma)} \geq \vartheta^L_h \geq \vartheta^G_h.
\end{align}
Now, using the inequality above, we have
\begin{equation}
\begin{aligned}
    c_2 &= 2\left[\vartheta^L_h + \frac{(1+\gamma^h)\vartheta^G_h}{1-\gamma^h}\right] - \tilde 2\Delta \\
    &\leq 2\left[\vartheta^L_h + \frac{(1+\gamma^h)\vartheta^L_h}{1-\gamma^h}\right] \\
    &\leq \frac{4\vartheta^L_h}{1-\gamma^h} \\
    &\leq 1/2.
\end{aligned}
\end{equation}
Furthermore,
\begin{align}
    c_4 \leq c_3 &= \frac{2(1-\gamma)\vartheta^L_h}{\gamma-\gamma^h} \leq 1/2.
\end{align}

Next, we check the data distribution $\mathcal{D}^\triangle$ satisfies both optimality variability conditions. We first note that we only need to check for $s_t \in \{X, \tilde X\}$ because all other states are out of the support due to the cycling nature of the data collection policies. The first (global) optimality condition is trivial because the $h$-step reward received is deterministic conditioned on $a_{t:t+h} \in \{ a^\circ_{t:t+h}, a^\triangle_{t:t+h}\}$, and the optimal value of $V^\star(s_{t+h})$ is always $\frac{1}{2(1-\gamma)}$. This leads to $0$-variability in optimality conditioned on $s_t, a_{t:t+h}$. For the second (local) optimality condition, we check the difference in optimality for two paths from $s_t, a_t=2$ for both $s_t = X$ and $s_t = \tilde X$. 

For $s_t = X$, the optimality gap is 
\begin{align}
    c_3 \frac{\gamma-\gamma^h}{2(1-\gamma^h)} = \vartheta^L_h.
\end{align}
For $s_t = \tilde X$, the optimality gap is 
\begin{align}
    c_4 \frac{\gamma-\gamma^h}{2(1-\gamma^h)} = \vartheta^G_h \leq \vartheta^L_h.
\end{align}
This concludes that the second (local) optimality condition is also satisfied.

Next, we first analyze which action chunk $\pi^+_{\mathrm{ac}}$ prefers by computing $\hat Q^+_{\mathrm{ac}}$'s:
\begin{align}
    \hat{Q}^+_{\mathrm{ac}}(X, a^\circ_{t:t+h}) &= \frac{1}{2}\left[ (1-c_2) +\frac{\gamma-\gamma^h}{1-\gamma}\right] + \gamma^h \hat{V}^+_{\mathrm{ac}}(X), \\
    \hat{Q}^+_{\mathrm{ac}}(X, a^\triangle_{t:t+h}) &= \frac{1}{2}\left[(1-c_2) + (1+c_3)\frac{\gamma-\gamma^h}{1-\gamma}\right]  +  \gamma^h \hat{V}^+_{\mathrm{ac}}(\tilde X),\\
    \hat{Q}^+_{\mathrm{ac}}(\tilde X, a^\circ_{t:t+h}) &= \frac{1}{2}\left[\frac{1-\gamma^h}{1-\gamma}\right] +  \gamma^h \hat{V}^+_{\mathrm{ac}}(X), \\
    \hat{Q}^+_{\mathrm{ac}}(\tilde X, a^\triangle_{t:t+h}) &= \frac{1}{2}\left[1 + (1+c_4)\frac{\gamma-\gamma^h}{1-\gamma}\right] +  \gamma^h \hat{V}^+_{\mathrm{ac}}(\tilde X).
\end{align}

We first observe that
\begin{equation}
\begin{aligned}
    \hat{Q}^+_{\mathrm{ac}}(\tilde X, a^\triangle_{t:t+h}) - \hat{Q}^+_{\mathrm{ac}}(X, a^\triangle_{t:t+h}) &= \frac{1}{2}\left[ c_2 - (c_3-c_4)\frac{\gamma-\gamma^h}{1-\gamma}\right] \\
    & = \vartheta^L_h+\frac{(1+\gamma^h)\vartheta^G_h}{1-\gamma^h}-\vartheta^L_h + \vartheta^G_h - \tilde \Delta \\
    &= \frac{2\vartheta^G_h}{1-\gamma^h} - \tilde \Delta \\
    &> 0.
\end{aligned}
\end{equation}
Also,
\begin{align}
    \hat{Q}^+_{\mathrm{ac}}(\tilde X, a^\circ_{t:t+h}) - \hat{Q}^+_{\mathrm{ac}}(X, a^\circ_{t:t+h}) = c_2 > 0
\end{align}
Therefore,
\begin{equation}
\begin{aligned}
    \hat{V}^+_{\mathrm{ac}}(X) &= \max(\hat{Q}^+_{\mathrm{ac}}(X, a^\circ_{t:t+h}), \hat{Q}^+_{\mathrm{ac}}(X, a^\triangle_{t:t+h})) \\
    &< \max(\hat{Q}^+_{\mathrm{ac}}(\tilde X, a^\circ_{t:t+h}), \hat{Q}^+_{\mathrm{ac}}(\tilde X, a^\triangle_{t:t+h})) \\
    &= \hat{V}^+_{\mathrm{ac}}(\tilde X).
\end{aligned}
\end{equation}

Now, we can compare the values for the action chunks for $X$ and $\tilde X$:
\begin{align}
    \hat{Q}^+_{\mathrm{ac}}(X, a^\triangle_{t:t+h}) - \hat{Q}^+_{\mathrm{ac}}(X, a^\circ_{t:t+h}) = c_3\frac{\gamma-\gamma^h}{2(1-\gamma)} + \gamma^h(\hat{V}^+_{\mathrm{ac}}(\tilde X) - \hat{V}^+_{\mathrm{ac}}( X)) > 0, \\
    \hat{Q}^+_{\mathrm{ac}}(\tilde X, a^\triangle_{t:t+h}) - \hat{Q}^+_{\mathrm{ac}}(\tilde X, a^\circ_{t:t+h}) = c_4\frac{\gamma-\gamma^h}{2(1-\gamma)} + \gamma^h(\hat{V}^+_{\mathrm{ac}}(\tilde X) - \hat{V}^+_{\mathrm{ac}}(X)) > 0,
\end{align}
since $c_3, c_4> 0$ and $h>1, 0<\gamma<1$ (and thus $\frac{\gamma-\gamma^h}{1-\gamma} > 0$). 

This concludes that $\pi^+_{\mathrm{ac}}(X) = \pi^+_{\mathrm{ac}}(\tilde X) = a^\triangle_{t:t+h} = (2, 2, \cdots, 2, 0)$ and thus
\begin{align}
    \hat{V}^+_{\mathrm{ac}}(\tilde X) &= \frac{1-\gamma+(\gamma-\gamma^h)(1+c_4)}{2(1-\gamma^h)(1-\gamma)},
\end{align}
and
\begin{equation}
\begin{aligned}
    \hat{V}^+_{\mathrm{ac}}(X) &= \frac{1}{2}\left[(1-c_2) + (1+c_3)\frac{\gamma-\gamma^h}{1-\gamma}\right] + \frac{\gamma^h}{1-\gamma^h} \hat{V}^+_{\mathrm{ac}}(\tilde X) \\
    &= \frac{1}{2(1-\gamma)} - \frac{\vartheta^G_h}{1-\gamma^h} + \frac{\tilde \Delta}{2}.
\end{aligned}
\end{equation}

We can now compute the remaining values as follows:
\begin{align}
    V^\star(X) &= \frac{1}{2(1-\gamma)}, \\
    Q^\star(X, a=2) &=\frac{(1-c_2)(1-\gamma) +\gamma }{2(1-\gamma)}, \\
    Q^\bullet(X, a=2) &= \frac{1-c_2}{2(1-\gamma)}.
\end{align}

Substituting the value of $c_2$ yields
\begin{align}
    V^\star(X) - V^\bullet(X) = \frac{\vartheta^L_h}{1-\gamma} + \frac{(1+\gamma^h)\vartheta_h^G}{(1-\gamma)(1-\gamma^h)}-\frac{\tilde \Delta}{2(1-\gamma)}.
\end{align}

{\color{ourmiddle}\emph{2. Case $\vartheta^L_h < \vartheta^G_h$}}:

Set 
\begin{align}
    \Delta &= 2\left[\frac{\vartheta^L_h+\vartheta^G_h}{1-\gamma^h}\right] \\
    c_2 &= 2\left[\frac{\vartheta^L_h+\vartheta^G_h}{1-\gamma^h}\right] - \tilde \Delta > 0, \\
    c_3 = c_4 &= \frac{2(1-\gamma)\vartheta^L_h}{\gamma-\gamma^h} > 0
\end{align}
where again $\tilde \Delta$ is any value that satisfies $0 < \tilde \Delta \leq \Delta$.

From the definitions above and the value range of $\vartheta^G_h$ ($\vartheta^G_h \leq \frac{1-\gamma^h}{4}$), it is clear that
\begin{align}
    c_3=c_4 < c_2 \leq \frac{4\vartheta_h^G}{1-\gamma^h} \leq \frac{2(1-\gamma)}{\gamma-\gamma^h} \leq 1/2.
\end{align}

Next, we check the data distribution $\mathcal{D}^\triangle$ satisfies both optimality variability conditions. With the same argument as the previous case, we can quickly conclude that the global optimality condition is satisfied. We just need to show the remaining local optimality condition. We repeat the procedure from the previous case. 

For $s_t = X$, the local optimality gap is
\begin{align}
    c_3\frac{\gamma-\gamma^h}{2(1-\gamma^h)} = \vartheta_h^L.
\end{align}
For $s_t = \tilde X$ the local optimality gap is the same because $c_4=c_3$:
\begin{align}
    c_4\frac{\gamma-\gamma^h}{2(1-\gamma^h)} = \vartheta_h^L.
\end{align}
This concludes that the second (local) optimality condition is also satisfied for the second case.

Now, we can follow the same procedure as the previous case to show that $\hat{Q}^+_{\mathrm{ac}}(X, a^\triangle_{t:t+h}) - \hat{Q}^+_{\mathrm{ac}}(X, a^\circ_{t:t+h}) > 0$ and $\hat{Q}^+_{\mathrm{ac}}(\tilde X, a^\triangle_{t:t+h}) - \hat{Q}^+_{\mathrm{ac}}(\tilde X, a^\circ_{t:t+h}) > 0$.

This concludes that $\pi^+_{\mathrm{ac}}(X) = \pi^+_{\mathrm{ac}}(\tilde X) = a^\triangle_{t:t+h} = (2, 2, \cdots, 2, 0)$, and thus

\begin{align}
    \hat{V}^+_{\mathrm{ac}}(\tilde X) &= \frac{1}{2}\left[\frac{1-\gamma+(1+c_3)(\gamma-\gamma^h)}{(1-\gamma)(1-\gamma^h)} \right],
\end{align}
and
\begin{equation}
\begin{aligned}
    \hat{V}^+_{\mathrm{ac}}(X) &= \frac{1}{2}\left[(1-c_2) + (1+c_3)\frac{\gamma-\gamma^h}{1-\gamma}\right] + \frac{\gamma^h}{1-\gamma^h} \hat{V}^+_{\mathrm{ac}}(\tilde X) \\
    &= \frac{1}{2(1-\gamma)} - \frac{\vartheta^G_h}{1-\gamma^h} + \frac{\tilde \Delta}{2}.
\end{aligned}
\end{equation}

Repeating the same procedure as the previous case, we obtain 
\begin{align}
    V^\star(X) - Q^\star(X, a=2) = \frac{\vartheta^L_h+\vartheta^G_h}{1-\gamma^h} - \tilde \Delta,
\end{align}
resulting in an optimality of 
\begin{align}
    V^\star(X) - V^\bullet(X) = \frac{\vartheta^L_h+\vartheta^G_h}{(1-\gamma)(1-\gamma^h)} - \frac{\tilde \Delta}{1-\gamma}.
\end{align}

{\color{ourmiddle}\emph{3. Sub-optimality of $V^+_{\mathrm{ac}}$}}:

Finally, we can use a pretty crude upper-bound on the actual value of the action chunking policy $\pi^+_{\mathrm{ac}}$ (reparameterizing $\tilde \delta_2 = 1-(1-\delta_2)^h$):
\begin{align}
    V^+_{\mathrm{ac}}(X) &\leq (1-\tilde \delta_2)\left[(1-c_2)/2 + \frac{\delta(\gamma-\gamma^h)}{2(1-\gamma)} + \gamma^hV^+_{\mathrm{ac}}(X)\right] +\frac{\tilde \delta_2}{1-\gamma} \\
    &\leq \frac{1-\tilde \delta_2}{2(1-\gamma^h)(1-\gamma)} \left[1-\gamma+\delta(\gamma-\gamma^h)\right] + \frac{\tilde\delta_2}{1-\gamma}.
\end{align}
Set $\delta = 1/2$, we have
\begin{align}
    V^+_{\mathrm{ac}}(X) &\leq \frac{1-\tilde \delta_2}{2(1-\gamma^h)(1-\gamma)} \left[1-\gamma/2-\gamma^h/2\right] + \frac{\tilde\delta_2}{1-\gamma}.
\end{align}

We set
\begin{align}
    \delta_2 = 1-\left[1-\frac{\gamma-\gamma^h - 4c(1-\gamma^h)}{2-3\gamma^h+\gamma}\right]^{1/h},
\end{align}
which results in
\begin{align}
    \tilde \delta_2 = \frac{\gamma-\gamma^h - 4c(1-\gamma^h)}{2-3\gamma^h+\gamma}.
\end{align}
It is clear that $0<\delta_2<1$ because  $c < \frac{\gamma-\gamma^h}{4(1-\gamma^h)}$ and $\frac{\gamma-\gamma^h}{2-3\gamma^h+\gamma} < 1$.

Substituting $\tilde \delta_2$ in the bound of $V^+_{\mathrm{ac}}(X)$ above, we obtain
\begin{align}
    V^\star(X) - V^+_{\mathrm{ac}}(X) \geq \frac{c}{1-\gamma}.
\end{align}
\end{proof}

\begin{proof}[Proof of \cref{thm:optvartight}]
\label{proof:optvartight}

Let 
\begin{align}
    \Delta = \vartheta^L_h + \frac{\vartheta^G_h}{1-\gamma^h} + \frac{\gamma^h\min(\vartheta^G_h,\vartheta^L_h)}{1-\gamma^h}.
\end{align}

Consider the 5-state, 3-action MDP constructed in \cref{lemma:optvar-1} and \cref{lemma:optvar-2} and a data distribution consisting of a mixture of three data distributions $\mathcal{D}^\star, \mathcal{D}^\diamond$ (from \cref{lemma:optvar-1}) and $\mathcal{D}^\triangle$ (from \cref{lemma:optvar-2}):
\begin{align}
    P_{\mathcal{D}} = \alpha(1-\varsigma)P_{\mathcal{D}^\star} + \varsigma P_{\mathcal{D}^\diamond} + (1-\alpha)P_{\mathcal{D}^\triangle}.
\end{align}
We set $\alpha$ to be any value between $0$ and $1$ (non-inclusive) and set $\varsigma$ as any positive value such that
\begin{align}
    \varsigma < \frac{(\gamma-\gamma^h)-2\vartheta^G_h(1-\gamma) + 2\tilde \Delta(1-\gamma)(1-\gamma^h)}{(\gamma-\gamma^h)-2\vartheta^G_h(1-\gamma)},
\end{align}
where $\tilde \Delta = \sigma (1-\gamma) < \min(\vartheta^L_h, \vartheta^G_h) < \min(\Delta, \frac{2\vartheta_h^G}{1-\gamma^h})$ (satisfying the condition for $\tilde \Delta$ in \cref{lemma:optvar-2}).

The numerator and the denominator are both positive:
\begin{align}
    (\gamma-\gamma^h)-2\vartheta^G_h(1-\gamma) + 2\tilde \Delta(1-\gamma)(1-\gamma^h) > (\gamma-\gamma^h)-2\vartheta^G_h(1-\gamma) > 0,
\end{align}
meaning such $\varsigma$ always exists.

Substituting the inequality to the result of \cref{lemma:optvar-1} results in
\begin{align}
    \frac{1-\gamma+\varsigma(\gamma-\gamma^h)}{2(1-\gamma^h)(1-\gamma)} - \frac{\varsigma\vartheta_h^G}{1-\gamma^h} < \frac{1}{2(1-\gamma)} - \frac{\vartheta^G_h}{1-\gamma^h} + \tilde \Delta,
\end{align}
which shows that $\pi^+_{\mathrm{ac}}$ will always prefer $a^\triangle_{t:t+h}$ over action chunks in $\mathcal{D}^\star$ and $\mathcal{D}^\diamond$. 

This means that the value $\hat V^+_{\mathrm{ac}}$ and the action chunking policy $\pi^+_{\mathrm{ac}}$ we learn from $\mathcal{D}$ coincides with these of $\mathcal{D}^\triangle$, allowing us to directly use the results of \cref{lemma:optvar-2}.

Thus, we can conclude that
\begin{align}
    V^\star(s_t) - V^+_{\mathrm{ac}}(s_t) \geq \frac{c}{1-\gamma},
\end{align}
and
\begin{align}
    V^\star(X) - V^\bullet(X) = \frac{\Delta-\tilde \Delta}{1-\gamma} = \frac{\vartheta^L_h}{1-\gamma} + \frac{\vartheta^G_h}{(1-\gamma)(1-\gamma^h)} + \frac{\gamma^h\min(\vartheta^L_h,\vartheta^G_h)}{(1-\gamma)(1-\gamma^h)} - \sigma,
\end{align}
as desired.
 
\end{proof}

\newpage
\subsection{Proof of \cref{prop:comparetight}}
\comparetight*

\begin{proof}
\label{proof:comparetight}

Consider an MDP in \cref{fig:ns-example}. Let $\mathcal{D}$ be the data collected by the following policy:
\begin{align}
    \pi(a=0 \mid X) = \pi(a=1 \mid X) &= 1/2, \\
    \pi(a=0 \mid Y) &= \alpha, \\
    \pi(a=1 \mid Y) &= 1-\alpha.
\end{align}

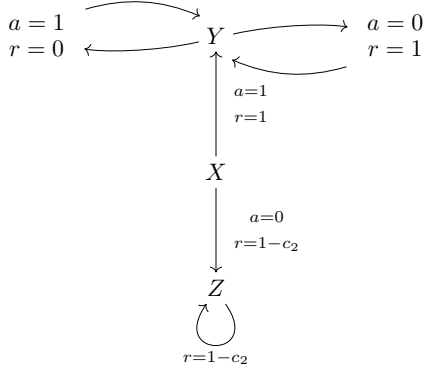
\begin{figure}[h]
\centering
\scalebox{0.9}{\begin{tikzcd}[ampersand replacement=\&,cramped]
	\begin{array}{c} a=1\\r=0 \end{array} \&\& Y \&\& \begin{array}{c} a=0\\r=1 \end{array} \\
	\\
	\&\& X \\
	\\
	\&\& Z
	\arrow[shift left=3, curve={height=-6pt}, from=1-1, to=1-3]
	\arrow[curve={height=-6pt}, from=1-3, to=1-1]
	\arrow[curve={height=-6pt}, from=1-3, to=1-5]
	\arrow[shift left=3, curve={height=-6pt}, from=1-5, to=1-3]
	\arrow["\begin{array}{c} a=1\\r=1 \end{array}"', from=3-3, to=1-3]
	\arrow["\begin{array}{c} a=0\\r=1-c_2 \end{array}", from=3-3, to=5-3]
	\arrow["{r=1-c_2}", from=5-3, to=5-3, loop, in=235, out=305, distance=10mm]
\end{tikzcd}}
    \caption{\textbf{An MDP where the learned action chunking policies are optimal and the learned $n$-step return policies can be arbitrarily sub-optimal.} }
    \label{fig:ns-example}
\end{figure}

It is clear that the optimal policy is $\pi^\star(X)=1, \pi^\star(Y)=0$.

Since the dynamics are deterministic, the data distribution $\mathcal{D}$ is strongly open-loop consistent and thus by \cref{thm:suboptact}, $V_{\mathrm{ac}}(s_t) = V^\star(s_t)$ for all $s_t \in \mathrm{supp}(P_{\mathcal{D}^\star}(s_t))$.

To make sure the data distribution satisfies the $\tilde \delta_n$-optimal condition, we set  
\begin{align}
     \alpha = 1-\frac{\tilde \delta_n}{\gamma-\gamma^n}, \quad\quad  c_2 = \tilde \delta_n-(1-\gamma)\sigma.
\end{align}

It is clear that at state $Y$, $\pi^+_n(Y)=0$. We can then calculate the optimality gap for $\hat{Q}_n^+$ exactly as follows:
\begin{align}
    V^\star(X) -\hat Q^+_n(X, a=0) &= \frac{c_2}{1-\gamma} = \frac{\tilde \delta_n}{1-\gamma} - \sigma,\\
    V^\star(X) - \hat Q^+_n(X, a=1) &= \frac{1}{1-\gamma} - \frac{1-\gamma+\alpha(\gamma-\gamma^n)}{(1-\gamma)(1-\gamma^n)} =  \frac{\tilde \delta_n}{1-\gamma}.
\end{align}

Since $\sigma > 0$, $\pi^+_n(X) = 0$. Now, we can compute $V_n(X)$ as follows:
\begin{align}
    V^\star(X) - V_n(X) = \frac{c_2}{1-\gamma} = \frac{\tilde \delta_n}{1-\gamma}-\sigma,
\end{align}
as desired.

\end{proof}

\newpage
\subsection{Proof of \cref{thm:ddyn-con}}
\ddyncon*
\begin{proof}
\label{proof:ddyn-con}
Since $T$ is $\varepsilon$-deterministic, it can be represented as $T(\cdot \mid s, a) = (1-\varepsilon) \delta_{f(s, a)} + \varepsilon \tilde{T}(\cdot \mid s, a)$ for some $f: \mathcal{S} \times \mathcal{A} \to \mathcal{S}$ and $\tilde{T}: \mathcal{S} \times \mathcal{A} \to \Delta_\mathcal{S}$. Let $f(s, a_1, \cdots, a_h) = f( \cdots f(f(s, a_1), a_2) \cdots a_h)$ 

Let $I \in \{0, 1\}$ a binary indicator variable that is $1$ if and only if
\begin{align}
    s_{t+k+1} = f(s_{t+k}, a_{t+k}), \forall k \in \{0, 1, 2, \cdots, h - 1\}
\end{align}
Intuitively $I=1$ when the trajectory is generated deterministically until but not including the last state $s_h$ in the trajectory chunk. 

From the fact that $T$ is $\varepsilon$-deterministic, we know that
\begin{align}
    P_\mathcal{D}(I_h=1) \geq (1-\varepsilon)^{h-1}
\end{align}
We also have
\begin{align}
    P_\mathcal{D}(a_{t:t+h} \mid s_t) = P_\mathcal{D}(I_h=1)  P_\mathcal{D}(a_{t:t+h} \mid s_t, I_h=1) + P_\mathcal{D}(I_h=0) P_\mathcal{D}(a_{t:t+h} \mid s_t, I=0)
\end{align}
Then we have
\begin{align}
    D_{\mathrm{TV}}\infdivx*{P_\mathcal{D}(a_{t:t+h} \mid s_t)}{P_\mathcal{D}(a_{t:t+h} \mid s_t, I_h=1)} \leq (1 - (1-\varepsilon)^{h-1}) 
\end{align}
If we transform each distribution of $a_{t:t+h}$ deterministically by $f(s_t, \cdot)$, by data processing inequality (DPI; \cref{lemma:dpi-fdiv}), we have
\begin{align}
\label{eq:m1}
    D_{\mathrm{TV}}\infdivx*{\mathbb{E}_{a_{t:t+h} \sim P_\mathcal{D}(\cdot \mid s_t)}\left[\delta_{f(s_t, a_{t:t+h})}\right]}{\mathbb{E}_{a_{t:t+h} \sim P_\mathcal{D}(\cdot \mid s_t, I_h=1)}\left[\delta_{f(s_t, a_{t:t+h})}\right]} \leq (1 - (1-\varepsilon)^{h-1}) 
\end{align}

Similarly, we have
\begin{align}
    D_{\mathrm{TV}}\infdivx*{P_\mathcal{D}(a_{t:t+h+1} \mid s_t)}{P_\mathcal{D}(a_{t:t+h+1} \mid s_t, I_{h+1}=1)} \leq (1 - (1-\varepsilon)^{h}) 
\end{align}
which can be also deterministically transformed by taking $a_{t:t+h+1} \mapsto (f(s_t, \cdot), a_{t+h})$ (again with DPI, \cref{lemma:dpi-fdiv}) to obtain
\begin{equation}
\begin{aligned}
\label{eq:m1sa}
    &D_{\mathrm{TV}}\Big(\mathbb{E}_{a_{t:t+h} \sim P_\mathcal{D}(\cdot \mid s_t)}\left[\pi^{\circ}_{\mathcal{D}}(a_{t+h} \mid s_t, a_{t:t+h})\mathbb{I}_{f(s_t, a_{t:t+h})}\right] \parallel \\ &\mathbb{E}_{a_{t:t+h} \sim P_\mathcal{D}(\cdot \mid s_t, I_{h+1}=1)}\left[\pi^{\circ}_{\mathcal{D}}(a_{t+h} \mid s_t, a_{t:t+h}, I_{h+1}=1)\mathbb{I}_{f(s_t, a_{t:t+h})}\right] \Big) \leq (1 - (1-\varepsilon)^{h}) 
\end{aligned}
\end{equation}

Now, if we analyze the distribution of $s_{t+h}$ subject to the open-loop execution of the action sequence from $P_\mathcal{D}(\cdot \mid s_t)$ and break it up into the deterministic and the non-deterministic case, we get
\begin{equation}
\begin{aligned}
    \mathbb{E}_{a_{t:t+h} \sim P_\mathcal{D}(\cdot \mid s_t)} \left[T_{a_{t:t+h}}(\cdot \mid s_t)\right] = P_T(I=1)  \mathbb{E}_{a_{t:t+h} \sim P_\mathcal{D}(\cdot \mid s_t)} \left[\delta_{f(s_t, a_{t:t+h})}\right] + \\ P_T(I=0)  \mathbb{E}_{a_{t:t+h} \sim P_\mathcal{D}  (\cdot \mid s_t)}  \left[T_{a_{t:t+h}}(\cdot \mid s_t, I_h=0)\right]
\end{aligned}
\end{equation}
Note that $P_T(I=1)$ denotes the probability that an open-loop executed trajectory using $a_{t:t+h} \sim P_\mathcal{D}(\cdot \mid s_t)$ is deterministic. This is different from $P_\mathcal{D}(I_h=1)$ because the latter is based on $P_\mathcal{D}(s_{t:t+h+1}, a_{t:t+h})$ whereas $P_T(I_h=1)$ is based on the open-loop trajectory distribution: $P_\mathcal{D}(\cdot \mid s_t)\prod_{k=0}^{h-1}T(s_{t+k} \mid s_t, a_{t:t+k})$. They both admit the same lower bound of $2(1- (1-\varepsilon)^{h-1})$.

Therefore,
\begin{align}
\label{eq:m2}
    D_{\mathrm{TV}}\infdivx*{\mathbb{E}_{a_{t:t+h} \sim P_\mathcal{D}(\cdot \mid s_t)} \left[T_{a_{t:t+h}}(\cdot \mid s_t)\right]}{ \mathbb{E}_{a_{t:t+h} \sim P_\mathcal{D}(\cdot \mid s_t)} \left[\delta_{f(s_t, a_{t:t+h})}\right]} \leq (1- (1-\varepsilon)^{h-1})
\end{align}
Similarly for the state-action case, we can multiply both side by the same conditional distribution $\pi^{\circ}_{\mathcal{D}}(a_{t+h} \mid s_t, a_{t:t+h})$ which preserves the TV bound. For the left-hand side, we have
\begin{align}
\label{eq:m2sa}
    \opP(s_{t+h}, a_{t+h} \mid s_t) &= \mathbb{E}_{a_{t:t+h} \sim \clP(\cdot \mid s_t)} \left[\pi^{\circ}_{\mathcal{D}}(a_{t+h} \mid s_t, a_{t:t+h}) T_{a_{t:t+h}}(s_{t+h} \mid s_t)\right]
\end{align}

Therefore, we get
\begin{equation}
\begin{aligned}
    D_{\mathrm{TV}}\infdivx*{\opP(s_{t+h}, a_{t+h} \mid s_t)}{\mathbb{E}_{a_{t:t+h} \sim P_\mathcal{D}(\cdot \mid s_t)} \left[\pi^{\circ}_{\mathcal{D}}(a_{t+h} \mid s_t, a_{t:t+h})\mathbb{I}_{f(s_t, a_{t:t+h})}\right]} \\ \leq (1-(1-\varepsilon)^{h-1})
\end{aligned}
\end{equation}

We also have
\begin{align}
    P_{\mathcal{D}}(s_{t+h}\mid s_t) = (1-\varepsilon)^{h-1} P_{\mathcal{D}}(s_{t+h} \mid s_t, I=1) 
        + (1 - (1-\varepsilon)^{h-1}) P_{\mathcal{D}}(s_{t+h} \mid s_t, I_h=0)
\end{align}
Similarly, we have
\begin{equation}
\begin{aligned}
\label{eq:m3}
    &D_{\mathrm{TV}}\infdivx*{P_{\mathcal{D}}(s_{t+h}\mid s_t)}{P_{\mathcal{D}}(s_{t+h} \mid s_t, I_h=1)} \\ &= D_{\mathrm{TV}}\infdivx*{P_{\mathcal{D}}(s_{t+h}\mid s_t)}{\mathbb{E}_{a_{t:t+h} \sim P_\mathcal{D}(\cdot \mid s_t, I_h=1)}\left[\delta_{f(s_t, a_{t:t+h})}\right]} \leq (1 - (1-\varepsilon)^{h-1})
\end{aligned}
\end{equation}
For state-action, we can also get
\begin{equation}
\begin{aligned}
    P_{\mathcal{D}}(s_{t+h}, a_{t+h}\mid s_t) &= (1-\varepsilon)^{h} P_{\mathcal{D}}(s_{t+h}, a_{t+h} \mid s_t, I_{h+1}=1) 
        \\& + (1 - (1-\varepsilon)^{h}) P_{\mathcal{D}}(s_{t+h}, a_{t+h} \mid s_t, I_{h+1}=0)
\end{aligned}
\end{equation}
which can be turned into the TV distance bound:
\begin{equation}
\begin{aligned}
\label{eq:m3sa}
    &D_{\mathrm{TV}}\infdivx*{P_{\mathcal{D}}(s_{t+h}, a_{t+h} \mid s_t)}{P_{\mathcal{D}}(s_{t+h}, a_{t+h} \mid s_t, I_{h+1}=1)} \\ &= D_{\mathrm{TV}}\Big( P_{\mathcal{D}}(s_{t+h}, a_{t+h} \mid  s_t) \parallel  \\ &\mathbb{E}_{a_{t:t+h} \sim P_\mathcal{D}(\cdot \mid s_t, I_{h+1}=1)}\left[\pi^{\circ}_{\mathcal{D}}(a_{t+h} \mid s_t, a_{t:t+h}, I_{h+1}=1)\mathbb{I}_{f(s_t, a_{t:t+h})}\right]\Big) \\ & \leq (1 - (1-\varepsilon)^{h})
\end{aligned}
\end{equation}

Connecting all three total variation inequality (\cref{eq:m1,eq:m2,eq:m3}) together, we get
\begin{align}
    D_{\mathrm{TV}}\infdivx*{P_{\mathcal{D}}(s_{t+h}\mid s_t)}{\mathbb{E}_{a_{t:t+h} \sim P_\mathcal{D}(\cdot \mid s_t)} \left[T_{a_{t:t+h}}(\cdot \mid s_t)\right]} \leq 3(1-(1-\varepsilon)^{h-1}) \leq \varepsilon_h
\end{align}

Connecting all three total variable inequality for state-action (\cref{eq:m1sa,eq:m2sa,eq:m3sa}) together, we get
\begin{equation}
\begin{aligned}
    D_{\mathrm{TV}}\infdivx*{\opP(s_{t+h-1}, a_{t+h-1} \mid s_t)}{\clP(s_{t+h}, a_{t+h} \mid s_t)} &\leq 3 - 2(1-\varepsilon)^{h-1} - (1-\varepsilon)^{h-2} \\
    &\leq 3(1-(1-\varepsilon)^{h-1}) \\
    &\leq \varepsilon_h
\end{aligned}
\end{equation}
Therefore, $\mathcal{D}$ is $\varepsilon_h$-open-loop consistent as desired.
\end{proof}

\end{document}